\newcommand{\cmark}{\ding{51}}  
\newcommand{\xmark}{\ding{55}}  
\newcommand{\halfcheck}{%
    \cmark\kern-1.1ex\raisebox{.7ex}{\rotatebox[origin=c]{125}{--}}
}
\newcommand{\adv}{{\textbf{Adv.}}}%
\newcommand{\stoc}{{\textbf{Stoc.}}}%
\newtheorem{thm}{Theorem}
\newtheorem{lem}{Lemma}
\newtheorem{defn}{Definition}
\newtheorem{rmk}{Remark}
\newtheorem{prop}{Proposition}
\newtheorem{coro}{Corollary}
\crefname{lemma}{Lemma}{Lemmas}
\crefname{definition}{Definition}{Definitions}
\setlist[itemize]{leftmargin=*}
\def\eqref#1{(\ref{#1})}
\def\1{\bm{1}}
\DeclareMathAlphabet{\mathsfit}{\encodingdefault}{\sfdefault}{m}{sl}
\SetMathAlphabet{\mathsfit}{bold}{\encodingdefault}{\sfdefault}{bx}{n}
\def\gA{{\mathcal{A}}}
\def\gF{{\mathcal{F}}}
\def\gO{{\mathcal{O}}}
\def\gP{{\mathcal{P}}}
\def\gT{{\mathcal{T}}}
\def\sN{{\mathbb{N}}}
\def\sP{{\mathbb{P}}}
\def\sR{{\mathbb{R}}}
\def\sZ{{\mathbb{Z}}}
\newcommand{\E}{\mathbb{E}}
\DeclareMathOperator*{\argmax}{arg\,max}
\DeclareMathOperator*{\argmin}{arg\,min}
\newcommand{\abs}[1]{\lvert #1 \rvert}
\newcommand{\Reg}{\operatorname{Reg}_T}
\newcommand{\wtO}{\widetilde{\gO}}
\newcommand{\hath}{\hat{h}}
\newcommand{\hatell}{\hat{\ell}}
\newcommand{\hatellta}{\hatell_{t,a}}
\newcommand{\tildeell}{\tilde{\ell}}
\newcommand{\tildeellta}{\tildeell_{t,a}}
\newcommand{\indicator}[1]{\mathds{1}\left\{#1\right\}}
\newcommand{\inlineindicator}[1]{\mathds{1}\{#1\}}
\newcommand{\phia}{\phi_{a}}
\newcommand{\phiat}{\phi_{a_t}}
\newcommand{\sumt}{\sum_{t=1}^{T}}
\newcommand{\suma}{\sum_{a\in\gA}}
\newcommand{\sumaixas}{\sum_{a\in\gA\setminus\{a^*\}}}
\newcommand{\balpha}{{\bar{\alpha}}}
\newcommand{\bphia}{\bar{\phi}_{a}}
\newcommand{\bphiat}{\bar{\phi}_{a_t}}
\newcommand{\bphib}{\bar{\phi}_{b}}
\begin{document}

\title{Heavy-tailed Linear Bandits: Adversarial Robustness, Best-of-both-worlds, and Beyond}

\makeatletter
\let\oldthefootnote\thefootnote  
\makeatother

\renewcommand{\thefootnote}{\fnsymbol{footnote}}
\author{\name Canzhe Zhao\footnotemark[1]\enspace\footnotemark[2] 
  \email canzhezhao@sjtu.edu.cn \\
  \addr John Hopcroft Center for Computer Science, Shanghai Jiao Tong University\\
  Shanghai, China
  \AND
  \name Shinji Ito\footnotemark[2] \email shinji@mist.i.u-tokyo.ac.jp \\
  \addr Department of Mathematical Informatics, The University of Tokyo\\
  RIKEN AIP\\
  Tokyo, Japan
  \AND
  \name Shuai Li\footnotemark[3] \email shuaili8@sjtu.edu.cn \\
  \addr School of Artificial Intelligence, Shanghai Jiao Tong University\\
  Shanghai, China
}
\footnotetext[1]{Part of this work was done during Canzhe Zhao's internship at RIKEN AIP.}
\footnotetext[2]{Equal contribution.}
\footnotetext[3]{Corresponding author.}  
\editor{My editor}

\makeatletter
\let\thefootnote\oldthefootnote 
\makeatother

\maketitle

\begin{abstract}

Heavy-tailed bandits have been extensively studied since the seminal work of \citet{Bubeck2012BanditsWH}. In particular, heavy-tailed linear bandits, enabling efficient learning with both a large number of arms and heavy-tailed noises, have recently attracted significant attention \citep{ShaoYKL18,XueWWZ20,ZhongHYW21,Wang2025heavy,tajdini2025improved}. However, prior studies focus almost exclusively on stochastic regimes, with few exceptions limited to the special case of heavy-tailed multi-armed bandits (MABs) \citep{Huang0H22,ChengZ024,Chen2024uniINF}. Nevertheless, these approaches face intrinsic barriers in linear settings, due either to assuming truncated non-negativity of the optimal arm's losses or to the reliance on follow-the-regularized-leader (FTRL) with a log-barrier regularizer.

In this work, we propose a general framework for adversarial heavy-tailed bandit problems, which performs FTRL over the loss estimates shifted by a bonus function. Via a delicate setup of the bonus function, we devise the first FTRL-type best-of-both-worlds (BOBW) algorithm for heavy-tailed MABs that does not require the truncated non-negativity assumption. Our algorithm achieves an $\widetilde{\mathcal{O}}(T^{\frac{1}{\varepsilon}})$ worst-case regret in the adversarial regime and an $\widetilde{\mathcal{O}}(\log T)$ gap-dependent regret in the stochastic regime, improving upon the current best BOBW result for heavy-tailed MABs without the truncated non-negativity assumption 
in both the adversarial and stochastic regimes.
We then extend our framework to the linear case, proposing the first algorithm for adversarial heavy-tailed linear bandits with finite arm sets. This algorithm achieves an $\widetilde{\mathcal{O}}(d^{\frac{1}{2}}T^{\frac{1}{\varepsilon}})$ regret, matching the best-known worst-case regret bound in stochastic regimes.
Moreover, we propose a general data-dependent learning rate, termed \textit{heavy-tailed noise aware stability-penalty matching} (HT-SPM). We prove that HT-SPM guarantees BOBW regret bounds for general heavy-tailed bandit problems once certain conditions are satisfied. By using HT-SPM and, in particular, a variance-reduced linear loss estimator, we obtain the first BOBW result for heavy-tailed linear bandits.
\end{abstract}

\begin{keywords}
  Heavy-tailed MABs, Heavy-tailed linear bandits, BOBW bandit algorithms
\end{keywords}

\section{Introduction}
The canonical multi-armed bandit (MAB) model usually assumes bounded losses or losses with sub-Gaussian noises. Nevertheless, in practical scenarios such as financial problems \citep{Gagliolo2011AlgorithmPS}, network routing problems \citep{Liebeherr2009DelayBI}, and online deep learning \citep{ZhangKVKRKS20}, the losses may be neither bounded nor have tails that decay as fast as a Gaussian. Instead, the losses in these cases may be naturally heavy-tailed and not have finite-order moments. To deal with online sequential decision-making problems accompanied by heavy-tailed losses, \citet{Bubeck2012BanditsWH} pioneer the study of MABs with stochastic heavy-tailed losses. In particular, they consider heavy-tailed MABs where the loss $\ell_{a}$ of arm $a$ in each round $t$ is sampled from an unknown yet \textit{fixed} underlying distribution $\nu_a$ such that $\mathbb{E}_{\ell_a \sim \nu_{a}}\left[|\ell_a|^\varepsilon\right] \leq \sigma$ for some known $\varepsilon\in (1,2]$ and $\sigma>0$. For this problem, they establish a (nearly) minimax-optimal regret of $\widetilde{\mathcal{O}}(\sigma^{\frac{1}{\varepsilon}} K^{1-\frac{1}{\varepsilon}} T^{\frac{1}{\varepsilon}})$ and a (nearly) optimal gap-dependent regret of $\mathcal{O}(\sum_{a \neq a^*}(\frac{\sigma}{\Delta_a})^{\frac{1}{\varepsilon-1}} \log T)$, where $K$ is the number of arms, $a^*$ is the optimal arm and $\Delta_a$ is the suboptimality gap between arm $a$ and $a^*$. Subsequently, this problem has attracted considerable research attention \citep{VakiliLZ13,KagrechaNJ19,LeeYLO20,WeiS21,AgrawalJK21,AshutoshNKJ21,Huang0H22,GenaltiM0M24,LeeL24,ChengZ024,Dorn2024Fast,Chen2024uniINF}. Nevertheless, most of these works mainly focus on heavy-tailed MABs in the stochastic regime, with the only exceptions of \citet{Huang0H22,ChengZ024} and \citet{Chen2024uniINF}. Remarkably, the algorithms of these works enjoy the \textit{best-of-both-worlds} (BOBW) property, which simultaneously guarantees a (nearly) minimax-optimal regret in the adversarial regime and a (nearly) optimal gap-dependent regret in the stochastic regime.

When dealing with a large number of arms, \citet{MedinaY16} make the first step to study the heavy-tailed bandit problem where the rewards admit a linear structure over them, \textit{i.e.}, the heavy-tailed linear bandit problem. 
Subsequent works have extensively explored this setting \citep{ShaoYKL18,XueWWZ20,ZhongHYW21,KangK23,0004WWY023,Wang2025heavy,tajdini2025improved}, with some extensions to reinforcement learning under linear function approximation and heavy-tailed noise \citep{Huang00023,Li2023variance}.
Though significant advances have emerged in heavy-tailed linear bandits, we note that all existing heavy-tailed linear bandit algorithms assume the stochastic regime, where the losses of all the arms are sampled from fixed underlying distributions. This restricts their applicability in the adversarial regime, where losses may be arbitrarily chosen and vary over time.

The persistent lack of progress on adversarial heavy-tailed linear bandits may stem from the fact that this problem remains not well-understood even in the special case of adversarial heavy-tailed MABs, in contrast to its stochastic counterpart. In particular, all existing works on adversarial---or more generally, BOBW---heavy-tailed MABs require that the loss $\ell_{t,a^*}$ of the optimal arm $a^*$, across all rounds $t \in [T]$, satisfies the truncated non-negativity assumption \citep{Huang0H22,Chen2024uniINF}, with the only exception of \citet{ChengZ024}.
Formally, this assumption requires that $\mathbb{E}[\ell_{t,a^*} \cdot \mathds{1}[|\ell_{t,a^*}| > M]] \geq 0$ for any $M \geq 0$. Although this condition is weaker than assuming that the losses of the optimal arm are strictly non-negative, it still implicitly requires the underlying distribution of the optimal arm's losses to place more probability mass on the positive part than the negative part at an arbitrary level. As discussed by \citet{ChengZ024}, this assumption is generally difficult to verify in practice. Moreover, in heavy-tailed linear bandits where $\mathbb{E}[\ell_{t,a^*}] = \langle \theta_t, \phi_{a^*} \rangle$, this assumption imposes implicit constraints on the unknown loss vector $\theta_t$ and the feature vector $\phi_{a^*}$ of the optimal arm $a^*$. As a result, it becomes even more restrictive in the linear setting. More importantly, we show that even under this assumption, directly generalizing existing approaches for adversarial heavy-tailed MABs is insufficient to address the challenges posed by adversarial heavy-tailed linear bandits (see \Cref{sec:previous_approach} for details).
On the other hand, although the algorithm proposed by \citet{ChengZ024} can deal with adversarial heavy-tailed MABs without relying on the truncated non-negativity assumption, it does so by employing follow-the-regularized-leader (FTRL) with a log-barrier regularizer to stabilize the algorithm's updates and eliminate the need for this assumption. However, the strong stability provided by the log-barrier regularizer comes at the cost of introducing a penalty term with polynomial dependence on $K$, which precludes the possibility of efficient learning in the linear setting. Moreover, the BOBW algorithm of \citet{ChengZ024} is based on the detect-and-switch scheme, which performs statistical tests to distinguish between stochastic and adversarial regimes, and subsequently applies a specialized algorithm tailored to the detected regime \citep{BubeckS12,AuerC16}. Compared to purely FTRL-based BOBW algorithms \citep{ZimmertS21}, the detect-and-switch approach is less adaptive, as it requires explicitly identifying the underlying regime \citep{ZimmertS21,LeeLWZ021}. In addition, detect-and-switch-based BOBW algorithms are usually not able to show a (nearly) optimal regret in the intermediate regime, such as the stochastic regime with adversarial corruptions. As a consequence, whether a purely FTRL-based BOBW algorithm for heavy-tailed MABs can be designed without relying on the truncated non-negativity assumption remains an open problem.

In this work, we propose a general framework to address the aforementioned challenges in adversarial heavy-tailed bandits. Our framework employs FTRL on clipped loss estimates that are shifted by a carefully designed bonus function. Through a principled selection of both the loss clipping threshold and the bonus magnitude, we demonstrate that FTRL can achieve (nearly) optimal regret for adversarial heavy-tailed bandits \emph{without} requiring the truncated non-negativity assumption. Furthermore, we show that this framework can be extended to achieve the more adaptive BOBW guarantee for heavy-tailed MABs. Remarkably, with additional technical elements, this framework can also be adapted to achieve the BOBW regret guarantee for the more challenging heavy-tailed linear bandit problem.
Our detailed contributions are summarized as follows:
\begin{itemize}
    \item We propose a general framework for the adversarial heavy-tailed bandit problem, where FTRL is performed over the clipped loss estimates that are shifted by a bonus function. The bonus function is carefully designed so that (a) it sufficiently cancels for the potential bias introduced by loss clipping, particularly on the optimal arm, while (b) maintaining controlled magnitude to ensure the overall regret remains (nearly) optimal.
By instantiating our framework with the $\frac{1}{\varepsilon}$-Tsallis regularizer and an optimally tuned clipping threshold, we obtain the first FTRL-based BOBW algorithm for heavy-tailed MABs that eliminates the need for the truncated non-negativity assumption. Our algorithm achieves an $\wtO (\sigma^{\frac{1}{\varepsilon}} K^{1-\frac 1\varepsilon}T^{\frac 1\varepsilon})$ regret in the adversarial regime, and an $\gO(\sum_{a\ne a^\ast} (\frac{\sigma}{\Delta_a})^{\frac{1}{(\varepsilon - 1)}}\log T+\sigma^{\frac{1}{\varepsilon}}(\sum_{a \neq a^*}\Delta_a^{\frac{1}{(1-\varepsilon)}})^{1-\frac{1}{\varepsilon}}\left(\log T\right)^{1-\frac{1}{\varepsilon}}C^{\frac{1}{\varepsilon}})$ regret in the adversarial regime with a self-bounding constraint, where $C \geq 0$ represents the total corruption level (see \Cref{thm:thm1} for details). In particular, the latter regime subsumes the stochastic regime as a special case when $C=0$. Notably, our result improves upon the current best-known bound of \citet{ChengZ024} for BOBW heavy-tailed MABs without the truncated non-negativity assumption, achieving a tighter regret by an $\gO(\log K \cdot \log^4 T)$ factor in adversarial regimes and an $\gO(\log K \cdot \log^3 T)$ factor in stochastic regimes. A comprehensive comparison with prior works on heavy-tailed MABs is provided in \Cref{tab:tab1}.
    \item By instantiating our framework using FTRL with the (negative) Shannon entropy regularizer, we obtain the first algorithm for the adversarial heavy-tailed linear bandit problem with finite arm sets. This algorithm achieves a regret of $\gO(\sigma^{\frac{1}{\varepsilon}} (\log K)^{1-\frac{1}{\varepsilon}} d^{\frac{1}{2}} T^{\frac{1}{\varepsilon}})$, which matches the current best-known result for the same problem in the stochastic regime \citet{tajdini2025improved}. See \Cref{thm:thm2} for details.
    \item To achieve BOBW regret guarantees for general heavy-tailed bandit problems with possibly structured feedback, we introduce a data-dependent learning rate schedule within our bonus-shifted FTRL framework, which we refer to as \textit{heavy-tailed noise aware stability-penalty matching} (HT-SPM). HT-SPM is designed to ensure that the stability term (as well as the regret incurred by additional exploration) matches the penalty term in the presence of heavy-tailed noise. This learning rate generalizes the original stability-penalty matching (SPM) learning rate proposed by \citet{ito2024adaptive} for bandit problems with bounded losses or losses with sub-Gaussian noises. We prove that under certain conditions, this learning rate leads to BOBW regret guarantees for heavy-tailed bandit problems. See \Cref{prop:prop1} for details.

    \item 
Finally, for the heavy-tailed linear bandit problem with finite arm sets, we propose a new variance-reduced linear loss estimator. This estimator exhibits lower variance than the standard linear loss estimator commonly used in the adversarial linear bandit literature (see, \textit{e.g.}, \citet{BubeckCK12}), and it enables the removal of dependence on the optimal arm $a^*$ when bounding the stability term of FTRL—a key step for applying the self-bounding inequality to derive the final BOBW regret guarantee. We show that this estimator, combined with our HT-SPM learning rate, yields the first BOBW regret guarantee for heavy-tailed linear bandits, achieving an $\wtO(\sigma^{\frac{1}{\varepsilon}} (\log K)^{1 - \frac{1}{\varepsilon}} d^{\frac{1}{2}} T^{\frac{1}{\varepsilon}})$ regret in the adversarial regime and an $\gO( (\frac{\sigma}{\Delta_{\min}})^{\frac{1}{(\varepsilon-1)}}d^{\frac{\varepsilon}{2(\varepsilon-1)}} \log K\cdot \log T)$ regret in the stochastic regime (see \Cref{thm:bobw_lb} for details). \Cref{tab:tab2} offers a detailed comparison between our results and those in related works on heavy-tailed linear bandits.
\end{itemize}

\subsection{Literature Review}
\paragraph{Heavy-tailed Bandits} The (stochastic) heavy-tailed MAB problem is first formulated and studied by \citet{Bubeck2012BanditsWH}. For this problem,
they prove a worst-case regret of $\gO(\sigma^{\frac{1}{\varepsilon}} K^{1-\frac 1\varepsilon}T^{\frac 1\varepsilon}(\log T)^{1-\frac 1\varepsilon} )$ and an gap-dependent regret of 
$\gO (\sum_{a\ne a^\ast}(\frac{\sigma}{\Delta_a})^{\frac{1}{(\varepsilon-1)}}\log T )$. 
They also provide regret lower bounds showing that their upper bounds are nearly optimal in the sense that there is only an extra $(\log T)^{1-\frac 1\varepsilon}$ factor in their worst-case regret upper bound. Subsequently, numerous studies have advanced this research area. In particular, tighter regret upper bounds or the pure exploration setting are further investigated by \citet{VakiliLZ13,KagrechaNJ19,LeeYLO20,WeiS21,LeeL24}. 
Moreover, Lipschitz bandits with heavy-tailed losses have been investigated by \citet{LuWHZ19}.
Besides, \citet{Huang0H22} initiate the study of adversarial and BOBW heavy-tailed MAB problems. Specifically, they establish a BOBW algorithm with (nearly) optimal regret guarantees in both the adversarial and stochastic regimes with the knowledge of $\varepsilon$ and $\sigma$, and a (nearly) minimax optimal algorithm for adversarial heavy-tailed MABs without the knowledge of $\varepsilon$ and $\sigma$. Further, \citet{Chen2024uniINF} establish a BOBW algorithm for heavy-tailed MABs without the knowledge of $\varepsilon$ and $\sigma$. We note that all the existing works studying adversarial and BOBW heavy-tailed MABs postulate that the loss sequence of the optimal arm across the whole learning process satisfies the truncated non-negativity assumption, except for \citet{ChengZ024}, which develop a detect-and-switch based BOBW algorithm using FTRL with a log-barrier regularizer as a subroutine.

To address the challenge of learning with a large number of arms, the stochastic heavy-tailed linear bandit problem has received considerable attention in recent years \citep{MedinaY16,ShaoYKL18,XueWWZ20,ZhongHYW21,KangK23,0004WWY023,Wang2025heavy,tajdini2025improved}. In particular, \citet{MedinaY16} extend the truncation-based and median-of-means (MOM) methods of \citet{Bubeck2012BanditsWH} to the linear setting, and propose the first algorithm for the stochastic heavy-tailed linear bandit problem, achieving regret bounds of $\widetilde{\mathcal{O}}(d T^{\frac{1+\varepsilon}{2\varepsilon}})$ and $\widetilde{\mathcal{O}}(d T^{\frac{2 \varepsilon-1}{3 \varepsilon-2}})$, respectively. A follow-up work by \citet{ShaoYKL18} develop both a truncation-based algorithm and an MOM-based algorithm, each attaining a regret bound of $\widetilde{\mathcal{O}}(d T^{\frac{1}{\varepsilon}})$. Furthermore, when the arm set is fixed and finite, \citet{XueWWZ20} propose two algorithms based on MOM and truncation, which achieve regret upper bounds of $\widetilde{\mathcal{O}}(\sqrt{d\log (KT)} T^{\frac{1}{\varepsilon}}\log T)$ and $\widetilde{\mathcal{O}}(\sqrt{d} T^{\frac{1}{\varepsilon}}\log (KT)\log T)$, respectively. For the same setting, they also establish a regret lower bound of $\Omega(d^{\frac{\varepsilon-1}{\varepsilon}} T^{\frac{1}{\varepsilon}})$. The other line of research applies the Huber regression technique \citep{Sun2020AdaptiveHR} to stochastic heavy-tailed linear bandits \citep{KangK23,Li2023variance,Huang00023,Wang2025heavy}. In particular, several works obtain a data-dependent regret bound of $\wtO( d T^{\frac{1}{\varepsilon}-\frac{1}{2}} \sqrt{\sum_{t=1}^T \sigma_t^{2/\varepsilon}}+d T^{\frac{1}{\varepsilon}-\frac{1}{2}} )$
under the assumption that $\E_{\ell_{t,a} \sim \nu_{t,a}}[|\ell_{t,a} - \langle \phi_a, \theta \rangle|^{\varepsilon}] \le \sigma_t$ for all $t \in [T]$ \citep{Huang00023,Wang2025heavy}. This result recovers the previously known regret upper bound of $\widetilde{\mathcal{O}}(d T^{\frac{1}{\varepsilon}})$ \citep{ShaoYKL18} when $\sigma_t = \sigma$ is constant across all rounds. To the best of our knowledge, however, all existing studies focus exclusively on the stochastic setting of the heavy-tailed linear bandit problem, and no work has addressed its adversarial counterpart, in contrast to the case of the heavy-tailed MABs.

\FloatBarrier
\begin{table}[htbp]
\caption{Comparisons with existing literature for heavy-tailed MABs.}\label{tab:tab1}
\begin{minipage}{\textwidth}
    \centering
    \begin{savenotes}
    \renewcommand{\arraystretch}{1.5}
    \resizebox{\textwidth}{!}{%
    \begin{tabular}{|c|c|c|c|c|}\hline
    \phantom{\footnotemark}\textbf{Algorithm}\footnote{We note that OptHTINF, AdaHINF \citep{Huang0H22}, AdaR-UCB \citep{GenaltiM0M24}, and uniTNF \citep{Chen2024uniINF} can work in cases where the knowledge of $\varepsilon$ and $\sigma$ is unknown.}  & \phantom{\footnotemark} \textbf{TNA-Free?}\footnote{\textbf{TNA-Free?} denotes whether the algorithm can work in cases where the truncated non-negativity assumption (TNA) of the loss sequence of the optimal arm $a^*$ does not hold. Please see \Cref{sec:previous_approach} for the definition and discussions on this assumption.} & \phantom{\footnotemark} \textbf{Regime}\footnote{
    \textbf{Regime} specifies the environments in which the algorithm can operate. While the works of \citet{Huang0H22} and \citet{Chen2024uniINF} do not explicitly state this, their algorithms---like ours---can operate in \textbf{stochastic regimes with adversarial corruptions}, in addition to \textbf{stochastic} and \textbf{adversarial regimes}. However, this capability is not shared by SAO-HT, the switching-based BOBW algorithm proposed by \cite{ChengZ024}.
    Please see \Cref{def:def1} and \Cref{thm:thm1} for details.} & \phantom{\footnotemark} \textbf{Regret} 
    \footnote{Expected regret bounds for algorithms in respective regimes. We note that the expected regret bounds presented for RobustUCB \citep{Bubeck2012BanditsWH}, OMD-LB-HT, and SAO-HT \citep{ChengZ024} are derived from their high-probability ones.}
& \phantom{\footnotemark} \textbf{Opt.?} \footnote{\textbf{Opt.?} specifies whether the regret bound is optimal up to multiplicative logarithmic factors of the form $\log \square$, where $\square$ may represent any parameters other than $K$ and $T$.} \\\hline
    \multirow{2}{*}{\shortstack{Lower Bound\\[3pt]\citep{Bubeck2012BanditsWH}}} & \multirow{2}{*}{---} & \multirow{2}{*}{---} & $\Omega\left (\sigma^{\frac{1}{\varepsilon}} K^{1-\frac 1\varepsilon}T^{\frac 1\varepsilon}\right )$  & \multirow{2}{*}{---} \\\cdashline{4-4}
    & & & $\Omega\left (\sum_{a\ne a^\ast}(\frac{\sigma}{\Delta_a})^{\frac{1}{\varepsilon-1}}\log T\right )$ & \\\hline
    \multirow{2}{*}{\shortstack{{RobustUCB}\\[3pt]\citep{Bubeck2012BanditsWH}}} & \multirow{2}{*}{\cmark} & \multirow{2}{*}{\stoc} & $\gO\left(\sigma^{\frac{1}{\varepsilon}} K^{1-\frac 1\varepsilon}T^{\frac 1\varepsilon}(\log T)^{1-\frac 1\varepsilon} \right)$ & \cmark \\\cdashline{4-5}
    & & & $\gO\left (\sum_{a\ne a^\ast}(\frac{\sigma}{\Delta_a})^{\frac{1}{\varepsilon-1}}\log T\right )$ & \cmark \\\hline
    \multirow{2}{*}{\shortstack{{APE$^2$}\\[3pt] \citep{LeeYLO20}}} & \multirow{2}{*}{\cmark} & \multirow{2}{*}{\stoc} & $\wtO\left (\exp(\sigma^{\nicefrac{1}{\varepsilon^2}}) K^{1-\frac 1\varepsilon}T^{\frac 1\varepsilon}\right )$  & \xmark \\\cdashline{4-5}
    & & & $\gO\left(\exp(\sigma^{\nicefrac{1}{\varepsilon}}) + \sum_{a\neq a^\ast}(\frac{1}{\Delta_a})^{\frac{1}{\varepsilon - 1}}(T\Delta_a^{\frac{\varepsilon}{\varepsilon - 1}}\log K)^{\frac{\varepsilon}{(\varepsilon - 1)\log K}}\right)$ & \xmark \\\hline
    \multirow{2}{*}{\shortstack{{Robust MOSS}\\[3pt]\citep{WeiS21}}} & \multirow{2}{*}{\cmark} & \multirow{2}{*}{\stoc} & $\gO\left (\sigma^{\frac{1}{\varepsilon}} K^{1-\frac 1\varepsilon}T^{\frac 1\varepsilon}\right )$  & \cmark  \\\cdashline{4-5}
    & & & $\gO\left (\sum_{a\ne a^\ast} (\frac{\sigma}{\Delta_a})^{\frac{1}{\varepsilon-1}} \log (\frac{T}{K} (\frac{\sigma}{\Delta_a^\varepsilon})^{\frac{1}{\varepsilon-1}})\right )$ & \phantom{\footnotemark} {\cmark} \footnote{Up to logarithmic factors of $\log \sigma$ and $\log(1/\Delta_a^\varepsilon)$.} \\\hline
    \multirow{2}{*}{\shortstack{{HTINF}\\[3pt]\citep{Huang0H22}}} & \multirow{2}{*}{\xmark} & \adv & $\gO\left(\sigma^{\frac{1}{\varepsilon}} K^{1-\frac 1\varepsilon} T^{\frac 1 \varepsilon} \right )$  & \cmark\\
    \cdashline{3-5}
    & & \stoc & $\gO\left(\sum_{a\ne a^\ast} (\frac{\sigma}{\Delta_a})^{\frac 1 {\varepsilon - 1}}\log T\right)$ & \cmark\\\hline
    \multirow{2}{*}{\shortstack{{OptHTINF}\\[3pt]\citep{Huang0H22}}} & \multirow{2}{*}{\xmark} & \adv & $\gO\left (\sigma K^{\frac{\varepsilon-1}{2}}T^{\frac{3-\varepsilon}{2}}\right )$  & \xmark\\
    \cdashline{3-5}
    & & \stoc & $\gO\left(\sum_{a\ne a^\ast}(\frac{\sigma^{2}}{\Delta_a^{3-\varepsilon}})^{\frac{1}{\varepsilon-1}}\log T\right )$ & \xmark\\\hline
    \multirow{2}{*}{\shortstack{{AdaTINF} \\[3pt] \citep{Huang0H22}}} & \multirow{2}{*}{\xmark} & \multirow{2}{*}{\adv} & \multirow{2}{*}{$\gO\left( \sigma^{\frac{1}{\varepsilon}} K^{1-\frac{1}\varepsilon} T^{\frac 1 \varepsilon} \right)$} & \multirow{2}{*}{\cmark}\\
    & & & & \\\hline
    \multirow{2}{*}{\shortstack{{AdaR-UCB}\\[3pt]\citep{GenaltiM0M24}}} & \multirow{2}{*}{\xmark} & \multirow{2}{*}{\stoc} & $\wtO\left (\sigma^{\frac{1}{\varepsilon}} K^{1-\frac 1\varepsilon}T^{\frac 1\varepsilon}\right )$  & \cmark \\\cdashline{4-5}
    & & & $\gO\left (\sum_{a\ne a^\ast} (\frac{\sigma}{\Delta_a})^{\frac{1}{\varepsilon-1}} \log T\right )$ & \cmark \\\hline
    \multirow{2}{*}{\shortstack{{OMD-LB-HT} \\[3pt] \citep{ChengZ024}}} & \multirow{2}{*}{\cmark} & \multirow{2}{*}{\adv} & \multirow{2}{*}{$\gO\left( \sigma^{\frac{1}{\varepsilon}} K^{1-\frac{1}\varepsilon} T^{\frac 1 \varepsilon}\log^3 T \right)$} & \multirow{2}{*}{\xmark}\\
    & & & & \\\hline
    \multirow{2}{*}{\shortstack{SAO-HT\\[3pt]\citep{ChengZ024}}} & \multirow{2}{*}{\cmark} & \adv & $\gO\left (\sigma^{\frac{1}{\varepsilon}} K^{1-\frac 1\varepsilon}T^{\frac 1\varepsilon}\log^4 T\cdot \log K\right )$ &  {\xmark} \\\cdashline{3-5}
    & & \stoc & $\gO\left (K (\frac{\sigma}{\Delta_{\min}})^{\frac{1}{\varepsilon-1}} \log^4 T\cdot \log K\right )$ & \xmark \\\hline
    \multirow{2}{*}{\shortstack{uniINF\\[3pt]\citep{Chen2024uniINF}}} & \multirow{2}{*}{\xmark} & \adv & $\wtO\left (\sigma^{\frac{1}{\varepsilon}} K^{1-\frac 1\varepsilon}T^{\frac 1\varepsilon}\right )$  & \cmark \\\cdashline{3-5}
    & & \stoc & $\gO\left (K (\frac{\sigma}{\Delta_{\min}})^{\frac{1}{\varepsilon-1}} \log T \cdot \log \frac{\sigma}{\Delta_{\min}}\right )$ &  \phantom{\footnotemark} {\cmark} \footnote{Up to logarithmic factors of $\log\sigma$ and $\log(1/\Delta_{\min})$, assuming all sub-optimal gaps $\Delta_a$ are comparable to the minimal gap $\Delta_{\min}$.} \\\hline
    \rowcolor{lightgray!30} 
     &  & \adv & $\gO\left (\sigma^{\frac{1}{\varepsilon}} K^{1-\frac 1\varepsilon}T^{\frac 1\varepsilon}\right )$  & \cmark \\\cdashline{3-5} 
     \rowcolor{lightgray!30}
     \multirow{-2}{*}{\shortstack{\Cref{algo:algo1}\\[3pt] \textbf{(Ours)}}}
     & \multirow{-2}{*}{\cmark} & \stoc & $\gO\left(\sum_{a\ne a^\ast} (\frac{\sigma}{\Delta_a})^{\frac 1 {\varepsilon - 1}}\log T\right)$ &  {\cmark}  \\\hline
    \end{tabular}}
    \end{savenotes}
\end{minipage}
\end{table}

\paragraph{Best-of-Both-Worlds Bandit Algorithms} 
In online learning, two canonical regimes characterize how losses are generated throughout the learning process. In the \emph{adversarial} regime, the (distributions of) loss functions may be chosen arbitrarily by an adversary and can vary across different rounds. In contrast, the \emph{stochastic} regime assumes that the loss distributions are unknown yet fixed, and the learner receives i.i.d.\ samples from these distributions in each round. While numerous algorithms achieve (nearly) optimal regret guarantees in either the stochastic or adversarial regime for various bandit problems, they typically require prior knowledge of the underlying regime to be applied effectively. In practice, however, the nature of the environment is often unknown. Without this knowledge, algorithms designed for adversarial settings may incur overly conservative regret in stochastic regimes. Conversely, algorithms tailored to stochastic regimes may suffer linear regret in adversarial regimes, leading to a complete failure of the learning process.

To tackle this challenge, the development of best-of-both-worlds (BOBW) algorithms, which can simultaneously achieve (nearly) optimal worst-case regret guarantees in adversarial regimes and (nearly) optimal gap-dependent regret guarantees in stochastic regimes, has attracted significant attention in the online learning literature. 
Broadly speaking, existing BOBW algorithms fall into two categories. The first category is based on the \emph{detect-and-switch} paradigm, which performs statistical tests to distinguish between regimes and subsequently applies algorithms tailored to the detected regime. This approach has been studied in MABs \citep{BubeckS12,SeldinS14,AuerC16,SeldinL17}, bandits with graph feedback \citep{0002ZL22}, and linear bandits \citep{LeeLWZ021}. In particular, \citet{ChengZ024} propose a detect-and-switch BOBW algorithm for heavy-tailed MABs. The second category of BOBW algorithms operates entirely within the FTRL framework, without relying on statistical tests or explicit regime switching. These algorithms typically offer better adaptivity and can even achieve (nearly) optimal regret guarantees in \emph{intermediate} settings, such as the stochastic regime with adversarial corruptions. This line of works includes BOBW algorithms for MABs~\citep{WeiL18,ZimmertS21,Ito21,JinLL23}, linear bandits~\citep{DannWZ23,KongZ023,ItoT23,ItoT23EXO,ito2024adaptive}, combinatorial MABs~\citep{ZimmertLW19,Ito21Hybrid,TsuchiyaIH23}, bandits with graph feedback~\citep{ErezK21,Ito2022nearly,Rouyer2022nearoptimal,DannWZ23,ito2024adaptive,tsuchiyasimple}, online learning to rank~\citep{ChenZL22}, online submodular minimization~\citep{Ito22}, partial monitoring~\citep{Tsuchiya2022bobw,tsuchiyasimple}, and (tabular) Markov decision processes (MDPs)~\citep{jin2020simultaneously,jin2021best,dann2023best}. Notably, \citet{Huang0H22} and \citet{Chen2024uniINF} have developed FTRL-based BOBW algorithms for heavy-tailed MABs.

\FloatBarrier
\begin{table}[htbp]
\caption{Comparisons with existing literature for heavy-tailed linear bandits.}\label{tab:tab2}
\begin{minipage}{\textwidth}
    \centering
    \begin{savenotes}
    \renewcommand{\arraystretch}{1.5}
    \resizebox{\textwidth}{!}{%
    \begin{tabular}{|c|c|c|c|}\hline
    \phantom{\footnotemark}\textbf{Algorithm}\footnote{All the algorithms in this table do not require the truncated non-negativity assumption.}\footnote{Note that HEAVY-OFUL \citep{Huang00023}, Hvt-UCB \citep{Wang2025heavy}, and MED-PE \citep{tajdini2025improved} work under the assumption that the $\varepsilon$-th order central moment is bounded (that is, $\E_{\ell_{a}\sim \nu_{a}}[|\ell_{a}-\langle \phi_{a}, \theta\rangle|^{\varepsilon}]\le \sigma$), while the remaining algorithms including ours assume that the $\varepsilon$-th order raw moment is bounded.}  & \textbf{Regime} &  \textbf{Regret} 
& \phantom{\footnotemark} \textbf{Opt.?} \footnote{\textbf{Opt.?} specifies whether the regret is optimal up to all the logarithmic factors in the worst case or whether the regret is an $\gO(\log T)$-type gap-dependent one. {\cmark \dag} and {\cmark \ddag} indicate that the regret is optimal when $\varepsilon=2$ (up to logarithmic factors) in the setting of finite arm sets and infinite arm sets, respectively.} \\\hline
    \multirow{2}{*}{\shortstack{Lower Bound\\[3pt]\citep{tajdini2025improved}}} & \multirow{2}{*}{---} & \multirow{2}{*}{\phantom{\footnotemark} $\Omega\left(\sigma^{\frac{1}{\varepsilon}}(\log K)^{1-\frac{1}{\varepsilon}} d^{1-\frac{1}{\varepsilon}}T^{\frac{1}{\varepsilon}} \right)$ \footnote{Regret lower bound for the setting of finite arm sets. \citet{tajdini2025improved} also establish a regret lower bound of $\Omega(\sigma^{\frac{1}{\varepsilon}} d^{2-\frac{2}{\varepsilon}}T^{\frac{1}{\varepsilon}})$ for heavy-tailed linear bandits with an infinite arm set.
    The factor $\sigma^{\frac{1}{\varepsilon}}$ does not appear in their original lower bound as they additionally assume $\sup _{a \in \gA}\left|\phia^{\top} \theta\right| \leq 1$.} }  & \multirow{2}{*}{---} \\
    & & & \\\hline
    \multirow{2}{*}{\shortstack{{TOFU}\\[3pt]\citep{ShaoYKL18}}} & \multirow{2}{*}{\phantom{\footnotemark}\stoc \phantom{\footnotemark}} & \multirow{2}{*}{$\gO\left(\sigma^{\frac{1}{\varepsilon}}(\log T)^{\frac{3}{2}-\frac{1}{\varepsilon}} dT^{\frac{1}{\varepsilon}}\right)$} & \multirow{2}{*}{ \cmark \ddag \footnote{Optimal when $\varepsilon=2$ in the setting of infinite arm sets (up to logarithmic factors). While \cite{ShaoYKL18} prove a lower bound of $\Omega(dT^{\frac{1}{1+\varepsilon}})$ and claim that their upper bound is optimal for all $\varepsilon\in (1,2]$, the hard instance used in their lower bound is constructed only for the case with $\E_{\ell_{a}\sim \nu_{a}}[|\ell_{a}-\langle \phi_{a}, \theta\rangle|^{\varepsilon}]=\gO(d)$. \citet{tajdini2025improved} prove a refined lower bound $\Omega(d^{2-\frac{2}{\varepsilon}} T^{\frac{1}{\varepsilon}})$ for $\E_{\ell_{a}\sim \nu_{a}}[|\ell_{a}-\langle \phi_{a}, \theta\rangle|^{\varepsilon}]=\gO(1)$ as well as an improved upper bound $\wtO(d^{\frac{3 \varepsilon-2}{2\varepsilon}} T^{\frac{1}{\varepsilon}})$ for heavy-tailed linear bandits with an infinite arm set.}} \\
    & & & \\\hline
    \multirow{2}{*}{\shortstack{SupBTC\\[3pt] \citep{XueWWZ20}}} & \multirow{2}{*}{\phantom{\footnotemark}\stoc \phantom{\footnotemark}} & \multirow{2}{*}{$\gO\left(\sigma^{\frac{1}{\varepsilon}}\sqrt{\log (KT)}\log T\cdot d^{\frac{1}{2}}T^{\frac{1}{\varepsilon}}\right)$} & \multirow{2}{*}{\cmark\dag} \\
    & & &  \\\hline
    \multirow{2}{*}{\shortstack{{HEAVY-OFUL}\\[3pt]\citep{Huang00023}}} & \multirow{2}{*}{\phantom{\footnotemark}\stoc \phantom{\footnotemark}} & \multirow{2}{*}{\phantom{\footnotemark} $\wtO\left( d T^{\frac{1}{\varepsilon}-\frac{1}{2}} \sqrt{\sum_{t=1}^T \sigma_t^{\nicefrac{2}{\varepsilon}}}+d T^{\frac{1}{\varepsilon}-\frac{1}{2}} \right)$ \footnote{Here \citet{Huang00023} suppose that $\E_{\ell_{t,a}\sim \nu_{t,a}}[|\ell_{t,a}-\langle \phi_{a}, \theta\rangle|^{\varepsilon}]\le \sigma_t$ for all $t\in[T]$.} }  & \multirow{2}{*}{\cmark \ddag}  \\
    & & & \\\hline
    \multirow{2}{*}{\shortstack{{Hvt-UCB}\\[3pt]\citep{Wang2025heavy}}} & \multirow{2}{*}{\phantom{\footnotemark}\stoc \phantom{\footnotemark}} & \multirow{2}{*}{ $\wtO\left( d T^{\frac{1}{\varepsilon}-\frac{1}{2}} \sqrt{\sum_{t=1}^T \sigma_t^{\nicefrac{2}{\varepsilon}}}+d T^{\frac{1}{\varepsilon}-\frac{1}{2}} \right)$ }  & \multirow{2}{*}{\cmark \ddag}  \\
    & & & \\\hline
    \multirow{2}{*}{\shortstack{{MED-PE}\\[3pt]\citep{tajdini2025improved}}} & \multirow{2}{*}{\phantom{\footnotemark}\stoc\phantom{\footnotemark}} & \multirow{2}{*}{ $\gO\left(\sigma^{\frac{1}{\varepsilon}} (\log K)^{1-\frac{1}{\varepsilon}} d^{\frac{1}{2}} T^{\frac{1}{\varepsilon}} \right)$ }  & \multirow{2}{*}{\cmark \dag}  \\
    & & & \\\hline
    \rowcolor{lightgray!30}& & & \\
    \rowcolor{lightgray!30}\multirow{-2}{*}{\shortstack{\Cref{algo:algo2}\\[3pt] \textbf{(Ours)}} } & \multirow{-2}{*}{\phantom{\footnotemark}\adv\phantom{\footnotemark}} & \multirow{-2}{*}{$\gO\left(\sigma^{\frac{1}{\varepsilon}} (\log K)^{1-\frac{1}{\varepsilon}} d^{\frac{1}{2}} T^{\frac{1}{\varepsilon}}\right)$} & \multirow{-2}{*}{\cmark \dag}\\ \hline
    \rowcolor{lightgray!30}
     & \phantom{\footnotemark}\adv\phantom{\footnotemark} & $\wtO\left ( \sigma^{\frac{1}{\varepsilon}}(\log K)^{1-\frac{1}{\varepsilon}} d^{\frac{1}{2}} T^{\frac{1}{\varepsilon}}\right )$  & \cmark\dag \\ \cdashline{2-4} 
     \rowcolor{lightgray!30}
     \multirow{-2}{*}{\shortstack{\Cref{algo:algo3}\\[3pt] \textbf{(Ours)}}}
     & \phantom{\footnotemark}\stoc\footnote{Similar to the tabular case, our algorithm also enjoys an $\gO(\log T+C^{\frac{1}{\varepsilon}})$-type regret in \textbf{stochastic regimes with adversarial corruptions}.
    Please refer to \Cref{thm:bobw_lb} for more details.} &  $\gO\left( d^{\frac{\varepsilon}{2(\varepsilon-1)}}(\frac{\sigma}{\Delta_{\min}})^{\frac{1}{\varepsilon-1}} \log T \cdot \log K /(\varepsilon-1)\right)$  &  \cmark\phantom{\dag}  \\\hline
    \end{tabular}}
    \end{savenotes}
\end{minipage}
\end{table}
\section{Preliminaries}
In this section, we formulate the problem setup of the heavy-tailed linear bandits in the adversarial regime and the adversarial regime with a self-bounding constraint, which encompasses the stochastic regime as a special case.

\paragraph{Notations}
We denote by $\gA$ the finite (and fixed) set of arms and by $K=|\gA|$ the number of arms. Let $\{\phi_a\}_{a\in\gA} \subseteq \mathbb{R}^d$ be the set of the arm feature vectors, where $d$ is the ambient dimension of the arm feature vectors. Without loss of generality, we assume $\{\phi_a\}_{a\in\gA}$ spans $\mathbb{R}^d$ \citep{lattimore2020bandit}. 
Also, we define $\log_{+} (x)=\max\{1,\log x\}$ for any $x>0$ and denote by $\gP(\gA)=\{p\in\sR^K_{\ge 0}:\|p\|_1=1\}$ the set of probability distributions over $\gA$.
For any natural number $n\in \sN$, let $[n]=\{1,2,\ldots,n\}$. 
For any (differentiable) convex function $\psi$, let $D_{\psi}(x,y)=\psi(x)-\psi(y)-\langle \nabla \psi(y),x-y \rangle$ be the Bregman divergence induced by $\psi$. Additionally, we sometimes write $f\lesssim g$ if $f=\gO(g)$.

The interaction between the learner and the environment proceeds in $T$ rounds. In each round $t\in[T]$, the environment determines an \emph{unknown} loss vector $ \theta_t \in \mathbb{R}^d$. For each arm $a\in\gA$, the loss $\ell_{t,a}$ of arm $a$ in round $t$ adheres to a distribution $\nu_{t,a}$ such that $\E_{\ell_{t,a}\sim \nu_{t,a}}[\ell_{t,a}]=\langle \phi_{a}, \theta_{t}\rangle$. Meanwhile, the learner specifies a distribution $p_t\in\gP(\gA)$ and samples an arm $a_t\sim p_t$. Then the learner observes an incurred loss $\ell_{t,a_t}$ sampled from the loss distribution $\nu_{t,a_t}$. The performance of an algorithm adopted by the learner is measured in terms of the \textit{pseudo-regret} (\textit{regret} for short in what follows), defined as
\begin{align}
    \Reg=\max_{a\in\gA}\mathbb{E}\left[\sum_{t=1}^T\ell_{t,a_t}-\sum_{t=1}^T\ell_{t,a}\right]\,,\notag
\end{align}
where the expectation is taken over both the randomness in the loss sequence $\{\ell_t\}_{t=1}^T$ and the internal randomness of the algorithm. 
In particular, we denote by $a^\ast\in\argmin_{a\in \gA}\mathbb{E}\left[\sum_{t=1}^T\ell_{t,a}\right]$ the optimal arm in expectation in hindsight.
Further, we denote by $\gF_t$ the $\sigma$-algebra generated by $\{a_1,\ell_{1,a_1},\ldots,a_t,\ell_{t,a_t}\}$. For simplicity, we abbreviate $\E[\cdot\mid \gF_t]$ as $\E_{t}[\cdot]$.

In the \textit{adversarial regime}, the parameter vector $\theta_t$ is chosen arbitrarily in each round $t$. In this work, we focus on the setting of the \textit{non-oblivious} adversary, meaning that $\theta_t$ may be chosen depending on the history of the learner's past actions $\{a_{\tau}\}_{\tau=1}^{t-1}$ up to round $t-1$, as well as the adversary's internal randomization. Besides, we also consider the \textit{adversarial regime with a self-bounding constraint},  formally defined as follows.
\begin{defn}[Adversarial Regime with a Self-bounding Constraint, \citet{ZimmertS21}]\label{def:def1}
    Let $C\ge 0$, $\Delta\in \sR^K_{\ge 0}$ and $T\in \sN$. An environment is in an adversarial regime with a $(\Delta, C, T)$ self-bounding constraint, if for any algorithm, the regret satisfies 
    \begin{align}
        \Reg\ge \E\left[\sum_{t=1}^T \Delta_{a_t}\right]-C=\E \left[\sum_{t=1}^T \sum_{a\in\gA} \Delta_a p_{t,a}\right]-C\,.\notag
    \end{align}
\end{defn}
As discussed by \citet{ZimmertS21}, this regime subsumes the \textit{stochastic regime} and the \textit{stochastic regime with adversarial corruptions} \citep{LykourisML18} as special cases, where $\Delta_a\ge 0$ is the suboptimality gap, $C=0$ for the stochastic regime, and $C>0$ is the total amount of the corruption for the stochastic regime with adversarial corruptions. As in previous studies for BOBW bandit problems \citep{ZimmertS21,ItoT23EXO,DannWZ23,KongZ023,ItoT23,ZimmertLW19,Ito21Hybrid,TsuchiyaIH23,Ito2022nearly,Tsuchiya2022bobw,tsuchiyasimple,ito2024adaptive}, in the adversarial regime with a self-bounding constraint, we assume that the optimal arm $a^*$ is unique and $\Delta_a>0$ holds for all $a\in \gA\setminus\{a^*\}$. In this case, we 
denote by $p^{\ast}$ the Dirac measure centered on $a^*$ (\textit{i.e.}, $p^*$ is the probability distribution assigning unit mass to $a^*$ and zero elsewhere), and
let $\Delta_{\min }=\min _{a \in\gA \backslash\left\{a^*\right\}} \Delta_a$ be the minimum suboptimality gap. 
Further, we sometimes write $a \neq a^*$ as shorthand for all $a \in\gA \setminus \{a^*\}$.

Unlike the conventional assumptions of bounded variance or even bounded support of the loss functions in existing works studying adversarial or BOBW linear bandits \citep{Abernethy2008CompetingIT,BubeckCK12,DannWZ23,ItoT23,KongZ023,ItoT23EXO,ito2024adaptive}, we only require that the distributions of the losses are \textit{heavy-tailed}, as specified below.
\begin{defn}[Heavy-tailed Loss Distributions]\label{def:HT}
    The $\varepsilon$-th (raw) moment of the loss distribution $\nu_{t,a}$ satisfies $\E_{\ell_{t,a}\sim \nu_{t,a}}[|\ell_{t,a}|^{\varepsilon}]\le \sigma$ for some fixed $\varepsilon\in (1,2]$, $\sigma>0$, all $t\in [T]$ and $a\in\gA$.
\end{defn}
\begin{rmk}\label{rmk:rmk1}
Throughout this work, we assume that the learner has access to the parameters $\varepsilon$ and $\sigma$, consistent with the assumptions made in prior studies on stochastic heavy-tailed linear bandits \citep{ShaoYKL18,XueWWZ20,0004WWY023,Huang00023,Wang2025heavy,tajdini2025improved}.
While some works on adversarial and BOBW algorithms for heavy-tailed MABs do not require knowledge of $\varepsilon$ and $\sigma$ \citep{Huang0H22,Chen2024uniINF}, these approaches rely on the truncated non-negativity assumption over the loss sequence of the optimal arm $a^*$ (see \Cref{sec:previous_approach} for details).
Since we do not impose any additional assumptions on the heavy-tailed distribution, assuming knowledge of $\varepsilon$ and $\sigma$ is technically reasonable. In particular, \citet{GenaltiM0M24} prove that, in the absence of further assumptions on the heavy-tailed loss distribution, the worst-case lower bound of the case with known $\varepsilon$ and $\sigma$ cannot be achieved in the case with unknown $\varepsilon$ and $\sigma$.
\end{rmk}
\section{Challenges and Technical Insights}
In this section, we first discuss the limitations of existing works on BOBW algorithms for heavy-tailed MABs, as well as the challenges in extending these approaches to the linear setting. We then present the core idea of our proposed algorithmic framework for addressing heavy-tailed bandit problems.
\subsection{Challenges in Adversarial Heavy-tailed Linear Bandits}\label{sec:previous_approach}

In online learning and adversarial bandits, FTRL is one of the powerful frameworks for achieving sublinear regret. In each round $t$ of this framework, a probability distribution $p_t \in \gP(\gA)$ is computed by solving the convex optimization problem
\begin{align}
    p_t \in \argmin_{q \in \gP(\gA)} \sum_{s=1}^{t-1} \hat{f}_s(q) + \frac{1}{\eta_t} \psi(q)\,,
\end{align}
where $\hat{f}_t : \gP(\gA) \to \sR$ is an estimator of the true loss function $f_t$, 
$\eta_t > 0$ is the learning rate (with $\{\eta_t\}$ being a monotone non-increasing sequence), 
and $\psi$ is a convex regularizer. An action $a_t$ is then sampled from $p_t$ to interact with the environment.  
A standard analysis of FTRL (see, \textit{e.g.}, Exercise~28.12 of \citet{lattimore2020bandit}) shows that its regret can be upper bounded as
\begin{align}
\Reg \lesssim 
\underbrace{\sumt \eta_t z_t}_{\text{Stability term}}
+ \underbrace{\frac{1}{\eta_{1}} h_1 
+ \sum_{t=2}^T \left(\frac{1}{\eta_t} - \frac{1}{\eta_{t-1}}\right) h_t}_{\text{Penalty term}} \,,
\end{align}
for some quantities $z_t$ and $h_t$ that depend on the specific problem setting and the choice of the regularizer $\psi$.

For the adversarial heavy-tailed linear bandit problem, a plausibly reasonable approach might be to extend the FTRL algorithms and analysis established for adversarial heavy-tailed MABs \citep{Huang0H22,ChengZ024,Chen2024uniINF}. 
However, directly applying their methods would not suffice for solving this problem. In detail, when using the FTRL framework to deal with the adversarial losses, as the losses in heavy-tailed MABs no longer have bounded support, the usual loss estimator $\tildeellta=\frac{\ell_{t,a_t}\indicator{a=a_t}}{p_{t,a}}$ can be arbitrarily negative, which might preclude a stable update of FTRL and prevent from upper bounding the stability term of FTRL effectively. 
To this end, \citet{Huang0H22} use an $\alpha$-Tsallis entropy with $\alpha=1/\varepsilon$ and use the clipped loss estimator 
$\hatellta=\tildeellta\inlineindicator{|\tildeellta|\le s_{t,a}}$ to prevent the loss estimator from being prohibitively negative. Particularly, for FTRL with $\alpha$-Tsallis entropy and learning rate $\eta_t$ in round $t$, it (approximately) requires $\eta_t \hatellta\gtrsim -p_{t,a}^{\alpha-1}$. Thus they set the clipping threshold $s_{t,a}=\Theta(\eta_t^{-1} p_{t,a}^{\alpha-1})=\Theta(\eta_t^{-1} p_{t,a}^{\nicefrac{1}{\varepsilon}-1})$. In theoretical analysis, this enables the following regret decomposition:
\begin{align}\label{eq:previous_app}
    \Reg&=\E\left[\sum_{t=1}^T \left\langle p_t-p^*, \ell_t \right\rangle\right] \notag\\
    &=\E\left[\sum_{t=1}^T \left\langle p_t-p^*, \hatell_t \right\rangle\right]+\mathbb{E}\left[\sum_{t=1}^T \left\langle p_t-p^*, \tildeell_t-\hat{\ell}_t \right\rangle\right] \notag\\
    &=\E\left[\sum_{t=1}^T \left\langle p_t-p^*, \hatell_t \right\rangle\right]
    +\mathbb{E}\left[\sum_{t=1}^T \left\langle p_t, \tildeell_t-\hat{\ell}_t \right\rangle\right]
    +\mathbb{E}\left[\sum_{t=1}^T \hat{\ell}_{t,a^*}-\tildeell_{t,a^*}\right]\,,
\end{align}
where the second line follows from $\tildeell_t$ is an unbiased estimator of $\ell_t$. The first term in Eq.~\eqref{eq:previous_app} can be bounded by normal FTRL analysis, and the second term can also be bounded by controlling the bias via the clipping threshold $s_{t,a}$. However, in this way, the third term in Eq.~\eqref{eq:previous_app} can only be upper bounded by $\gO(\eta_t^{\varepsilon-1}p_{t,a^*}^{(1-\varepsilon)/\varepsilon})$, which might be prohibitively large as $p_{t,a^*}$ might approach zero.
Nevertheless, the truncated non-negativity assumption, requiring $\mathbb{E}[\ell_{t,a^*} \cdot \mathds{1}[|\ell_{t,a^*}|>M]] \geq 0$ for any $M \ge 0$ \citep{Huang0H22,Chen2024uniINF}, ensures the non-positivity of the third term in Eq.~\eqref{eq:previous_app}. Consequently, this term can be safely omitted from the analysis with the help of the truncated non-negativity assumption.

As noted by \citet{ChengZ024}, this assumption may be difficult to verify in practice. Further, it may be unnatural in the linear setting, as it implicitly imposes additional constraints on the unknown loss vector $\theta_t$ and the feature vector $\phi_{a^*}$ of the optimal arm.
More importantly, even under this assumption, it remains infeasible to obtain the desired regret guarantees for adversarial heavy-tailed linear bandits. Specifically, in the linear setting, the unbiased loss estimator is typically chosen as $\tilde{\ell}_{t,a} = \phi_a^\top S_t^{-1} \phi_{a_t} \ell_{t,a_t}$, where $S_t = \sum_{a \in \gA} p_{t,a} \phi_a \phi_a^\top$ denotes the feature covariance matrix. Analogous to the MAB case, a clipped estimator can be constructed as $\hat{\ell}_{t,a} = \tilde{\ell}_{t,a} \cdot \mathds{1}\{|\tilde{\ell}_{t,a}| \le s_{t,a}\}$. However, the bias term on the optimal arm $a^*$, corresponding to the third term in Eq.~\eqref{eq:previous_app}, now becomes
\begin{align*}
    \mathbb{E}[\hat{\ell}_{t,a^*} - \tilde{\ell}_{t,a^*}]
    = \mathbb{E}\left[-\tilde{\ell}_{t,a^*} \cdot \mathds{1}\{|\tilde{\ell}_{t,a^*}| > s_{t,a^*}\}\right]
    = \mathbb{E}\left[-\phi_{a^*}^\top S_t^{-1} \phi_{a_t} \ell_{t,a_t} \cdot \mathds{1}\{|\tilde{\ell}_{t,a^*}| > s_{t,a^*}\}\right]\,.
\end{align*}
Unlike in the MAB setting, the truncated non-negativity assumption no longer ensures that this bias term is non-positive. This is due to two key differences: (a) the quantity $-\phi_{a^*}^\top S_t^{-1} \phi_{a_t}$ is not necessarily non-negative for general feature vectors of arms; and (b) the regression target in $\tilde{\ell}_{t,a}$ is now $\ell_{t,a_t}$ rather than $\ell_{t,a^*}$. The latter issue does not arise in the MAB case as $\tilde{\ell}_{t,a^*} = \frac{\ell_{t,a_t} \cdot \mathds{1}\{a_t = a^*\}}{p_{t,a^*}} = \frac{\ell_{t,a^*} \cdot \mathds{1}\{a_t = a^*\}}{p_{t,a^*}}$.

On the other hand, to eliminate the need for the truncated non-negativity assumption, \citet{ChengZ024} resort to the FTRL algorithm with a log-barrier regularizer. This choice enables more stable algorithm updates and relaxes the stability condition to $\eta_t \hat{\ell}_{t,a} \gtrsim -p_{t,a}^{-1}$, which can be satisfied by setting a larger clipping threshold $s_{t,a} = \Theta(\eta_t^{-1} p_{t,a}^{-1})$. As a result, the bias term on the optimal arm $a^*$ can be well controlled without the truncated non-negativity assumption. However, while this approach yields a (nearly) minimax-optimal regret in adversarial heavy-tailed MABs, the use of the log-barrier regularizer introduces a penalty term that grows polynomially with $K$, thereby preventing efficient learning in the linear bandit setting. Furthermore, the BOBW algorithm proposed by \citet{ChengZ024} relies on the detect-and-switch scheme, which requires statistical tests to distinguish between regimes and is thus less adaptive compared to fully FTRL-based BOBW approaches.

\subsection{A General Framework for Adversarial Heavy-tailed Bandits}\label{sec:framework}
To address the above challenges, we propose a general framework for adversarial heavy-tailed bandits that enables the use of FTRL with a general $\alpha$-Tsallis entropy regularizer, without relying on the truncated non-negativity assumption on the loss sequence of the optimal arm.

In particular, we consider running FTRL over the loss functions shifted by a bonus function as follows:
\begin{align}
    p_t\in\argmin_{p \in \gP(\gA)}\left\langle\sum_{s=1}^{t-1} \hat{\ell}_s-b_s, p\right\rangle+\beta_t \psi(p)\,,\notag
\end{align}
where $b_s\colon \gA \to \sR_{\ge 0}^K$ denotes the bonus function in round $s$, $\beta_t = 1/\eta_t$ is the inverse learning rate, and $\psi(p) = -\frac{1}{\alpha} \sum_{a \in \gA} \left(p_a^\alpha - p_a\right)$ is the $\alpha$-Tsallis entropy regularizer with $\alpha \in (0,1)$.

Intuitively, shifting the loss estimates by a positive bonus function $b_t$ introduces a negative part to the bias term of the optimal arm $a^*$, and can effectively eliminate this bias if $b_{t,a^*}$ serves as an upper bound of the bias of the clipped loss estimate. To see this, incorporating the bonus function into the FTRL updates permits the following regret decomposition:
\begin{align}\label{eq:bonus_explain}
    \Reg&=\E\left[\sum_{t=1}^T \left\langle p_t-p^*, \ell_t \right\rangle\right] \notag\\
    &=\E\left[\sum_{t=1}^T \left\langle p_t-p^*, \hatell_t-b_t \right\rangle\right]+\mathbb{E}\left[\sum_{t=1}^T \left\langle p_t-p^*, \tildeell_t-\hat{\ell}_t+b_t \right\rangle\right] \notag\\
    &\le \E\left[\sum_{t=1}^T \left\langle p_t-p^*, \hatell_t-b_t \right\rangle\right]
    +\mathbb{E}\left[\sum_{t=1}^T \sum_{a\in\gA}\abs{p_{t,a}-p^*_a} \abs{\tildeell_{t,a}}\indicator{|\tildeell_{t,a}|> s_{t,a}}\right] \notag\\
    &\quad+\mathbb{E}\left[\sum_{t=1}^T \left\langle p_t-p^*, b_t \right\rangle\right] \notag\\
    &=\E\left[\sum_{t=1}^T \left\langle p_t-p^*, \hatell_t-b_t \right\rangle\right]
    +\mathbb{E}\left[\sum_{t=1}^T \sum_{a\in\gA\setminus \{a^*\}}p_{t,a} \abs{\tildeell_{t,a}}\indicator{|\tildeell_{t,a}|> s_{t,a}}\right] \notag\\
    &\quad+\mathbb{E}\left[\sum_{t=1}^T\sum_{a\in\gA\setminus \{a^*\}}p_{t,a}b_{t,a}  \right]+\mathbb{E}\left[\sum_{t=1}^T (1-p_{t,a^*})\left(\abs{\tildeell_{t,a^*}}\indicator{|\tildeell_{t,a^*}|> s_{t,a^*}}-b_{t,a^*}\right)\right]\,.
\end{align}
Therefore, as long as we can ensure that $b_{t,a} \ge \E_{t-1}\left[\abs{\tildeell_{t,a}} \inlineindicator{|\tildeell_{t,a}| > s_{t,a}}\right]$, the last term in Eq.~\eqref{eq:bonus_explain} will be non-positive and can thus be omitted in the analysis.

The caveat of incorporating bonus functions into the FTRL updates is, of course, that the regret incurred by competing with $a^*$ on the shifted loss sequence $\{\hatell_t - b_t\}_{t=1}^T$ using FTRL (\textit{i.e.}, the first term in Eq.~\eqref{eq:bonus_explain}) will be enlarged, as the stability term of FTRL now scales with $\sum_{a \in \gA} (\hatell_{t,a}^2 + b_{t,a}^2)$ rather than the usual $\sum_{a \in \gA} \hatell_{t,a}^2$. Moreover, since $b_t \ge 0$, the shifted loss estimate $\hatell_t - b_t$ becomes more negative, which may result in unstable updates of the algorithm. Nevertheless, we note that these issues can be mitigated by appropriately selecting the clipping threshold $s_{t,a}$ and incorporating a certain amount of additional exploration to prevent $b_{t,a}$ from becoming excessively large.

We will shortly demonstrate that applying our framework to both heavy-tailed MABs and linear bandits yields (nearly) minimax-optimal regret guarantees in adversarial regimes, and even achieves BOBW regret guarantees.

\section{Best-of-both-worlds Heavy-tailed Multi-Armed Bandits}\label{sec:bobw_mab}
In heavy-tailed MABs, the loss functions do not necessarily satisfy the linear structure $\E_{\ell_{t,a} \sim \nu_{t,a}}[\ell_{t,a}] = \langle \phi_a, \theta_t \rangle$, but they still adhere to the heavy-tailed loss distribution conditions defined in \Cref{def:HT}. In this section, we present the first FTRL-based BOBW algorithm as well as the analysis for heavy-tailed MABs that does not rely on the truncated non-negativity assumption, building upon the framework introduced in \Cref{sec:framework}.

\begin{algorithm}[!t]
\caption{Best-of-both-worlds Algorithm for Heavy-tailed MABs}\label{algo:algo1}
\begin{algorithmic}[1]
   \REQUIRE Number of rounds $T$, set of arms $\gA$, $0 < \alpha<1$, $\varepsilon\in(1,2]$ and $\sigma>0$.
    \STATE Set $\psi(q)=-\frac{1}{\alpha} \sum_{a \in \gA}\left(q_a^\alpha-q_a\right)$.
   \FOR{$t=1$ {\bfseries to} $T$}
        \STATE Set $\beta_t=\sigma^{\frac {1}{\varepsilon}}\max\{8\varepsilon K^{\frac{\varepsilon-1}{\varepsilon}}/(\varepsilon-1),t^{\frac{1}{\varepsilon}}\}$. \label{alg:line:tabular_beta1}
      \STATE Compute $q_t\in\gP(\gA)$ by 
      \begin{align}
        q_t\in\argmin_{q \in \gP(\gA)}\left\langle\sum_{s=1}^{t-1} \hat{\ell}_s-b_s, q\right\rangle+\beta_t \psi(q)\,.
    \end{align}\label{algo:tabular:line:q_t}
    \STATE Set $s_{t, a}=(1-\alpha)\tilde{q}_{t,a}^{\alpha-1}\beta_t/8$ for all $a\in\gA$.\label{algo:tabular:line:s_ta}
    \STATE Compute $p_t=(1-\gamma_t) q_t+\gamma_t p_0$, where $p_{0}$ is a uniform distribution over $\gA$ and $\gamma_t=\sigma^{\frac{1}{\varepsilon-1}} K s_{t, \tilde{a}_t}^{\varepsilon/(1-\varepsilon)}$.\label{algo:tabular:line:p_t}
    \STATE Sample arm $a_t\sim p_t$ and observe the incurred loss $\ell_{t,a_t}$.\label{algo:tabular:line:sample}
    \STATE Set 
    $\tilde{\ell}_{t,a}=\frac{\ell_{t, a} \indicator{a_t=a}}{p_{t, a}}$, 
    $\hat{\ell}_{t, a}=\tilde{\ell}_{t, a} \inlineindicator{|\tilde{\ell}_{t, a}| \le s_{t, a}}$, 
    $b_{t,a}=\sigma p_{t, a}^{1-\varepsilon} s_{t, a}^{1-\varepsilon}$ for all $a\in\gA$.\label{algo:tabular:line:loss_estimate}
    \ENDFOR
\end{algorithmic}
\end{algorithm}

\subsection{Algorithm}
The pseudocode of our BOBW algorithm for heavy-tailed MABs is presented in \Cref{algo:algo1}.  
At the commencement of each round $t \in [T]$, the algorithm first sets the (inverse) learning rate as 
$\beta_t=\sigma^{\frac{1}{\varepsilon}}\max\{8\varepsilon K^{\frac{\varepsilon-1}{\varepsilon}}/(\varepsilon-1), t^{\frac{1}{\varepsilon}}\}$, where $\beta_t$ is slightly inflated for small $t$ to ensure that the exploration strength $\gamma_t$ is small enough across all rounds $t\in[T]$. 
The algorithm then computes the FTRL update $q_t$ over the shifted loss estimates $\{\hat{\ell}_s - b_s\}_{s=1}^{t-1}$ (Line~\ref{algo:tabular:line:q_t}).
On Lines~\ref{algo:tabular:line:s_ta}--\ref{algo:tabular:line:p_t}, the algorithm sets the clipping threshold $s_{t,a}$ and mixes $q_t$ with the uniform distribution $p_0$ to conduct additional exploration. In particular, to ensure that the clipped loss estimates $\hatellta$ and the bonus function $b_{t,a}$ are not too large to destabilize the FTRL update, the clipping threshold $s_{t,a}$ and the additional exploration strength $\gamma_t$ are carefully designed as
\begin{align}
    s_{t, a} = (1 - \alpha) \tilde{q}_{t,a}^{\alpha - 1} \beta_t/8 \,, \quad 
    \gamma_t =  \sigma^{\frac{1}{\varepsilon - 1}} K s_{t, \tilde{a}_t}^{\varepsilon / (1 - \varepsilon)}\,,\notag
\end{align}
where $\tilde{q}_{t,a} = \min\{q_{t,a}, q_{t*}\}$, $q_{t*} = \min\left\{\|q_t\|_{\infty}, 1 - \|q_t\|_{\infty}\right\}$, and $\tilde{a}_t \in \arg\max_{a \in \gA} q_{t,a}$.
The dependence on $\tilde{q}_{t,a}$ rather than $q_{t,a}$ in defining $s_{t,a}$ servers as a key ingredient that eliminates the dependence on $a^*$ in the FTRL stability term, which is essential for achieving the desired BOBW regret guarantee. The learner then samples an arm $a_t \sim p_t$ to interact with the environment and observes the incurred loss $\ell_{t,a_t}$ (Line~\ref{algo:tabular:line:sample}). Finally, round~$t$ concludes with the construction of the clipped loss estimate $\hat{\ell}_{t,a}$ and the bonus function $b_{t,a}$ (Line~\ref{algo:tabular:line:loss_estimate}). In particular, the bonus is defined as $b_{t,a} =\sigma p_{t,a}^{1 - \varepsilon} s_{t,a}^{1 - \varepsilon}$ for all $a \in \gA$, which is moderately large to eliminate the loss bias of the optimal arm while still enabling a BOBW regret guarantee.

\subsection{Analysis}
The regret of \Cref{algo:algo1} is guaranteed by the following theorem.
\begin{thm}\label{thm:thm1}
By setting $\alpha=1/\varepsilon$, for heavy-tailed MABs in adversarial regimes,  \Cref{algo:algo1} with  achieves
    \begin{align}
    \operatorname{Reg}_T=\gO\left(\sigma^{\frac{1}{\varepsilon}}K^{\frac{\varepsilon-1}{\varepsilon}}T^{\frac{1}{\varepsilon}}+\sigma^{\frac{1}{\varepsilon}}K\iota(\varepsilon)+\kappa\right)\,,\notag
\end{align}
where $\kappa=8\varepsilon^2\sigma^{\frac{1}{\varepsilon}}K^{\frac{2(\varepsilon-1)}{\varepsilon}}/(\varepsilon-1)$, $\iota(\varepsilon)=\log T$ if $\varepsilon=2$ and $\iota(\varepsilon)=\left(\frac{8\varepsilon}{\varepsilon-1}\right)^{\frac{\varepsilon}{\varepsilon-1}}\frac{\varepsilon-1}{2-\varepsilon}$ if $\varepsilon\in (1,2)$.
Further, in adversarial regimes with a $(\Delta, C, T)$-self-bounding constraint, \Cref{algo:algo1} achieves
    \begin{align}
        \operatorname{Reg}_T=\gO\left(\left(\nicefrac{2\sigma}{\varepsilon}\right)^{\frac{1}{\varepsilon-1}}
    \omega(\Delta)^{\frac{\varepsilon}{\varepsilon-1}} \log T
    + \omega(\Delta) \left(\log T\right)^{1-\frac{1}{\varepsilon}}\left(\nicefrac{2\sigma}{\varepsilon}\right)^{\frac{1}{\varepsilon}}C^{\frac{1}{\varepsilon}}
    +\sigma^{\frac{1}{\varepsilon}}K\iota(\varepsilon)+\kappa
    \right)\notag\,,
    \end{align}
where $\omega(\Delta)= \left(\sum_{a \neq a^*}\Delta_a^{1/(1-\varepsilon)} \right)^{(\varepsilon-1)/\varepsilon}$.
\end{thm}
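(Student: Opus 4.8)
The plan is to bound $\Reg$ via the decomposition in \eqref{eq:bonus_explain}. Writing $p_t-q_t=\gamma_t(p_0-q_t)$, this expresses $\Reg$ as the sum of the FTRL term $\E[\sumt\langle q_t-p^*,\hatell_t-b_t\rangle]$, an exploration term $\E[\sumt\gamma_t\langle p_0-q_t,\hatell_t-b_t\rangle]$, the suboptimal-arm clipping-bias term, the suboptimal-arm bonus term $\E[\sumt\sum_{a\neq a^*}p_{t,a}b_{t,a}]$, and the last term of \eqref{eq:bonus_explain}. The first step is to show that the bonus dominates the conditional expected clipping bias on every arm: by Markov's inequality and \Cref{def:HT},
\begin{align}
\E_{t-1}\!\left[|\tildeellta|\inlineindicator{|\tildeellta|>s_{t,a}}\right]\le s_{t,a}^{1-\varepsilon}\E_{t-1}\!\left[|\tildeellta|^{\varepsilon}\right]=s_{t,a}^{1-\varepsilon}p_{t,a}^{1-\varepsilon}\E\!\left[|\ell_{t,a}|^{\varepsilon}\right]\le\sigma p_{t,a}^{1-\varepsilon}s_{t,a}^{1-\varepsilon}=b_{t,a}\,.\notag
\end{align}
Hence the last term of \eqref{eq:bonus_explain} is nonpositive and may be discarded, and the suboptimal-arm clipping bias is at most the bonus term; it therefore remains to bound the FTRL, exploration, and bonus terms.

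For the FTRL term I would use the standard stability-plus-penalty bound for FTRL with the $\tfrac1\varepsilon$-Tsallis regularizer. The penalty splits into an initial contribution of order $\beta_1 K^{1-1/\varepsilon}=\gO(\kappa)$ and a ``local'' contribution $\sum_t(\beta_{t+1}-\beta_t)\cdot\varepsilon\sum_{a\neq a^*}q_{t,a}^{1/\varepsilon}$; since $\beta_{t+1}-\beta_t=\gO(\sigma^{1/\varepsilon}t^{1/\varepsilon-1})$, this is $\gO(\sigma^{1/\varepsilon}\sum_t t^{-(\varepsilon-1)/\varepsilon}\sum_{a\neq a^*}q_{t,a}^{1/\varepsilon})$. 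For the stability term, the crux is to verify---using the choices $s_{t,a}=(1-\alpha)\tilde q_{t,a}^{\alpha-1}\beta_t/8$, $\gamma_t=\sigma^{1/(\varepsilon-1)}Ks_{t,\tilde a_t}^{\varepsilon/(1-\varepsilon)}$, and $\beta_t$ whose floor $8\varepsilon K^{(\varepsilon-1)/\varepsilon}\sigma^{1/\varepsilon}/(\varepsilon-1)$ is tuned to force $\gamma_t\le\tfrac12$ throughout---that (i) the shifted estimate $\hatellta-b_{t,a}$ is never so negative as to invalidate the local-norm stability bound (here one uses $\eta_t s_{t,a}=(1-\alpha)\tilde q_{t,a}^{\alpha-1}/8$ and the exploration floor $p_{t,a}\ge\gamma_t/K$, which yields $\sigma p_{t,a}^{1-\varepsilon}\le s_{t,\tilde a_t}^{\varepsilon}$ and hence $b_{t,a}\le s_{t,a}$), and (ii) that, after taking expectations and using $\E_{t-1}[\hatellta^2]\le\sigma p_{t,a}^{1-\varepsilon}s_{t,a}^{2-\varepsilon}$ together with $b_{t,a}^2\le b_{t,a}s_{t,a}=\sigma p_{t,a}^{1-\varepsilon}s_{t,a}^{2-\varepsilon}$, the stability term collapses to $\gO(\sigma\beta_t^{1-\varepsilon}\sum_{a\neq a^*}q_{t,a}^{1/\varepsilon})$---the exponents of $q_{t,a}$ summing to exactly $1/\varepsilon$ once $\alpha=1/\varepsilon$ is substituted. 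The capped weight $\tilde q_{t,a}=\min\{q_{t,a},q_{t*}\}$ is precisely what lets the resulting sum exclude $a^*$; the step I expect to be the main obstacle is controlling the contribution of the top arm $\tilde a_t$, whose FTRL mass can approach the simplex boundary, where the naive per-coordinate local-norm estimate is too weak and one must instead exploit the near-boundary growth of the Bregman divergence $D_\psi$---this is exactly the role of the $\min\{\|q_t\|_\infty,1-\|q_t\|_\infty\}$ cap.

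For the exploration term, $\E_{t-1}[|\hatellta|]\le\E_{t-1}[|\tildeellta|]=|\E_{\ell\sim\nu_{t,a}}[\ell]|\le\sigma^{1/\varepsilon}$ (Jensen) and the analogous small bound on $b_{t,a}$ give $\E[\sumt\gamma_t\langle p_0-q_t,\hatell_t-b_t\rangle]=\gO(\sigma^{1/\varepsilon}\sumt\gamma_t)$ up to a negligible multiple of the bonus term; substituting $\gamma_t$ and $\beta_t$ and splitting into the $\gO((\varepsilon K^{(\varepsilon-1)/\varepsilon}/(\varepsilon-1))^{\varepsilon})$ ``floor'' rounds (where $\gamma_t\le\tfrac12$) and the later rounds (where $\gamma_t=\gO(Kt^{-1/(\varepsilon-1)})$) gives $\sigma^{1/\varepsilon}\sumt\gamma_t=\gO(\sigma^{1/\varepsilon}K\iota(\varepsilon)+\kappa)$, with $\iota(\varepsilon)$ capturing $\sumt t^{-1/(\varepsilon-1)}$. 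For the bonus term, substituting $s_{t,a}$ and $\alpha=1/\varepsilon$ into $p_{t,a}b_{t,a}=\sigma p_{t,a}^{2-\varepsilon}s_{t,a}^{1-\varepsilon}$ and using $p_{t,a}=\gO(q_{t,a})$ up to the already-accounted exploration mass gives $p_{t,a}b_{t,a}=\gO(\sigma\beta_t^{1-\varepsilon}q_{t,a}^{1/\varepsilon})$ for $a\neq a^*$ (again the $q_{t,a}$-exponents sum to $1/\varepsilon$), hence $\E[\sumt\sum_{a\neq a^*}p_{t,a}b_{t,a}]=\gO(\sigma^{1/\varepsilon}\sumt t^{-(\varepsilon-1)/\varepsilon}\sum_{a\neq a^*}q_{t,a}^{1/\varepsilon})$, of the same shape as the penalty.

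Collecting the three terms, $\Reg\lesssim\sigma^{1/\varepsilon}\sumt t^{-(\varepsilon-1)/\varepsilon}\sum_{a\neq a^*}q_{t,a}^{1/\varepsilon}+\sigma^{1/\varepsilon}K\iota(\varepsilon)+\kappa$. In the adversarial regime, bounding $\sum_{a\neq a^*}q_{t,a}^{1/\varepsilon}\le K^{(\varepsilon-1)/\varepsilon}$ by Hölder and $\sumt t^{-(\varepsilon-1)/\varepsilon}\le\varepsilon T^{1/\varepsilon}$ yields the first claim. In the adversarial regime with a $(\Delta,C,T)$ self-bounding constraint, I would instead apply Hölder as $\sum_{a\neq a^*}q_{t,a}^{1/\varepsilon}\le\omega(\Delta)\big(\sum_{a\neq a^*}\Delta_a q_{t,a}\big)^{1/\varepsilon}$ (valid because $\omega(\Delta)^{\varepsilon/(\varepsilon-1)}=\sum_{a\neq a^*}\Delta_a^{1/(1-\varepsilon)}$), then Hölder over $t$ with $\sumt t^{-1}=\gO(\log T)$ and Jensen to obtain $\Reg\lesssim\sigma^{1/\varepsilon}\omega(\Delta)(\log T)^{1-1/\varepsilon}\big(\E[\sumt\sum_{a\neq a^*}\Delta_a q_{t,a}]\big)^{1/\varepsilon}+\sigma^{1/\varepsilon}K\iota(\varepsilon)+\kappa$. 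Finally, \Cref{def:def1} gives $\Reg\ge\E[\sumt\sum_{a}\Delta_a p_{t,a}]-C$, which together with $p_{t,a}\ge\tfrac12 q_{t,a}$ lets us replace $\E[\sumt\sum_{a\neq a^*}\Delta_a q_{t,a}]$ by $\gO(\Reg+C)$; applying Young's inequality $x^{1/\varepsilon}y\le\tfrac12 x+\gO(y^{\varepsilon/(\varepsilon-1)})$ and rearranging then yields the stated bound, with the $\varepsilon$-dependent constants $(2\sigma/\varepsilon)^{1/(\varepsilon-1)}$ and $(2\sigma/\varepsilon)^{1/\varepsilon}$ tracking through the $\tfrac1\alpha=\varepsilon$ factors and Young's constant.
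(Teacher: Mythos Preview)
Your proposal is correct and follows essentially the same route as the paper: the same stability/penalty/bias/exploration accounting leading to $\Reg\lesssim\sigma^{1/\varepsilon}\sum_t t^{(1-\varepsilon)/\varepsilon}\sum_{a\neq a^*}q_{t,a}^{1/\varepsilon}+\sigma^{1/\varepsilon}K\iota(\varepsilon)+\kappa$, then H\"older for the adversarial bound and the self-bounding trick for the constrained regime (the paper uses the equivalent $(1+\lambda)\Reg-\lambda\Reg$ formulation of your Young's-inequality step). One organizational difference worth noting: the paper peels off the exploration term in terms of the \emph{true} loss $\ell_t$ \emph{before} passing to $\hat\ell_t-b_t$, so its exploration term is simply $\E[\sum_t\gamma_t\langle p_0-q_t,\ell_t\rangle]\le 2\sigma^{1/\varepsilon}\E[\sum_t\gamma_t]$; your ordering leaves an exploration term $\gamma_t\langle p_0-q_t,-b_t\rangle$ whose $a^*$-piece is not literally covered by your ``suboptimal-arm bonus term'' and needs a separate (easy) check---this is the loose end your phrase ``up to a negligible multiple of the bonus term'' is hiding. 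Also, a small slip: $\E_{t-1}[|\tilde\ell_{t,a}|]=\E[|\ell_{t,a}|]$, not $|\E[\ell_{t,a}]|$; the Jensen bound $\le\sigma^{1/\varepsilon}$ still goes through.
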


The proof of \Cref{thm:thm1} is deferred to Appendix~\ref{sec:app:mab}. Ignoring lower-order terms, our \Cref{algo:algo1} guarantees a regret of $\gO(\sigma^{\frac{1}{\varepsilon}} K^{1 - \frac{1}{\varepsilon}} T^{\frac{1}{\varepsilon}})$ in adversarial regimes, and a regret of $\gO(\sum_{a \ne a^*} (\frac{\sigma}{\Delta_a})^{\frac{1}{\varepsilon - 1}} \log T)$ in stochastic regimes (\textit{i.e.}, when $C = 0$). These bounds match the $\Omega(\sigma^{\frac{1}{\varepsilon}} K^{1 - \frac{1}{\varepsilon}} T^{\frac{1}{\varepsilon}})$ worst-case lower bound and the $\Omega(\sum_{a \ne a^*} (\frac{\sigma}{\Delta_a})^{\frac{1}{\varepsilon - 1}} \log T)$ gap-dependent lower bound of \citet{Bubeck2012BanditsWH}. 
Furthermore, our result improves upon the best-known BOBW regret bound without the truncated non-negativity assumption, achieved by the SAO-HT algorithm \citep{ChengZ024}, by a factor of $\gO(\log K \cdot \log^4 T)$ in adversarial regimes and $\gO(\log K \cdot \log^3 T)$ in stochastic regimes.
Moreover, compared to SAO-HT \citep{ChengZ024}, our algorithm can also guarantee an $\gO(\log T + C^{\frac{1}{\varepsilon}})$-type regret in the intermediate setting (\textit{e.g.}, stochastic regimes with adversarial corruptions), when the total amount of corruption is bounded by $C$.

\section{Adversarial Heavy-tailed Linear Bandits}\label{sec:adv_htlb}
In this section, we introduce the proposed algorithm for learning adversarial heavy-tailed linear bandits with fixed arm sets in \Cref{sec:adv_htlb_algo} and the analysis in \Cref{sec:adv_htlb_analysis}.
\subsection{Algorithm}\label{sec:adv_htlb_algo}
\begin{algorithm}[!t]
\caption{Algorithm for Adversarial Heavy-tailed Linear Bandits}\label{algo:algo2}
\begin{algorithmic}[1]
   \REQUIRE Number of rounds $T$, set of arms $\gA$, $\varepsilon\in (1,2]$ and $\sigma>0$.
    \STATE Set $\psi(q)=\sum_{a\in\gA} q_a\log q_a$, $\beta=\left(\frac{\log K}{\sigma d^{\varepsilon/2}T}\right)^{-1/\varepsilon}$, $\gamma=4 \sigma^{\frac{2}{\varepsilon}} d \beta^{-2}$, and $s_{t,a}=\beta/2$ for all $a\in\gA$ and $t\in[T]$.
   \FOR{$t=1$ {\bfseries to} $T$}
      \STATE Compute $q_t\in\gP(\gA)$ by 
      \begin{align}
        q_t\in\argmin_{q \in \gP(\gA)}\left\langle\sum_{s=1}^{t-1} \hat{\ell}_s-b_s, q\right\rangle+\beta \psi(q)\,.
    \end{align}\label{algo:line:q_t}
    \STATE Compute $p_t=(1-\gamma) q_t+\gamma p_0$, where $p_{0}$ is a G-optimal design distribution over $\{\phi_a\}_{a\in\gA}$.\label{algo:line:p_t}
    \STATE Sample arm $a_t\sim p_t$ and observe the incurred loss $\ell_{t,a_t}$.\label{algo:line:sample}
    \STATE Set $\tilde{\ell}_{t,a}=\phi_a^{\top}S_t^{-1}\phi_{a_t}\ell_{t, a_t}$, $\hat{\ell}_{t, a}=\tilde{\ell}_{t, a} \inlineindicator{|\tilde{\ell}_{t, a}| \le s_{t, a}}$, $b_{t,a}=\sigma s_{t, a}^{1-\varepsilon} (\phia^{\top} S_t^{-1} \phia)^{\varepsilon/2}$ for all $a\in\gA$, where $S_t=\sum_{a\in\gA} p_{t,a}\phia\phia^{\top}$.\label{algo:line:loss_estimate}
    \ENDFOR
\end{algorithmic}
\end{algorithm}
The pseudocode for the algorithm that learns adversarial heavy-tailed linear bandits is detailed in \Cref{algo:algo2}. This algorithm also instantiates the general framework introduced in \Cref{sec:framework}.  
To learn the heavy-tailed linear bandits in the adversarial case alone, it suffices to use a (negative) Shannon entropy regularizer $\psi(q) = \sum_{a \in \gA} q_a \log q_a$ along with time-invariant choices for the learning rate, clipping threshold, and exploration strength. Specifically, we set the (inverse) learning rate as $\beta = \left(\frac{\log K}{\sigma d^{\varepsilon / 2} T}\right)^{-1/\varepsilon}$, the clipping threshold as $s_{t,a} = \beta/2$ for all $a\in\gA$, and the exploration strength as $\gamma = 4 \sigma^{\frac{2}{\varepsilon}} d \beta^{-2}$.
Compared to \Cref{algo:algo1}, other differences are that we use loss estimators $\hat{\ell}_{t, a} = \tilde{\ell}_{t, a} \inlineindicator{|\tilde{\ell}_{t, a}| \le s_{t, a}}$, where $\tilde{\ell}_{t,a} = \phi_a^{\top} S_t^{-1} \phi_{a_t} \ell_{t, a_t}$ and $S_t=\sum_{a\in\gA} p_{t,a}\phia\phia^{\top}$, and bonus functions $b_{t,a} = \sigma s_{t, a}^{1 - \varepsilon} (\phi_a^{\top} S_t^{-1} \phi_a)^{\varepsilon / 2}$, both tailored to the linear setting.

\subsection{Analysis}\label{sec:adv_htlb_analysis}
The following theorem establishes the regret guarantee of \Cref{algo:algo2} for adversarial heavy-tailed linear bandits with fixed arm sets.
\begin{thm}\label{thm:thm2}
 For adversarial heavy-tailed linear bandits, \Cref{algo:algo2} achieves
\begin{align}
    \operatorname{Reg}_T=\gO\left(\sigma^{\frac{1}{\varepsilon}} (\log K)^{1-\frac{1}{\varepsilon}} d^{\frac{1}{2}} T^{\frac{1}{\varepsilon}}\right)\,.\notag
\end{align}
\end{thm}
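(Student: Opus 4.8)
The plan is to instantiate the bonus-shifted regret decomposition of \Cref{sec:framework} (cf.\ Eq.~\eqref{eq:bonus_explain}), additionally accounting for the G-optimal exploration $p_t=(1-\gamma)q_t+\gamma p_0$, and then to verify the two properties required of the bonus: it must dominate the clipping bias on \emph{every} arm (crucially on $a^*$, where no truncated non-negativity is available), while remaining small enough not to destabilize exponential-weights FTRL. First I would split $\langle p_t-p^*,\ell_t\rangle=\langle q_t-p^*,\ell_t\rangle+\gamma\langle p_0-q_t,\ell_t\rangle$; since $|\langle\phi_a,\theta_t\rangle|=|\E[\ell_{t,a}]|\le\sigma^{1/\varepsilon}$ by the power-mean inequality, the exploration term contributes at most $2\gamma T\sigma^{1/\varepsilon}$ in expectation. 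Because $q_t$ is $\gF_{t-1}$-measurable and $\E_{t-1}[\phi_{a_t}\ell_{t,a_t}]=S_t\theta_t$, so that $\E_{t-1}[\tilde\ell_{t,a}]=\phi_a^\top\theta_t$, I may replace $\ell_t$ by $\tilde\ell_t$ in expectation and decompose $\langle q_t-p^*,\tilde\ell_t\rangle=\langle q_t-p^*,\hat\ell_t-b_t\rangle+\langle q_t-p^*,\tilde\ell_t-\hat\ell_t+b_t\rangle$. Writing $\tilde\ell_{t,a}-\hat\ell_{t,a}=\tilde\ell_{t,a}\mathds{1}\{|\tilde\ell_{t,a}|>s_{t,a}\}$ and separating the $a=a^*$ coordinate exactly as in Eq.~\eqref{eq:bonus_explain}, the proof reduces to three facts: (i) $b_{t,a^*}\ge\E_{t-1}[|\tilde\ell_{t,a^*}|\mathds{1}\{|\tilde\ell_{t,a^*}|>s_{t,a^*}\}]$, which makes the $a^*$-term non-positive; (ii) a bound on $\sum_t\sum_{a\ne a^*}q_{t,a}b_{t,a}$; and (iii) a bound on the FTRL term $\sum_t\langle q_t-p^*,\hat\ell_t-b_t\rangle$.

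The crux is (i), and I expect this to be the main obstacle. Since $\varepsilon>1$, on $\{|\tilde\ell|>s\}$ one has $|\tilde\ell|\le s^{1-\varepsilon}|\tilde\ell|^\varepsilon$, hence $\E_{t-1}[|\tilde\ell_{t,a}|\mathds{1}\{|\tilde\ell_{t,a}|>s_{t,a}\}]\le s_{t,a}^{1-\varepsilon}\E_{t-1}[|\tilde\ell_{t,a}|^\varepsilon]$. Conditioning on $a_t$ and using the heavy-tail assumption $\E[|\ell_{t,a_t}|^\varepsilon\mid a_t]\le\sigma$ gives $\E_{t-1}[|\tilde\ell_{t,a}|^\varepsilon]\le\sigma\sum_b p_{t,b}|\phi_a^\top S_t^{-1}\phi_b|^\varepsilon$; then Jensen's inequality (concavity of $x\mapsto x^{\varepsilon/2}$) together with the covariance identity $\sum_b p_{t,b}(\phi_a^\top S_t^{-1}\phi_b)^2=\phi_a^\top S_t^{-1}\phi_a$ yields $\E_{t-1}[|\tilde\ell_{t,a}|^\varepsilon]\le\sigma(\phi_a^\top S_t^{-1}\phi_a)^{\varepsilon/2}$, and therefore $\E_{t-1}[|\tilde\ell_{t,a}|\mathds{1}\{|\tilde\ell_{t,a}|>s_{t,a}\}]\le\sigma s_{t,a}^{1-\varepsilon}(\phi_a^\top S_t^{-1}\phi_a)^{\varepsilon/2}=b_{t,a}$. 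This holds for \emph{all} $a$, in particular $a^*$, and simultaneously bounds (ii) by $\sum_t\sum_a q_{t,a}b_{t,a}\le T\sigma(\beta/2)^{1-\varepsilon}\max_t\sum_a q_{t,a}(\phi_a^\top S_t^{-1}\phi_a)^{\varepsilon/2}$.

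For (iii) I would apply the standard exponential-weights bound $\sum_t\langle q_t-p^*,g_t\rangle\le\beta\log K+\beta^{-1}\sum_t\sum_a q_{t,a}g_{t,a}^2$ with $g_t=\hat\ell_t-b_t$, valid whenever $g_{t,a}\ge-\beta$. Clipping gives $\hat\ell_{t,a}\ge-s_{t,a}=-\beta/2$, and the key calibration is $b_{t,a}\le\beta/2$: since $p_t\succeq\gamma p_0$ with $p_0$ a G-optimal design, Kiefer--Wolfowitz gives $\phi_a^\top S_t^{-1}\phi_a\le d/\gamma$, and inserting $\gamma=4\sigma^{2/\varepsilon}d\beta^{-2}$ into the definition of $b_{t,a}$ yields exactly $b_{t,a}\le\sigma(\beta/2)^{1-\varepsilon}(d/\gamma)^{\varepsilon/2}=\beta/2$; this is precisely where the tension between making the bonus big enough to cancel the bias and small enough to keep FTRL stable is resolved by the choice of $\gamma$. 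For the stability sum, $\hat\ell_{t,a}^2\le s_{t,a}^{2-\varepsilon}|\tilde\ell_{t,a}|^\varepsilon$ (as $\varepsilon\le2$ and $|\tilde\ell_{t,a}|\le s_{t,a}$), so the same moment bound gives $\E_{t-1}[\sum_a q_{t,a}\hat\ell_{t,a}^2]\le(\beta/2)^{2-\varepsilon}\sigma\sum_a q_{t,a}(\phi_a^\top S_t^{-1}\phi_a)^{\varepsilon/2}$; Jensen together with $\sum_a q_{t,a}\phi_a^\top S_t^{-1}\phi_a\le d/(1-\gamma)$ (from $q_{t,a}\le p_{t,a}/(1-\gamma)$ and $\operatorname{tr}(S_t^{-1}S_t)=d$) bounds the last sum by $(2d)^{\varepsilon/2}$ (for $T$ large enough that $\gamma\le1/2$; otherwise the claimed regret is trivial), and the $b_{t,a}^2$ contribution is handled analogously by putting one factor $\phi_a^\top S_t^{-1}\phi_a\le d/\gamma$ and keeping the other summable.

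Finally, substituting $\beta=(\sigma d^{\varepsilon/2}T/\log K)^{1/\varepsilon}$, the term $\beta\log K$, the bonus sum (of order $T\sigma\beta^{1-\varepsilon}d^{\varepsilon/2}$), and the $\hat\ell^2$-stability term (of order $\beta^{-1}T\beta^{2-\varepsilon}\sigma d^{\varepsilon/2}$) are each $\gO(\sigma^{1/\varepsilon}(\log K)^{1-1/\varepsilon}d^{1/2}T^{1/\varepsilon})$, whereas the $b^2$-stability term is $\gO(\sigma^{1/\varepsilon}(\log K)^{1/\varepsilon}d^{1/2}T^{1-1/\varepsilon})$ and the exploration term is $\gO(\sigma^{1/\varepsilon}(\log K)^{2/\varepsilon}T^{1-2/\varepsilon})$, both lower order for $\varepsilon\in(1,2]$. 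Summing these contributions yields $\Reg=\gO(\sigma^{1/\varepsilon}(\log K)^{1-1/\varepsilon}d^{1/2}T^{1/\varepsilon})$. The genuinely delicate part is not any single estimate but the simultaneous tuning of $\beta$, $s_{t,a}=\beta/2$, $\gamma$, and the exponent $\varepsilon/2$ in $b_{t,a}$ so that the $\ell_2$--$\ell_\varepsilon$ interpolation and the covariance identity make the bonus dominate the clipping bias while it still stays below $\beta/2$; the rest is bookkeeping.
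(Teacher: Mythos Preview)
Your proposal is correct and follows essentially the same route as the paper's proof: the same three-way split (FTRL regret, bias, exploration), the same moment argument via $|\tilde\ell|\mathds{1}\{|\tilde\ell|>s\}\le s^{1-\varepsilon}|\tilde\ell|^\varepsilon$ plus Jensen on $\sum_b p_{t,b}|\phi_a^\top S_t^{-1}\phi_b|^\varepsilon\le(\phi_a^\top S_t^{-1}\phi_a)^{\varepsilon/2}$ to show $b_{t,a}$ dominates the clipping bias for every $a$, and the same Kiefer--Wolfowitz step showing $b_{t,a}\le\beta/2$ so that exponential weights is stable. The only cosmetic difference is that the paper handles the $b^2$-stability term by writing $b_{t,a}^2\le(\beta/2)\,b_{t,a}$ (since $b_{t,a}\le\beta/2$), which folds it directly into the bonus sum $\sum_a q_{t,a}b_{t,a}$ and avoids the separate lower-order estimate you sketch.
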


We postpone the proof of \Cref{thm:thm2} to Appendix~\ref{sec:app:adv_htlb}.
For the problem of heavy-tailed linear bandits with fixed arm sets in the stochastic regime, \citet{XueWWZ20} propose an algorithm that achieves a regret of $\gO(\sigma^{\frac{1}{\varepsilon}}\sqrt{\log (KT)}\log T\cdot d^{\frac{1}{2}}T^{\frac{1}{\varepsilon}})$, and also establish a lower bound of $\gO(d^{1 - \frac{1}{\varepsilon}} T^{\frac{1}{\varepsilon}})$. More recently, \citet{tajdini2025improved} improve the upper and lower bounds to $\gO(\sigma^{\frac{1}{\varepsilon}} (\log K)^{1 - \frac{1}{\varepsilon}} d^{\frac{1}{2}} T^{\frac{1}{\varepsilon}})$ and $\Omega(\sigma^{\frac{1}{\varepsilon}} (\log K)^{1 - \frac{1}{\varepsilon}} d^{1 - \frac{1}{\varepsilon}} T^{\frac{1}{\varepsilon}})$, respectively, which represent the best-known results to date. Our \Cref{algo:algo2} achieves a regret upper bound that matches that of \citet{tajdini2025improved} in all factors, while additionally offering robustness in the adversarial regime.

\section{Heavy-Tailed Noise Aware Stability-Penalty Matching}
In this section, we first introduce a general data-dependent learning rate rule that extends the stability-penalty matching (SPM) learning rate proposed by \citet{ito2024adaptive} to bandit problems with heavy-tailed noises. Then, in \Cref{sec:bobw_htlb}, we demonstrate that this data-dependent learning rate, combined with carefully designed clipping thresholds, bonus function, and in particular, a new variance-reduced least-squares loss estimator, leads to the BOBW regret guarantee for heavy-tailed linear bandits.

To begin with, note that, similar to the cases in \Cref{sec:bobw_mab} and \Cref{sec:adv_htlb}, applying our bonus-shifted FTRL framework to bandit problems with heavy-tailed noises yields the following regret upper bound:
\begin{align}\label{eq:round_wise}
\Reg \lesssim \E\left[\sumt \left( \left(\beta_t - \beta_{t-1}\right) h_t + \beta_t^{1 - \varepsilon} z_t + \beta_t^{-2} w_t \right)\right]\,,
\end{align}
where $(\beta_t - \beta_{t-1}) h_t > 0$ is an upper bound of the round-wise penalty term, $\beta_t^{1 - \varepsilon} z_t > 0$ is an upper bound of the sum of the round-wise stability and bias terms, and $\beta_t^{-2} w_t = \gO(\sigma^{\frac{1}{\varepsilon}} \gamma_t) > 0$ is an upper bound of the regret incurred by additional exploration in round $t$. For notational convenience, we define $\beta_0 = 0$ and $\beta_{T+1} = \beta_T$.

At a high level, our objective is to determine an update rule for the learning rate of the form $\text{Rule} : (z_{1:t-1}, w_{1:t-1}, h_{1:t-1}) \mapsto \beta_t$ such that the regret bound in Eq.~\eqref{eq:round_wise} is minimized, subject to the constraint that the sequence $(\beta_t)_t$ is non-decreasing. To this end, we consider the following update rule:
\begin{align}\label{eq:beta_t}
    \beta_t = \beta_{t-1} + \frac{1}{\hat{h}_t} \left( \beta_{t-1}^{1 - \varepsilon} z_{t-1} + \beta_{t-1}^{-2} w_{t-1} \right)\,,
\end{align}
where $\hat{h}_t$ denotes an estimate of the true penalty term $h_t$. We refer to this update as the \textit{heavy-tailed noise aware stability-penalty matching} (HT-SPM) learning rate. 
This update rule is designed so that the stability term $\beta_{t-1}^{1 - \varepsilon} z_{t-1}$ (along with the additional exploration regret term $\beta_{t-1}^{-2} w_{t-1}$) from round $t-1$ is matched by the estimated penalty term $(\beta_t - \beta_{t-1}) \hat{h}_t$ in round $t$. This data-dependent learning rate generalizes the SPM rule proposed by \citet{ito2024adaptive} to settings with heavy-tailed noises.
Moreover, when $\varepsilon=3/2$, it also subsumes the \textit{stability–penalty–bias matching} (SPB-matching) learning rate \citep{tsuchiyasimple}, which is designed for achieving BOBW results for bandit problems with $\Theta(T^{2/3})$ minimax regret, as a special case.
To ensure that $\hat{h}_t\coloneqq h_{t-1}$ serves as a good estimate of the true penalty term $h_t$, in addition to the primary $\alpha$-Tsallis entropy regularizer, we incorporate an auxiliary $\bar{\alpha}$-Tsallis entropy regularizer to further stabilize the algorithm \citep{ito2024adaptive}. Specifically, the distribution $q_t$ is obtained by solving the following FTRL update at the end of round $t$:
\begin{align}\label{algo3:line:q_t}
    q_t \in \argmin_{q \in \gP(\gA)} \left\langle \sum_{s=1}^{t-1} \hat{\ell}_s - b_s, q \right\rangle + \beta_t \psi(q) + \bar{\beta} \bar{\psi}(q)\,,
\end{align}
where $\bar{\psi}(q) = -\frac{1}{\bar{\alpha}} \sum_{a \in \gA} \left(q_a^{\bar{\alpha}} - q_a\right)$ with $\bar{\alpha} \in (0,1)$, and $\hat{\ell}_s$ and $b_s$ denote the clipped loss estimate and the bonus function, respectively.

Then, similar to the cases of BOBW heavy-tailed MABs and adversarial heavy-tailed linear bandits, the distribution $q_t$ is mixed with an exploration distribution $p_0$ to form $p_t = (1 - \gamma_t) q_t + \gamma_t p_0$, from which an arm $a_t$ is sampled to interact with the environment. Based on the loss feedback associated with arm $a_t$, along with $p_t$ and the clipping threshold $s_t$, the loss estimator $\hat{\ell}_t$ and the bonus function $b_t$ are constructed. Finally, round $t$ concludes with the update of $\beta_{t+1}$. We summarize the details of this general algorithmic framework for BOBW bandit problems with heavy-tailed noises in \Cref{algo:algo3}.

The following proposition demonstrates that the general algorithmic framework in \Cref{algo:algo3} using the HT-SPM learning rate in Eq.~\eqref{eq:beta_t} yields a BOBW regret guarantee, provided certain conditions are satisfied. The proof is deferred to Appendix~\ref{sec:app:htddlr}.

\begin{prop}\label{prop:prop1}
If \Cref{algo:algo3} satisfies that
    \begin{align}\label{eq:regret_F}
\Reg \lesssim  \E\left[\sumt \left( \left(\beta_t - \beta_{t-1}\right) h_t + \beta_t^{1 - \varepsilon} z_t + \beta_t^{-2} w_t \right)\right]+\bar{\beta} \bar{h}\,,
    \end{align}
where $\bar{h}=-\bar{\psi}\left(q_1\right) \leq \frac{1}{\bar{\alpha}} K^{1-\bar{\alpha}}$,
the learning rate $\beta_t$ is updated according to the HT-SPM rule in Eq.~\eqref{eq:beta_t}, and
    \begin{align}\label{eq:stable_update}
        h_t=\gO\left(h_{t-1}\right)\,,
    \end{align}
then, in the adversarial regime, \Cref{algo:algo3} guarantees that
    \begin{align*}
        \Reg=\gO\left(h_1^{1-\frac{1}{\varepsilon}}\left(z_{\max } T\right)^{\frac{1}{\varepsilon}}+h_1^{\frac{2}{3}}\left(w_{\max} T\right)^{\frac{1}{3}}
        +\kappa\right)\,,
    \end{align*}
    where $z_t \leq z_{\max }$, $w_t \leq w_{\max }$ for all $t\in[T]$, $\kappa=z_{\max } \beta_1^{1-\varepsilon}+w_{\max } \beta_1^{-2}+\beta_1 {h}_1+\bar{\beta} \bar{h}$, and $h_1=-\psi\left(q_1\right) \leq \frac{1}{\alpha} K^{1-\alpha}$.
    In an adversarial regime with a $(\Delta, C, T)$-self-bounding constraint, if 
    \begin{align}\label{eq:round_wise_0}
        {h}_t^{\varepsilon-1} z_t \le \zeta(\Delta) \cdot\left\langle\Delta, p_t\right\rangle \quad\text{and}\quad {h}_t^{2} w_t \le \omega(\Delta) \cdot\left\langle\Delta, p_t\right\rangle
    \end{align}
    hold for some $\zeta(\Delta),\omega(\Delta)>0$, \Cref{algo:algo3} can simultaneously guarantee that
    \begin{align*}
        \Reg=\gO\left( \iota+\iota^{\prime}+C^{\frac{1}{\varepsilon}}(\iota+\iota^{\prime})^{1-\frac{1}{\varepsilon}}+\kappa \right)\,,
    \end{align*}
    where 
    $\iota=\zeta(\Delta)^{\frac{1}{\varepsilon-1}}\left(\frac{\varepsilon}{2}\right)^{\frac{1}{1-\varepsilon}} \log _{+}\left(\frac{z_{\max } h_1 T}{\zeta(\Delta)^{\varepsilon/(\varepsilon-1)}+C}\right) /(\varepsilon-1)$ and 
    $\iota^{\prime}=\omega(\Delta)^{\frac{1}{2}}\left(\frac{3}{2}\right)^{-\frac{1}{2}} \log _{+}\left(\frac{w_{\max} h_1 T}{\omega(\Delta)^{3/2}+C}\right)$.
\end{prop}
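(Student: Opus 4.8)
The plan is to reduce the three claimed bounds to a purely scalar analysis of the learning-rate sequence $(\beta_t)$ produced by the HT-SPM rule \eqref{eq:beta_t}. First I would observe that \eqref{eq:beta_t} (with $\hat h_t=h_{t-1}$) is exactly the identity $h_{t-1}(\beta_t-\beta_{t-1})=\beta_{t-1}^{1-\varepsilon}z_{t-1}+\beta_{t-1}^{-2}w_{t-1}$ for $t\ge 2$, while the $t=1$ penalty contribution equals $\beta_1 h_1$ since $\beta_0=0$. Multiplying this identity by $h_t/h_{t-1}$ and using the stability hypothesis \eqref{eq:stable_update} shows that the round-$t$ penalty term is $\gO(\cdot)$ times the round-$(t-1)$ stability-plus-exploration term; plugging this into \eqref{eq:regret_F} and absorbing the $t=1$ and $t=T$ boundary terms and $\bar\beta\bar h$ into $\kappa$ gives
\[
\Reg \;\lesssim\; \E\Bigl[\sumt\bigl(\beta_t^{1-\varepsilon}z_t+\beta_t^{-2}w_t\bigr)\Bigr]+\kappa
\;\lesssim\; h_1\,\E[\beta_{T+1}]+\kappa,
\]
where the second inequality telescopes $\sum_{t=1}^{T-1}h_t(\beta_{t+1}-\beta_t)$ and uses $h_t\le h_1$ (the penalty is at most the range of $\psi$ over $\gP(\gA)$, attained at the minimizer $q_1$). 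Everything now reduces to controlling $\beta_{T+1}$.

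For the \emph{adversarial regime} I would bound $\beta_{T+1}$ by an ODE-comparison argument on \eqref{eq:beta_t}: with $z_t\le z_{\max}$, $w_t\le w_{\max}$, $h_t\le h_1$, the $\beta^{1-\varepsilon}z$ term forces $\beta_t^{\varepsilon}$ to grow by at most $\gO(z_{\max}/h_1)$ per step and the $\beta^{-2}w$ term forces $\beta_t^{3}$ to grow by at most $\gO(w_{\max}/h_1)$ per step, once a short initial segment (in which $\beta_{t+1}/\beta_t$ is not yet bounded) is passed. This yields $\beta_{T+1}\lesssim \beta_1+(z_{\max}T/h_1)^{1/\varepsilon}+(w_{\max}T/h_1)^{1/3}$, and substituting back, together with $h_1\beta_1\le\kappa$, gives the stated bound $\Reg=\gO(h_1^{1-1/\varepsilon}(z_{\max}T)^{1/\varepsilon}+h_1^{2/3}(w_{\max}T)^{1/3}+\kappa)$. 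The subtlety here is that $h_t$ has no positive lower bound, so the increments $\beta_{t+1}-\beta_t$ cannot be controlled round by round; one must bound the telescoped quantities $\beta_{T+1}^{\varepsilon}-\beta_1^{\varepsilon}$ and $\beta_{T+1}^{3}-\beta_1^{3}$ directly.

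For the \emph{adversarial regime with a $(\Delta,C,T)$ self-bounding constraint} I would instead substitute the hypotheses \eqref{eq:round_wise_0}, i.e.\ $z_t\lesssim \zeta(\Delta)\langle\Delta,p_t\rangle/h_t^{\varepsilon-1}$ and $w_t\lesssim \omega(\Delta)\langle\Delta,p_t\rangle/h_t^{2}$, into $\E[\sumt(\beta_t^{1-\varepsilon}z_t+\beta_t^{-2}w_t)]$, turning it into $\zeta(\Delta)$- and $\omega(\Delta)$-weighted sums of $\langle\Delta,p_t\rangle$ with negative powers of $\beta_t h_t$ as coefficients. Since $\beta_t$ increases, the late (small-$\langle\Delta,p_t\rangle$) rounds are down-weighted; splitting each summand against the adversarial bounds $z_t\le z_{\max}$, $w_t\le w_{\max}$ at the round where $\beta_t$ crosses a threshold and applying a telescoping-logarithm estimate (equivalently, H\"older's inequality with the conjugate pairs $(\varepsilon,\tfrac{\varepsilon}{\varepsilon-1})$ and $(3,\tfrac32)$) produces a bound $\Reg\lesssim\kappa+A(\Reg+C)^{1/\varepsilon}+B(\Reg+C)^{1/3}$ in which $A,B$ already carry the factors $(\varepsilon/2)^{1/(1-\varepsilon)}$, $1/(\varepsilon-1)$ and the $\log_+(\cdot)$ arguments appearing in $\iota,\iota'$; here one uses the self-bounding lower bound $\E[\sumt\langle\Delta,p_t\rangle]\le\Reg+C$ together with Jensen's inequality for the concave maps $x\mapsto x^{1/\varepsilon}$ and $x\mapsto x^{1/3}$. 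Solving this self-referential inequality (each sublinear term against $\Reg$, plus the corruption contributions $AC^{1/\varepsilon}$ and $BC^{1/3}$) gives $\Reg=\gO(\iota+\iota'+C^{1/\varepsilon}(\iota+\iota')^{1-1/\varepsilon}+\kappa)$.

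I expect the main obstacle to be this last step: extracting the sharp $\log_+(\cdot)$ factors (rather than a crude $\zeta(\Delta)^{1/(\varepsilon-1)}$-type term) requires carefully interpolating between the adversarial and the gap-dependent control of $z_t$ and $w_t$ along the \emph{data-dependent} trajectory of $\beta_t$, and then solving a self-referential inequality carrying \emph{two} different sublinear exponents simultaneously—$1/\varepsilon$ from the $\tfrac1\varepsilon$-Tsallis stability scaling $\beta_t^{1-\varepsilon}$ and $1/3$ from the exploration scaling $\beta_t^{-2}$. This is the heavy-tailed, two-term generalization of the stability-penalty-matching analysis of \citet{ito2024adaptive}, and keeping both scalings balanced against the same penalty term is where the bookkeeping is heaviest.
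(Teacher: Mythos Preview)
Your initial reduction to $\Reg\lesssim \E\bigl[\sum_t(\beta_t^{1-\varepsilon}z_t+\beta_t^{-2}w_t)\bigr]+\kappa$ is correct and coincides with the paper's first move. The gap is in the next step: the detour through $h_1\,\E[\beta_{T+1}]$ and the ODE-comparison bound on $\beta_{T+1}$ does not work. The claim that ``the $\beta^{1-\varepsilon}z$ term forces $\beta_t^\varepsilon$ to grow by at most $O(z_{\max}/h_1)$ per step'' would require $h_t\ge h_1$, whereas you only have $h_t\le h_1$; from \eqref{eq:beta_t} one actually gets $\beta_{t+1}^{\varepsilon}-\beta_t^{\varepsilon}\ge z_t/h_t$, so $\beta_{T+1}^{\varepsilon}-\beta_1^{\varepsilon}\ge\sum_t z_t/h_t$, which is not controlled by $z_{\max}T/h_1$ when some $h_t$ are small. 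Concretely, if $h_1=1$ but $h_t=\delta$ for $t\ge 2$, then $\beta_{T+1}\gtrsim(z_{\max}T/\delta)^{1/\varepsilon}$ while your target is $(z_{\max}T/h_1)^{1/\varepsilon}=(z_{\max}T)^{1/\varepsilon}$; the passage $\sum_t h_t(\beta_{t+1}-\beta_t)\le h_1(\beta_{T+1}-\beta_1)$ is then hopelessly loose. Your acknowledgment that ``one must bound the telescoped quantities $\beta_{T+1}^{\varepsilon}-\beta_1^{\varepsilon}$ directly'' runs into exactly this obstruction.

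The paper's fix is to avoid bounding $\beta_{T+1}$ altogether. From the recursion one extracts the \emph{lower} bound $\beta_t^{\varepsilon}\ge\beta_1^{\varepsilon}+\sum_{s<t}z_s/h_s$ (and similarly $\beta_t^{3}\ge\beta_1^{3}+\sum_{s<t}w_s/h_s$); since $\beta_t^{1-\varepsilon}$ carries a \emph{negative} exponent, the inequality $1/h_s\ge 1/h_1$ now points the right way, giving $\beta_t^{1-\varepsilon}\lesssim h_1^{1-1/\varepsilon}\bigl(\sum_{s\le t}z_s\bigr)^{(1-\varepsilon)/\varepsilon}$, after which $\sum_t z_t\bigl(\sum_{s\le t}z_s\bigr)^{(1-\varepsilon)/\varepsilon}\le\varepsilon\bigl(\sum_t z_t\bigr)^{1/\varepsilon}$ delivers the adversarial bound directly. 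For the self-bounding regime the paper refines this by bucketing rounds according to the level of $h_t$ (thresholds $\theta_j=2^{-j}h_{\max}$, $j=0,\dots,J$), which converts $\sum_t z_t\bigl(\sum_{s\le t}z_s/h_s\bigr)^{(1-\varepsilon)/\varepsilon}$ into a bound of the form $\bigl(J^{\varepsilon-1}\sum_t h_t^{\varepsilon-1}z_t\bigr)^{1/\varepsilon}+\bigl(2^{-J(\varepsilon-1)}h_1 z_{\max}T\bigr)^{1/\varepsilon}$; only \emph{then} is \eqref{eq:round_wise_0} invoked to replace $\sum_t h_t^{\varepsilon-1}z_t$ by $\zeta(\Delta)\,Q_T$, and the remainder is the $(1+\lambda)\Reg-\lambda\Reg$ argument you sketched (with $J$ and the free parameter $\delta$ tuned to produce the $\log_{+}(\cdot)$ arguments in $\iota,\iota'$). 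Your alternative of carrying $(\beta_t h_t)^{1-\varepsilon}$ as a coefficient does not obviously telescope or yield the sharp logarithmic factor; the level-set decomposition on $h_t$ is the mechanism that does.
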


\begin{algorithm}[!t]
\caption{General Best-of-both-worlds Algorithm Framework with HT-SPM Learning Rate and Tsallis-entropy Regularizer for Heavy-tailed Bandit Problems}\label{algo:algo3}
\begin{algorithmic}[1]
   \REQUIRE Number of rounds $T$, set of arms $\gA$, $\varepsilon\in (1,2]$, $\sigma>0$, $\alpha,\bar{\alpha}\in(0,1)$, $\beta_1$, $\bar{\beta}$, $p_{0}$.
    \STATE Set $\psi(q)=-\frac{1}{\alpha} \sum_{a \in \gA}\left(q_a^\alpha-q_a\right)$, $\bar{\psi}(q)=-\frac{1}{\balpha} \sum_{a \in \gA}\left(q_a^\balpha-q_a\right)$.
   \FOR{$t=1$ {\bfseries to} $T$}
        \STATE Compute $q_t\in\gP(\gA)$ by Eq. \eqref{algo3:line:q_t}.
        \STATE Set $\gamma_t,s_t$ based on $q_t$ and $\beta_t$.\label{algo3:line:s_ta}
        \STATE Set $p_t=(1-\gamma_t) q_t+\gamma_t p_0$, where $p_0\in \gP(\gA)$ is an exploration distribution.\label{algo3:line:p_t}
        \STATE Sample arm $a_t\sim p_t$ so that $\sP(a_t=a\mid p_t)=p_{t,a}$ and observe the loss feedback from the environment.\label{algo3:line:sample}
        \STATE Construct loss estimator $\hatell_t$ and bonus $b_t$ based on $p_t$, $s_t$, and the loss feedback.\label{algo3:line:loss_estimate}
        \STATE Set $h_t=-\psi\left(q_t\right)$ and $z_t,w_t> 0$ based on $q_t$ and $p_t$.\label{algo:line:zwt}
        \STATE Update $\beta_{t+1}$ by the update rule in Eq. \eqref{eq:beta_t} with $\hath_{t+1}=h_t$.\label{algo3:line:beta_update}
    \ENDFOR
\end{algorithmic}
\end{algorithm}

\subsection{Application: Best-of-Both-Worlds Heavy-tailed Linear Bandits}\label{sec:bobw_htlb}

In this section, we instantiate the general algorithmic framework in \Cref{algo:algo3} with appropriate designs for the clipping threshold and the bonus function, and in particular, introduce a new least-squares loss estimator. Based on this instantiation, we show that the conditions required in \Cref{prop:prop1} are indeed satisfied, thereby establishing the first BOBW regret guarantee for the heavy-tailed linear bandit problem.

\paragraph{Variance-reduced Linear Loss Estimator} One of the key conditions for achieving the BOBW guarantee in \Cref{prop:prop1} is the inequality ${h}_t^{\varepsilon-1} z_t \le \zeta(\Delta) \cdot \left\langle \Delta, p_t \right\rangle$ in Eq.~\eqref{eq:round_wise_0}. In the setting of heavy-tailed linear bandits, after observing the incurred loss $\ell_{t,a_t}$, this condition will fail to hold if we adopt the standard least-squares loss estimator (\textit{e.g.}, $\tilde{\ell}_{t,a} = \phi_a^{\top} S_t^{-1} \phi_{a_t} \ell_{t,a_t}$ as in \Cref{sec:adv_htlb_algo}). Specifically, using this estimator allows us to bound the stability term only as $z_t = \gO((1 - q_{t,a^*})^{(\varepsilon - 1)(1 - \alpha)})$ (ignoring dependence on other parameters). Combined with the fact that $h_t^{\varepsilon - 1} = \frac{1}{\alpha} \left( \sum_{a \in \gA} q_{t,a}^\alpha - 1 \right) = \gO(\left\langle \Delta, q_t \right\rangle^{\alpha(\varepsilon - 1)})$, this only leads to a bound of the form $h_t^{\varepsilon - 1} z_t \le \zeta'(\Delta) \cdot \left\langle \Delta, p_t \right\rangle^{\varepsilon - 1}$ for some function $\zeta'(\Delta)$. However, this is weaker than the desired bound involving $\left\langle \Delta, p_t \right\rangle$.

To address the issue discussed above, we introduce a variance-reduced least-squares loss estimator in the heavy-tailed setting. Specifically, the proposed linear loss estimator is defined as
\begin{align}\label{eq:vr_linear_loss_est}
    \tilde{\ell}_{t,a} = \bar{\phi}_a^\top V_t^{-1} \bar{\phi}_{a_t} \ell_{t,a_t}\,,
\end{align}
which resembles the form of the commonly used estimator in the adversarial linear bandit literature, but is constructed using mean-centered feature vectors $\{ \bar{\phi}_a \}_{a \in \gA}$, where $\bar{\phi}_a = \phi_a - \mu_t$, $\mu_t = \mathbb{E}_{a \sim p_t}[\phi_a]$, and $V_t = \mathbb{E}_{a \sim p_t}[\bar{\phi}_a \bar{\phi}_a^\top]$.\footnote{Note that $\bphia$ actually has dependence on $t$, but the subscript on $t$ of $\bphia$ is omitted in this work when the context is clear.} 
Compared to the conventional linear loss estimator $\tilde{\ell}_{t,a} = \phi_a^\top S_t^{-1} \phi_{a_t} \ell_{t,a_t}$, our variance-reduced least-squares loss estimator is no longer an unbiased estimator of the true loss. Nevertheless, it still satisfies that $\mathbb{E}_{t-1}[\tilde{\ell}_{t,a} - \tilde{\ell}_{t,b}] = \ell_{t,a} - \ell_{t,b}$ for any $a,b\in\gA$.
More importantly, our loss estimator offers a key advantage in exhibiting lower variance (see \Cref{lem:loss_var} for details).
This improvement enables a tighter bound on the stability term, specifically $z_t = \gO\left((1 - q_{t,a^*})^{(\varepsilon - 1)(1 - \alpha) + 2 - \varepsilon} \right)$.
As a result, we are able to satisfy the desired condition ${h}_t^{\varepsilon - 1} z_t \le \zeta(\Delta) \cdot \left\langle \Delta, p_t \right\rangle$ for some function $\zeta(\Delta)$, which serves as one of the key ingredients in establishing the final BOBW regret guarantee.

\paragraph{Exploration Distribution}
In Line~\ref{algo3:line:p_t}, it is required to mix $q_t$ with an exploration distribution $p_0$ to prevent the bonus function $b_t$ from becoming prohibitively large
(the construction of $b_t$ will be detailed later). However, with the leverage of our variance-reduced loss estimator, simply mixing $q_t$ with the vanilla G-optimal design distribution is no longer sufficient to achieve this goal, unlike in \Cref{algo:algo2}. 
This issue arises because
$V_t = (1 - \gamma_t)\E_{a \sim q_t}[\phia\phia^{\top}] + \gamma_t\E_{a \sim p_0}[\phia\phia^{\top}] - \mu_t\mu_t^{\top}$,
and thus we cannot guarantee that $V_t \succeq \gamma_t \E_{a \sim p_0}[\phia\phia^{\top}]$ for any $p_0\in\gP(\gA)$.
To address this, we instead consider selecting $p_0$ as the ``optimal design distribution'' defined by
\begin{align}\label{eq:barp0}
    p_0 \in \argmax_{p \in \gP(\gA)} \log \det (V(p))\,,
\end{align}
where $V(p) \coloneqq \E_{a \sim p}[(\phia - \mu(p))(\phia - \mu(p))^{\top}]$ and $\mu(p) \coloneqq \E_{a \sim p}[\phia]$.
By mixing $q_t$ with the $p_0$ defined above, it is not hard to see that $V_t \succeq \gamma_t V(p_0)$. More importantly, in \Cref{lem:barp0_exp}, we show that this ``optimal design distribution'' $p_0$ shares a similar property with the G-optimal design distribution in the sense that $\|\phi_a - \mu(p_0)\|_{V(p_0)^{-1}} \le \sqrt{d}$ holds for all $a \in \gA$, thereby providing a desired upper bound on the bonus function.

\paragraph{Parameter Setups}
In Line~\ref{algo3:line:s_ta} of \Cref{algo:algo3}, we set the exploration parameter $\gamma_t$ and the clipping threshold $s_{t,a}$ for all $a \in \gA$  as follows:
\begin{align}\label{eq:bobw_htlb:gamma_s}
    \gamma_t = 256(1 - \alpha)^{-2} \sigma^{\frac{2}{\varepsilon}} d \beta_t^{-2} q_{t*}^{2(1 - \alpha)}\,, \quad
    s_{t,a} = (1 - \alpha)\beta_t q_{t*}^{\alpha - 1}/8\,,
\end{align}
where recall $q_{t*} = \min\left\{\|q_t\|_{\infty}, 1 - \|q_t\|_{\infty}\right\}$.
Note that $\gamma_t \le 1/2$ for all $t\in[T]$ can be satisfied by setting a sufficiently large $\beta_1$.
In Line~\ref{algo3:line:loss_estimate}, the clipped loss estimator is defined as $\hat{\ell}_{t, a} = \tilde{\ell}_{t, a} \inlineindicator{|\tilde{\ell}_{t, a}| \le s_{t, a}}$, where $\tilde{\ell}_{t,a}$ is the variance-reduced linear loss estimator from Eq.~\eqref{eq:vr_linear_loss_est}. Also in Line~\ref{algo3:line:loss_estimate}, the bonus function $b_t$ is set as
\begin{align}\label{eq:bobw_htlb_bonus}
    b_{t,a} = \sigma \sum_{b \in \gA} p_{t,b} \left| \bar{\phi}_a^{\top} V_t^{-1} \bar{\phi}_b \right|^{\varepsilon} s_{t,a}^{1 - \varepsilon}\,.
\end{align}
Note that this bonus function slightly differs from the one in \Cref{sec:adv_htlb_algo}, and this distinction is also crucial for eliminating the dependence on $a^*$ in the stability term. 
At the end of round $t$, we define
\begin{align}\label{eq:bobw_htlb:zwt}
    z_t = (1 - \alpha)^{1 - \varepsilon} \sigma q_{t*}^{(\varepsilon - 1)(1 - \alpha)} d^{\frac{\varepsilon}{2}} (1 - \|p_t\|_{\infty})^{2 - \varepsilon}\,, \quad
    w_t = \sigma^{\frac{3}{\varepsilon}} (1 - \alpha)^{-2}  d q_{t*}^{2(1 - \alpha)}\,,
\end{align}
which, together with $h_t$, are used to update $\beta_{t+1}$ in Line~\ref{algo3:line:beta_update}.

\paragraph{Analysis}
By verifying the conditions in \Cref{prop:prop1}, we obtain the following theorem, which guarantees the BOBW regret guarantee by instantiating \Cref{algo:algo3} with the aforementioned technical components.
\begin{thm}\label{thm:bobw_lb}
Fix arbitrary $\alpha\ge 1/2$.
For heavy-tailed linear bandits, by setting 
$\tilde{\ell}_t$ in Eq.~\eqref{eq:vr_linear_loss_est}, $p_0$ in Eq.~\eqref{eq:barp0},
$\gamma_t$ and $s_t$ in Eq.~\eqref{eq:bobw_htlb:gamma_s}, $b_t$ in Eq.~\eqref{eq:bobw_htlb_bonus}, $z_t$ and $w_t$ in Eq.~\eqref{eq:bobw_htlb:zwt}, 
$\bar{\alpha}=(\varepsilon-1)(1-\alpha)$, and 
$\bar{\beta}\ge 64 (1-\alpha)^{-3} d^{\varepsilon}\beta_{1}^{1-\varepsilon}\max\{\sigma^{\frac{3}{\varepsilon}},\sigma\}$, \Cref{algo:algo3} achieves BOBW regret bound in \Cref{prop:prop1} with 
    $h_1^{1-\frac{1}{\varepsilon}}z_{\max }^{\frac{1}{\varepsilon}}\le (\alpha(1-\alpha))^{\frac{1}{\varepsilon}-1}\sigma^{\frac{1}{\varepsilon}} d^{\frac{1}{2}} K^{(1-\alpha)(1-\frac{1}{\varepsilon})}$, 
    $h_1^{\frac{2}{3}} (w_{\max})^{\frac{1}{3}}\le (\alpha(1-\alpha))^{-\frac{2}{3}}\sigma^{\frac{1}{\varepsilon}}d^{\frac{1}{3}} K^{\frac{2(1-\alpha)}{3}}$, 
\begin{align*}
    \zeta(\Delta)=2^{\varepsilon-1}\left((1-\alpha)\alpha\right)^{1-\varepsilon}\sigma\left(\sum_{a \neq a^*} \Delta_a^{\frac{\alpha}{\alpha-1}}\right)^{(1-\alpha)(\varepsilon-1)}\Delta_{\min}^{\alpha(\varepsilon-1)-1}d^{\frac{\varepsilon}{2}}\,,
\end{align*}
and
\begin{align*}
    \omega(\Delta)=\left(\alpha(1-\alpha)\right)^{-2}\sigma^{\frac{3}{\varepsilon}}\left(\sum_{a \neq a^*} \Delta_a^{\frac{\alpha}{\alpha-1}}\right)^{2(1-\alpha)}d\,.
\end{align*}
\end{thm}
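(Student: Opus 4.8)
The plan is to verify, one at a time, the three hypotheses of \Cref{prop:prop1} for the instantiation of \Cref{algo:algo3} prescribed in the statement, and then read off the final bounds by substituting the resulting $h_1$, $z_{\max}$, $w_{\max}$, $\zeta(\Delta)$, $\omega(\Delta)$ into the proposition.

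\paragraph{Regret decomposition (establishing Eq.~\eqref{eq:regret_F}).} Starting from $\Reg=\E[\sumt\langle p_t-p^*,\ell_t\rangle]$, I would first use the bias-correctedness $\E_{t-1}[\tilde\ell_{t,a}]=\ell_{t,a}-\langle\mu_t,\theta_t\rangle$ of the variance-reduced estimator~\eqref{eq:vr_linear_loss_est}, so that the arm-independent offset $\langle\mu_t,\theta_t\rangle$ is annihilated against $p_t-p^*$. Exactly as in Eq.~\eqref{eq:bonus_explain}, this splits the regret into: (i) the FTRL regret of the iterates $q_t$ against $p^*$ on the shifted sequence $\{\hat\ell_s-b_s\}$ under the regularizer $\beta_t\psi+\bar\beta\bar\psi$, which a standard FTRL bound controls by $\sumt(\beta_t-\beta_{t-1})h_t+\bar\beta\bar h$ plus a stability term; (ii) the suboptimal-arm clipping bias $\sum_{a\ne a^*}p_{t,a}|\tilde\ell_{t,a}|\indicator{|\tilde\ell_{t,a}|>s_{t,a}}$; (iii) the bonus cost $\sum_{a\ne a^*}p_{t,a}b_{t,a}$; (iv) the optimal-arm term $(1-p_{t,a^*})(|\tilde\ell_{t,a^*}|\indicator{|\tilde\ell_{t,a^*}|>s_{t,a^*}}-b_{t,a^*})$; and (v) an exploration term of order $\gamma_t\langle p_0-q_t,\ell_t\rangle$. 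For (iv), the Markov-type bound $\E_{t-1}[|\tilde\ell_{t,a}|\indicator{|\tilde\ell_{t,a}|>s_{t,a}}]\le s_{t,a}^{1-\varepsilon}\E_{t-1}[|\tilde\ell_{t,a}|^\varepsilon]$ together with \Cref{def:HT}, which gives $\E_{t-1}[|\tilde\ell_{t,a}|^\varepsilon]\le\sigma\sum_b p_{t,b}|\bar{\phi}_a^\top V_t^{-1}\bar{\phi}_b|^\varepsilon$, shows the bracket is $\le b_{t,a^*}$, so (iv) is non-positive and drops --- this is exactly why the bonus~\eqref{eq:bobw_htlb_bonus} is defined the way it is. Term (v) is $\gO(\sigma^{1/\varepsilon}\gamma_t)=\beta_t^{-2}w_t$ since $\E_{t-1}[|\ell_{t,a}|]\le\sigma^{1/\varepsilon}$ by Jensen, and the choice of $\gamma_t$ in~\eqref{eq:bobw_htlb:gamma_s} is matched so that $\sigma^{1/\varepsilon}\gamma_t=\beta_t^{-2}w_t$ with the $w_t$ of~\eqref{eq:bobw_htlb:zwt}. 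It then remains to fold the stability term from (i), together with (ii) and (iii), into $\beta_t^{1-\varepsilon}z_t$: the $\alpha$-Tsallis stability term is bounded by a multiple of $\beta_t^{-1}\sum_a q_{t,a}^{2-\alpha}(\hat\ell_{t,a}-b_{t,a})^2$, and with $\hat\ell_{t,a}^2\le s_{t,a}^{2-\varepsilon}|\tilde\ell_{t,a}|^\varepsilon$, $s_{t,a}=(1-\alpha)\beta_t q_{t*}^{\alpha-1}/8$, the conditional expectation reduces to a moment of the centered estimator, which the variance-reduction lemma \Cref{lem:loss_var} (exploiting that when probability mass concentrates on one arm its centered feature, hence the estimator's variance there, nearly vanishes), together with \Cref{lem:barp0_exp} and $V_t\succeq\gamma_tV(p_0)$, bounds by $\gO(\sigma q_{t*}^{(\varepsilon-1)(1-\alpha)}d^{\varepsilon/2}(1-\|p_t\|_\infty)^{2-\varepsilon})$, i.e.\ precisely the $z_t$ of~\eqref{eq:bobw_htlb:zwt}; terms (ii) and (iii) are absorbed by the same estimate.

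\paragraph{Slow change of the penalty ($h_t=\gO(h_{t-1})$, Eq.~\eqref{eq:stable_update}).} The auxiliary $\bar\alpha$-Tsallis regularizer $\bar\beta\bar\psi$ with $\bar\alpha=(\varepsilon-1)(1-\alpha)$ is included precisely so that consecutive iterates are multiplicatively comparable, $q_{t,a}\asymp q_{t-1,a}$; this follows from the strong convexity of $\bar\beta\bar\psi$ in the relevant local norm against a per-round shift $|\hat\ell_{t,a}-b_{t,a}|$ whose size is controlled by $s_{t,a}$ and by the stated lower bound on $\bar\beta$ (which dominates the worst-case $s_{t,a}^{1-\varepsilon}$-type quantities uniformly in $t$, using that $\beta_t$ is non-decreasing). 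Multiplicative closeness of the $q_t$'s then immediately yields $h_t=-\psi(q_t)=\gO(-\psi(q_{t-1}))=\gO(h_{t-1})$.

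\paragraph{Self-bounding conditions and conclusion.} Writing $h_t=\tfrac1\alpha(\sum_{a}q_{t,a}^\alpha-1)\le\tfrac1\alpha\sum_{a\ne a^*}q_{t,a}^\alpha$ and applying Hölder to $\sum_{a\ne a^*}q_{t,a}^\alpha=\sum_{a\ne a^*}(q_{t,a}\Delta_a)^\alpha\Delta_a^{-\alpha}$ gives $h_t\le\tfrac1\alpha\langle\Delta,q_t\rangle^\alpha(\sum_{a\ne a^*}\Delta_a^{\alpha/(\alpha-1)})^{1-\alpha}$; moreover $\gamma_t\le1/2$ gives $\langle\Delta,q_t\rangle\le2\langle\Delta,p_t\rangle$, $q_{t*}\le1-q_{t,a^*}\le\langle\Delta,q_t\rangle/\Delta_{\min}$, and $1-\|p_t\|_\infty=\gO(1-q_{t,a^*})$. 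Substituting these into $h_t^{\varepsilon-1}z_t$ and $h_t^2w_t$, the exponents of $\langle\Delta,q_t\rangle$ collapse to total power one (using $\alpha(\varepsilon-1)+(\varepsilon-1)(1-\alpha)+(2-\varepsilon)=1$), producing exactly the stated $\zeta(\Delta)$ and $\omega(\Delta)$. Finally, $q_{t*}\le 1/2$ and $1-\|p_t\|_\infty\le1$ give $z_{\max}$ and $w_{\max}$, and $h_1=-\psi(q_1)\le\tfrac1\alpha K^{1-\alpha}$ since $q_1$ --- the minimizer of $\beta_1\psi+\bar\beta\bar\psi$ --- is the uniform distribution; inserting these into $h_1^{1-1/\varepsilon}z_{\max}^{1/\varepsilon}$ and $h_1^{2/3}w_{\max}^{1/3}$ gives the displayed values, and \Cref{prop:prop1} then delivers the theorem.

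\paragraph{Main obstacle.} The crux is the stability/bias estimate in the first step: showing, via \Cref{lem:loss_var} and the centered-feature design, that the variance-reduced estimator gains the full $(1-\|p_t\|_\infty)^{2-\varepsilon}$ factor in $z_t$ (rather than the weaker $(1-q_{t,a^*})^{(\varepsilon-1)(1-\alpha)}$ attainable with the uncentered estimator of \Cref{algo:algo2}) while \emph{at the same time} keeping the bonus cost (iii) --- which, with the bonus~\eqref{eq:bobw_htlb_bonus}, now involves the whole sum $\sum_b p_{t,b}|\bar{\phi}_a^\top V_t^{-1}\bar{\phi}_b|^\varepsilon$ rather than a single coordinate --- inside the same $\beta_t^{1-\varepsilon}z_t$ budget. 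This is where the interplay between \Cref{lem:loss_var}, \Cref{lem:barp0_exp}, and the parameter choices~\eqref{eq:bobw_htlb:gamma_s} is delicate, and it is also why the G-optimal design must be replaced by the determinant-maximizing distribution~\eqref{eq:barp0}. A secondary difficulty is carrying the $h_t=\gO(h_{t-1})$ bookkeeping uniformly in $t$ when $\beta_t$ is itself determined by the data through the HT-SPM rule.
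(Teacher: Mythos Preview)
Your proposal is correct and follows essentially the same approach as the paper: the three hypotheses of \Cref{prop:prop1} are verified in the same way (the regret decomposition via \Cref{lem:unbiased_diff} and the stability/bias control via \Cref{lem:stab_condition_Shinji24} and \Cref{lem:loss_var}; the slow-penalty condition via bounding both $|\hat\ell_{t,a}-b_{t,a}|$ and $\beta_{t+1}-\beta_t$ and invoking \Cref{lem:Shinji24_lem11}--\Cref{lem:Shinji24_lem12}; the self-bounding conditions via the H\"older bound on $h_t$), and the final substitutions match. A couple of minor bookkeeping points: the paper works with $\tilde q_{t,a}^{2-\alpha}$ rather than $q_{t,a}^{2-\alpha}$ in the stability bound, and handles $(1-\|p_t\|_\infty)^{2-\varepsilon}$ by passing directly to $\langle\Delta,p_t\rangle$ via $1-\|p_t\|_\infty\le 1-p_{t,a^*}\le\langle\Delta,p_t\rangle/\Delta_{\min}$ rather than through $1-q_{t,a^*}$, but these do not affect your argument.
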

In particular, by choosing $\alpha$ sufficiently close to $1$, we obtain the following corollary.
\begin{coro}\label{coro:coro1}
For heavy-tailed linear bandits, by setting $\alpha=\Theta(1-\nicefrac{1}{\log K})$ and all other parameters the same as in \Cref{thm:bobw_lb}, we have
    $h_1 z_{\max }\lesssim (\log K)^{\varepsilon-1} d^{\frac{\varepsilon}{2}}$, 
    $h_1 w_{\max }\le d\log^2 K $,
    $h_1^{1-\frac{1}{\varepsilon}}z_{\max }^{\frac{1}{\varepsilon}}\le (\log K)^{1-\frac{1}{\varepsilon}}\sigma^{\frac{1}{\varepsilon}} d^{\frac{1}{2}}$,
    $h_1^{\frac{2}{3}} (w_{\max})^{\frac{1}{3}}\le 
    (\log K)^{\frac{2}{3}}\sigma^{\frac{1}{\varepsilon}} d^{\frac{1}{3}}$,
    $\zeta(\Delta)\lesssim (\log K)^{\varepsilon-1}\sigma d^{\frac{\varepsilon}{2}}/\Delta_{\min}$, and $\omega(\Delta)\lesssim \sigma^{\frac{3}{\varepsilon}}\left(\log K/\Delta_{\min}\right)^2$.
    Therefore, in the adversarial regime, it holds that
    \begin{align*}
        \Reg=
        \gO\left(
        (\log K)^{1-\frac{1}{\varepsilon}}
        \sigma^{\frac{1}{\varepsilon}} d^{\frac{1}{2}} T^{\frac{1}{\varepsilon}}
        +(\log K)^{\frac{2}{3}}\sigma^{\frac{1}{\varepsilon}} d^{\frac{1}{3}} T^\frac{1}{3}
        +\kappa 
        \right)\,,
    \end{align*}
    where $\kappa=z_{\max } \beta_1^{1-\varepsilon}+w_{\max } \beta_1^{-2}+\beta_1 h_1+\bar{\beta} \bar{h}$.
    In an adversarial regime with $(\Delta, C, T)$-self-bounding constraint, we further have
    \begin{align*}
        \Reg=\gO\left(\iota+\iota^{\prime}+C^{\frac{1}{\varepsilon}}(\iota+\iota^{\prime})^{1-\frac{1}{\varepsilon}}+\kappa \right)\,,
    \end{align*}
    where 
    $\iota=(2\sigma/\varepsilon)^{\frac{1}{\varepsilon-1}} d^{\frac{\varepsilon}{2(\varepsilon-1)}}\frac{\log K}{\Delta_{\min}^{1/(\varepsilon-1)}} \log _{+}\left(\frac{ d^{\varepsilon/2} T (\log K)^{\varepsilon-1}}{d^{\varepsilon^2/(2(\varepsilon-1))}\log^{\varepsilon} K/\Delta_{\min}^{\varepsilon/(\varepsilon-1)}+C}\right) /(\varepsilon-1)$ 
    and \\
    $\iota^{\prime}=(3/2)^{-\frac{1}{2}}\sigma^{\frac{3}{2\varepsilon}}\frac{\log K}{\Delta_{\min}} \log _{+}\left(\frac{dT\log^2 K}{(\log K/\Delta_{\min})^3+C}\right)$.
\end{coro}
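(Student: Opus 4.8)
The plan is to obtain \Cref{coro:coro1} as a direct specialization of \Cref{thm:bobw_lb}: substitute the choice $1-\alpha=\Theta(1/\log K)$ into each of the quantities $h_1$, $z_{\max}$, $w_{\max}$, $\zeta(\Delta)$, $\omega(\Delta)$ appearing there, simplify, and then feed the resulting estimates into the two regret bounds of \Cref{prop:prop1}. First I would check admissibility: since $\alpha=\Theta(1-1/\log K)$ we have $\alpha\ge 1/2$ once $K$ exceeds an absolute constant, so \Cref{thm:bobw_lb} applies; moreover $h_1=-\psi(q_1)\le \frac1\alpha K^{1-\alpha}$ with $K^{1-\alpha}=\exp((1-\alpha)\log K)=\Theta(1)$, hence $h_1=\gO(1)$. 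I would then bound $z_{\max}$ and $w_{\max}$ directly from \eqref{eq:bobw_htlb:zwt}: using $q_{t*}\le 1$, $1-\|p_t\|_\infty\le 1$, $(\varepsilon-1)(1-\alpha)\ge 0$ and $2-\varepsilon\ge 0$ gives $z_t\le (1-\alpha)^{1-\varepsilon}\sigma d^{\varepsilon/2}=\Theta((\log K)^{\varepsilon-1})\,\sigma d^{\varepsilon/2}$ and $w_t\le (1-\alpha)^{-2}\sigma^{3/\varepsilon}d=\Theta((\log K)^2)\,\sigma^{3/\varepsilon}d$; combined with $h_1=\gO(1)$ these yield the stated bounds on $h_1z_{\max}$ and $h_1w_{\max}$. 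The bounds on $h_1^{1-1/\varepsilon}z_{\max}^{1/\varepsilon}$ and $h_1^{2/3}w_{\max}^{1/3}$ then follow from the corresponding inequalities in \Cref{thm:bobw_lb} once we note $(\alpha(1-\alpha))^{1/\varepsilon-1}=\Theta((\log K)^{1-1/\varepsilon})$, $(\alpha(1-\alpha))^{-2/3}=\Theta((\log K)^{2/3})$, and $K^{(1-\alpha)(1-1/\varepsilon)}=K^{2(1-\alpha)/3}=\Theta(1)$.

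The one step that requires genuine care is estimating $\zeta(\Delta)$ and $\omega(\Delta)$, because the factor $\bigl(\sum_{a\ne a^*}\Delta_a^{\alpha/(\alpha-1)}\bigr)^{(1-\alpha)(\varepsilon-1)}$ has a base that diverges and an exponent that vanishes as $\alpha\to 1$, so one must track the cancellation explicitly. I would bound $\sum_{a\ne a^*}\Delta_a^{\alpha/(\alpha-1)}\le (K-1)\,\Delta_{\min}^{\alpha/(\alpha-1)}$ (valid since $\alpha/(\alpha-1)<0$), raise both sides to the power $(1-\alpha)(\varepsilon-1)$, and use the identity $\frac{\alpha}{\alpha-1}\cdot(1-\alpha)=-\alpha$ together with $(K-1)^{(1-\alpha)(\varepsilon-1)}\le K^{(\varepsilon-1)(1-\alpha)}=\Theta(1)$ to obtain $\bigl(\sum_{a\ne a^*}\Delta_a^{\alpha/(\alpha-1)}\bigr)^{(1-\alpha)(\varepsilon-1)}=\gO\bigl(\Delta_{\min}^{-\alpha(\varepsilon-1)}\bigr)$. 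Multiplying by the leftover $\Delta_{\min}^{\alpha(\varepsilon-1)-1}$ and $2^{\varepsilon-1}((1-\alpha)\alpha)^{1-\varepsilon}=\Theta((\log K)^{\varepsilon-1})$ factors gives $\zeta(\Delta)\lesssim (\log K)^{\varepsilon-1}\sigma d^{\varepsilon/2}/\Delta_{\min}$; the same computation with exponent $2(1-\alpha)$ in place of $(1-\alpha)(\varepsilon-1)$ (so that $\frac{\alpha}{\alpha-1}\cdot 2(1-\alpha)=-2\alpha$) yields $\omega(\Delta)\lesssim (\log K)^2\sigma^{3/\varepsilon}d/\Delta_{\min}^2$. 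Throughout, the residual discrepancy between the exponents $\alpha(\varepsilon-1)$ (resp.\ $2\alpha$) and $\varepsilon-1$ (resp.\ $2$) is absorbed into $\lesssim$, since $\alpha$ differs from $1$ by only $\Theta(1/\log K)$.

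Finally I would substitute these estimates into \Cref{prop:prop1}. For the adversarial regime, $h_1^{1-1/\varepsilon}(z_{\max}T)^{1/\varepsilon}=(h_1^{1-1/\varepsilon}z_{\max}^{1/\varepsilon})\,T^{1/\varepsilon}$ and $h_1^{2/3}(w_{\max}T)^{1/3}=(h_1^{2/3}w_{\max}^{1/3})\,T^{1/3}$, which immediately give the claimed $\gO\bigl((\log K)^{1-1/\varepsilon}\sigma^{1/\varepsilon}d^{1/2}T^{1/\varepsilon}+(\log K)^{2/3}\sigma^{1/\varepsilon}d^{1/3}T^{1/3}+\kappa\bigr)$. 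For the self-bounding regime, I would plug $\zeta(\Delta)$, $\omega(\Delta)$, $z_{\max}h_1$, $w_{\max}h_1$ into the definitions of $\iota$ and $\iota'$ from \Cref{prop:prop1}, using the identity $(\varepsilon/2)^{1/(1-\varepsilon)}=(2/\varepsilon)^{1/(\varepsilon-1)}$ so that $\zeta(\Delta)^{1/(\varepsilon-1)}(\varepsilon/2)^{1/(1-\varepsilon)}=(2\zeta(\Delta)/\varepsilon)^{1/(\varepsilon-1)}\lesssim (\log K)(2\sigma/\varepsilon)^{1/(\varepsilon-1)}d^{\varepsilon/(2(\varepsilon-1))}\Delta_{\min}^{-1/(\varepsilon-1)}$, and simplifying the arguments of $\log_+$ via the displayed bounds on $\zeta(\Delta)^{\varepsilon/(\varepsilon-1)}$, $\omega(\Delta)^{3/2}$, $z_{\max}h_1$ and $w_{\max}h_1$. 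Collecting terms reproduces exactly the stated $\iota$ and $\iota'$, hence $\Reg=\gO(\iota+\iota'+C^{1/\varepsilon}(\iota+\iota')^{1-1/\varepsilon}+\kappa)$. The only genuine subtlety in the whole argument is keeping the $\alpha\to 1$ cancellations under control---above all inside $\zeta(\Delta)$ and $\omega(\Delta)$---as everything else is routine substitution into the already-proven bounds of \Cref{thm:bobw_lb} and \Cref{prop:prop1}.
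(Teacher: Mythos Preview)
Your proposal is correct and follows exactly the approach the paper intends: the corollary is stated immediately after \Cref{thm:bobw_lb} with no separate proof, and your plan of substituting $1-\alpha=\Theta(1/\log K)$ into the quantities of \Cref{thm:bobw_lb} and then invoking \Cref{prop:prop1} is precisely the intended derivation. One small remark: in your treatment of $\zeta(\Delta)$ the exponents actually cancel exactly (since $\tfrac{\alpha}{\alpha-1}\cdot(1-\alpha)(\varepsilon-1)+\alpha(\varepsilon-1)-1=-1$), so no ``residual discrepancy'' needs to be absorbed there; the absorption argument is only needed for $\omega(\Delta)$, where $\Delta_{\min}^{-2\alpha}\le\Delta_{\min}^{-2}$ follows from $\Delta_{\min}\le 1$ (a normalization the paper uses in the proof of \Cref{thm:bobw_lb}).
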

Compared to \Cref{algo:algo2}, \Cref{algo:algo3}, when instantiated with appropriate parameter setups, achieves the same $\gO(\sigma^{\frac{1}{\varepsilon}} (\log K)^{1 - \frac{1}{\varepsilon}} d^{\frac{1}{2}} T^{\frac{1}{\varepsilon}})$ worst-case regret, while additionally enjoying an $\gO(\log T+C^{\frac{1}{\varepsilon}})$-type regret in adversarial regimes with a $(\Delta, C, T)$-self-bounding constraint.

\section{Conclusions and Future Works}
We propose a general bonus-enhanced FTRL framework for adversarial bandits with heavy-tailed noises, removing the commonly assumed truncated non-negativity condition. Instantiating this framework with Tsallis entropy and tailored bonus functions yields the first purely FTRL-based algorithm for heavy-tailed MABs with BOBW regret guarantees in this setting. Using Shannon entropy, we further derive the first algorithm for adversarial heavy-tailed linear bandits with a fixed arm set, matching the best-known regret bound in stochastic regimes. Additionally, we introduce HT-SPM, a data-dependent learning rate that ensures BOBW regret for general heavy-tailed bandit problems, and a variance-reduced linear loss estimator that, when combined with HT-SPM, provides the first BOBW guarantee for heavy-tailed linear bandits. 
We believe that our results advance the understanding of adversarial and BOBW bandit problems under heavy-tailed noises, and that the techniques developed in this work may be of independent interest for other heavy-tailed bandit optimization problems with structured loss feedback.

There also remain several interesting directions worthy of future exploration.
In the case of stochastic linear bandits with bounded or sub-Gaussian noises, it is known that the optimal instance-dependent regret depends on the solution $c(\gA,\ell)$ to an optimization problem without a closed-form expression \citep{lattimore2020bandit}. As a result, the commonly seen $\gO(\nicefrac{d}{\Delta_{\min}})$-style dependence can be arbitrarily suboptimal compared to $c(\gA,\ell)$. This suboptimality also applies to the regret of our \Cref{algo:algo3} in the stochastic regime. Nonetheless, we note that achieving a BOBW algorithm that adapts to the solution of this optimization problem has only been realized by \citet{LeeLWZ021}, via a detect-and-switch-based strategy. To the best of our knowledge, no FTRL-based BOBW algorithm has been shown to attain such instance-optimal regret, even in the simpler case of linear bandits with bounded or sub-Gaussian noises. We leave further improvement of our regret guarantee in stochastic regimes as an important direction for future work.

Besides, we also note that our bonus-enhanced FTRL framework requires the knowledge of the parameters $(\varepsilon,\sigma)$ of the heavy-tailed loss distributions to facilitate the setup of appropriate bonus functions. As mentioned in \Cref{rmk:rmk1}, this condition is required by all existing works studying heavy-tailed linear bandits, even in the stochastic setting. More importantly, \citet{GenaltiM0M24} prove that it is not possible to match the worst-case regret lower bound of the case with known $\varepsilon$ and $\sigma$, if either $\varepsilon$ or $\sigma$ is unknown. It remains an interesting but also challenging future direction to explore whether it is feasible to match the worst-case lower bound of the case with known $\varepsilon$ and $\sigma$ in the presence of unknown $\varepsilon$ or $\sigma$ by imposing (minimal) additional assumptions.
\vskip 0.2in
\bibliography{ref}

\clearpage
\appendices
\section{Omitted Proofs in Section~\ref{sec:bobw_mab}}\label{sec:app:mab}
\begin{proof}[Proof of \Cref{thm:thm1}]

To start with, we decompose the regret as follows:
\begin{align}
&\quad \Reg \notag\\
 &=  \mathbb{E}\left[\sum_{t=1}^T \ell_{t, a_t}-\ell_{t, a^*}\right] =  \mathbb{E}\left[\sum_{t=1}^T \left\langle p_t-p^*, \ell_t\right\rangle\right] \notag\\
    &=  \mathbb{E}\left[\sumt\left\langle q_t-p^*, \ell_t\right\rangle\right]+\mathbb{E}\left[\sumt \gamma_t\left\langle p_0-q_t, \ell_t\right\rangle\right]\notag\\
    &=  \mathbb{E}\left[\sumt\left\langle q_t-p^*, \tildeell_t\right\rangle\right]+\mathbb{E}\left[\sumt \gamma_t\left\langle p_0-q_t, \ell_t\right\rangle\right]\notag\\
    &= \underbrace{\mathbb{E}\left[\sumt\left\langle q_t-p^*, \hat{\ell}_t-b_t\right\rangle\right]}_{\text{FTRL Regret}}
    +\underbrace{\mathbb{E}\left[\sumt\left\langle q_t-p^*, \tildeell_t-\hat{\ell}_t+b_t\right\rangle\right]}_{\text{Bias Term}}+\underbrace{\mathbb{E}\left[\sumt \gamma_t\left\langle p_0-q_t, \ell_t\right\rangle\right]}_{\text{Exploration Term}}\label{eq:tabular_regret_decom}\,.
\end{align}
Then, applying \Cref{lem:tabular_ftrl_reg}, \Cref{lem:tabular_bias_term}, and \Cref{lem:tabular_exp_term}, one can see that
\begin{align}
    \Reg\lesssim \mathbb{E}\left[ \sigma^{\frac{1}{\varepsilon}}\sum_{t=1}^T
    t^{\frac{1-\varepsilon}{\varepsilon}}\sum_{a \neq a^*} {q}_{t,a}^{\frac{1}{\varepsilon}}\right]
    +\kappa
    +\sigma^{\frac{1}{\varepsilon}}K\iota(\varepsilon)\,,\label{eq:tabular_overall_6}
\end{align}
where recall $\kappa=8\varepsilon^2\sigma^{\frac{1}{\varepsilon}}K^{\frac{2(\varepsilon-1)}{\varepsilon}}/(\varepsilon-1)$, and $\iota(\varepsilon)=\log T$ if $\varepsilon=2$ and $\iota(\varepsilon)=\left(\frac{8\varepsilon}{\varepsilon-1}\right)^{\frac{\varepsilon}{\varepsilon-1}}\frac{\varepsilon-1}{2-\varepsilon}$ if $\varepsilon\in (1,2)$.

\paragraph{Regret in Adversarial Regimes} Using Cauchy–Schwarz inequality, we proceed to bound Eq.~\eqref{eq:tabular_overall_6} as 
\begin{align}
    \Reg
    \lesssim \mathbb{E}\left[ \sigma^{\frac{1}{\varepsilon}}\sum_{t=1}^T
    t^{\frac{1-\varepsilon}{\varepsilon}}K^{\frac{\varepsilon-1}{\varepsilon}}\right]
    +\kappa
    +\sigma^{\frac{1}{\varepsilon}}K\iota(\varepsilon) \lesssim \sigma^{\frac{1}{\varepsilon}}K^{\frac{\varepsilon-1}{\varepsilon}}T^{\frac{1}{\varepsilon}}
    +\kappa
    +\sigma^{\frac{1}{\varepsilon}}K\iota(\varepsilon)\,. \notag
\end{align}

\paragraph{Regret in Adversarial Regimes with a Self-bounding Constraint} In adversarial regime with a $(\Delta, C, T)$ self-bounding constraint, we further have
\begin{align}
\Reg &\lesssim \sigma^{\frac{1}{\varepsilon}}\mathbb{E}\left[ \sum_{t=1}^T
    \sum_{a \neq a^*}\left(\Delta_a^{\frac{1}{1-\varepsilon}} t^{-1}\right)^{1-\frac{1}{\varepsilon}} \left( {q}_{t,a} \Delta_a\right)^{\frac{1}{\varepsilon}}\right]
    +\kappa
    +\sigma^{\frac{1}{\varepsilon}}K\iota(\varepsilon) \notag\\
    &\le \sigma^{\frac{1}{\varepsilon}}\mathbb{E}\left[ \left(\sum_{t=1}^T
    \sum_{a \neq a^*}\Delta_a^{\frac{1}{1-\varepsilon}} t^{-1}\right)^{1-\frac{1}{\varepsilon}} \left(\sum_{t=1}^T
    \sum_{a \neq a^*} {q}_{t,a} \Delta_a\right)^{\frac{1}{\varepsilon}}\right]
    +\kappa
    +\sigma^{\frac{1}{\varepsilon}}K\iota(\varepsilon) \label{eq:tabular_sto_reg_1}\\
    &\lesssim \sigma^{\frac{1}{\varepsilon}}\left(
    \sum_{a \neq a^*}\Delta_a^{\frac{1}{1-\varepsilon}} \log T\right)^{1-\frac{1}{\varepsilon}}\mathbb{E}\left[  \left(\sum_{t=1}^T
    \sum_{a \neq a^*} {q}_{t,a} \Delta_a\right)^{\frac{1}{\varepsilon}}\right]
    +\kappa
    +\sigma^{\frac{1}{\varepsilon}}K\iota(\varepsilon) \notag\\
    &\lesssim \sigma^{\frac{1}{\varepsilon}}\left(
    \sum_{a \neq a^*}\Delta_a^{\frac{1}{1-\varepsilon}} \log T\right)^{1-\frac{1}{\varepsilon}}
    \left(\mathbb{E}\left[  \sum_{t=1}^T\sum_{a \neq a^*} {q}_{t,a} \Delta_a\right]\right)^{\frac{1}{\varepsilon}}
    +\kappa
    +\sigma^{\frac{1}{\varepsilon}}K\iota(\varepsilon) \label{eq:tabular_sto_reg_2}\,,
\end{align}
where Eq.~\eqref{eq:tabular_sto_reg_1} is by Cauchy–Schwarz inequality and Eq.~\eqref{eq:tabular_sto_reg_2} follows from Jensen's inequality and $\varepsilon\in (1,2]$.

Moreover, \Cref{def:def1} along with \Cref{eq:tabular_gamma_0.5} implies that
\begin{align}
    \operatorname{Reg}_T\ge \E\left[\sum_{t=1}^T \sum_{a\in\gA} \Delta_a p_{t,a}\right]-C\ge \frac{1}{2}\E\left[\sum_{t=1}^T \sum_{a\in\gA} \Delta_a q_{t,a}\right]-C\,.\notag
\end{align}
Let $Q_T\coloneqq \E\left[\sum_{t=1}^T \sum_{a\in\gA} \Delta_a q_{t,a}\right]$ and $\omega(\Delta)\coloneqq \left(\sum_{a \neq a^*}\Delta_a^{\frac{1}{1-\varepsilon}} \right)^{1-\frac{1}{\varepsilon}}$.
For any $\lambda\in(0,1]$, combining the above display with Eq.~\eqref{eq:tabular_sto_reg_2} leads to
\begin{align}
\operatorname{Reg}_T&=(1+\lambda) \operatorname{Reg}_T-\lambda \operatorname{Reg}_T \notag\\
    &\lesssim(1+\lambda)\left(\sigma^{\frac{1}{\varepsilon}}
    \omega(\Delta) \left(\log T\right)^{1-\frac{1}{\varepsilon}}
    Q_T^{\frac{1}{\varepsilon}}+\kappa+\sigma^{\frac{1}{\varepsilon}}K\iota(\varepsilon)\right)-\lambda\left(\frac{Q_T}{2}-C\right) \notag\\
    &\lesssim \left((1+\lambda)\sigma^{\frac{1}{\varepsilon}}
    \omega(\Delta) \left(\log T\right)^{1-\frac{1}{\varepsilon}}\right)^{\frac{\varepsilon}{\varepsilon-1}}\left( \frac{\varepsilon\lambda}{2}\right)^{\frac{1}{1-\varepsilon}}
    +\kappa
    +\sigma^{\frac{1}{\varepsilon}}K\iota(\varepsilon)+\lambda C \label{eq:tabular_sto_reg_3}\\
    &\le 2^{\frac{\varepsilon}{\varepsilon-1}} \left(
    \sigma^{\frac{1}{\varepsilon}}\omega(\Delta) \left(\log T\right)^{1-\frac{1}{\varepsilon}}\right)^{\frac{\varepsilon}{\varepsilon-1}}\left( \frac{\varepsilon}{2}\right)^{\frac{1}{1-\varepsilon}}\lambda^{\frac{1}{1-\varepsilon}}
    +\kappa
    +\sigma^{\frac{1}{\varepsilon}}K\iota(\varepsilon)+\lambda C\,,\notag
\end{align}
where Eq.~\eqref{eq:tabular_sto_reg_3} is due to that $f(x)=a x^{\frac{1}{c}}-b x=O(a^{\frac{c}{c-1}}(cb)^{\frac{1}{1-c}})$ for any $a,b,x>0$ and $c>1$ as $f(x)$ is a concave function and the condition $\lambda\in(0,1]$.

Now we consider two cases. If $C = 0$, setting $\lambda=1$ leads to 
\begin{align}\label{eq:tabular_sto_c=0}
    \operatorname{Reg}_T\lesssim 2^{\frac{\varepsilon}{\varepsilon-1}} \left(\sigma^{\frac{1}{\varepsilon}}
    \omega(\Delta) \left(\log T\right)^{1-\frac{1}{\varepsilon}}\right)^{\frac{\varepsilon}{\varepsilon-1}}\left( \frac{\varepsilon}{2}\right)^{\frac{1}{1-\varepsilon}}
    +\kappa
    +\sigma^{\frac{1}{\varepsilon}}K\iota(\varepsilon)\,.
\end{align}
If $C >0$, setting 
\begin{align*}
    \lambda=\left(2^{\frac{\varepsilon}{\varepsilon-1}} \left(\sigma^{\frac{1}{\varepsilon}}\omega(\Delta) \left(\log T\right)^{1-\frac{1}{\varepsilon}}\right)^{\frac{\varepsilon}{\varepsilon-1}}\left( \frac{\varepsilon}{2}\right)^{\frac{1}{1-\varepsilon}}C^{-1} \right)^{\frac{\varepsilon-1}{\varepsilon}}
    =2 \sigma^{\frac{1}{\varepsilon}}\omega(\Delta) \left(\log T\right)^{1-\frac{1}{\varepsilon}}\left( \frac{\varepsilon}{2}\right)^{-\frac{1}{\varepsilon}}C^{\frac{1-\varepsilon}{\varepsilon}}
\end{align*}
    leads to 
\begin{align}\label{eq:tabular_sto_c>0}
    \operatorname{Reg}_T\lesssim \omega(\Delta) \left(\log T\right)^{1-\frac{1}{\varepsilon}}\left(\frac{2\sigma}{\varepsilon}\right)^{\frac{1}{\varepsilon}}C^{\frac{1}{\varepsilon}}
    +\kappa
    +\sigma^{\frac{1}{\varepsilon}}K\iota(\varepsilon)\,.
\end{align}
Combining Eq.~\eqref{eq:tabular_sto_c=0} and Eq.~\eqref{eq:tabular_sto_c>0} shows that
\begin{align}
    \operatorname{Reg}_T
    &\lesssim 2^{\frac{\varepsilon}{\varepsilon-1}} \left(\sigma^{\frac{1}{\varepsilon}}
    \omega(\Delta) \left(\log T\right)^{1-\frac{1}{\varepsilon}}\right)^{\frac{\varepsilon}{\varepsilon-1}}\left( \frac{\varepsilon}{2}\right)^{\frac{1}{1-\varepsilon}}
    +\omega(\Delta) \left(\log T\right)^{1-\frac{1}{\varepsilon}}\left(\frac{2\sigma}{\varepsilon}\right)^{\frac{1}{\varepsilon}}C^{\frac{1}{\varepsilon}} +\kappa
    +\sigma^{\frac{1}{\varepsilon}}K\iota(\varepsilon) \notag\\
    &=2^{\frac{\varepsilon}{\varepsilon-1}}\left(\frac{2\sigma}{\varepsilon}\right)^{\frac{1}{\varepsilon-1}}
    \omega(\Delta)^{\frac{\varepsilon}{\varepsilon-1}} \log T
    + \omega(\Delta) \left(\log T\right)^{1-\frac{1}{\varepsilon}}\left(\frac{2\sigma}{\varepsilon}\right)^{\frac{1}{\varepsilon}}C^{\frac{1}{\varepsilon}}+\kappa
    +\sigma^{\frac{1}{\varepsilon}}K\iota(\varepsilon)\,.\notag
\end{align}
\end{proof}

\subsection{Bounding FTRL Regret}
\begin{lem}\label{lem:tabular_ftrl_reg}
    The FTRL Regret in Eq.~\eqref{eq:tabular_regret_decom} satisfies
    \begin{align}
        \text{FTRL Regret}\lesssim\mathbb{E}\left[ \sigma^{\frac{1}{\varepsilon}}\sum_{t=1}^T
    t^{\frac{1-\varepsilon}{\varepsilon}}\sum_{a \neq a^*} {q}_{t,a}^{\frac{1}{\varepsilon}}\right]+\kappa\,,\notag
    \end{align}
where $\kappa= 8\varepsilon^2\sigma^{\frac{1}{\varepsilon}}K^{\frac{2(\varepsilon-1)}{\varepsilon}}/(\varepsilon-1)$.
\end{lem}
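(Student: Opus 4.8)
\emph{Proof plan.}\; The strategy is to combine the standard time-varying FTRL regret bound with the heavy-tailed moment control afforded by the clipping threshold $s_{t,a}$, and then to verify that the resulting FTRL stability term, once the carefully tuned quantities $s_{t,a}$, $\gamma_t$, $b_{t,a}$ are plugged in, is of the same order as the FTRL penalty term. I would start from the standard regret bound for FTRL with a non-decreasing sequence $\{\beta_t\}$ of inverse learning rates and regularizer $\psi$ (e.g., Exercise~28.12 of \citet{lattimore2020bandit}), applied to the shifted loss vectors $g_t\coloneqq\hat{\ell}_t-b_t$. Its penalty part telescopes to at most $-\beta_1\min_q\psi(q)+\sum_{t=2}^T(\beta_t-\beta_{t-1})\bigl(-\psi(q_t)\bigr)$, using $\psi(p^*)=0$ for the point mass $p^*$. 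For the $\alpha$-Tsallis entropy with $\alpha=1/\varepsilon$ one has $-\psi(q_t)=\tfrac{1}{\alpha}\bigl(\sum_{a\in\gA}q_{t,a}^{\alpha}-1\bigr)\le\varepsilon\sum_{a\ne a^*}q_{t,a}^{1/\varepsilon}$ and $-\min_q\psi(q)\le\tfrac{1}{\alpha}K^{1-\alpha}=\varepsilon K^{(\varepsilon-1)/\varepsilon}$; with $\beta_1=\sigma^{1/\varepsilon}\cdot8\varepsilon K^{(\varepsilon-1)/\varepsilon}/(\varepsilon-1)$ the first term is exactly $\kappa$, while $\beta_t-\beta_{t-1}\le\sigma^{1/\varepsilon}\bigl(t^{1/\varepsilon}-(t-1)^{1/\varepsilon}\bigr)\le\tfrac{\sigma^{1/\varepsilon}}{\varepsilon}t^{(1-\varepsilon)/\varepsilon}$ (and $=0$ while $\beta_t$ still sits at its inflated constant value), so the second term is $\lesssim\sigma^{1/\varepsilon}\sum_t t^{(1-\varepsilon)/\varepsilon}\sum_{a\ne a^*}q_{t,a}^{1/\varepsilon}$, precisely the leading term in the claim.

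The crux is the stability sum. By a standard local-norm argument for Tsallis FTRL, the per-round stability $\mathrm{stab}_t$ obeys $\mathrm{stab}_t\lesssim\tfrac{1}{(1-\alpha)\beta_t}\sum_{a\in\gA}r_{t,a}^{2-\alpha}g_{t,a}^2$ for some $r_{t,a}$ between $q_{t,a}$ and $q_{t+1,a}$, \emph{provided} $\beta_t^{-1}|g_{t,a}|$ is small relative to $\tilde q_{t,a}^{\alpha-1}$ --- enforcing this precondition is exactly why $s_{t,a}$ and $\gamma_t$ take their specific forms. Indeed $|\hat\ell_{t,a}|\le s_{t,a}=(1-\alpha)\tilde q_{t,a}^{\alpha-1}\beta_t/8$ by clipping, while the exploration mixture forces $p_{t,a}\ge\gamma_t/K=\sigma^{1/(\varepsilon-1)}s_{t,\tilde a_t}^{\varepsilon/(1-\varepsilon)}\ge\sigma^{1/(\varepsilon-1)}s_{t,a}^{\varepsilon/(1-\varepsilon)}$ (using $s_{t,\tilde a_t}=\min_a s_{t,a}$), which upon substituting into $b_{t,a}=\sigma p_{t,a}^{1-\varepsilon}s_{t,a}^{1-\varepsilon}$ yields $b_{t,a}\lesssim s_{t,a}$; one also needs $\gamma_t\le1/2$, guaranteed because $\beta_t$ is inflated for small $t$. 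To control $g_{t,a}^2$ I would use $\E_{t-1}[g_{t,a}^2]\le2\E_{t-1}[\hat\ell_{t,a}^2]+2b_{t,a}^2$; since $|\hat\ell_{t,a}|\le s_{t,a}$ we get $\hat\ell_{t,a}^2\le s_{t,a}^{2-\varepsilon}|\tilde\ell_{t,a}|^{\varepsilon}$, and \Cref{def:HT} gives $\E_{t-1}[|\tilde\ell_{t,a}|^{\varepsilon}]=\E[|\ell_{t,a}|^{\varepsilon}]/p_{t,a}^{\varepsilon-1}\le\sigma p_{t,a}^{1-\varepsilon}$, hence $\E_{t-1}[\hat\ell_{t,a}^2]\le\sigma s_{t,a}^{2-\varepsilon}p_{t,a}^{1-\varepsilon}$, and $b_{t,a}^2$ is dominated by the same quantity thanks to the lower bound on $p_{t,a}$. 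When some arm $\bar a$ carries mass $q_{t,\bar a}>1/2$ I would invoke the usual one-dominant-arm refinement to replace $\sum_a r_{t,a}^{2-\alpha}g_{t,a}^2$ by $\sum_{a\ne\bar a}q_{t,a}^{2-\alpha}(g_{t,a}-g_{t,\bar a})^2\lesssim\sum_{a\ne\bar a}q_{t,a}^{2-\alpha}(g_{t,a}^2+g_{t,\bar a}^2)$; here the cap $\tilde q_{t,a}=\min\{q_{t,a},q_{t*}\}$ and the choice $\tilde a_t\in\arg\max_a q_{t,a}$ are essential, because every $a\ne\bar a$ then has $\tilde q_{t,a}=q_{t,a}$ while only $\bar a$ has $\tilde q_{t,\bar a}=q_{t*}=1-q_{t,\bar a}$. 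Substituting $s_{t,a}$ and $p_{t,a}\ge(1-\gamma_t)q_{t,a}\ge q_{t,a}/2$ and collecting exponents --- which uses $\alpha=1/\varepsilon$ throughout --- each light-arm term contributes $\lesssim\sigma^{1/\varepsilon}\beta_t^{1-\varepsilon}q_{t,a}^{1/\varepsilon}$, and the dominant coordinate contributes $\lesssim\sigma^{1/\varepsilon}\beta_t^{1-\varepsilon}q_{t*}^{(1-\varepsilon+\varepsilon^2)/\varepsilon}$, which, since $(1-\varepsilon+\varepsilon^2)/\varepsilon\ge1\iff(\varepsilon-1)^2\ge0$, is $\lesssim\sigma^{1/\varepsilon}\beta_t^{1-\varepsilon}\sum_{a\ne a^*}q_{t,a}^{1/\varepsilon}$ (a short case split on whether $\bar a=a^*$: if so use $1-q_{t,\bar a}=\sum_{a\ne a^*}q_{t,a}\le\sum_{a\ne a^*}q_{t,a}^{1/\varepsilon}$, otherwise $q_{t,\bar a}^{1/\varepsilon}\le\sum_{a\ne a^*}q_{t,a}^{1/\varepsilon}$ absorbs the $\gO(1)$ contribution). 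Since $\beta_t^{1-\varepsilon}\le\sigma^{(1-\varepsilon)/\varepsilon}t^{(1-\varepsilon)/\varepsilon}$ once $\beta_t=\sigma^{1/\varepsilon}t^{1/\varepsilon}$, and the rounds where $\beta_t$ still equals its inflated constant contribute a total of $\gO(\kappa)$, the whole stability sum is $\lesssim\sigma^{1/\varepsilon}\sum_t t^{(1-\varepsilon)/\varepsilon}\sum_{a\ne a^*}q_{t,a}^{1/\varepsilon}+\kappa$. Adding this to the penalty bound proves the lemma.

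The genuinely delicate point is this stability step, and within it the coordinate of the arm carrying mass close to one (possibly, but not necessarily, $a^*$): since its clipping threshold $s_{t,a}$ is deliberately large, the naive factor $q_{t,a}^{2-\alpha}g_{t,a}^2\asymp g_{t,a}^2$ for that coordinate is far too loose, and what rescues the argument is the interplay among the $\tilde q_{t,a}=\min\{q_{t,a},q_{t*}\}$ construction, the one-dominant-arm refinement of the FTRL stability lemma, and the numerical fact that the exponent $(1-\varepsilon+\varepsilon^2)/\varepsilon$ on $q_{t*}$ is at least $1$ exactly when $\alpha=1/\varepsilon$ --- this is what lets that coordinate's contribution be re-absorbed into $\sum_{a\ne a^*}q_{t,a}^{1/\varepsilon}$, which is in turn what keeps the bound compatible with the subsequent self-bounding argument. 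A secondary bookkeeping point is verifying the FTRL stability precondition uniformly over $t$, which is the reason $\beta_t$ is inflated for small $t$ (keeping $\gamma_t\le1/2$ and the loss/bonus scales in the stable range) and which also produces the additive $\kappa$.
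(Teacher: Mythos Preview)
Your proposal is correct and follows essentially the same route as the paper: the identical FTRL penalty/stability split, the same bound $b_{t,a}\le s_{t,a}$ via $\gamma_t=K\sigma^{1/(\varepsilon-1)}s_{t,\tilde a_t}^{\varepsilon/(1-\varepsilon)}$, the same $\E_{t-1}[\hat\ell_{t,a}^2]\le\sigma s_{t,a}^{2-\varepsilon}p_{t,a}^{1-\varepsilon}$ moment estimate, and the same reduction to $\sum_{a\ne a^*}q_{t,a}^{1/\varepsilon}$. The only cosmetic differences are that the paper invokes its $\tilde q$-weighted stability lemma directly (rather than phrasing it as a one-dominant-arm refinement) and reaches $\sum_a\tilde q_{t,a}^{1/\varepsilon}\le 2\sum_{a\ne\tilde a_t}q_{t,a}^{1/\varepsilon}\le 2\sum_{a\ne a^*}q_{t,a}^{1/\varepsilon}$ without isolating the exponent $(1-\varepsilon+\varepsilon^2)/\varepsilon$; also, your intermediate factor should read $\sigma\beta_t^{1-\varepsilon}$ (not $\sigma^{1/\varepsilon}\beta_t^{1-\varepsilon}$), which becomes $\sigma^{1/\varepsilon}t^{(1-\varepsilon)/\varepsilon}$ only after substituting $\beta_t\ge\sigma^{1/\varepsilon}t^{1/\varepsilon}$.
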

\begin{proof}
Via standard analysis for FTRL (\textit{e.g.}, Exercise~28.12 of \citet{lattimore2020bandit}), we have
\begin{align}
    \text{FTRL Regret}\le \mathbb{E}\left[\sumt\left(\left\langle q_t-q_{t+1}, \hat{\ell}_t-b_t\right\rangle-\beta_t D_{\psi}\left(q_{t+1}, q_t\right)+\left(\beta_t-\beta_{t-1}\right) h_t\right) \right]\,,\label{eq:FTRL_reg}
\end{align}
where we define $\beta_0 = 0$, $h_t=-\psi\left(q_t\right)$, and recall $D_{\psi}(x,y)=\psi(x)-\psi(y)-\langle \nabla \psi(y),x-y \rangle$ is the Bregman divergence associated with $\psi$.

\paragraph{Bounding Stability Term}
Notice that 
\begin{align}\label{eq:tabular_bta_sta}
    b_{t,a}\coloneqq 
    p_{t, a}^{1-\varepsilon} s_{t, a}^{1-\varepsilon}\sigma
    \le \left(\frac{\gamma_t}{K}\right)^{1-\varepsilon}s_{t, a}^{1-\varepsilon}\sigma
    =s_{t, \tilde{a}_t}^{\varepsilon}s_{t, a}^{1-\varepsilon} \le s_{t, a}^{\varepsilon}s_{t, a}^{1-\varepsilon}=s_{t, a}\,,
\end{align}
where in the second equality we use the definition $\gamma_t=\sigma^{\frac{1}{\varepsilon-1}} K s_{t, \tilde{a}_t}^{\varepsilon/(1-\varepsilon)}$ and the last inequality follows from that $s_{t,a}=(1-\alpha)\tilde{q}_{t,a}^{\alpha-1}\beta_t/8$ and $\tilde{q}_{t,\tilde{a}_t}^{\alpha-1}\le\tilde{q}_{t, a}^{\alpha-1}$ for all $a\in\gA$. This together with $|\hat{\ell}_{t, a}|=|\tildeell_{t,a} \cdot \inlineindicator{| \tildeell_{t, a}| \le s_{t, a}} |\le s_{t,a}$ implies that
\begin{align}\label{eq:loss_magnitude_tabular}
    |\hat{\ell}_{t, a}-b_{t, a}| \le 2s_{t,a}= \frac{(1-\alpha)\beta_t \tilde{q}_{t,a}^{\alpha-1}}{4}\,.
\end{align}
Therefore, by applying \Cref{lem:stab_condition_m1}, one can derive that
    \begin{align}\label{eq:stab_fdxxs1_tabular}
\E_{t-1}\left[\langle q_t-q_{t+1}, \hat{\ell}_t-b_t\rangle-\beta_t D_{\psi}\left(q_{t+1}, q_t\right)\right]
        &\lesssim \frac{1}{1-\alpha} \beta_t^{-1}\mathbb{E}_{t-1}\left[\sum_{a\in\gA} \tilde{q}_{t, a}^{2-\alpha}\left(\hat{\ell}_{t , a}+b_{t , a}\right)^2\right] \notag\\
        &\lesssim \frac{1}{1-\alpha} \beta_t^{-1}\mathbb{E}_{t-1}\left[\sum_{a\in\gA} \tilde{q}_{t, a}^{2-\alpha}\left(\hat{\ell}_{t , a}^2+b_{t , a}^2\right)\right]\,.
    \end{align}
For the term $\beta_t^{-1}\mathbb{E}_{t-1}\left[\sum_{a\in\gA} \tilde{q}_{t, a}^{2-\alpha}\hat{\ell}_{t , a}^2\right]$ in Eq.~\eqref{eq:stab_fdxxs1_tabular}, we have
\begin{align}
     \beta_t^{-1}\mathbb{E}_{t-1}\left[\sum_{a\in\gA} \tilde{q}_{t, a}^{2-\alpha}\hat{\ell}_{t , a}^2\right]
    &\le \beta_t^{-1}\mathbb{E}_{t-1}\left[\sum_{a\in\gA} \tilde{q}_{t, a}^{2-\alpha}\abs{\tildeell_{t,a}}^{\varepsilon}  s_{t, a}^{2-\varepsilon} \right] \label{eq:internal_fdll1}\\
    &= \beta_t^{-1}\mathbb{E}_{t-1}\left[\sum_{a\in\gA} \tilde{q}_{t, a}^{2-\alpha}\frac{\abs{\ell_{t, a}}^{\varepsilon} \inlineindicator{a=a_t} }{p_{t, a}^\varepsilon}  s_{t, a}^{2-\varepsilon} \right] \notag\\
    &\le \sigma\beta_t^{-1} \sum_{a\in\gA} \tilde{q}_{t, a}^{2-\alpha} p_{t, a}^{1-\varepsilon} s_{t, a}^{2-\varepsilon} \label{eq:internal_fdll2}\\
    &= \sigma\left(1-\frac{1}{\varepsilon}\right)^{2-\varepsilon}\beta_t^{1-\varepsilon} \sum_{a\in\gA} \tilde{q}_{t, a}^{\frac{1}{\varepsilon}+\varepsilon-1} p_{t, a}^{1-\varepsilon} \label{eq:internal_fdll3}\\
    &\le \sigma\left(1-\frac{1}{\varepsilon}\right)^{2-\varepsilon}\beta_t^{1-\varepsilon} \sum_{a\in\gA} \tilde{q}_{t, a}^{\frac{1}{\varepsilon}}\left(2 p_{t, a}\right)^{\varepsilon-1} p_{t, a}^{1-\varepsilon} \label{eq:eq:internal_fdll3_1}\\
    &\lesssim \sigma\left(1-\frac{1}{\varepsilon}\right)^{2-\varepsilon}\beta_t^{1-\varepsilon} \sum_{a\in\gA} \tilde{q}_{t, a}^{\frac{1}{\varepsilon}} \notag\\
    &\le \sigma\left(1-\frac{1}{\varepsilon}\right)^{2-\varepsilon}\beta_t^{1-\varepsilon} \left(\sum_{a\in\gA\setminus \{\tilde{a}_t\}} {q}_{t, a}^{\frac{1}{\varepsilon}}+(1-q_{t,\tilde{a}_t})^{\frac{1}{\varepsilon}} \right) \notag\\
    &\le 2\sigma\left(1-\frac{1}{\varepsilon}\right)^{2-\varepsilon}\beta_t^{1-\varepsilon} \sum_{a\in\gA\setminus \{\tilde{a}_t\}} {q}_{t, a}^{\frac{1}{\varepsilon}} \notag\\
    &\le 2\sigma\left(1-\frac{1}{\varepsilon}\right)^{2-\varepsilon}\beta_t^{1-\varepsilon} \sum_{a\neq a^*} {q}_{t, a}^{\frac{1}{\varepsilon}}\,,\label{eq:loss_square_tabular}
\end{align}
where in Eq.~\eqref{eq:internal_fdll1} we use 
\begin{align*}
    \mathbb{E}_{t-1}\left[\hat{\ell}_{t,a}^2\right]
    =
    \mathbb{E}_{t-1}\left[\tildeell_{t,a}^2 \inlineindicator{|{\tildeell_{t,a}}| \le s_{t,a}}\right]
    =
    \mathbb{E}_{t-1}\left[|\tildeell_{t,a}|^{\varepsilon} \abs{\tildeell_{t,a}}^{2-\varepsilon} \inlineindicator{|{\tildeell_{t,a}}| 
\le 
s_{t,a}}\right]
\le 
\mathbb{E}_{t-1}\left[|\tildeell_{t,a}|^{\varepsilon} s_{t,a}^{2-\varepsilon}\right]\,,
\end{align*}
Eq.~\eqref{eq:internal_fdll2} is by \Cref{def:HT}, 
Eq.~\eqref{eq:internal_fdll3} follows from choosing $s_{t,a}=(1-\alpha)\tilde{q}_{t,a}^{\alpha-1}\beta_t/8$ and $\alpha=1/\varepsilon$, and Eq.~\eqref{eq:eq:internal_fdll3_1} is because $\gamma_t\le 1/2$ by \Cref{eq:tabular_gamma_0.5} and thus $\tilde{q}_{t,a}\le q_{t,a}\le 2p_{t,a}$.

Meanwhile, for the term $ \beta_t^{-1}\mathbb{E}_{t-1}\left[\sum_{a\in\gA} \tilde{q}_{t, a}^{2-\alpha}b_{t , a}^2\right]$ in Eq.~\eqref{eq:stab_fdxxs1_tabular}, we have
\begin{align}
     \beta_t^{-1}\mathbb{E}_{t-1}\left[\sum_{a\in\gA} \tilde{q}_{t, a}^{2-\alpha}b_{t , a}^2\right]
    &\le \beta_t^{-1} \sum_{a\in\gA} \tilde{q}_{t, a}^{2-\alpha}b_{t , a}s_{t,a} = \sigma\beta_t^{-1} \sum_{a\in\gA} \tilde{q}_{t, a}^{2-\alpha} p_{t, a}^{1-\varepsilon} s_{t, a}^{2-\varepsilon} \notag\\
    &\lesssim 2\sigma\left(1-\frac{1}{\varepsilon}\right)^{2-\varepsilon}\beta_t^{1-\varepsilon} \sum_{a\neq a^*} {q}_{t, a}^{\frac{1}{\varepsilon}}\,,\label{eq:bonus_square_tabular}
\end{align}
where the first inequality follows from Eq.~\eqref{eq:tabular_bta_sta} and the second inequality is by a similar reasoning as in Eq.~\eqref{eq:loss_square_tabular}.

Substituting Eq.~\eqref{eq:loss_square_tabular} and Eq.~\eqref{eq:bonus_square_tabular} into Eq.~\eqref{eq:stab_fdxxs1_tabular} leads to
\begin{align}
    \E_{t-1}\left[\langle q_t-q_{t+1}, \hat{\ell}_t-b_t\rangle-\beta_t D_{\psi}\left(q_{t+1}, q_t\right)\right]
    &\lesssim  \sigma\left(1-\frac{1}{\varepsilon}\right)^{2-\varepsilon}\beta_t^{1-\varepsilon} \sum_{a \neq a^*} {q}_{t, a}^{\frac{1}{\varepsilon}} \notag\\
    &\lesssim \sigma^{\frac{1}{\varepsilon}}t^{\frac{1-\varepsilon}{\varepsilon}}\sum_{a \neq a^*} {q}_{t,a}^{\frac{1}{\varepsilon}}\,,\label{eq:tabular_stability_term}
\end{align}
where the second inequality is by $\beta_t\ge\sigma^{1/\varepsilon}t^{1/\varepsilon}$ for all $t\in[T]$.

\paragraph{Bounding Penalty Term}
First, note that the round-wise penalty term satisfies that
\begin{align}
 \left(\beta_t-\beta_{t-1}\right) h_t
    =\left(\beta_t-\beta_{t-1}\right) \left(\sum_{a \in \gA} q_{t, a}^{\frac{1}{\varepsilon}}-1\right) 
    &\le \left(\beta_t-\beta_{t-1}\right) \left(\sum_{a \in \gA} q_{t, a}^{\frac{1}{\varepsilon}}-q_{t, a^{*}}^{\frac{1}{\varepsilon}}\right) \notag\\
    &= \left(\beta_t-\beta_{t-1}\right)\sum_{a \neq a^{*}} q_{t, a}^{\frac{1}{\varepsilon}} \label{eq:tabular_penalty_term}\,.
\end{align}
Let $m=\left(\frac{8\varepsilon}{\varepsilon-1}\right)^{\varepsilon}K^{\varepsilon-1}$.
Recall $\beta_t=\sigma^{1/\varepsilon}\max\{ t^{1/\varepsilon}, \frac{8\varepsilon}{\varepsilon-1}K^{\frac{\varepsilon-1}{\varepsilon}}\}$. Then, one can see that
\begin{align}
    \sumt\left(\beta_t-\beta_{t-1}\right) h_t
    &\le\beta_1 h_1+\sum_{t=2}^{\left\lfloor m \right\rfloor}\left(\beta_t-\beta_{t-1}\right) h_t + \sum_{t=\left\lceil m \right\rceil}^{T}\left(\beta_t-\beta_{t-1}\right) h_t \notag\\
    &\le 
    \beta_1 h_1
    +
    \sum_{t=\left\lceil m \right\rceil}^{T} 
    \sigma^{\frac{1}{\varepsilon}} \left(t^{\frac{1}{\varepsilon}}-(t-1)^{\frac{1}{\varepsilon}}\right)\sum_{a \neq a^{*}} q_{t, a}^{\frac{1}{\varepsilon}} \notag\\
    &\le 
    \beta_1 h_1
    +
    \sum_{t=\left\lceil m \right\rceil}^{T} \sigma^{\frac{1}{\varepsilon}} \frac{1}{\varepsilon} (t-1)^{\frac{1}{\varepsilon}-1}\sum_{a \neq a^{*}} q_{t, a}^{\frac{1}{\varepsilon}} \label{eq:tabular_ftrl_pen_x1}\\
    &\lesssim 
    \beta_1 h_1
    +
    \sum_{t=\left\lceil m \right\rceil}^{T} \sigma^{\frac{1}{\varepsilon}} \frac{1}{\varepsilon}t^{\frac{1}{\varepsilon}-1}\sum_{a \neq a^{*}} q_{t, a}^{\frac{1}{\varepsilon}} \label{eq:tabular_ftrl_pen_x3}\\
    &=
    \sigma^{\frac{1}{\varepsilon}}\frac{8\varepsilon^2}{\varepsilon-1}K^{\frac{2(\varepsilon-1)}{\varepsilon}}
    +
    \sum_{t=\left\lceil m \right\rceil}^{T} \sigma^{\frac{1}{\varepsilon}} \frac{1}{\varepsilon}t^{\frac{1}{\varepsilon}-1}\sum_{a \neq a^{*}} q_{t, a}^{\frac{1}{\varepsilon}}\,, \label{eq:tabular_ftrl_pen_x4}
\end{align}
where Eq.~\eqref{eq:tabular_ftrl_pen_x1} is due to $t^{1/\varepsilon}-(t-1)^{1/\varepsilon}\le \frac{1}{\varepsilon} (t-1)^{1/\varepsilon-1}$ as $t\mapsto t^{1/\varepsilon}$ is a concave function, and Eq.~\eqref{eq:tabular_ftrl_pen_x3} holds as $(t-1)^{\frac{1-\varepsilon}{\varepsilon}}\le 2t^{\frac{1-\varepsilon}{\varepsilon}}$ for any $t\ge \frac{1}{3}$.

\paragraph{Overall Regret}
Substituting Eq.~\eqref{eq:tabular_stability_term} and Eq.~\eqref{eq:tabular_ftrl_pen_x4} into Eq.~\eqref{eq:FTRL_reg} shows that
\begin{align}
    \text{FTRL Regret}\lesssim \mathbb{E}\left[ \sigma^{\frac{1}{\varepsilon}}\sum_{t=1}^T
    t^{\frac{1-\varepsilon}{\varepsilon}}\sum_{a \neq a^*} {q}_{t,a}^{\frac{1}{\varepsilon}}\right]+\kappa \label{eq:tabular_overall_5}\,,
\end{align}
where $\kappa\coloneqq 8\varepsilon^2\sigma^{\frac{1}{\varepsilon}}K^{\frac{2(\varepsilon-1)}{\varepsilon}}/(\varepsilon-1)$.
\end{proof}

\subsection{Bounding Bias Term}
\begin{lem}\label{lem:tabular_bias_term}
    The Bias Term in Eq.~\eqref{eq:tabular_regret_decom} satisfies
    \begin{align}
        \text{Bias Term}\le \mathbb{E}\left[ \sigma^{\frac{1}{\varepsilon}}\sum_{t=1}^T
    t^{\frac{1-\varepsilon}{\varepsilon}}\sum_{a \neq a^*} {q}_{t,a}^{\frac{1}{\varepsilon}}\right]\,.\notag
    \end{align}
\end{lem}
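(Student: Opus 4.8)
The plan is to exploit the Dirac structure of $p^*$ together with the fact that the bonus $b_{t,a}=\sigma p_{t,a}^{1-\varepsilon}s_{t,a}^{1-\varepsilon}$ is designed precisely to dominate the bias introduced by clipping on the optimal arm. Since $q_t$, $p_t$ and $s_t$ are $\gF_{t-1}$-measurable, I would first pass to conditional expectations and split the inner product into the coordinates $a\neq a^*$ and the coordinate $a^*$, using $\tilde\ell_{t,a}-\hat\ell_{t,a}=\tilde\ell_{t,a}\indicator{|\tilde\ell_{t,a}|>s_{t,a}}$:
\[
\text{Bias Term}=\mathbb{E}\!\left[\sumt\sum_{a\neq a^*}q_{t,a}\,\mathbb{E}_{t-1}\!\left[\tilde\ell_{t,a}-\hat\ell_{t,a}+b_{t,a}\right]\right]-\mathbb{E}\!\left[\sumt(1-q_{t,a^*})\,\mathbb{E}_{t-1}\!\left[\tilde\ell_{t,a^*}-\hat\ell_{t,a^*}+b_{t,a^*}\right]\right].
\]

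The key estimate I would establish is the ``bonus dominates clipping bias'' inequality $\mathbb{E}_{t-1}[|\tilde\ell_{t,a}|\indicator{|\tilde\ell_{t,a}|>s_{t,a}}]\le b_{t,a}$ for every $a\in\gA$. Indeed, $|\tilde\ell_{t,a}|>s_{t,a}$ forces $a_t=a$, so (using independence of the internal randomization and the losses) $\mathbb{E}_{t-1}[|\tilde\ell_{t,a}|\indicator{|\tilde\ell_{t,a}|>s_{t,a}}]=\mathbb{E}_{t-1}[\,|\ell_{t,a}|\,\indicator{|\ell_{t,a}|>p_{t,a}s_{t,a}}\,]$; then the Markov-type bound $|\ell_{t,a}|\indicator{|\ell_{t,a}|>c}\le |\ell_{t,a}|^{\varepsilon}c^{1-\varepsilon}$ (valid since $1-\varepsilon<0$) together with \Cref{def:HT} gives exactly $\sigma(p_{t,a}s_{t,a})^{1-\varepsilon}=b_{t,a}$. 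Consequently the $a^*$-term is non-positive: $\mathbb{E}_{t-1}[\tilde\ell_{t,a^*}-\hat\ell_{t,a^*}+b_{t,a^*}]\ge b_{t,a^*}-\mathbb{E}_{t-1}[|\tilde\ell_{t,a^*}|\indicator{|\tilde\ell_{t,a^*}|>s_{t,a^*}}]\ge 0$, and multiplying by $-(1-q_{t,a^*})\le 0$ and taking expectations deletes that contribution. For $a\neq a^*$ the same lemma yields $\mathbb{E}_{t-1}[\tilde\ell_{t,a}-\hat\ell_{t,a}]\le b_{t,a}$, hence $\mathbb{E}_{t-1}[\tilde\ell_{t,a}-\hat\ell_{t,a}+b_{t,a}]\le 2b_{t,a}$, which reduces the whole task to bounding $\mathbb{E}[\sumt\sum_{a\neq a^*}q_{t,a}b_{t,a}]$.

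The remaining step is a routine algebraic simplification. Plugging $s_{t,a}=(1-\alpha)\tilde q_{t,a}^{\alpha-1}\beta_t/8$ with $\alpha=1/\varepsilon$ into $b_{t,a}=\sigma p_{t,a}^{1-\varepsilon}s_{t,a}^{1-\varepsilon}$, and using $p_{t,a}\ge q_{t,a}/2$ (valid since $\gamma_t\le 1/2$, cf.\ \eqref{eq:tabular_gamma_0.5}) together with $\tilde q_{t,a}\le q_{t,a}$, one checks that the exponents of $q_{t,a}$ collapse, $1+(1-\varepsilon)+\tfrac{(\varepsilon-1)^2}{\varepsilon}=\tfrac1\varepsilon$, so that $q_{t,a}b_{t,a}\lesssim \sigma\,\beta_t^{1-\varepsilon}q_{t,a}^{1/\varepsilon}$; substituting $\beta_t\ge \sigma^{1/\varepsilon}t^{1/\varepsilon}$ converts $\sigma\,\beta_t^{1-\varepsilon}$ into $\sigma^{1/\varepsilon}t^{(1-\varepsilon)/\varepsilon}$, and summing over $a\neq a^*$ and $t$ gives the claimed bound. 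I expect the main obstacle to be purely bookkeeping: tracking the constants produced by the $1/8$ in $s_{t,a}$, the $2^{\varepsilon-1}$ from $p_{t,a}\ge q_{t,a}/2$, and the factor $2$ from $\tilde\ell-\hat\ell+b\le 2b$, and in particular verifying the exponent identity $1+(1-\varepsilon)+(\varepsilon-1)^2/\varepsilon=1/\varepsilon$ — this cancellation is exactly what forces the choice $\alpha=1/\varepsilon$ and the specific forms of $s_{t,a}$ and $b_{t,a}$, so the argument stands or falls on it.
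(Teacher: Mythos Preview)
Your proposal is correct and follows essentially the same route as the paper: both establish the pointwise inequality $\E_{t-1}[|\tilde\ell_{t,a}|\indicator{|\tilde\ell_{t,a}|>s_{t,a}}]\le b_{t,a}$ (the paper does this via the bound $|\tilde\ell_{t,a}|\indicator{|\tilde\ell_{t,a}|>s_{t,a}}\le |\tilde\ell_{t,a}|^\varepsilon s_{t,a}^{1-\varepsilon}$ and $\E_{t-1}[|\tilde\ell_{t,a}|^\varepsilon]\le \sigma p_{t,a}^{1-\varepsilon}$), use it to cancel the $a^*$ contribution and reduce to $2\sum_{a\neq a^*}q_{t,a}b_{t,a}$, and then perform the same exponent collapse $1+(1-\varepsilon)+(\varepsilon-1)^2/\varepsilon=1/\varepsilon$ after substituting $s_{t,a}$, $\tilde q_{t,a}\le q_{t,a}$, $p_{t,a}\ge q_{t,a}/2$, and $\beta_t\ge \sigma^{1/\varepsilon}t^{1/\varepsilon}$. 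The only cosmetic difference is that the paper handles all coordinates at once via $\sum_a(|q_{t,a}-p^*_a|+(q_{t,a}-p^*_a))b_{t,a}=2\sum_{a\neq a^*}q_{t,a}b_{t,a}$ rather than splitting off $a^*$ explicitly.
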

\begin{proof}
For the Bias Term, one can deduce that
\begin{align}
    \E_{t-1}\left[\left\langle q_t-p^*, \tildeell_t-\hat{\ell}_t+b_t\right\rangle \right]
    &=\E_{t-1}\left[\sum_{a\in\gA} \left(q_{t,a}-p^*_a\right) \tildeell_{t, a} \inlineindicator{|\tildeell_{t,a}|>s_{t, a}}\right]+\left\langle q_t-p^*,b_t\right\rangle \notag\\
    &\le \E_{t-1}\left[\sum_{a\in\gA} \left|q_{t,a}-p^*_a\right| |\tildeell_{t, a}| \inlineindicator{|\tildeell_{t,a}|>s_{t, a}}\right]
    +\left\langle q_t-p^*,b_t\right\rangle \notag\\
    &\le \E_{t-1}\left[\sum_{a\in\gA} \left|q_{t,a}-p^*_a\right| |\tildeell_{t, a}|^{\varepsilon} s_{t, a}^{1-\varepsilon}\right]
    +\left\langle q_t-p^*,b_t\right\rangle \label{eq:bias_tabular_dsfkk1}\\
    &=\E_{t-1}\left[\sum_{a\in\gA} \left|q_{t,a}-p^*_a\right| \frac{\abs{\ell_{t, a}}^{\varepsilon}\inlineindicator{a=a_t} }{p_{t, a}^\varepsilon} s_{t, a}^{1-\varepsilon}\right]
        +\left\langle q_t-p^*,b_t\right\rangle \notag\\
    &\le\sigma \E_{t-1}\left[\sum_{a\in\gA} \left|q_{t,a}-p^*_a\right| {p_{t, a}^{1-\varepsilon}} s_{t, a}^{1-\varepsilon}\right]
        +\left\langle q_t-p^*,b_t\right\rangle \label{eq:bias_tabular_dsfkk2}\\
    &= 2\sigma\sum_{a \neq a^*} q_{t,a}  p_{t, a}^{1-\varepsilon} s_{t, a}^{1-\varepsilon} \label{eq:bias_tabular_dsfkk3}\\
    &\lesssim (1-\alpha)^{1-\varepsilon}\sigma\beta_t^{1-\varepsilon}\sum_{a \neq a^*} q_{t,a} p_{t, a}^{1-\varepsilon} \tilde{q}_{t,a}^{(\alpha-1)(1-\varepsilon)} \label{eq:bias_tabular_dsfkk4}\\
    &\le \sigma\beta_t^{1-\varepsilon}\sum_{a \neq a^*} p_{t, a}^{1-\varepsilon}{q}_{t,a}^{\frac{1}{\varepsilon}+\varepsilon-1} \label{eq:bias_tabular_dsfkk5}\\
    &\le \sigma\beta_t^{1-\varepsilon}\sum_{a \neq a^*} p_{t, a}^{1-\varepsilon} \left(2{p}_{t,a}\right)^{\varepsilon-1} {q}_{t,a}^{\frac{1}{\varepsilon}}
    \lesssim \sigma\beta_t^{1-\varepsilon}\sum_{a \neq a^*} {q}_{t,a}^{\frac{1}{\varepsilon}} \label{eq:tabular_bias_term}\,,
\end{align}
where Eq.~\eqref{eq:bias_tabular_dsfkk2} is due to \Cref{def:HT}, Eq.~\eqref{eq:bias_tabular_dsfkk3} is by $b_{t,a}= \sigma p_{t, a}^{1-\varepsilon} s_{t, a}^{1-\varepsilon}$, Eq.~\eqref{eq:bias_tabular_dsfkk4} uses $s_{t,a}=(1-\alpha)\tilde{q}_{t,a}^{\alpha-1}\beta_t/8$, and Eq.~\eqref{eq:bias_tabular_dsfkk5} follows from $\tilde{q}_{t,a}\le q_{t,a}$ for all $a\in\gA$ and choosing $\alpha=1/\varepsilon$.

Therefore, it holds that
\begin{align}
    \text{Bias Term}
    \lesssim
    \E\left[\sumt\sigma\beta_t^{1-\varepsilon}\sum_{a \neq a^*} {q}_{t,a}^{\frac{1}{\varepsilon}} \right]
    \le\mathbb{E}\left[ \sigma^{\frac{1}{\varepsilon}}\sum_{t=1}^T
    t^{\frac{1-\varepsilon}{\varepsilon}}\sum_{a \neq a^*} {q}_{t,a}^{\frac{1}{\varepsilon}}\right]\,,
\end{align}
where the second inequality is due to that $\beta_t\ge\sigma^{1/\varepsilon} t^{1/\varepsilon}$ for all $t\in[T]$ by the set up of $\beta_t$ in Line \ref{alg:line:tabular_beta1} of \Cref{algo:algo1}.
\end{proof}

\subsection{Bounding Exploration Term and Properties of $\gamma_t$}\label{sec:app:exp_tabular}
\begin{lem}\label{lem:tabular_exp_term}
    The Exploration Term in Eq.~\eqref{eq:tabular_regret_decom} satisfies
    \begin{align}
        \text{Exploration Term}\lesssim \sigma^{\frac{1}{\varepsilon}}K\iota(\varepsilon)\,,\notag
    \end{align}
where $\iota(\varepsilon)=\log T$ if $\varepsilon=2$ and $\iota(\varepsilon)=\left(\frac{8\varepsilon}{\varepsilon-1}\right)^{\frac{\varepsilon}{\varepsilon-1}}\frac{\varepsilon-1}{2-\varepsilon}$ if $\varepsilon\in (1,2)$.
\end{lem}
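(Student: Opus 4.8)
The plan is to first remove the random loss vector by conditioning, which reduces the claim to a deterministic bound on $\E\big[\sum_{t=1}^{T}\gamma_t\big]$, and then to evaluate that sum by splitting the horizon into a short burn-in phase (where $\gamma_t\le\tfrac12$) and a polynomially decaying tail. For the first reduction, note that $q_t$ and $\gamma_t$ (a function of $\tilde a_t$, $\tilde q_{t,\tilde a_t}$ and $\beta_t$) are $\gF_{t-1}$-measurable and $p_0$ is fixed, so $\E_{t-1}[\gamma_t\langle p_0-q_t,\ell_t\rangle]=\gamma_t\langle p_0-q_t,\E_{t-1}[\ell_t]\rangle$. By \Cref{def:HT} and Jensen's inequality applied to the convex map $x\mapsto|x|^{\varepsilon}$, $|\E_{t-1}[\ell_{t,a}]|\le(\E_{t-1}[|\ell_{t,a}|^{\varepsilon}])^{1/\varepsilon}\le\sigma^{1/\varepsilon}$ for every $a\in\gA$, hence $|\langle p_0-q_t,\E_{t-1}[\ell_t]\rangle|\le\|p_0-q_t\|_1\sigma^{1/\varepsilon}\le 2\sigma^{1/\varepsilon}$, and therefore $\text{Exploration Term}\le 2\sigma^{1/\varepsilon}\,\E\big[\sum_{t=1}^{T}\gamma_t\big]$.

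Next I would put $\gamma_t$ in closed form and extract two bounds from the two branches of $\beta_t$. Since $\tilde a_t\in\argmax_a q_{t,a}$ we have $q_{t,\tilde a_t}=\|q_t\|_\infty\ge q_{t*}$, so $\tilde q_{t,\tilde a_t}=\min\{q_{t,\tilde a_t},q_{t*}\}=q_{t*}$; substituting this and $\alpha=1/\varepsilon$ into $s_{t,\tilde a_t}=(1-\alpha)\tilde q_{t,\tilde a_t}^{\alpha-1}\beta_t/8$ and then into $\gamma_t=\sigma^{1/(\varepsilon-1)}Ks_{t,\tilde a_t}^{\varepsilon/(1-\varepsilon)}$ makes the exponent of $q_{t*}$ collapse to $+1$, giving $\gamma_t=\big(\tfrac{8\varepsilon}{\varepsilon-1}\big)^{\varepsilon/(\varepsilon-1)}K\sigma^{1/(\varepsilon-1)}q_{t*}\beta_t^{-\varepsilon/(\varepsilon-1)}$. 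Using $\beta_t\ge\sigma^{1/\varepsilon}\cdot 8\varepsilon K^{(\varepsilon-1)/\varepsilon}/(\varepsilon-1)$, all the $\sigma$, $K$ and $\tfrac{8\varepsilon}{\varepsilon-1}$ factors cancel and $\gamma_t\le q_{t*}\le\tfrac12$ (this is the fact $\gamma_t\le1/2$ recorded in \eqref{eq:tabular_gamma_0.5}); using instead $\beta_t\ge\sigma^{1/\varepsilon}t^{1/\varepsilon}$, only $\sigma$ cancels and $\gamma_t\le C_\varepsilon K t^{-1/(\varepsilon-1)}$ with $C_\varepsilon:=\big(\tfrac{8\varepsilon}{\varepsilon-1}\big)^{\varepsilon/(\varepsilon-1)}$. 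Thus $\gamma_t\le\min\{\tfrac12,\,C_\varepsilon Kt^{-1/(\varepsilon-1)}\}$ for all $t$.

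Finally I would sum, splitting at $\tau:=(2C_\varepsilon K)^{\varepsilon-1}$, the index at which the second bound drops below $\tfrac12$. For $t\le\tau$ the first bound contributes at most $\tau/2=O\big((8\varepsilon/(\varepsilon-1))^{\varepsilon}K^{\varepsilon-1}\big)$; this burn-in term carries only $K^{\varepsilon-1}\le K^{2(\varepsilon-1)/\varepsilon}$ and the constant $(8\varepsilon/(\varepsilon-1))^{\varepsilon}\lesssim 8\varepsilon^2/(\varepsilon-1)$ on $(1,2]$, so after the $2\sigma^{1/\varepsilon}$ prefactor it is of order $\kappa$ (for $\varepsilon=2$ it is $O(K)\le K\log_+T$; for $\varepsilon\in(1,2)$ it is already $\lesssim K\iota(\varepsilon)$) and is absorbed when the lemma is substituted into \eqref{eq:tabular_overall_6}. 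For $t>\tau$ the second bound gives $\sum_{t>\tau}\gamma_t\le C_\varepsilon K\sum_{t>\tau}t^{-1/(\varepsilon-1)}$, and an integral comparison bounds the series by $O(\log T)$ when $\varepsilon=2$ and by $\tfrac{\varepsilon-1}{2-\varepsilon}\lfloor\tau\rfloor^{-(2-\varepsilon)/(\varepsilon-1)}\le\tfrac{\varepsilon-1}{2-\varepsilon}$ when $\varepsilon\in(1,2)$ (using $\tau\ge1$ and $1/(\varepsilon-1)>1$); multiplying by $C_\varepsilon K$ reproduces exactly $K\iota(\varepsilon)$. Collecting the two pieces yields $\E[\sum_t\gamma_t]\lesssim K\iota(\varepsilon)$ and hence $\text{Exploration Term}\lesssim\sigma^{1/\varepsilon}K\iota(\varepsilon)$.

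The main obstacle is the calibration in this last step: $\tau$ and the integral estimate must be tuned so that the surviving tail is the sharp $C_\varepsilon\cdot\tfrac{\varepsilon-1}{2-\varepsilon}$ appearing in $\iota(\varepsilon)$ and not the looser $C_\varepsilon\cdot\tfrac{1}{2-\varepsilon}$, and one must verify that the burn-in contribution — which carries the "wrong" power $K^{\varepsilon-1}$ and constant $(8\varepsilon/(\varepsilon-1))^{\varepsilon}$ rather than $(8\varepsilon/(\varepsilon-1))^{\varepsilon/(\varepsilon-1)}$ — is genuinely no larger than the stated bound; concretely this reduces to checking that $\big(\tfrac{\varepsilon-1}{8\varepsilon}\big)^{\varepsilon(2-\varepsilon)/(\varepsilon-1)}\tfrac{2-\varepsilon}{\varepsilon-1}$ stays bounded on $(1,2)$, which it does since it tends to $0$ at both endpoints. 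Everything else (the conditioning, the algebraic simplification of $\gamma_t$, the two cancellations) is routine.
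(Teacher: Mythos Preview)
Your proposal is correct and follows the same two-step skeleton as the paper: bound the Exploration Term by $2\sigma^{1/\varepsilon}\E[\sum_t\gamma_t]$ via the moment condition and Jensen, then simplify $\gamma_t$ into the closed form $C_\varepsilon K\sigma^{1/(\varepsilon-1)}q_{t*}\beta_t^{-\varepsilon/(\varepsilon-1)}$ and use $q_{t*}\le\tfrac12$ together with $\beta_t\ge\sigma^{1/\varepsilon}t^{1/\varepsilon}$.

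The only substantive difference is the last step. The paper simply sums $\gamma_t\le\tfrac12 C_\varepsilon K t^{-1/(\varepsilon-1)}$ over all $t$ and appeals to the two cases $\varepsilon=2$ and $\varepsilon\in(1,2)$; the resulting series bound is $\sum_{t\ge1}t^{-1/(\varepsilon-1)}\le 1/(2-\varepsilon)$, which is off from the stated $(\varepsilon-1)/(2-\varepsilon)$ in $\iota(\varepsilon)$ by a factor $1/(\varepsilon-1)$ that the paper leaves to the $\lesssim$. Your burn-in/tail split at $\tau=(2C_\varepsilon K)^{\varepsilon-1}$ is a genuine refinement: it isolates the tail contribution as exactly $C_\varepsilon K\cdot\tfrac{\varepsilon-1}{2-\varepsilon}$, matching the claimed $\iota(\varepsilon)$ on the nose, and your endpoint check shows the burn-in is dominated by it (and by $\kappa$). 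So your argument is slightly longer but actually delivers the constant the lemma states, whereas the paper's one-line sum does not without hiding a $1/(\varepsilon-1)$ factor.
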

\begin{proof}
First note that
\begin{align}
    \text{Exploration Term}=\mathbb{E}\left[\sumt \gamma_t\left\langle p_0-q_t, \ell_t\right\rangle\right]
    \le 
    2\sigma^{\frac{1}{\varepsilon}}\mathbb{E}\left[\sumt \gamma_t\right]\,,\label{eq:exp_term_tabular}
\end{align}
where the inequality follows from 
\begin{align}
    \E_{\ell_{t,a}\sim \nu_{t,a}}[\ell_{t,a}]
    \le \E_{\ell_{t,a}\sim \nu_{t,a}}[|\ell_{t,a}|]
    \le \left(\E_{\ell_{t,a}\sim \nu_{t,a}}[|\ell_{t,a}|^{\varepsilon}]\right)^{\frac{1}{\varepsilon}}
    \le \sigma^{\frac{1}{\varepsilon}}\label{eq:lower_moment}
\end{align}
for all $a\in\gA$ by Jensen's inequality and \Cref{def:HT}.

Furthermore, by setting $\alpha=1/\varepsilon$ and the set up of $\beta_t$ in Line \ref{alg:line:tabular_beta1} of \Cref{algo:algo1}, for all $t\in[T]$, we have
\begin{align}
    \gamma_t
    &\coloneqq \sigma^{\frac{1}{\varepsilon-1}}K s_{t, \tilde{a}_t}^{\varepsilon/(1-\varepsilon)}
    =\sigma^{\frac{1}{\varepsilon-1}}\left(\frac{8}{1-\alpha}\right)^{\frac{\varepsilon}{\varepsilon-1}}K\beta_t^{\frac{\varepsilon}{1-\varepsilon}}\tilde{q}_{t,\tilde{a}_t}^{(\alpha-1)\varepsilon/(1-\varepsilon)} \notag\\
    &=\tilde{q}_{t,\tilde{a}_t}\sigma^{\frac{1}{\varepsilon-1}}\left(\frac{8\varepsilon}{\varepsilon-1}\right)^{\frac{\varepsilon}{\varepsilon-1}}K\beta_t^{\frac{\varepsilon}{1-\varepsilon}}
    \le \frac{1}{2}\sigma^{\frac{1}{\varepsilon-1}}\left(\frac{8\varepsilon}{\varepsilon-1}\right)^{\frac{\varepsilon}{\varepsilon-1}}K\beta_t^{\frac{\varepsilon}{1-\varepsilon}} \label{eq:tabular_gamma0}\\
    &\le \frac{1}{2}\sigma^{\frac{1}{\varepsilon-1}}\left(\frac{8\varepsilon}{\varepsilon-1}\right)^{\frac{\varepsilon}{\varepsilon-1}}K \cdot \left(\sigma^{1/\varepsilon} t^{1/\varepsilon}\right)^{\frac{\varepsilon}{1-\varepsilon}}
    =\frac{1}{2}\left(\frac{8\varepsilon}{\varepsilon-1}\right)^{\frac{\varepsilon}{\varepsilon-1}}K\cdot t^{\frac{1}{1-\varepsilon}} \label{eq:tabular_gamma00}\,,
\end{align}
which together with  Eq.~\eqref{eq:exp_term_tabular} implies that
\begin{align}
    \text{Exploration Term}\le \sigma^{\frac{1}{\varepsilon}}\mathbb{E}\left[\sumt \left(\frac{8\varepsilon}{\varepsilon-1}\right)^{\frac{\varepsilon}{\varepsilon-1}}K t^{\frac{1}{1-\varepsilon}}\right] 
    \lesssim\sigma^{\frac{1}{\varepsilon}}K \iota(\varepsilon) \,,\notag
\end{align}
where the last inequality follows by considering the case $\varepsilon=2$ and $\varepsilon\in (1,2)$ separately.
\end{proof}

\begin{lem}\label{eq:tabular_gamma_0.5}
    For all $t\in [T]$, it holds that $\gamma_t\le 1/2$.
\end{lem}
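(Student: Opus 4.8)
The plan is to unwind the definition $\gamma_t = \sigma^{1/(\varepsilon-1)} K s_{t,\tilde a_t}^{\varepsilon/(1-\varepsilon)}$ and substitute $s_{t,a} = (1-\alpha)\tilde q_{t,a}^{\alpha-1}\beta_t/8$ with $\alpha = 1/\varepsilon$, exactly as is already carried out in Eq.~\eqref{eq:tabular_gamma0}. That display shows
\[
\gamma_t = \tilde q_{t,\tilde a_t}\,\sigma^{\frac{1}{\varepsilon-1}}\left(\tfrac{8\varepsilon}{\varepsilon-1}\right)^{\frac{\varepsilon}{\varepsilon-1}} K\,\beta_t^{\frac{\varepsilon}{1-\varepsilon}}\,,
\]
using that $(\alpha-1)\varepsilon/(1-\varepsilon) = 1$ when $\alpha = 1/\varepsilon$, so the exponent on $\tilde q_{t,\tilde a_t}$ is exactly $1$. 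The first step is therefore just to record this identity. The second step is to bound $\tilde q_{t,\tilde a_t} \le 1$: by definition $\tilde q_{t,a} = \min\{q_{t,a}, q_{t*}\}$ where $q_{t*} = \min\{\|q_t\|_\infty, 1-\|q_t\|_\infty\} \le 1$, so $\tilde q_{t,\tilde a_t}\le 1$ trivially (in fact $\le 1/2$, but $\le 1$ suffices with room to spare — this is presumably the origin of the factor $\tfrac12$ already present in Eq.~\eqref{eq:tabular_gamma0}).

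The third and final step is to lower-bound $\beta_t$ strongly enough that the remaining factor $\sigma^{1/(\varepsilon-1)}(8\varepsilon/(\varepsilon-1))^{\varepsilon/(\varepsilon-1)} K\,\beta_t^{\varepsilon/(1-\varepsilon)} \le 1$. Here I use the deterministic lower bound $\beta_t \ge \sigma^{1/\varepsilon}\cdot 8\varepsilon K^{(\varepsilon-1)/\varepsilon}/(\varepsilon-1)$ guaranteed by Line~\ref{alg:line:tabular_beta1} of \Cref{algo:algo1} (since $\beta_t$ is the max of $t^{1/\varepsilon}$ and this quantity, both scaled by $\sigma^{1/\varepsilon}$). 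Plugging this in, $\beta_t^{\varepsilon/(1-\varepsilon)} \le \left(\sigma^{1/\varepsilon}\tfrac{8\varepsilon}{\varepsilon-1}K^{(\varepsilon-1)/\varepsilon}\right)^{\varepsilon/(1-\varepsilon)} = \sigma^{1/(1-\varepsilon)}\left(\tfrac{8\varepsilon}{\varepsilon-1}\right)^{\varepsilon/(1-\varepsilon)} K^{-1}$ (note $\varepsilon/(1-\varepsilon) < 0$ so the inequality direction flips under the power). Multiplying by the prefactor $\sigma^{1/(\varepsilon-1)}(8\varepsilon/(\varepsilon-1))^{\varepsilon/(\varepsilon-1)} K$, the $\sigma$ powers cancel ($1/(\varepsilon-1) + 1/(1-\varepsilon) = 0$), the $(8\varepsilon/(\varepsilon-1))$ powers cancel, and the $K$ and $K^{-1}$ cancel, leaving exactly $1$. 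Combined with $\tilde q_{t,\tilde a_t} \le 1$ this gives $\gamma_t \le 1$; tracking the extra factor $\tilde q_{t,\tilde a_t}\le q_{t*}\le 1/2$ (or simply absorbing the slack) yields $\gamma_t \le 1/2$.

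There is essentially no obstacle here — the statement is a routine consequence of how $\beta_t$ was deliberately inflated for small $t$ in Line~\ref{alg:line:tabular_beta1}, which is precisely the sentence in the algorithm description saying "$\beta_t$ is slightly inflated for small $t$ to ensure that the exploration strength $\gamma_t$ is small enough across all rounds." The only point requiring a moment's care is the sign of the exponent $\varepsilon/(1-\varepsilon)$ when taking powers of the $\beta_t$ lower bound, and bookkeeping that all the $\sigma$, $K$, and $(8\varepsilon/(\varepsilon-1))$ factors cancel exactly; since the algorithm's constant $8\varepsilon K^{(\varepsilon-1)/\varepsilon}/(\varepsilon-1)$ in $\beta_t$ was clearly chosen to make this happen, the cancellation is exact rather than merely asymptotic.
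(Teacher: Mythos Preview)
Your proof is correct and essentially matches the paper's own argument: both unwind $\gamma_t$ via Eq.~\eqref{eq:tabular_gamma0}, use $\tilde q_{t,\tilde a_t}\le q_{t*}\le 1/2$, and then lower-bound $\beta_t$ so that the remaining factor equals exactly $1$. Your version is in fact slightly more streamlined, since you invoke the uniform lower bound $\beta_t\ge \sigma^{1/\varepsilon}\cdot 8\varepsilon K^{(\varepsilon-1)/\varepsilon}/(\varepsilon-1)$ directly for all $t$, whereas the paper unnecessarily splits into the two cases $t\le (8\varepsilon/(\varepsilon-1))^{\varepsilon}K^{\varepsilon-1}$ and $t>(8\varepsilon/(\varepsilon-1))^{\varepsilon}K^{\varepsilon-1}$ before arriving at the same cancellation.
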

\begin{proof}
    When $t\le \left(\frac{8\varepsilon}{\varepsilon-1}\right)^{\varepsilon}K^{\varepsilon-1}$, by the set up of $\beta_t$ in Line \ref{alg:line:tabular_beta1} of \Cref{algo:algo1},  it holds that $\beta_t=\sigma^{1/\varepsilon} \frac{8\varepsilon}{\varepsilon-1}K^{\frac{\varepsilon-1}{\varepsilon}}$. In this case, using Eq.~\eqref{eq:tabular_gamma0} leads to 
\begin{align}
    \gamma_t\le \frac{1}{2}\sigma^{\frac{1}{\varepsilon-1}}\left(\frac{8\varepsilon}{\varepsilon-1}\right)^{\frac{\varepsilon}{\varepsilon-1}}K\cdot \left(\sigma^{1/\varepsilon} \frac{8\varepsilon}{\varepsilon-1}K^{\frac{\varepsilon-1}{\varepsilon}} \right)^{\frac{\varepsilon}{1-\varepsilon}} = \frac{1}{2} \label{eq:tabular_gamma1}\,.
\end{align}
When $t> \left(\frac{8\varepsilon}{\varepsilon-1}\right)^{\varepsilon}K^{\varepsilon-1}$, 
using Eq.~\eqref{eq:tabular_gamma00} shows that
\begin{align}
    \gamma_t\le \frac{1}{2}\left(\frac{8\varepsilon}{\varepsilon-1}\right)^{\frac{\varepsilon}{\varepsilon-1}}K\cdot 
     \left(\left(\frac{8\varepsilon}{\varepsilon-1}\right)^{\varepsilon}K^{\varepsilon-1}\right)^{\frac{1}{1-\varepsilon}} = \frac{1}{2} \label{eq:tabular_gamma3}\,.
\end{align}
The proof is concluded by considering both the cases in Eq.~\eqref{eq:tabular_gamma1} and Eq.~\eqref{eq:tabular_gamma3}.
\end{proof}

\section{Omitted Proofs in Section~\ref{sec:adv_htlb}}\label{sec:app:adv_htlb}
\begin{proof}[Proof of \Cref{thm:thm2}]

The proof starts with the following regret decomposition (note that it is slightly different from that in Eq.~\eqref{eq:tabular_regret_decom}):
\begin{align}
 \Reg 
 &=  \mathbb{E}\left[\sum_{t=1}^T \ell_{t, a_t}-\ell_{t, a^*}\right] =  \mathbb{E}\left[\sum_{t=1}^T \left\langle p_t-p^*, \ell_t\right\rangle\right] \notag\\
    &=  \mathbb{E}\left[\sumt(1-\gamma)\left\langle q_t-p^*, \ell_t\right\rangle\right]+\mathbb{E}\left[\sumt \gamma\left\langle p_0-p^*, \ell_t\right\rangle\right]\notag\\
    &=  \mathbb{E}\left[\sumt(1-\gamma)\left\langle q_t-p^*, \tildeell_t\right\rangle\right]+\mathbb{E}\left[\sumt \gamma\left\langle p_0-p^*, \ell_t\right\rangle\right]\notag\\
    &=  \underbrace{\mathbb{E}\left[\sumt\left(1-\gamma\right)\left\langle q_t-p^*, \hat{\ell}_t-b_t\right\rangle\right]}_{\text{FTRL Regret}}
    +\underbrace{\mathbb{E}\left[\sumt\left(1-\gamma\right)\left\langle q_t-p^*, \tildeell_t-\hat{\ell}_t+b_t\right\rangle\right]}_{\text{Bias Term}} \notag\\
    &\quad+\underbrace{\mathbb{E}\left[\sumt \gamma\left\langle p_0-p^*, \ell_t\right\rangle\right]}_{\text{Exploration Term}}\label{eq:adv_htlb_ftrl_regX0}\,.
\end{align}

\paragraph{Bounding FTRL Regret} 
First, notice that setting $\gamma = 4\sigma^{\frac{2}{\varepsilon}} d \beta^{-2}$ leads to 
\begin{align}\label{eq:adv_htlb_ftrl_stable}
    b_{t,a}\coloneqq \sigma s_{t,a}^{1-\varepsilon}\left(\phia^{\top} S_t^{-1} \phia\right)^{\frac{\varepsilon}{2}} 
    \leq 
    \sigma \left(\frac{2}{\beta}\right)^{\varepsilon-1}\left(\frac{d}{\gamma}\right)^{\frac{\varepsilon}{2}}
    =
    \frac{\beta}{2} \cdot\left(\frac{4 d \beta^{-2}}{\gamma}\right)^{\frac{\varepsilon}{2}}=\frac{\beta}{2}\,,
\end{align}
where the inequality holds as $p_t=(1-\gamma) q_t+\gamma p_0$ and $p_0$ is a G-optimal design over $\{\phi_a\}_{a\in\gA}$.
Then, using standard FTRL regret analysis (\textit{e.g.}, Theorem 28.5 of \citet{lattimore2020bandit}), one can deduce that 
\begin{align}
    &\quad\mathbb{E}\left[\sumt\left\langle q_t-p^*, \hat{\ell}_t-b_t\right\rangle\right]\notag\\
    &\le \mathbb{E}\left[\beta\left(\psi\left(q^*\right)-\psi\left(q_1\right)\right)
    + 
    \sumt\left(\left\langle q_t-q_{t+1}, \hat{\ell}_t-b_t\right\rangle-\beta D_\psi\left(q_{t+1}, q_t\right)\right)
    \right] \notag\\
    &\le \beta\log K
    + 
    \mathbb{E}\left[\sumt\left(\left\langle q_t-q_{t+1}, \hat{\ell}_t-b_t\right\rangle-\beta D_\psi\left(q_{t+1}, q_t\right)\right) \right] \notag\\
    &\le \beta\log K
    + 
    \mathbb{E}\left[\sumt\suma\beta q_{t,a}\left(\exp\left(-(\hatell_{t,a}-b_{t,a})/\beta \right)+(\hatell_{t,a}-b_{t,a})/\beta-1 \right) \right] \label{eq:adv_htlb_ftrl_reg0.5}\\
    &\le \beta\log K+ \mathbb{E}\left[\frac{1}{\beta} \sum_{t=1}^T \sum_{a \in \mathcal{A}} q_{t,a}\left(\hat{\ell}_{t,a}-b_{t,a}\right)^2 \right]\label{eq:adv_htlb_ftrl_reg1}\\
    &\le \beta\log K+ 2\mathbb{E}\left[\frac{1}{\beta} \sum_{t=1}^T \sum_{a \in \mathcal{A}} q_{t,a}\left(\hat{\ell}_{t,a}^2+b_{t,a}^2\right) \right] \notag\\
    &\le \beta\log K+ \mathbb{E}\left[\frac{2}{\beta} \sum_{t=1}^T \sum_{a \in \mathcal{A}} q_{t,a}\hat{\ell}_{t,a}^2 + \sum_{t=1}^T \sum_{a \in \mathcal{A}} q_{t,a}b_{t,a} \right] \label{eq:adv_htlb_ftrl_reg2}\,,
\end{align}
where Eq.~\eqref{eq:adv_htlb_ftrl_reg0.5} is by \Cref{lem:stab_condition_Shinji22},
Eq.~\eqref{eq:adv_htlb_ftrl_reg1} is due to Eq.~\eqref{eq:adv_htlb_ftrl_stable}, $|\hatell_{t,a}|\le s_{t, a}=\beta/2$ as well as the fact that $\exp (-x)+x-1 \leq x^2$ for any $x\ge -1$, and Eq.~\eqref{eq:adv_htlb_ftrl_reg2} is again by Eq.~\eqref{eq:adv_htlb_ftrl_stable}.
We further have
\begin{align}
\E\left[\sum_{a \in \mathcal{A}} q_{t,a} \hat{\ell}_{t,a}^2\right] & =\E\left[\sum_{a \in \mathcal{A}} q_{t,a} \ell_{t,a_t}^2\left(\phiat^{\top} S_t^{-1} \phia\right)^2 \indicator{\left|\ell_{t,a_t}\right| \leq \frac{s_{t,a}}{\left|\phia^{\top} S_t^{-1} \phiat\right|}}\right] \notag\\
& \leq \E\left[\sum_{a \in \mathcal{A}} q_{t,a}\left|\ell_{t,a_t}\right|^{\varepsilon}\left(\phiat^{\top} S_t^{-1} \phia\right)^2\left(\frac{s_{t,a}}{\left|\phia^{\top} S_t^{-1} \phiat\right|}\right)^{2-\varepsilon}\right] \notag\\
& \leq \E\left[\sum_{a \in \mathcal{A}}\sigma s_{t,a}^{2-\varepsilon} q_{t,a}\left|\phiat^{\top} S_t^{-1} \phia\right|^\varepsilon\right] \label{eq:adv_htlb_ftrl_reg3}\\
& \leq \E\left[\sum_{a \in \mathcal{A}}\sigma s_{t,a}^{2-\varepsilon} q_{t,a}\left(\phia^{\top} S_t^{-1} \phia\right)^{\frac{\varepsilon}{2}}\right] \label{eq:adv_htlb_ftrl_reg4} \\
& =\E\left[\sum_{a \in \mathcal{A}} s_{t,a} q_{t,a} b_{t,a}\right] \label{eq:adv_htlb_ftrl_reg5}\,,
\end{align}
where Eq.~\eqref{eq:adv_htlb_ftrl_reg3} is due to \Cref{def:HT}, and Eq.~\eqref{eq:adv_htlb_ftrl_reg4} is given by Jensen's inequality.

Substituting Eq.~\eqref{eq:adv_htlb_ftrl_reg5} into Eq.~\eqref{eq:adv_htlb_ftrl_reg2} shows that the FTRL regret can be bounded as 
\begin{align}
\mathbb{E}\left[\sumt\left\langle q_t-p^*, \hat{\ell}_t-b_t\right\rangle\right]
\le
\beta\log K+ \mathbb{E}\left[\frac{2}{\beta} \sum_{a \in \mathcal{A}} s_{t,a} q_{t,a} b_{t,a} + \sum_{t=1}^T \sum_{a \in \mathcal{A}} q_{t,a}b_{t,a} \right] \label{eq:adv_htlb_ftrl_regX1}\,.
\end{align}

\paragraph{Bounding Bias Term}
For the Bias Term, we show that the loss bias is indeed bound by the bonus term:
\begin{align}
\E\left[\abs{\tildeell_{t,a}-\hat{\ell}_{t,a}}\right]
& =\E\left[\abs{\tildeell_{t,a}} \indicator{|\tildeell_{t,a}|>s_{t,a}}\right]
\leq 
\E\left[|\tildeell_{t,a}|^\varepsilon s_{t,a}^{1-\varepsilon}\right] \notag\\
& \leq \E\left[\sigma s_{t,a}^{1-\varepsilon}\left|\phia^{\top} S_t^{-1} \phiat\right|^\varepsilon\right] \label{eq:adv_htlb_ftrl_reg6}\\
& =\E\left[\sigma s_{t,a}^{1-\varepsilon} \sum_{a^{\prime} \in \mathcal{A}} p_{t,a^{\prime}}\left|\phia^{\top} S_t^{-1} \phia^{\prime}\right|^\varepsilon\right] \notag\\
& \leq \E\left[\sigma s_{t,a}^{1-\varepsilon}\left(\sum_{a^{\prime} \in \mathcal{A}} p_{t,a^{\prime}}\left|\phia^{\top} S_t^{-1} \phi_{a^{\prime}}\right|^2\right)^{\varepsilon / 2}\right] \label{eq:adv_htlb_ftrl_reg7}\\
& =\E\left[\sigma s_{t,a}^{1-\varepsilon}\left(\phia^{\top} S_t^{-1} \phia\right)^{\varepsilon / 2}\right]=\E[b_{t,a}] \label{eq:adv_htlb_ftrl_reg8}\,,
\end{align}
where Eq.~\eqref{eq:adv_htlb_ftrl_reg6} follows from that $\tildeell_{t,a}=\phi_a^{\top}S_t^{-1}\phi_{a_t}\ell_{t, a_t}$ and \Cref{def:HT}, and Eq.~\eqref{eq:adv_htlb_ftrl_reg7} is by Jensen's inequality. 

Therefore, one can see that
\begin{align}
    &\quad\mathbb{E}\left[\left\langle q_t-p^*, \tildeell_t-\hat{\ell}_t+b_t\right\rangle\right]\\
    &\le \mathbb{E}\left[\suma \abs{q_{t,a}-p^*_a}\cdot\abs{\tildeell_{t,a}-\hat{\ell}_{t,a}}\right]+ \mathbb{E}\left[\suma (q_{t,a}-p^*_a)b_{t,a}\right] \notag\\
    &\le \mathbb{E}\left[\suma \abs{q_{t,a}-p^*_a}b_{t,a}\right]
    + \mathbb{E}\left[\suma (q_{t,a}-p^*_a)b_{t,a}\right] = 2\mathbb{E}\left[\sumaixas q_{t,a}b_{t,a}\right]\,. \label{eq:adv_htlb_ftrl_reg10}
\end{align}
where the second inequality is given by Eq.~\eqref{eq:adv_htlb_ftrl_reg8}.

\paragraph{Bounding Exploration Term}
Similar to the tabular case in Eq.~\eqref{eq:exp_term_tabular}, we have
\begin{align}
    \text{Exploration Term}=\mathbb{E}\left[\sumt \gamma\left\langle p_0-p^*, \ell_t\right\rangle\right]
    \le
    2\sigma^{\frac{1}{\varepsilon}}\gamma T\,.\label{eq:exp_term_adv_linear}
\end{align}

\paragraph{Overall Regret}
Substituting Eqs.~\eqref{eq:adv_htlb_ftrl_regX1}, \eqref{eq:adv_htlb_ftrl_reg10}, and \eqref{eq:exp_term_adv_linear} into Eq.~\eqref{eq:adv_htlb_ftrl_regX0} shows that
\begin{align}
    \Reg&\le \beta\log K+ \left(1-\gamma\right)\mathbb{E}\left[\frac{2}{\beta} \sumt\sum_{a \in \mathcal{A}} s_{t,a} q_{t,a} b_{t,a} + \sum_{t=1}^T \sum_{a \in \mathcal{A}} q_{t,a}b_{t,a} \right] \notag\\
    &\quad+2\left(1-\gamma\right)\mathbb{E}\left[\sumt\sumaixas q_{t,a}b_{t,a}\right] 
    +
    2\sigma^{\frac{1}{\varepsilon}}\gamma T \notag\\
    &\lesssim \beta\log K+ \left(1-\gamma\right)\mathbb{E}\left[\sumt\sum_{a \in \mathcal{A}} q_{t,a} b_{t,a}\right]  + \sigma^{\frac{1}{\varepsilon}}\gamma T \label{eq:adv_htlb_overall_reg1}\\
    &\le \beta\log K+ \mathbb{E}\left[\sumt\sum_{a \in \mathcal{A}} p_{t,a} b_{t,a}\right]  + \sigma^{\frac{1}{\varepsilon}}\gamma T \label{eq:adv_htlb_overall_reg2}\\
    &\lesssim\beta\log K+ \beta^{1-\varepsilon}\mathbb{E}\left[\sumt\sum_{a \in \mathcal{A}}\sigma p_{t,a} (\phia^{\top} S_t^{-1} \phia)^{\frac{\varepsilon}{2}}\right]  + \sigma^{\frac{1}{\varepsilon}}\gamma T \label{eq:adv_htlb_overall_reg3}\\
    &\le\beta\log K+ \beta^{1-\varepsilon}\sigma\mathbb{E}\left[\sumt \left(\sum_{a \in \mathcal{A}} p_{t,a} \phia^{\top} S_t^{-1} \phia\right)^{\frac{\varepsilon}{2}}\right]  + \sigma^{\frac{1}{\varepsilon}}\gamma T \label{eq:adv_htlb_overall_reg4}\\
    &=\beta\log K+ \beta^{1-\varepsilon}\sigma d^{\frac{\varepsilon}{2}} T +\sigma^{\frac{1}{\varepsilon}} \gamma T \notag\\
    &\lesssim\beta\log K+ \beta^{1-\varepsilon}\sigma d^{\frac{\varepsilon}{2}} T + \sigma^{\frac{3}{\varepsilon}}\beta^{-2} d T \label{eq:adv_htlb_overall_reg5}\\
    &\lesssim \sigma^{\frac{1}{\varepsilon}} (\log K)^{1-\frac{1}{\varepsilon}} d^{\frac{1}{2}} T^{\frac{1}{\varepsilon}}+ \sigma^{\frac{3}{\varepsilon}}(\log K)^{\frac{2}{\varepsilon}} T^{1-\frac{2}{\varepsilon}}\label{eq:adv_htlb_overall_reg6}\,,
\end{align}
where Eq.~\eqref{eq:adv_htlb_overall_reg1} follows from setting $s_{t, a}=\beta/2$ for all $a\in\gA$, Eq.~\eqref{eq:adv_htlb_overall_reg2} holds as $p_t=\left(1-\gamma\right)q_t+\gamma p_0$, Eq.~\eqref{eq:adv_htlb_overall_reg3} is by substituting $b_{t,a}=\sigma s_{t, a}^{1-\varepsilon} (\phia^{\top} S_t^{-1} \phia)^{\varepsilon/2}$ and $s_{t, a}=\beta/2$, Eq.~\eqref{eq:adv_htlb_overall_reg4} is due to Jensen's inequality, Eq.~\eqref{eq:adv_htlb_overall_reg5} uses $\gamma = 4 \sigma^{\frac{2}{\varepsilon}} d \beta^{-2}$, and Eq.~\eqref{eq:adv_htlb_overall_reg6} follows by setting $\beta=\left(\frac{\log K}{\sigma d^{\varepsilon/2}T}\right)^{-1/\varepsilon}$.
\end{proof}

\section{Omitted Proofs in Section~\ref{sec:bobw_htlb}}
In Appendix \ref{sec:app:htddlr}, we will first introduce some key properties of the data-dependent learning rate defined in Eq.~\eqref{eq:beta_t} of \Cref{sec:bobw_htlb}. Then, in Appendix \ref{app:sec:bobw_htlb}, we will present the formal proof of \Cref{thm:bobw_lb}, which indicates that this data-dependent learning rate schedule, together with the variance-reduced least squares loss estimator, leads to the BOBW regret guarantee for heavy-tailed linear bandits.

\subsection{Proof of Proposition~\ref{prop:prop1}}\label{sec:app:htddlr}
Recall $\beta_t = \beta_{t-1} + \frac{1}{\hat{h}_t} \left( \beta_{t-1}^{1 - \varepsilon} z_{t-1} + \beta_{t-1}^{-2} w_{t-1} \right)$ in Eq.~\eqref{eq:beta_t}.
In addition to this data-dependent learning rate, this section will also consider the following learning rate:
\begin{align}\label{eq:meq9}
    \beta_t=\beta_{t-1}+\frac{1}{\hath_t}\left(\beta_t^{1-\varepsilon}z_t+\beta_t^{-2}w_t \right)\,.
\end{align}
Note that in contrast to the learning rate in Eq.~\eqref{eq:beta_t}, the learning rate in Eq.~\eqref{eq:meq9} also requires the knowledge of $z_t$ and $w_t$, which are actually not available when the algorithm determines learning rate $\beta_t$ at the end of round $t-1$. Therefore, the learning rate in Eq.~\eqref{eq:meq9} is included here only for the theoretical interest.

For ease of discussion, we define 
\begin{align}\label{eq:def_F}
F\left(\beta_{1:T}; z_{1:T}, w_{1:T}, h_{1:T} \right)&=\sumt \left( \left(\beta_t - \beta_{t-1}\right) h_t + \beta_t^{1 - \varepsilon} z_t + \beta_t^{-2} w_t \right)
\end{align}
and
\begin{align*}
    G\left( z_{1: T},w_{1: T}, h_{1: T}\right)&= \sum_{t=1}^{T} z_t \left(\sum_{s \le t} \frac{z_s}{h_s} \right)^{\frac{1-\varepsilon}{\varepsilon}}+ \sum_{t=1}^{T} w_t \left(\sum_{s \le t} \frac{w_s}{h_s} \right)^{-\frac{2}{3}}\,.
\end{align*}
We then have the following lemma.
\begin{lem}\label{lem:F2G}
Suppose $h_t \leq \hath_t$ holds for all $t\in[T]$. Then, the learning rate $\beta_t$ defined in Eq.~\eqref{eq:meq9} satisfies 
  \begin{align*}
      F\left(\beta_{1: T} ; z_{1: T},w_{1: T}, h_{1: T}\right) \le 2 G\left(z_{1: T}, w_{1: T}, h_{1: T}\right)\,,
  \end{align*}
  and the learning rate $\beta_t$ defined in Eq.~\eqref{eq:beta_t} satisfies 
  \begin{align*}
      F\left(\beta_{1: T} ; z_{1: T}, w_{1: T}, h_{1: T}\right)
      \le 4 G\left(z_{1:T}, w_{1: T}, \hath_{2: T+1}\right)+8\left( z_{\max } \beta_1^{1-\varepsilon}+w_{\max } \beta_1^{-2} \right)+\beta_1 h_1\,.
  \end{align*}
\end{lem}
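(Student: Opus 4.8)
My plan is to treat the two regret contributions --- the ``$\varepsilon$-stability'' chain involving $z_t$ with exponent $1-\varepsilon$, and the ``exploration'' chain involving $w_t$ with exponent $-2$ --- essentially separately, since $F$ and $G$ both split additively into a $z$-part and a $w$-part. For each part the argument is a standard telescoping-plus-comparison estimate for self-confident learning rates, adapted to the two different exponents. First I would establish a closed-form lower bound for $\beta_t$ in terms of the accumulated quantities: from the recursion $\beta_t = \beta_{t-1} + \frac{1}{\hath_t}(\beta_t^{1-\varepsilon} z_t + \beta_t^{-2} w_t)$ (the idealized rate Eq.~\eqref{eq:meq9}), dropping one of the two nonnegative terms and iterating gives $\beta_t \geq c_\varepsilon (\sum_{s \le t} z_s/h_s)^{1/\varepsilon}$ and $\beta_t \geq c (\sum_{s \le t} w_s/h_s)^{1/3}$ for suitable absolute constants, using that $\hath_s \le$ (or $\ge$, depending on direction) comparisons to $h_s$ given the hypothesis $h_t \le \hath_t$. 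These two lower bounds are exactly what convert the $\beta_t^{1-\varepsilon} z_t$ and $\beta_t^{-2} w_t$ terms in $F$ into the summands of $G$.

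Next I would bound the penalty/telescoping term $\sum_t (\beta_t - \beta_{t-1}) h_t$. Since $(\beta_t)$ is nondecreasing and $h_t \le \hath_t$, we have $\sum_t (\beta_t-\beta_{t-1}) h_t \le \sum_t (\beta_t - \beta_{t-1}) \hath_t = \sum_t (\beta_{t-1}^{1-\varepsilon} z_t + \beta_{t-1}^{-2} w_t)$ for the rate in Eq.~\eqref{eq:meq9} (by definition of the recursion), or $\sum_t (\beta_{t-1}^{1-\varepsilon} z_{t-1} + \beta_{t-1}^{-2} w_{t-1})$ for the implementable rate Eq.~\eqref{eq:beta_t}. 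In the first case this is immediately comparable to the stability+exploration terms already in $F$, so $F \le 2(\text{stability+exploration part})$, and applying the $\beta_t$ lower bounds yields $F \le 2G$. In the second case there is an index shift: the penalty at round $t$ matches the stability term at round $t-1$ rather than $t$, so I need to relate $\beta_{t-1}$ to $\beta_t$ (which are within a controlled multiplicative factor once $t \ge 2$, using the recursion to show $\beta_t/\beta_{t-1}$ is bounded) and to absorb the $t=1$ boundary term into the additive $\beta_1 h_1$ and the $z_{\max}\beta_1^{1-\varepsilon} + w_{\max}\beta_1^{-2}$ slack. This shift, together with replacing $h$ by $\hath_{2:T+1}$ inside $G$, is what produces the constant $4$ and the extra $8(z_{\max}\beta_1^{1-\varepsilon} + w_{\max}\beta_1^{-2}) + \beta_1 h_1$ term.

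The main obstacle I anticipate is handling the implementable learning rate in Eq.~\eqref{eq:beta_t} cleanly: unlike Eq.~\eqref{eq:meq9}, it is not a self-consistent fixed-point equation, so the clean telescoping identity $\sum_t(\beta_t - \beta_{t-1})\hath_t = \sum_t(\beta_{t-1}^{1-\varepsilon} z_{t-1} + \beta_{t-1}^{-2} w_{t-1})$ holds only with a time-lag, and I must quantify how much $\beta_{t-1}$ and $\beta_t$ can differ — this requires showing the per-step multiplicative increase $\beta_t/\beta_{t-1} \le 2$ (or some constant), which in turn needs a lower bound on $\beta_{t-1}$ large enough that the increment $\frac{1}{\hath_t}(\beta_{t-1}^{1-\varepsilon} z_{t-1} + \beta_{t-1}^{-2} w_{t-1})$ is at most $\beta_{t-1}$; this is where $z_{\max}, w_{\max}$ and a sufficiently large $\beta_1$ enter. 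I would also need the elementary summation inequality $\sum_{t=1}^T a_t (\sum_{s\le t} a_s)^{-p} \le C_p (\sum_{s \le T} a_s)^{1-p}$ for $p \in (0,1)$ (applied with $p = (\varepsilon-1)/\varepsilon$ and $p = 2/3$) to see that $G$ is itself finite and to pass between the ``$\sum_{s\le t}$'' form inside $G$ and the $\beta_t$ lower bounds; this is routine (essentially an integral comparison) and I would cite or quickly reprove it rather than grind through it.
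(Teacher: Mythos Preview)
Your treatment of the idealized rate in Eq.~\eqref{eq:meq9} is essentially the paper's: use $h_t \le \hath_t$ to replace $h_t$ by $\hath_t$ in the penalty sum, invoke the recursion so that the penalty equals the stability-plus-exploration sum (giving the factor $2$), and then lower-bound $\beta_t^\varepsilon$ and $\beta_t^3$ by the accumulated $\sum_{s\le t} z_s/\hath_s$ and $\sum_{s\le t} w_s/\hath_s$ directly from iterating the recursion. (Minor slip: for Eq.~\eqref{eq:meq9} the recursion gives $(\beta_t - \beta_{t-1})\hath_t = \beta_t^{1-\varepsilon} z_t + \beta_t^{-2} w_t$, with $\beta_t$ on the right, not $\beta_{t-1}$.)

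For the implementable rate Eq.~\eqref{eq:beta_t} there is a real gap. You correctly identify the index shift as the obstacle, but your proposed fix --- show $\beta_t/\beta_{t-1} \le 2$ uniformly by taking $\beta_1$ ``sufficiently large'' --- is not available: the lemma imposes no largeness assumption on $\beta_1$, and the additive term $8(z_{\max}\beta_1^{1-\varepsilon} + w_{\max}\beta_1^{-2})$ is in fact \emph{large} precisely when $\beta_1$ is small. The paper's mechanism is different. It introduces the explicit lower-bound sequence $\bar\beta_t \coloneqq (\beta_1^\varepsilon + \sum_{s=2}^t z_{s-1}/\hath_s)^{1/\varepsilon} \le \beta_t$ and splits $[T]$ into ``good'' rounds where $\bar\beta_{t+1} \le \sqrt{2}\,\bar\beta_t$ (so one can shift $\bar\beta_t^{1-\varepsilon} \le 2^{(\varepsilon-1)/2}\bar\beta_{t+1}^{1-\varepsilon}$ and land on the $G$-summand) and ``bad'' rounds where $\bar\beta_{t+1} > \sqrt{2}\,\bar\beta_t$. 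On the bad set the values $\bar\beta_t$ grow at least geometrically, so $\sum_{t\,\text{bad}} \bar\beta_t^{1-\varepsilon}$ is a convergent geometric series dominated by $O(\beta_1^{1-\varepsilon})$; multiplying by $z_{\max}$ yields the additive $z_{\max}\beta_1^{1-\varepsilon}$ slack (and analogously for the $w$-part with exponent $3$). Thus the slack is not a consequence of choosing $\beta_1$ large but rather absorbs the rounds where the ratio is unbounded. Finally, the integral-comparison inequality $\sum_t a_t(\sum_{s\le t} a_s)^{-p} \lesssim (\sum_s a_s)^{1-p}$ is not needed for this lemma at all; $G$ appears only as stated, and bounding $G$ further is a separate step.
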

\begin{proof}
    Consider learning rate $\beta_t$ in Eq.~\eqref{eq:meq9}. Then
    \begin{align}\label{eq:meq41}
        F\left(\beta_{1: T} ; z_{1: T},w_{1: T}, h_{1: T}\right)&\coloneqq\sum_{t=1}^{T}\left(z_t \beta_t^{1-\varepsilon}+w_t \beta_t^{-2}+\left(\beta_t-\beta_{t-1}\right) h_t\right)\notag\\
        &\leq \sum_{t=1}^{T}\left(z_t \beta_t^{1-\varepsilon}+w_t \beta_t^{-2}+\left(\beta_t-\beta_{t-1}\right) \hath_t\right)\notag\\
        &=2 \sum_{t=1}^{T}\left( z_t \beta_t^{1-\varepsilon}+ w_t \beta_t^{-2}\right)\,.
    \end{align}
    Also from Eq.~\eqref{eq:meq9}, we have
    \begin{align}\label{eq:meq42}
        \beta_t^{\varepsilon}\ge\beta_t^{\varepsilon-1} \beta_{t-1}+\frac{z_t}{\hath_t} 
        \ge \beta_{t-1}^{\varepsilon}+\frac{z_t}{\hath_t} \ge \sum_{1\le s \le t} \frac{z_s}{\hath_s}\,.
    \end{align}
    Analogously, from Eq.~\eqref{eq:meq9}, we have 
    \begin{align}\label{eq:meq42_stab2}
        \beta_t^3\ge\beta_t^{2} \beta_{t-1}+\frac{w_t}{\hath_t} 
        \ge \beta_{t-1}^{3}+\frac{w_t}{\hath_t} \ge \sum_{1\le s \le t} \frac{w_s}{\hath_s}\,.
    \end{align}
    Combining Eqs.~\eqref{eq:meq41}, \eqref{eq:meq42} and \eqref{eq:meq42_stab2} leads to 
    \begin{align*}
        F\left(\beta_{1: T} ;z_{1: T}, w_{1: T}, h_{1: T}\right)&\le 2 \sum_{t=1}^{T}\left( z_t \beta_t^{1-\varepsilon}+ w_t \beta_t^{-2}\right)\\
        &\le 2\sum_{t=1}^{T} z_t \left(\sum_{s \le t} \frac{z_s}{\hath_s} \right)^{(1-\varepsilon)/\varepsilon} + 2\sum_{t=1}^{T}w_t \left(\sum_{s \le t} \frac{w_s}{\hath_s} \right)^{-2/3} \notag\\
        &\eqqcolon 2 G\left(z_{1: T}, w_{1: T}, h_{1: T}\right)\,.
    \end{align*}
    
    We now consider bounding $F\left(\beta_{1: T} ; z_{1: T},w_{1: T}, h_{1: T}\right)$ for $\beta_t$ defined in Eq.~\eqref{eq:beta_t}. By definition, we have
    \begin{align}\label{eq:meq43}
        &\quad F\left(\beta_{1: T} ;z_{1: T}, w_{1: T}, h_{1: T}\right)\notag\\
        &\le z_1 \beta_1^{1-\varepsilon}+w_1 \beta_1^{-2}+\beta_1 h_1
        +\sum_{2 \le t \le T}\left(z_t \beta_t^{1-\varepsilon}+w_t \beta_t^{-2}+\left(\beta_t-\beta_{t-1}\right) \hath_t\right) \notag\\
&=z_1 \beta_1^{1-\varepsilon}+w_1 \beta_1^{-2}+\beta_1 h_1
+\sum_{2 \le t \le T}\left(z_t \beta_t^{1-\varepsilon}+z_{t-1} \beta_{t-1}^{1-\varepsilon}\right)
+\sum_{2 \le t \le T}\left(w_t \beta_t^{-2}+w_{t-1} \beta_{t-1}^{-2}\right) \notag\\
&\le \beta_1 h_1+2 \sum_{t=1}^T z_t \beta_t^{1-\varepsilon}+2 \sum_{t=1}^T w_t \beta_t^{-2}\,.
    \end{align}
    Also from the definition of $\beta_t$ in Eq.~\eqref{eq:beta_t}, we have
    \begin{align*}
        \beta_t^{\varepsilon}\ge\beta_t \cdot \beta_{t-1}^{\varepsilon-1} \ge \beta_{t-1}^{\varepsilon}+\frac{z_{t-1}}{\hath_t} \ge \beta_1^{\varepsilon}+\sum_{s=2}^t \frac{z_{s-1}}{\hath_s}\,,
    \end{align*}
    which implies that $\bar{\beta}_t\coloneqq\left(\beta_1^{\varepsilon}+\sum_{s=2}^t \frac{z_{s-1}}{\hath_s}\right)^{\frac{1}{\varepsilon}}$ satisfies $\bar{\beta}_t\le \beta_t$.

    Denote $T=\left\{t \in[T]:\bar{\beta}_{t+1} \le \sqrt{2} \bar{\beta}_t\right\}$ and $\gT^c=[T]\setminus \gT$. Then
    \begin{align}\label{eq:beta2_g1}
&\quad
\sum_{t=1}^{T} z_t \beta_t^{1-\varepsilon} 
\le 
\sum_{t=1}^{T} z_t \bar{\beta}_t^{1-\varepsilon}
=
\sum_{t \in \gT} z_t \bar{\beta}_t^{1-\varepsilon}+\sum_{t \in \gT^c} z_t \bar{\beta}_t^{1-\varepsilon}\notag\\
&\le 
2^{\frac{\varepsilon-1}{2}} \sum_{t \in \gT} z_t \bar{\beta}_{t+1}^{1-\varepsilon}+z_{\max} \sum_{t \in \gT^c} \bar{\beta}_t^{1-\varepsilon} 
\le 2^{\frac{\varepsilon-1}{2}} \sum_{t \in \gT} z_t \bar{\beta}_{t+1}^{1-\varepsilon}+z_{\max } \beta_1^{1-\varepsilon} \sum_{s=0}^{\infty}\left(\frac{1}{\sqrt{2}}\right)^s \notag\\
&\le 
2^{\frac{\varepsilon-1}{2}} \sum_{t \in \gT} z_t \bar{\beta}_{t+1}^{1-\varepsilon}+z_{\max } \beta_1^{1-\varepsilon} \frac{1}{1-1/ \sqrt{2}} \notag\\
&\le 2^{\frac{\varepsilon-1}{2}} \sum_{t=1}^T z_t\left(\beta_1^{\varepsilon}+\sum_{s=2}^{t+1} \frac{z_{s-1}^{\prime}}{\hath_s}\right)^{(1-\varepsilon) / \varepsilon}+ (2+\sqrt{2})z_{\max } \beta_1^{1-\varepsilon} \notag\\
&\le 2^{\frac{\varepsilon-1}{2}} \sum_{t=1}^T z_t\left(\sum_{s=1}^{t} \frac{z_{s}}{\hath_{s+1}}\right)^{(1-\varepsilon) / \varepsilon}+ (2+\sqrt{2})z_{\max } \beta_1^{1-\varepsilon} \,.
    \end{align}
Similarly, we have
\begin{align}\label{eq:beta2_g2}
    \sum_{t=1}^{T} w_t \beta_t^{-2} \le 2 \sum_{t=1}^T w_t\left(\sum_{s=1}^{t} \frac{w_{s}}{\hath_{s+1}}\right)^{-2 / 3}+ (2+\sqrt{2})w_{\max} \beta_1^{-2}\,.
\end{align}
    Combining Eqs.~\eqref{eq:meq43}, \eqref{eq:beta2_g1} and \eqref{eq:beta2_g2} leads to
  \begin{align*}
      F\left(\beta_{1: T} ; z_{1: T}, w_{1: T}, h_{1: T}\right)
      \le 4 G\left(z_{1:T}, w_{1: T}, \hath_{2: T+1}\right)+8\left( z_{\max } \beta_1^{1-\varepsilon}+w_{\max } \beta_1^{-2} \right)+\beta_1 h_1\,.
  \end{align*}
\end{proof}

\begin{lem}\label{lem:nlem3}
For any $J\in \sZ_{\ge 0}$, it holds that
\begin{align*}
 G\left(z_{1:T}, w_{1: T}, h_{1: T}\right)
    \le
    \min&\left\{
    \varepsilon  2^{1-\frac{1}{\varepsilon}}\left(J^{\varepsilon-1} \sum_{t=1}^{T} h_t^{\varepsilon-1} z_t\right)^{\frac{1}{\varepsilon}}+\varepsilon\left[\left(2^{-J}\right)^{\varepsilon-1} T \cdot h_{\max} z_{\max}\right]^{\frac{1}{\varepsilon}}\right.\\
&\quad +3\cdot  2^{\frac{2}{3}}\left(J^{2} \sum_{t=1}^{T} h_t^{2} w_t\right)^{\frac{1}{3}}+3\left[\left(2^{-J}\right)^{2} T \cdot h_{\max} w_{\max}\right]^{\frac{1}{3}}\,,\\
&\quad\left.\varepsilon h_{\max }^{1-\frac{1}{\varepsilon}}\left(\sum_{t \le T} z_t\right)^{\frac{1}{\varepsilon}}+ 3 h_{\max }^{\frac{2}{3}}\left(\sum_{t \le T} w_t\right)^{\frac{1}{3}}\right\}\,.
\end{align*}
\end{lem}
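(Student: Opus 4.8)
\emph{Proof plan.} Write $G$ as the sum of a ``$z$-part'' $G_z\coloneqq\sum_{t=1}^T z_t\,U_t^{(1-\varepsilon)/\varepsilon}$ and a ``$w$-part'' $G_w\coloneqq\sum_{t=1}^T w_t\,V_t^{-2/3}$, where $U_t\coloneqq\sum_{s\le t}z_s/h_s$ and $V_t\coloneqq\sum_{s\le t}w_s/h_s$ are nondecreasing in $t$. I would bound each of the two arguments of the $\min$ separately, treating $G_z$ and $G_w$ by the same argument (the $G_w$ case is obtained from the $G_z$ case by replacing the exponent $(1-\varepsilon)/\varepsilon$ by $-2/3$ and the constant $\varepsilon$ by $3$). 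The two tools are: (i) the elementary telescoping inequality $\sum_{t=1}^n a_t\big(\sum_{s\le t}a_s\big)^{-\gamma}\le\tfrac{1}{1-\gamma}\big(\sum_{s\le n}a_s\big)^{1-\gamma}$ for $a_t\ge 0$ and $\gamma\in(0,1)$, proved by comparing $a_t S_t^{-\gamma}$ with $\int_{S_{t-1}}^{S_t}x^{-\gamma}\,dx$ (with $S_t=\sum_{s\le t}a_s$, $S_0=0$) and telescoping; applied with $\gamma=(\varepsilon-1)/\varepsilon$ it produces the constant $\varepsilon$ and with $\gamma=2/3$ the constant $3$; and (ii) H\"older's inequality.

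\emph{Second (simpler) bound.} Since $h_s\le h_{\max}$ we have $U_t\ge h_{\max}^{-1}\sum_{s\le t}z_s$, and as the exponent $(1-\varepsilon)/\varepsilon$ is negative, $z_t U_t^{(1-\varepsilon)/\varepsilon}\le h_{\max}^{(\varepsilon-1)/\varepsilon}\,z_t\big(\sum_{s\le t}z_s\big)^{(1-\varepsilon)/\varepsilon}$. Summing over $t$ and invoking tool (i) with $\gamma=(\varepsilon-1)/\varepsilon$ gives $G_z\le\varepsilon\,h_{\max}^{1-1/\varepsilon}\big(\sum_{t\le T}z_t\big)^{1/\varepsilon}$, and identically $G_w\le 3\,h_{\max}^{2/3}\big(\sum_{t\le T}w_t\big)^{1/3}$; adding yields the second term of the $\min$.

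\emph{First bound (dyadic blocking).} For $j=0,1,\dots,J-1$ set $B_j\coloneqq\{t\in[T]:2^{-j-1}U_T<U_t\le 2^{-j}U_T\}$ and let $\mathcal{E}\coloneqq\{t:U_t\le 2^{-J}U_T\}$; by monotonicity of $t\mapsto U_t$ each $B_j$ is a contiguous block and $\mathcal{E}$ is an initial segment $\{1,\dots,t_0\}$. Within a single $B_j$ all $U_t$ lie within a factor $2$, so with $u_t\coloneqq z_t/h_t$ one gets $\sum_{t\in B_j}u_t/U_t\le 2$. Writing $z_t U_t^{(1-\varepsilon)/\varepsilon}=(h_t^{\varepsilon-1}z_t)^{1/\varepsilon}(u_t/U_t)^{(\varepsilon-1)/\varepsilon}$ and applying H\"older inside $B_j$ with exponents $(\varepsilon,\varepsilon/(\varepsilon-1))$, then H\"older across the $J$ blocks with exponent $\varepsilon$ (using $\sum_{j<J}\sum_{t\in B_j}h_t^{\varepsilon-1}z_t\le\sum_t h_t^{\varepsilon-1}z_t$), bounds $\sum_{j<J}\sum_{t\in B_j}z_t U_t^{(1-\varepsilon)/\varepsilon}$ by $2^{1-1/\varepsilon}\big(J^{\varepsilon-1}\sum_t h_t^{\varepsilon-1}z_t\big)^{1/\varepsilon}$, which is within the stated $\varepsilon\,2^{1-1/\varepsilon}(\cdots)^{1/\varepsilon}$. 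On the residual segment $\mathcal{E}$, use $z_t=h_t u_t\le h_{\max}u_t$ and tool (i) to get $\sum_{t\in\mathcal{E}}z_t U_t^{(1-\varepsilon)/\varepsilon}\le\varepsilon\,h_{\max}\,U_{t_0}^{1/\varepsilon}\le\varepsilon\,h_{\max}(2^{-J}U_T)^{1/\varepsilon}$, and then a crude bound on $U_T=\sum_t z_t/h_t$ together with $z_t\le z_{\max}$ and $\varepsilon\le 2$ to reach the residual $\varepsilon[(2^{-J})^{\varepsilon-1}T h_{\max}z_{\max}]^{1/\varepsilon}$. The $G_w$ analysis is word-for-word with $-2/3$, $3$, $3/2$ in place of $(1-\varepsilon)/\varepsilon$, $\varepsilon$, $\varepsilon$. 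Summing the (at most four) pieces gives the first argument of the $\min$.

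\emph{Main obstacle.} I expect the last step of the residual bound to be the delicate one: converting $h_{\max}(2^{-J}U_T)^{1/\varepsilon}$ (and its $w$-analogue $h_{\max}(2^{-J}V_T)^{1/3}$) into exactly $[(2^{-J})^{\varepsilon-1}T h_{\max}z_{\max}]^{1/\varepsilon}$ (resp.\ $[(2^{-J})^{2}T h_{\max}w_{\max}]^{1/3}$) requires a careful distribution of the powers of $2^{-J}$, $h_{\max}$, $T$, $z_{\max}$, using $\varepsilon\le 2$ and the available control on the range of the $h_t$'s to pass from $U_T=\sum_t z_t/h_t$ to a multiple of $T z_{\max}$; this is also where the precise constants $\varepsilon\,2^{1-1/\varepsilon}$ and $3\cdot 2^{2/3}$ in the statement get pinned down. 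A secondary bookkeeping point is checking that the two nested H\"older applications combine to exactly $J^{\varepsilon-1}$ (resp.\ $J^{2}$) without miscounting empty blocks.
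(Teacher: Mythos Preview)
Your second (simpler) bound is correct and matches the paper's $J=0$ case exactly. The gap is in the residual of the first bound, and it is not just a matter of ``careful distribution of powers'': your dyadic decomposition is on the wrong variable.

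You partition $[T]$ by the values of the cumulative sum $U_t=\sum_{s\le t}z_s/h_s$; the paper instead partitions by the values of $h_t$, via thresholds $\theta_j=2^{-j}h_{\max}$ and $\gT_j=\{t:\theta_{j-1}\ge h_t>\theta_j\}$. Your partition makes the main term clean (the H\"older step with $\sum_{t\in B_j}u_t/U_t\le 2$ is nice, and even shaves a factor $\varepsilon$), but it leaves you with a residual $\varepsilon\,h_{\max}(2^{-J}U_T)^{1/\varepsilon}$. To turn this into $\varepsilon[(2^{-J})^{\varepsilon-1}T\,h_{\max}z_{\max}]^{1/\varepsilon}$ you would need $h_{\max}^{\varepsilon-1}U_T\lesssim 2^{J(2-\varepsilon)}Tz_{\max}$. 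But $U_T=\sum_t z_t/h_t$ can be arbitrarily large when some $h_t$ are small, and the lemma imposes no lower bound on $h_t$; there is no ``available control on the range of the $h_t$'s'' here. So the conversion you sketch simply fails in general.

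The paper's partition avoids this because the residual block is $\gT_{J+1}=\{t:h_t\le 2^{-J}h_{\max}\}$: within each $\gT_j$ one lower-bounds $\sum_{s\le t}z_s/h_s$ by $\theta_{j-1}^{-1}\sum_{s\in[t]\cap\gT_j}z_s$, applies your telescoping tool (i) inside $\gT_j$, and gets $G_z\le\varepsilon\sum_{j\le J+1}\theta_{j-1}^{1-1/\varepsilon}(\sum_{t\in\gT_j}z_t)^{1/\varepsilon}$. For $j\le J$ the ratio $\theta_{j-1}/\theta_j=2$ lets one replace $\theta_{j-1}^{\varepsilon-1}$ by $2^{\varepsilon-1}h_t^{\varepsilon-1}$, yielding the main term after H\"older across $j$; for $j=J+1$ the prefactor $\theta_J^{1-1/\varepsilon}=(2^{-J}h_{\max})^{1-1/\varepsilon}$ is already of the right form and $\sum_{t\in\gT_{J+1}}z_t\le Tz_{\max}$ finishes it. In short: block on $h_t$, not on $U_t$.
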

\begin{proof}
Fix arbitrary $J\in \sZ_{\ge 0}$.
In what follows, we let $\theta_0>\theta_1>\cdots>\theta_J>0$ be an arbitrary positive and monotone decreasing
sequence such that $\theta_0 \geq h_{\max }$. 
For all $j\in[J]$, define $\mathcal{T}_j=\left\{t \in[T]: \theta_{j-1} \geq h_t>\theta_j\right\}$ and $\mathcal{T}_{J+1}=\left\{t \in[T]: \theta_J \geq h_t\right\}$. 

Recall
$G\left( z_{1: T},w_{1: T}, h_{1: T}\right)\coloneqq 
\sum_{t=1}^{T} z_t \left(\sum_{s \le t} \frac{z_s}{h_s} \right)^{(1-\varepsilon)/\varepsilon}+ \sum_{t=1}^{T} w_t \left(\sum_{s \le t} \frac{w_s}{h_s} \right)^{-2/3}$. We first bound $\sum_{t=1}^{T} z_t \left(\sum_{s \le t} \frac{z_s}{h_s} \right)^{(1-\varepsilon)/\varepsilon}$ as follows:
    \begin{align}
&\quad\sum_{t=1}^T z_t \left(\sum_{s \leq t} \frac{z_s}{h_s}\right)^{\frac{(1-\varepsilon)}{\varepsilon}}
=\sum_{j=1}^{J+1} \sum_{t \in \gT_j} z_t \left(\sum_{s \leq t} \frac{z_s}{h_s}\right)^{\frac{(1-\varepsilon)}{\varepsilon}} 
\le 
\sum_{j=1}^{J+1} \sum_{t \in \gT_j} z_t \left(\sum_{s \in[t] \cap \gT_j} \frac{z_s}{h_s}\right)^{\frac{(1-\varepsilon)}{\varepsilon}}  \notag\\
&\le \sum_{j=1}^{J+1} \sum_{t \in \gT_j} z_t \left(\sum_{s \in[t] \cap \gT_j} \frac{z_s}{\theta_{j-1}}\right)^{\frac{(1-\varepsilon)}{\varepsilon}}  
=
\sum_{j=1}^{J+1} \theta_{j-1}^{(\varepsilon-1) / \varepsilon} \sum_{t \in \gT_j} z_t\left(\sum_{s \in[t] \cap \gT_j} z_s\right)^{\frac{1}{\varepsilon}-1}  \notag\\
&\le\sum_{j=1}^{J+1} \theta_{j-1}^{(\varepsilon-1) / \varepsilon} \sum_{t \in \gT_j} z_t\cdot \frac{\varepsilon}{z_t}\left[\left(\sum_{s \in[t] \cap \gT_j} z_s\right)^{\frac{1}{\varepsilon}}-\left(\sum_{s \in[t-1] \cap \gT_j} z_s\right)^{\frac{1}{\varepsilon}}\right]\label{eq:local_dskfs}\\
&\le \varepsilon\sum_{j=1}^{J+1} \theta_{j-1}^{(\varepsilon-1) / \varepsilon} \left(\sum_{t \in  \gT_j} z_t\right)^{1 / \varepsilon}\,,\notag
    \end{align}
    where Eq.~\eqref{eq:local_dskfs} is due to that 
    \begin{align*}
         \frac{z_t}{\varepsilon}\left(\sum_{s \in[t] \cap \gT_j} z_s\right)^{\frac{1}{\varepsilon}-1} \leq\left(\sum_{s \in[t] \cap \gT_j} z_s\right)^{\frac{1}{\varepsilon}}-\left(\sum_{s \in[t-1] \cap \gT_j} z_s\right)^{\frac{1}{\varepsilon}}\,,
    \end{align*}
    as $x\mapsto x^{1/\varepsilon}$ is a concave function.
Similarly, we have 
\begin{align*}
    \sum_{t=1}^{T} w_t \left(\sum_{s \le t} \frac{w_s}{h_s} \right)^{-\frac{2}{3}}\le 3\sum_{j=1}^{J+1} \theta_{j-1}^{2 / 3} \left(\sum_{t \in  \gT_j} w_t\right)^{\frac{1}{3}}\,.
\end{align*}
Therefore,
\begin{align*}
    G\left( z_{1: T},w_{1: T}, h_{1: T}\right)\le \varepsilon\sum_{j=1}^{J+1} \theta_{j-1}^{(\varepsilon-1) / \varepsilon} \left(\sum_{t \in  \gT_j} z_t\right)^{\frac{1}{\varepsilon}}+3\sum_{j=1}^{J+1} \theta_{j-1}^{2 / 3} \left(\sum_{t \in  \gT_j} w_t\right)^{\frac{1}{3}}\,.
\end{align*}
Based on the above display, by setting $J=0$ and $\theta_0=h_{\max}$, we have
\begin{align}
    G\left( z_{1: T},w_{1: T}, h_{1: T}\right) 
    \le \varepsilon h_{\max }^{1-\frac{1}{\varepsilon}}\left(\sum_{t \le T} z_t\right)^{\frac{1}{\varepsilon}}+ 3 h_{\max }^{\frac{2}{3}}\left(\sum_{t \le T} w_t\right)^{\frac{1}{3}}\,.\label{eq:G_upper_bound1}
\end{align}
On the other hand, by setting $\theta_j=2^{-j} h_{\max }$, we have
\begin{align}
 G\left( z_{1: T},w_{1: T}, h_{1: T}\right) 
&\le \varepsilon \sum_{j=1}^J\left[\left(\frac{\theta_{j-1}}{\theta_j}\right)^{\varepsilon-1} \sum_{t \in \gT_j} h_t^{\varepsilon-1} z_t\right]^{\frac{1}{\varepsilon}}+\varepsilon \theta_J^{1-\frac{1}{\varepsilon}}\left(\sum_{t \in \gT_{J+1}} z_t\right)^{\frac{1}{\varepsilon}} \notag\\
&\quad+3 \sum_{j=1}^J\left[\left(\frac{\theta_{j-1}}{\theta_j}\right)^{2} \sum_{t \in \gT_j} h_t^{2} w_t\right]^{\frac{1}{3}}+3 \theta_J^{\frac{2}{3}}\left(\sum_{t \in \gT_{J+1}} w_t\right)^{\frac{1}{3}} \notag\\
&=\varepsilon  2^{1-\frac{1}{\varepsilon}} \sum_{j=1}^J\left(\sum_{t \in \gT_j} h_t^{\varepsilon-1} z_t\right)^{\frac{1}{\varepsilon}}+\varepsilon\left[\left(2^{-J}\right)^{\varepsilon-1} h_{\max } \sum_{t \in \gT_{J+1}} z_t\right]^{\frac{1}{\varepsilon}} \notag\\
&\quad+3 \cdot 2^{\frac{2}{3}} \sum_{j=1}^J\left(\sum_{t \in \gT_j} h_t^{2} w_t\right)^{\frac{1}{3}}+3\left[\left(2^{-J}\right)^{2} h_{\max } \sum_{t \in \gT_{J+1}} w_t\right]^{\frac{1}{3}} \notag\\
&\le \varepsilon  2^{1-\frac{1}{\varepsilon}}\left(J^{\varepsilon-1} \sum_{t=1}^{T} h_t^{\varepsilon-1} z_t\right)^{\frac{1}{\varepsilon}}+\varepsilon\left[\left(2^{-J}\right)^{\varepsilon-1} T \cdot h_{\max} z_{\max}\right]^{\frac{1}{\varepsilon}} \notag\\
&\quad+3\cdot  2^{\frac{2}{3}}\left(J^{2} \sum_{t=1}^{T} h_t^{2} w_t\right)^{\frac{1}{3}}+3\left[\left(2^{-J}\right)^{2} T \cdot h_{\max} w_{\max}\right]^{\frac{1}{3}}\,.\label{eq:G_upper_bound2}
\end{align}

Combining Eq.~\eqref{eq:G_upper_bound1} and Eq.~\eqref{eq:G_upper_bound2} concludes the proof.
\end{proof}

As an immediate consequence of \Cref{lem:F2G} and \Cref{lem:nlem3}, we have the following theorem.
\begin{thm}\label{thm:thm3}
    Suppose $h_t \le \hath_t$ for all $t\in[T]$.
    For learning rate $\beta_t$ in Eq.~\eqref{eq:meq9} and any $\delta,\delta^{\prime}\ge 1/T$, we have
    \begin{align*}
        &\quad F\left(\beta_{1: T} ; z_{1: T}, w_{1: T}, h_{1: T}\right)\\
        &=\gO\left(\min\left\{\left[\left(\frac{\log (T \delta)}{\varepsilon-1}\right)^{\varepsilon-1} \sum_{t=1}^T \hath_t^{\varepsilon-1} z_t+\frac{\hath_{\max } z_{\max }}{\delta}\right]^{\frac{1}{\varepsilon}}
        +\left[\frac{\log^2 (T \delta^{\prime})}{4} \sum_{t=1}^T \hath_t^{2} w_t+\frac{\hath_{\max } w_{\max }}{\delta^{\prime}}\right]^{\frac{1}{3}}, \right.\right.\\
        &\quad\left.\left.\hath_{\max }^{1-\frac{1}{\varepsilon}}\left(\sum_{t \le T} z_t\right)^{\frac{1}{\varepsilon}}+\hath_{\max }^{\frac{2}{3}}\left(\sum_{t \le T} w_t\right)^{\frac{1}{3}}\right\}\right)\,.
    \end{align*}
    For learning rate $\beta_t$ in Eq.~\eqref{eq:beta_t} and any $\delta,\delta^{\prime}\ge 1/T$, we have
    \begin{align*}
        &\quad F\left(\beta_{1: T} ; z_{1: T}, w_{1: T}, h_{1: T}\right)\\
        &=\gO\left(\min\left\{\left[\left(\frac{\log (T \delta)}{\varepsilon-1}\right)^{\varepsilon-1} \sum_{t=1}^T \hath_{t+1}^{\varepsilon-1} z_t+\frac{\hath_{\max } z_{\max }}{\delta}\right]^{\frac{1}{\varepsilon}}
        +\left[\frac{\log^2 (T \delta^{\prime})}{4} \sum_{t=1}^T \hath_{t+1}^{2} w_t+\frac{\hath_{\max } w_{\max }}{\delta^{\prime}}\right]^{\frac{1}{3}},\right.\right.\\
        &\quad\left.\left.\hath_{\max }^{1-\frac{1}{\varepsilon}}\left(\sum_{t \le T} z_t\right)^{\frac{1}{\varepsilon}}+\hath_{\max }^{\frac{2}{3}}\left(\sum_{t \le T} w_t\right)^{\frac{1}{3}}\right\}
        +z_{\max } \beta_1^{1-\varepsilon}+w_{\max } \beta_1^{-2}+\beta_1 \hath_1\right)\,.
    \end{align*}
\end{thm}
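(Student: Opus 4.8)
The plan is to chain \Cref{lem:F2G} with \Cref{lem:nlem3}, the only substantive new ingredient being the choice of the dyadic bucketing parameter in terms of $\delta$ and $\delta'$. First I would apply \Cref{lem:F2G}. For the learning rate in Eq.~\eqref{eq:meq9} this gives $F(\beta_{1:T};z_{1:T},w_{1:T},h_{1:T})\le 2\,G(z_{1:T},w_{1:T},\hath_{1:T})$, while for the learning rate in Eq.~\eqref{eq:beta_t} it gives $F(\beta_{1:T};z_{1:T},w_{1:T},h_{1:T})\le 4\,G(z_{1:T},w_{1:T},\hath_{2:T+1})+8(z_{\max}\beta_1^{1-\varepsilon}+w_{\max}\beta_1^{-2})+\beta_1 h_1$; since $h_1\le\hath_1$ by hypothesis, the last term is at most $\beta_1\hath_1$, which reproduces exactly the extra additive terms in the second assertion. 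Thus it remains only to bound $G(z_{1:T},w_{1:T},\tilde h_{1:T})$, where $\tilde h$ denotes $\hath_{1:T}$ in the first case and $\hath_{2:T+1}$ in the second (and $\tilde h_{\max}$ the maximum of the relevant entries).

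For that I would not quote \Cref{lem:nlem3} off the shelf, but re-run its bucketing argument \emph{separately} on the two summands of $G$, namely the stability sum $\sumt z_t(\sum_{s\le t}z_s/\tilde h_s)^{(1-\varepsilon)/\varepsilon}$ and the exploration sum $\sumt w_t(\sum_{s\le t}w_s/\tilde h_s)^{-2/3}$, which are bounded independently in that proof. This independence is what lets me use a different number of buckets for each: for the stability sum I take dyadic thresholds $\theta_j=2^{-j}\tilde h_{\max}$ with $J_z=\lceil\log_2(T\delta)/(\varepsilon-1)\rceil$, so that $(2^{-J_z})^{\varepsilon-1}T\le 1/\delta$, and for the exploration sum I take $J_w=\lceil\log_2(T\delta')/2\rceil$, so that $(2^{-J_w})^{2}T\le 1/\delta'$. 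Carrying out exactly the estimates of \Cref{lem:nlem3} — Hölder's inequality to pull the bucket count out of the $j$-sum, $\sum_{t}z_t\le z_{\max}T$ (resp. $\le w_{\max}T$) for the last bucket, and concavity of $x\mapsto x^{1/\varepsilon}$, $x\mapsto x^{1/3}$ together with sub-additivity of $x\mapsto x^{\varepsilon-1}$ to merge the main and last-bucket contributions into a single bracket — then yields
\[
G\left(z_{1:T},w_{1:T},\tilde h_{1:T}\right)=\gO\left(\left[\Bigl(\tfrac{\log(T\delta)}{\varepsilon-1}\Bigr)^{\varepsilon-1}\sumt\tilde h_t^{\varepsilon-1}z_t+\tfrac{\tilde h_{\max}z_{\max}}{\delta}\right]^{1/\varepsilon}+\left[\tfrac{\log^2(T\delta')}{4}\sumt\tilde h_t^{2}w_t+\tfrac{\tilde h_{\max}w_{\max}}{\delta'}\right]^{1/3}\right).
\]
Taking instead $J=0$ in \Cref{lem:nlem3} gives the second, always-valid estimate $G=\gO(\tilde h_{\max}^{1-1/\varepsilon}(\sumt z_t)^{1/\varepsilon}+\tilde h_{\max}^{2/3}(\sumt w_t)^{1/3})$; since $G$ lies below both, it lies below their minimum, which is precisely the $\min\{\cdot,\cdot\}$ in the theorem.

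Finally, substituting $\tilde h_t=\hath_t$ (for Eq.~\eqref{eq:meq9}) or $\tilde h_t=\hath_{t+1}$ (for Eq.~\eqref{eq:beta_t}) into the $G$-bound and then into the corresponding line of \Cref{lem:F2G} produces the two claimed inequalities, with the $\gO(z_{\max}\beta_1^{1-\varepsilon}+w_{\max}\beta_1^{-2}+\beta_1\hath_1)$ overhead appearing only in the Eq.~\eqref{eq:beta_t} case. The crux — the one step beyond a mechanical composition of the two lemmas — is exactly this separate bucketing of the stability and exploration sums with bucket counts tied respectively to $\delta$ and to $\delta'$: a single common $J$ would force a $\max\{\delta,\delta'\}$-dependent logarithmic factor onto both $\sumt\hath_t^{\varepsilon-1}z_t$ and $\sumt\hath_t^{2}w_t$, which is strictly weaker than claimed. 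Everything else is routine and mirrors \Cref{lem:nlem3}: one checks $J_z,J_w\ge 0$ using $\delta,\delta'\ge 1/T$ (so $T\delta,T\delta'\ge 1$), one notes that in the degenerate regime $T\delta\to 1$ the logarithmic factor collapses while the $\hath_{\max}z_{\max}/\delta$ tail term grows and dominates, and one tracks the $\varepsilon\in(1,2]$-dependent universal constants exactly as there.
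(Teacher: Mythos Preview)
Your proposal is correct and follows the same approach the paper intends: the paper simply states that the theorem is ``an immediate consequence of \Cref{lem:F2G} and \Cref{lem:nlem3}'' without spelling out details, and your chaining of the two lemmas --- together with the key observation that the bucketing argument in the proof of \Cref{lem:nlem3} treats the $z$-sum and the $w$-sum independently, so one may take separate depths $J_z\asymp\log(T\delta)/(\varepsilon-1)$ and $J_w\asymp\log(T\delta')/2$ --- is exactly what is needed to obtain the decoupled $\delta,\delta'$ dependence in the statement.
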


We are now ready to introduce the proof of \Cref{prop:prop1}.
\begin{proof}[Proof of \Cref{prop:prop1}]
To start with, by Eq.~\eqref{eq:regret_F} and Eq.~\eqref{eq:stable_update}, we have
\begin{align}
    \Reg 
    &\lesssim  
\E\left[\sumt \left( \left(\beta_t - \beta_{t-1}\right) h_t + \beta_t^{1 - \varepsilon} z_t + \beta_t^{-2} w_t \right)\right]+\bar{\beta} \bar{h} \notag\\
    &\lesssim  
\E\left[\sumt \left( \left(\beta_t - \beta_{t-1}\right) h_{t-1} + \beta_t^{1 - \varepsilon} z_t + \beta_t^{-2} w_t \right)\right]+\bar{\beta} \bar{h} \notag\\
    &=
    \mathbb{E}\left[F\left(\beta_{1: T} ;z_{1: T}, w_{1: T}, h_{0: T-1}\right)\right]+\bar{\beta} \bar{h}\,,\label{eq:prop1_step1}
\end{align}
where the last step follows from the definition of $F$ in Eq.~\eqref{eq:def_F}.

\paragraph{Regret in Adversarial Regime}
Applying Theorem \ref{thm:thm3} in Eq.~\eqref{eq:prop1_step1}, one can see that
\begin{align*}
    &\quad\Reg\\
&= \gO\left(\mathbb{E}\left[
\hath_{\max }^{1-\frac{1}{\varepsilon}}\left(\sum_{t \le T} z_t\right)^{\frac{1}{\varepsilon}}+\hath_{\max }^{\frac{2}{3}}\left(\sum_{t \le T} w_t\right)^{\frac{1}{3}}
\right]+z_{\max } \beta_1^{1-\varepsilon}+w_{\max } \beta_1^{-2}+\beta_1 \hath_1+\bar{\beta} \bar{h}\right) \\
&= \gO\left(h_1^{1-\frac{1}{\varepsilon}}\left(z_{\max} T\right)^{\frac{1}{\varepsilon}}+h_1^{\frac{2}{3}}\left(w_{\max} T\right)^{\frac{1}{3}}+\kappa\right)\,.
\end{align*}

\paragraph{Regret in Adversarial Regime with Self-bounding Constraint}
Let $Q_T\coloneqq \mathbb{E}\left[\sum_{t=1}^T \langle \Delta, p_t \rangle \right]$. Also by Theorem \ref{thm:thm3}, for any $\delta, \delta^{\prime}\ge 1/T$, we have
\begin{align}
    &\quad\Reg\notag\\
    &=\mathbb{E}\left[
    \left(\left(\frac{\log (T \delta)}{\varepsilon-1}\right)^{\varepsilon-1} \sum_{t=1}^T {h}_t^{\varepsilon-1} z_t+\frac{{h}_{\max } z_{\max }}{\delta}\right)^{\frac{1}{\varepsilon}}
        +\left(\frac{\log^2 (T \delta^{\prime})}{4} \sum_{t=1}^T {h}_t^{2} w_t+\frac{{h}_{\max } w_{\max }}{\delta^{\prime}}\right)^{\frac{1}{3}}
    \right]\notag\\
    &\quad+z_{\max } \beta_1^{1-\varepsilon}+w_{\max } \beta_1^{-2}+\beta_1 h_0 +\bar{\beta} \bar{h}\notag\\
    &\le \left(\left(\frac{\log (T \delta)}{\varepsilon-1}\right)^{\varepsilon-1}\mathbb{E}\left[ \sum_{t=1}^T {h}_t^{\varepsilon-1} z_t\right]+\frac{{h}_{\max } z_{\max }}{\delta}\right)^{\frac{1}{\varepsilon}} +\left(\frac{\log^2 (T \delta^{\prime})}{4}\mathbb{E}\left[ \sum_{t=1}^T {h}_t^{2} w_t\right]+\frac{{h}_{\max } w_{\max }}{\delta^{\prime}}\right)^{\frac{1}{3}} +\kappa \label{eq:local_dj23kjf}\\
    &\le \left(\left(\frac{\log (T \delta)}{\varepsilon-1}\right)^{\varepsilon-1}
    \zeta(\Delta)Q_T
    +\frac{{h}_{\max } z_{\max }}{\delta}\right)^{\frac{1}{\varepsilon}}
    +\left(\frac{\log^2 (T \delta^{\prime})}{4}
    \omega(\Delta)Q_T
    +\frac{{h}_{\max } w_{\max }}{\delta^{\prime}}\right)^{\frac{1}{3}}+\kappa \label{eq:local_dj2sdjf}\\
    &\lesssim \left(\frac{\log (T \delta)}{\varepsilon-1}\right)^{\frac{\varepsilon-1}{\varepsilon}}
    \zeta(\Delta)^{\frac{1}{\varepsilon}}Q_T^{\frac{1}{\varepsilon}}
    +\left(\frac{{h}_{\max } z_{\max }}{\delta}\right)^{\frac{1}{\varepsilon}}
    +\frac{\log^{\frac{2}{3}} (T \delta^{\prime})}{2^{\frac{2}{3}}}
    \omega(\Delta)^{\frac{1}{3}}Q_T^{\frac{1}{3}}
    +\left(\frac{{h}_{\max } w_{\max }}{\delta^{\prime}}\right)^{\frac{1}{3}}+\kappa\,,\notag
\end{align}
where Eq.~\eqref{eq:local_dj23kjf} follows from Jensen's inequality and the fact that $\varepsilon\in (1,2]$, and Eq.~\eqref{eq:local_dj2sdjf} is by Eq.~\eqref{eq:round_wise_0}.

Using the definition of $(\Delta, C, T)$-self-bounding constraint in \Cref{def:def1}, for any $\lambda \in (0,1]$, we further have
\begin{align}
    \Reg
    &= (1+\lambda)\Reg-\lambda \Reg\notag\\
    &\lesssim (1+\lambda)\left[\left(\frac{\log (T \delta)}{\varepsilon-1}\right)^{\frac{\varepsilon-1}{\varepsilon}}
    \zeta(\Delta)^{\frac{1}{\varepsilon}}Q_T^{\frac{1}{\varepsilon}}
    +\left(\frac{{h}_{\max } z_{\max }}{\delta}\right)^{\frac{1}{\varepsilon}}
    +\log^{\frac{2}{3}} (T \delta^{\prime})
    \omega(\Delta)^{\frac{1}{3}}Q_T^{\frac{1}{3}}\right.\notag\\
    &\quad\left.+\left(\frac{{h}_{\max } w_{\max }}{\delta^{\prime}}\right)^{\frac{1}{3}}+\kappa \right]-\lambda(Q_T- C)\notag\\
    &\lesssim \left[(1+\lambda)\left(\frac{\log (T \delta)}{\varepsilon-1}\right)^{\frac{\varepsilon-1}{\varepsilon}}
    \zeta(\Delta)^{\frac{1}{\varepsilon}} \right]^{\frac{\varepsilon}{\varepsilon-1}}\left(\varepsilon\cdot \frac{\lambda}{2}\right)^{\frac{1}{1-\varepsilon}}+\left(\frac{{h}_{\max } z_{\max}}{\delta}\right)^{\frac{1}{\varepsilon}} \notag\\
    &\quad+\left[(1+\lambda)\log^{\frac{2}{3}} (T \delta^{\prime})
    \omega(\Delta)^{\frac{1}{3}} \right]^{\frac{3}{2}}\left(3\cdot \frac{\lambda}{2}\right)^{-\frac{1}{2}}+\left(\frac{{h}_{\max } w_{\max }}{\delta^{\prime}}\right)^{\frac{1}{3}}+\kappa+\lambda C \label{eq:local_dk834}\\
    &= \zeta(\Delta)^{\frac{1}{\varepsilon-1}}\left(\frac{\varepsilon}{2}\right)^{\frac{1}{1-\varepsilon}} \frac{\log (T \delta)}{\varepsilon-1}\left(\frac{(1+\lambda)^{\varepsilon}}{\lambda}\right)^{\frac{1}{\varepsilon-1}}
    +\left(\frac{{h}_{\max } z_{\max }}{\delta}\right)^{\frac{1}{\varepsilon}} \notag\\
    &\quad+\omega(\Delta)^{\frac{1}{2}}\left(\frac{3}{2}\right)^{-\frac{1}{2}} \log (T \delta^{\prime})\left(\frac{(1+\lambda)^{3}}{\lambda}\right)^{\frac{1}{2}}+\left(\frac{{h}_{\max } w_{\max }}{\delta^{\prime}}\right)^{\frac{1}{3}}+\kappa+\lambda C\notag\\
    &\le \underbrace{\zeta(\Delta)^{\frac{1}{\varepsilon-1}}\left(\frac{\varepsilon}{2}\right)^{\frac{1}{1-\varepsilon}} \frac{\log (T \delta)}{\varepsilon-1}}_{\eqqcolon \iota}
    \left(\frac{1}{\lambda}\right)^{\frac{1}{\varepsilon-1}}2^{\frac{\varepsilon}{\varepsilon-1}}
    +\left(\frac{{h}_{\max } z_{\max }}{\delta}\right)^{\frac{1}{\varepsilon}}\notag\\
    &\quad+\underbrace{\omega(\Delta)^{\frac{1}{2}}\left(\frac{3}{2}\right)^{-\frac{1}{2}} \log (T \delta^{\prime})}_{\eqqcolon \iota^{\prime}}
    \left(\frac{1}{\lambda}\right)^{\frac{1}{2}}2^{\frac{3}{2}}+\left(\frac{{h}_{\max } w_{\max }}{\delta^{\prime}}\right)^{\frac{1}{3}}
    +\kappa+\lambda C \label{eq:local_dk8345}\\
    &=\iota\left(\frac{1}{\lambda}\right)^{\frac{1}{\varepsilon-1}}2^{\frac{\varepsilon}{\varepsilon-1}}
    +\left(\frac{{h}_{\max } z_{\max }}{\delta}\right)^{\frac{1}{\varepsilon}}+\iota^{\prime}
    \left(\frac{1}{\lambda}\right)^{\frac{1}{2}}2^{\frac{3}{2}}
    +\left(\frac{{h}_{\max } w_{\max }}{\delta^{\prime}}\right)^{\frac{1}{3}}
    +\kappa+\lambda C \notag\,,
\end{align}
where 
 Eq.~\eqref{eq:local_dk834} follows from $f(x)=a x^{\frac{1}{c}}-b x=\gO(a^{\frac{c}{c-1}}(cb)^{\frac{1}{1-c}})$ for any $a,b,x>0$ and $c>1$ as $f(x)$ is a concave function and Eq.~\eqref{eq:local_dk8345} uses the condition $\lambda\in (0,1]$.

 To proceed with the above, we consider two cases. If $C=0$, setting $\lambda=1$ guarantees that 
     \begin{align*}
         \Reg
         \lesssim 2^{\frac{\varepsilon}{\varepsilon-1}}\iota+\iota^{\prime}+\left(\frac{{h}_{\max } z_{\max }}{\delta}\right)^{\frac{1}{\varepsilon}}+\left(\frac{{h}_{\max } w_{\max }}{\delta^{\prime}}\right)^{\frac{1}{3}}+\kappa\,.
     \end{align*}
On the other hand, if $C>0$, setting $\lambda=2C^{\frac{1-\varepsilon}{\varepsilon}} (\iota+\iota^{\prime})^{\frac{\varepsilon-1}{\varepsilon}}$ guarantees that 
     \begin{align*}
         \Reg\lesssim C^{\frac{1}{\varepsilon}}(\iota+\iota^{\prime})^{\frac{\varepsilon-1}{\varepsilon}}
         +\left(\frac{{h}_{\max } z_{\max }}{\delta}\right)^{\frac{1}{\varepsilon}}+\left(\frac{{h}_{\max } w_{\max }}{\delta^{\prime}}\right)^{\frac{1}{3}}+\kappa\,.
     \end{align*}
 Hence, in either case, it holds that  
     \begin{align}
         \Reg&\lesssim 2^{\frac{\varepsilon}{\varepsilon-1}}\iota+\iota^{\prime}+C^{\frac{1}{\varepsilon}}(\iota+\iota^{\prime})^{\frac{\varepsilon-1}{\varepsilon}}
         +\left(\frac{{h}_{\max } z_{\max }}{\delta}\right)^{\frac{1}{\varepsilon}}+\left(\frac{{h}_{\max } w_{\max }}{\delta^{\prime}}\right)^{\frac{1}{3}}+\kappa\notag\\
         &\le 2^{\frac{\varepsilon}{\varepsilon-1}}\iota+\iota^{\prime}+C^{\frac{1}{\varepsilon}}(\iota+\iota^{\prime})^{\frac{\varepsilon-1}{\varepsilon}}+\kappa\,, \label{eq:local_34kd}
     \end{align}
     where Eq.~\eqref{eq:local_34kd} is by choosing $\delta=\frac{z_{\max } h_1}{\zeta(\Delta)^{\varepsilon /(\varepsilon-1)}+C}$ and $\delta^{\prime}=\frac{w_{\max } h_1}{\omega(\Delta)^{3 /2}+C}$, and in this case, 
     \begin{align*}
         \iota&=\zeta(\Delta)^{\frac{1}{\varepsilon-1}}\left(\frac{\varepsilon}{2}\right)^{\frac{1}{1-\varepsilon}} \log _{+}\left(\frac{z_{\max } h_1 T}{\zeta(\Delta)^{\varepsilon/(\varepsilon-1)}+C}\right) /(\varepsilon-1)\,,\\
         \iota^{\prime}&=\omega(\Delta)^{\frac{1}{2}}\left(\frac{3}{2}\right)^{-\frac{1}{2}} \log _{+}\left(\frac{w_{\max} h_1 T}{\omega(\Delta)^{3/2}+C}\right)\,.
     \end{align*}
The proof is thus concluded.
\end{proof}

\subsection{Proof of Theorem~\ref{thm:bobw_lb}}\label{app:sec:bobw_htlb}
In this section, we formally prove \Cref{thm:bobw_lb}, via verifying that the required conditions in \Cref{prop:prop1} are indeed satisfied.
\begin{proof}[Proof of \Cref{thm:bobw_lb}]
\paragraph{Verifying Condition in Eq.~\eqref{eq:regret_F}}
First, notice that $\tildeell_t$ defined in Eq.~\eqref{eq:vr_linear_loss_est} satisfies
\begin{align}\label{eq:unbiased_diff}
    \left\langle q_t-p^*, \ell_t \right\rangle
    =
\suma q_{t,a}\left(\ell_{t,a}-\ell_{t,a^*}\right)
=
\suma q_{t,a}\E_{t-1}\left[\tildeell_{t,a}-\tildeell_{t,a^*}\right]
=
\E_{t-1}\left[\left\langle q_t-p^*, \tildeell_t \right\rangle\right]\,,
\end{align}
where the second equality is by \Cref{lem:unbiased_diff}.

With this in hand, similar to the proof of \Cref{thm:thm1}, we decompose the regret as
\begin{align}
    &\quad \Reg \notag\\
    &=  \mathbb{E}\left[\sum_{t=1}^T \ell_{t, a_t}-\ell_{t, a^*}\right] =  \mathbb{E}\left[\sum_{t=1}^T \left\langle p_t-p^*, \ell_t\right\rangle\right] \notag\\
    &=  \mathbb{E}\left[\sumt\left\langle q_t-p^*, \ell_t\right\rangle\right]+\mathbb{E}\left[\sumt \gamma_t\left\langle p_0-q_t, \ell_t\right\rangle\right]\notag\\
    &=  \mathbb{E}\left[\sumt\left\langle q_t-p^*, \tildeell_t\right\rangle\right]+\mathbb{E}\left[\sumt \gamma_t\left\langle p_0-q_t, \ell_t\right\rangle\right] \label{eq:using_unbiased_diff}\\
    &=  \mathbb{E}\left[\sumt\left\langle q_t-p^*, \hatell_t-b_t\right\rangle\right]
    +\mathbb{E}\left[\sumt\left\langle q_t-p^*, \tildeell_t-\hatell_t+b_t\right\rangle\right]+\mathbb{E}\left[\sumt \gamma_t\left\langle p_0-q_t, \ell_t\right\rangle\right] \notag\\
    &\le \mathbb{E}\left[\sumt \left(\left\langle q_t-q_{t+1}, \hat{\ell}_t-b_t\right\rangle-\beta_t D_{\psi}\left(q_{t+1}, q_t\right)+\left(\beta_t-\beta_{t-1}\right) h_t\right)+\bar{\beta} \bar{h} \right] \notag\\
    &\quad+\mathbb{E}\left[\sumt\left\langle q_t-p^*, \tildeell_t-\hatell_t+b_t\right\rangle\right]+2\E\left[\sumt \sigma^{\frac{1}{\varepsilon}}\gamma_t \right] \label{eq:dfjlksll_1}\\
    &\le \gO\left(\mathbb{E}\left[\sumt\left(\left(\beta_t-\beta_{t-1}\right) h_{t}+z_t \beta_t^{1-\varepsilon}+w_t \beta_t^{-2}\right)+\bar{\beta} \bar{h}\right] \right) \label{eq:dfjlksll_2}\\
    &= \gO\left(\mathbb{E}\left[F\left(\beta_{1: T} ;z_{1: T}, w_{1: T}, h_{1: T}\right)\right]+\bar{\beta} \bar{h}\right)\,.\notag
\end{align}
Here, Eq.~\eqref{eq:using_unbiased_diff} follows from Eq.~\eqref{eq:unbiased_diff}.  
In Eq.~\eqref{eq:dfjlksll_1}, we apply Eq.~\eqref{eq:lower_moment} together with the standard regret analysis of FTRL (see, \textit{e.g.}, Exercise~28.12 of \citet{lattimore2020bandit}), and set 
$h_t = -\psi(q_t)$ and $\bar{h} = -\bar{\psi}(q_1) \le \tfrac{1}{\bar{\alpha}} K^{1-\bar{\alpha}}$.  
And Eq.~\eqref{eq:dfjlksll_2} follows from \Cref{lem:stab_zt_wt}.

\paragraph{Verifying Condition in Eq.~\eqref{eq:stable_update}}
To begin with, note that $h_t$ can be lower bounded as follows
\begin{align}
    h_t=-\psi\left(q_t\right) \geq \frac{q_{t *}^\alpha}{\alpha}\left(1-2^{\alpha-1}\right) \geq \frac{(1-\alpha) q_{t *}^\alpha}{4 \alpha}\,,\notag
\end{align}
which, along with the update rule of $\beta_t$ in Eq.~\eqref{eq:beta_t}, shows that
\begin{align}\label{eq:beta_diff}
    &\quad\beta_{t+1}-\beta_t=\frac{1}{\hath_{t+1}}\left(\beta_{t}^{1-\varepsilon}z_{t}+\beta_{t}^{-2}w_{t} \right)=\frac{1}{{h}_{t}}\left(\beta_{t}^{1-\varepsilon}z_{t}+\beta_{t}^{-2}w_{t} \right)\notag\\
    &\le \frac{4 \alpha}{(1-\alpha) q_{t *}^\alpha}\left((1-\alpha)^{1-\varepsilon}\sigma q_{t*}^{(\varepsilon-1)(1-\alpha)}d^{\frac{\varepsilon}{2}}(1-\|p_t\|_{\infty})^{2-\varepsilon}\cdot\beta_{t}^{1-\varepsilon}+ (1-\alpha)^{-2}\sigma^{\frac{3}{\varepsilon}} d q_{t*}^{2(1-\alpha)}\cdot\beta_{t}^{-2} \right) \notag\\
    &\le 8 \alpha(1-\alpha)^{-3} q_{t*}^{(\varepsilon-1)(1-\alpha)-\alpha}d^{\varepsilon}\cdot\beta_{1}^{1-\varepsilon}
    \le \frac{\bar{\beta} \alpha q_{t *}^{\bar{\alpha}-\alpha}}{8}\,,
\end{align}
where the last inequality follows from that $\bar{\alpha}=(\varepsilon-1)(1-\alpha)$ and $\bar{\beta}\ge 64 (1-\alpha)^{-3} d^{\varepsilon}\beta_{1}^{1-\varepsilon}\cdot\max\{\sigma^{\frac{3}{\varepsilon}},\sigma\}$.
Therefore, combining Eq.~\eqref{eq:loss_magnitude} and Eq.~\eqref{eq:beta_diff}, one can see that Eq.~\eqref{eq:stable_update} holds by applying \Cref{lem:Shinji24_lem11} and \Cref{lem:Shinji24_lem12}, where we treat $\ell=\hatell_t$, $\beta=\beta_t$, and $\beta^{\prime}=\beta_{t+1}$.

Moreover, we have 
\begin{align}
    h_t=\frac{1}{\alpha}\left(\sum_{a\in\gA} q_{t,a}^\alpha-1\right) 
    \leq \frac{1}{\alpha}\left(\sum_{a\in\gA} q_{t,a}^\alpha-q_{t, a^*}^\alpha\right)
    =\frac{1}{\alpha} \sum_{a \in\gA \backslash\left\{a^*\right\}} q_{t,a}^\alpha 
    \leq \frac{(K-1)^{1-\alpha}}{\alpha}\,,\notag
\end{align}
which, along with 
the definitions of $z_t$ and $w_t$ in Eq.~\eqref{eq:bobw_htlb:zwt} that
\begin{align*}
    z_t = (1 - \alpha)^{1 - \varepsilon} \sigma q_{t*}^{(\varepsilon - 1)(1 - \alpha)} d^{\frac{\varepsilon}{2}} (1 - \|p_t\|_{\infty})^{2 - \varepsilon}\quad\text{and} \quad
    w_t = \sigma^{\frac{3}{\varepsilon}} (1 - \alpha)^{-2}  d q_{t*}^{2(1 - \alpha)}\,,
\end{align*}
leads to 
\begin{align}
    h_1^{1-\frac{1}{\varepsilon}}z_{\max }^{\frac{1}{\varepsilon}}&\le \left(K^{1-\alpha}\alpha^{-1}\right)^{1-\frac{1}{\varepsilon}} \left((1-\alpha)^{1-\varepsilon} d^{\varepsilon} \right)^{\frac{1}{\varepsilon}}
    =(\alpha(1-\alpha))^{\frac{1}{\varepsilon}-1}\sigma^{\frac{1}{\varepsilon}} d^{\frac{1}{2}} K^{(1-\alpha)(1-\frac{1}{\varepsilon})} \,,\notag\\
    h_1^{\frac{2}{3}} (w_{\max})^{\frac{1}{3}}&\le\left(K^{1-\alpha}\alpha^{-1}\right)^{\frac{2}{3}}\left((1-\alpha)^{-2}d \right)^{\frac{1}{3}}
    =(\alpha(1-\alpha))^{-\frac{2}{3}}\sigma^{\frac{1}{\varepsilon}}d^{\frac{1}{3}} K^{\frac{2}{3}(1-\alpha)}\,.\notag
\end{align}

\paragraph{Verifying Condition in Eq.~\eqref{eq:round_wise_0}}
We upper bound $h_t$ using $\left\langle \Delta,q_t\right\rangle$ via
\begin{align}
h_t \leq \frac{1}{\alpha}\left(\sum_{a \in\gA \backslash\left\{a^*\right\}} q_{t,a}^\alpha\right)
&=\frac{1}{\alpha}\left(\sum_{a \in\gA \backslash\left\{a^*\right\}} \frac{1}{\Delta_a^\alpha}\left(\Delta_a q_{t,a}\right)^\alpha\right) \\
&\leq \frac{1}{\alpha}\left(\sum_{a \in\gA \backslash\left\{a^*\right\}} \frac{1}{\Delta_a^{\frac{\alpha}{1-\alpha}}}\right)^{1-\alpha}\left\langle \Delta,q_t\right\rangle^\alpha\,.
\end{align}
Therefore, we have
\begin{align}
    h_t^{\varepsilon-1} z_t &\le \alpha^{1-\varepsilon}\sigma \left(\sum_{a \in\gA \backslash\left\{a^*\right\}} \Delta_a^{\frac{\alpha}{\alpha-1}}\right)^{(1-\alpha)(\varepsilon-1)}\left\langle \Delta,q_t\right\rangle^{\alpha(\varepsilon-1)} \notag\\
    &\quad\cdot (1-\alpha)^{1-\varepsilon} q_{t*}^{(\varepsilon-1)(1-\alpha)}d^{\frac{\varepsilon}{2}}(1-\|p_t\|_{\infty})^{2-\varepsilon} \notag\\
    &= \alpha^{1-\varepsilon}\sigma\left(\sum_{a \in\gA \backslash\left\{a^*\right\}} \Delta_a^{\frac{\alpha}{\alpha-1}}\right)^{(1-\alpha)(\varepsilon-1)}\left\langle \Delta,q_t\right\rangle^{(\alpha-1)(\varepsilon-1)+(\varepsilon-1)} \notag\\
    &\quad\cdot (1-\alpha)^{1-\varepsilon} q_{t*}^{(\varepsilon-1)(1-\alpha)}d^{\frac{\varepsilon}{2}}(1-\|p_t\|_{\infty})^{2-\varepsilon} \notag\\
    &\le \alpha^{1-\varepsilon}\sigma\left(\sum_{a \in\gA \backslash\left\{a^*\right\}} \Delta_a^{\frac{\alpha}{\alpha-1}}\right)^{(1-\alpha)(\varepsilon-1)}\left(\Delta_{\min}(1-q_{t,a^*})\right)^{(\alpha-1)(\varepsilon-1)}
    \left\langle \Delta,q_t\right\rangle^{(\varepsilon-1)} \notag\\
    &\quad\cdot (1-\alpha)^{1-\varepsilon}\left(1- q_{t,a^*}\right)^{(\varepsilon-1)(1-\alpha)}d^{\frac{\varepsilon}{2}}(1-\|p_t\|_{\infty})^{2-\varepsilon} \notag\\
    &\le \alpha^{1-\varepsilon}\sigma\left(\sum_{a \in\gA \backslash\left\{a^*\right\}} \Delta_a^{\frac{\alpha}{\alpha-1}}\right)^{(1-\alpha)(\varepsilon-1)}\Delta_{\min}^{(\alpha-1)(\varepsilon-1)}
    \left(2\left\langle \Delta,p_t\right\rangle\right)^{(\varepsilon-1)} \notag\\
    &\quad\cdot (1-\alpha)^{1-\varepsilon}d^{\frac{\varepsilon}{2}}(1-p_{t,a^*})^{2-\varepsilon} \notag\\
    &\le \alpha^{1-\varepsilon}\sigma\left(\sum_{a \in\gA \backslash\left\{a^*\right\}} \Delta_a^{\frac{\alpha}{\alpha-1}}\right)^{(1-\alpha)(\varepsilon-1)}\Delta_{\min}^{(\alpha-1)(\varepsilon-1)}
    \left(2\left\langle \Delta,p_t\right\rangle\right)^{(\varepsilon-1)} \notag\\
    &\quad\cdot (1-\alpha)^{1-\varepsilon}d^{\frac{\varepsilon}{2}}\left\langle \Delta, p_t\right\rangle^{2-\varepsilon}\Delta_{\min}^{\varepsilon-2} \notag\\
    &= 2^{\varepsilon-1}\left((1-\alpha)\alpha\right)^{1-\varepsilon}\sigma\left(\sum_{a \in\gA \backslash\left\{a^*\right\}} \Delta_a^{\frac{\alpha}{\alpha-1}}\right)^{(1-\alpha)(\varepsilon-1)}\Delta_{\min}^{\alpha(\varepsilon-1)-1}d^{\frac{\varepsilon}{2}}\cdot
    \left\langle \Delta,p_t\right\rangle\,. \notag
\end{align}
On the other hand, since $\alpha\ge \frac{1}{2}$ and $\Delta_{\max}\le 1$, we have
\begin{align}
    {h}_t^{2} w_t
    &\le \frac{1}{\alpha^2}\sigma^{\frac{3}{\varepsilon}}\left(\sum_{a \in\gA \backslash\left\{a^*\right\}} \frac{1}{\Delta_a^{\frac{\alpha}{1-\alpha}}}\right)^{2(1-\alpha)}\left\langle \Delta,q_t\right\rangle^{2\alpha}\cdot (1-\alpha)^{-2}d q_{t*}^{2(1-\alpha)} \notag\\
    &\le \left(\alpha(1-\alpha)\right)^{-2}\sigma^{\frac{3}{\varepsilon}}\left(\sum_{a \in\gA \backslash\left\{a^*\right\}} \frac{1}{\Delta_a^{\frac{\alpha}{1-\alpha}}}\right)^{2(1-\alpha)}d \cdot \left\langle \Delta,q_t\right\rangle \notag\\
    &\lesssim \left(\alpha(1-\alpha)\right)^{-2}\sigma^{\frac{3}{\varepsilon}}\left(\sum_{a \in\gA \backslash\left\{a^*\right\}} \frac{1}{\Delta_a^{\frac{\alpha}{1-\alpha}}}\right)^{2(1-\alpha)}d \cdot \left\langle \Delta,p_t\right\rangle\,.\notag
\end{align}
Therefore, one can see that Eq.~\eqref{eq:round_wise_0} in \Cref{prop:prop1} holds with
\begin{align*}
    \zeta(\Delta)=2^{\varepsilon-1}\left((1-\alpha)\alpha\right)^{1-\varepsilon}\sigma\left(\sum_{a \in\gA \backslash\left\{a^*\right\}} \Delta_a^{\frac{\alpha}{\alpha-1}}\right)^{(1-\alpha)(\varepsilon-1)}\Delta_{\min}^{\alpha(\varepsilon-1)-1}d^{\frac{\varepsilon}{2}}\,,
\end{align*}
and
\begin{align*}
    \omega(\Delta)=\left(\alpha(1-\alpha)\right)^{-2}\sigma^{\frac{3}{\varepsilon}}\left(\sum_{a \in\gA \backslash\left\{a^*\right\}} \Delta_a^{\frac{\alpha}{\alpha-1}}\right)^{2(1-\alpha)}d\,.
\end{align*}
\end{proof}

\begin{lem}\label{lem:stab_zt_wt}
For $z_t$ and $w_t$ defined in Eq.~\eqref{eq:bobw_htlb:zwt}, it holds that
\begin{align}\label{eq:stability_regret}
    &\quad\mathbb{E}_{t-1}\left[\sigma^{\frac{1}{\varepsilon}}\gamma_t+\left\langle q_t-q_{t+1}, \hatell_t-b_t\right\rangle-\beta_t D_{\psi}\left(q_{t+1}, q_t\right)+\left\langle q_t-p^*, \tildeell_t-\hatell_t+b_t\right\rangle\right] \notag\\
    &=\gO\left(z_t \beta_t^{1-\varepsilon}+w_t \beta_t^{-2}\right)\,.
\end{align}
\end{lem}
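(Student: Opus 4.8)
The plan is to bound the three groups of terms on the left‑hand side of Eq.~\eqref{eq:stability_regret} separately: the exploration regret $\sigma^{1/\varepsilon}\gamma_t$, the FTRL stability term $\langle q_t-q_{t+1},\hatell_t-b_t\rangle-\beta_t D_\psi(q_{t+1},q_t)$, and the bias term $\langle q_t-p^*,\tildeell_t-\hatell_t+b_t\rangle$. The exploration regret is immediate: plugging $\gamma_t$ from Eq.~\eqref{eq:bobw_htlb:gamma_s} and $w_t$ from Eq.~\eqref{eq:bobw_htlb:zwt} into $\sigma^{1/\varepsilon}\gamma_t$ gives $\sigma^{1/\varepsilon}\gamma_t=256\,w_t\beta_t^{-2}=\gO(w_t\beta_t^{-2})$.

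First I would establish the magnitude bound $b_{t,a}\le s_{t,a}$ for every $a$. By Cauchy--Schwarz/Jensen and $\sum_b p_{t,b}\bar{\phi}_b\bar{\phi}_b^\top=V_t$ one gets $b_{t,a}\le\sigma s_{t,a}^{1-\varepsilon}(\bar{\phi}_a^\top V_t^{-1}\bar{\phi}_a)^{\varepsilon/2}$; since $p_t=(1-\gamma_t)q_t+\gamma_t p_0$, a mixture‑variance inequality gives $V_t\succeq\gamma_t V(p_0)$, and \Cref{lem:barp0_exp} (plus a triangle‑inequality step to pass from $\mu(p_0)$ to $\mu_t$) yields $\bar{\phi}_a^\top V_t^{-1}\bar{\phi}_a=\gO(d/\gamma_t)$; substituting $\gamma_t$ and $s_{t,a}=(1-\alpha)\beta_t q_{t*}^{\alpha-1}/8$, the numerical constant $256$ in $\gamma_t$ is exactly what is needed for $b_{t,a}\le s_{t,a}$. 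Hence $|\hatell_{t,a}-b_{t,a}|\le 2s_{t,a}\le\tfrac14(1-\alpha)\beta_t\tilde{q}_{t,a}^{\alpha-1}$, using $q_{t*}\ge\tilde{q}_{t,a}$; this is the hypothesis of the local‑norm stability bound for the $\alpha$‑Tsallis regularizer (\Cref{lem:stab_condition_m1}), which gives
\[
\E_{t-1}\!\left[\langle q_t-q_{t+1},\hatell_t-b_t\rangle-\beta_t D_\psi(q_{t+1},q_t)\right]\lesssim\frac{\beta_t^{-1}}{1-\alpha}\,\E_{t-1}\!\left[\sum_{a}\tilde{q}_{t,a}^{2-\alpha}\bigl(\hatell_{t,a}^2+b_{t,a}^2\bigr)\right].
\]

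For the bias term I would use that $b_{t,a}$ is tailored to dominate the clipping bias: since $\tildeell_{t,a}=\bar{\phi}_a^\top V_t^{-1}\bar{\phi}_{a_t}\ell_{t,a_t}$, \Cref{def:HT} gives $\E_{t-1}[|\tildeell_{t,a}|^\varepsilon]=\sum_b p_{t,b}|\bar{\phi}_a^\top V_t^{-1}\bar{\phi}_b|^\varepsilon\,\E[|\ell_{t,b}|^\varepsilon]\le\sigma\sum_b p_{t,b}|\bar{\phi}_a^\top V_t^{-1}\bar{\phi}_b|^\varepsilon$, so $\bigl|\E_{t-1}[\tildeell_{t,a}-\hatell_{t,a}]\bigr|\le\E_{t-1}\bigl[|\tildeell_{t,a}|\inlineindicator{|\tildeell_{t,a}|>s_{t,a}}\bigr]\le s_{t,a}^{1-\varepsilon}\E_{t-1}[|\tildeell_{t,a}|^\varepsilon]\le b_{t,a}$. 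Therefore $\E_{t-1}[\langle q_t-p^*,\tildeell_t-\hatell_t+b_t\rangle]\le\sum_a|q_{t,a}-p^*_a|b_{t,a}+\langle q_t-p^*,b_t\rangle=2\sum_{a\neq a^*}q_{t,a}b_{t,a}$, the $a^*$‑contributions cancelling. Likewise $\E_{t-1}[\hatell_{t,a}^2]\le s_{t,a}^{2-\varepsilon}\E_{t-1}[|\tildeell_{t,a}|^\varepsilon]\le s_{t,a}b_{t,a}$ and $b_{t,a}^2\le s_{t,a}b_{t,a}$. Collecting the pieces, the left‑hand side of Eq.~\eqref{eq:stability_regret} is $\gO\bigl(w_t\beta_t^{-2}+\tfrac{\beta_t^{-1}}{1-\alpha}\sum_a\tilde{q}_{t,a}^{2-\alpha}s_{t,a}b_{t,a}+\sum_{a\neq a^*}q_{t,a}b_{t,a}\bigr)$, so it remains to show the last two sums are $\gO(z_t\beta_t^{1-\varepsilon})$.

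Substituting $s_{t,a}=(1-\alpha)\beta_t q_{t*}^{\alpha-1}/8$ and $b_{t,a}=\sigma s_{t,a}^{1-\varepsilon}\sum_b p_{t,b}|\bar{\phi}_a^\top V_t^{-1}\bar{\phi}_b|^\varepsilon$ reduces both sums to a power of $\beta_t q_{t*}^{\alpha-1}$ multiplying a $\tilde{q}_{t,a}^{2-\alpha}$‑ (resp.\ $q_{t,a}$‑) weighted sum of $\sum_b p_{t,b}|\bar{\phi}_a^\top V_t^{-1}\bar{\phi}_b|^\varepsilon$. The heart of the matter is the variance‑reduction estimate \Cref{lem:loss_var}, which bounds $\sum_b p_{t,b}|\bar{\phi}_a^\top V_t^{-1}\bar{\phi}_b|^\varepsilon$ by a quantity carrying an extra $(1-p_{t,a})^{2-\varepsilon}$‑type factor relative to the standard (uncentered) estimator --- this is precisely the ``$+(2-\varepsilon)$'' gain in the exponent advertised in \Cref{sec:bobw_htlb}. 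Feeding this in and invoking the structural identities $\tilde{q}_{t,a}=q_{t,a}$ for $a\neq\tilde{a}_t$, $\tilde{q}_{t,\tilde{a}_t}=q_{t*}$, $q_{t,a}\le q_{t*}$ for $a\neq\tilde{a}_t$, $\sum_{a\neq\tilde{a}_t}q_{t,a}=1-\|q_t\|_\infty\le 2(1-\|p_t\|_\infty)$ and $q_t\le 2p_t$ (valid since $\gamma_t\le 1/2$ for $\beta_1$ large enough), together with the elementary identity $\sum_a p_{t,a}\bar{\phi}_a^\top V_t^{-1}\bar{\phi}_a=d$, the two weighted sums collapse to $\gO\bigl(q_{t*}^{1-\alpha}d^{\varepsilon/2}(1-\|p_t\|_\infty)^{2-\varepsilon}\bigr)$ and $\gO\bigl(d^{\varepsilon/2}(1-\|p_t\|_\infty)^{2-\varepsilon}\bigr)$ respectively; multiplying back by the power of $\beta_t q_{t*}^{\alpha-1}$ and checking that the $q_{t*}$‑exponents add up to $(\varepsilon-1)(1-\alpha)$ then produces exactly $z_t\beta_t^{1-\varepsilon}$ with $z_t$ as in Eq.~\eqref{eq:bobw_htlb:zwt}. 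The hard part will be this final collapse: matching the $q_{t*}$‑ and $(1-\|p_t\|_\infty)$‑exponents to $z_t$ on the nose requires the full strength of \Cref{lem:loss_var} (a crude power‑mean bound $\E_{t-1}[|\tildeell_{t,a}|^\varepsilon]\le\sigma(\bar{\phi}_a^\top V_t^{-1}\bar{\phi}_a)^{\varepsilon/2}$ is not sharp enough when $\varepsilon$ is close to $1$), together with a case split according to whether the FTRL‑argmax arm $\tilde{a}_t$ coincides with $a^*$.
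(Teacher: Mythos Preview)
Your plan is essentially the paper's proof: the exploration term is $w_t\beta_t^{-2}$ by substitution, the magnitude bound $b_{t,a}\le s_{t,a}$ comes from $V_t\succeq\gamma_tV(p_0)$ and \Cref{lem:barp0_exp}, the stability lemma then applies, and both $\E_{t-1}[\hatell_{t,a}^2]$ and $b_{t,a}^2$ reduce to $s_{t,a}b_{t,a}$ while the bias term reduces to $2\sum_{a\neq a^*}q_{t,a}b_{t,a}$.

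The one place where your outline drifts from the paper is the final collapse. You describe \Cref{lem:loss_var} as bounding the \emph{inner} sum $\sum_b p_{t,b}|\bar\phi_a^\top V_t^{-1}\bar\phi_b|^\varepsilon$ for a fixed $a$ with an extra $(1-p_{t,a})^{2-\varepsilon}$-type factor. That is not what the lemma says: it only controls the full \emph{double} sum $\sum_{a,b}p_{t,a}p_{t,b}|\bar\phi_a^\top V_t^{-1}\bar\phi_b|^\varepsilon\le 4d^{\varepsilon/2}(1-\|p_t\|_\infty)^{2-\varepsilon}$. Accordingly the paper does not bound arm-by-arm; it first converts the outer weights into $p_{t,a}$-weights via the simple observation $\tilde q_{t,a}^{2-\alpha}=\tilde q_{t,a}\cdot\tilde q_{t,a}^{1-\alpha}\le q_{t,a}\,q_{t*}^{1-\alpha}$ (for the stability sum) and $\sum_{a\neq a^*}q_{t,a}\cdots\le\sum_a q_{t,a}\cdots$ (for the bias sum), then $q_{t,a}\le 2p_{t,a}$, and only then invokes \Cref{lem:loss_var} once on the resulting $p_t\otimes p_t$ double sum. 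This gives the $q_{t*}^{(\varepsilon-1)(1-\alpha)}d^{\varepsilon/2}(1-\|p_t\|_\infty)^{2-\varepsilon}$ factor directly, with no case split on whether $\tilde a_t=a^*$ and without the identity $\sum_a p_{t,a}\bar\phi_a^\top V_t^{-1}\bar\phi_a=d$ (that identity lives inside the proof of \Cref{lem:loss_var}, not here).
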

\begin{proof}
First note that by the definition that $\gamma_t\coloneqq 256(1-\alpha)^{-2}\sigma^{\frac{2}{\varepsilon}} d \beta_t^{-2} q_{t*}^{2(1-\alpha)}$, we have
\begin{align}\label{eq:exp_wt}
    \mathbb{E}_{t-1}\left[\sigma^{\frac{1}{\varepsilon}}\gamma_t\right]
    \lesssim (1-\alpha)^{-2}\sigma^{\frac{3}{\varepsilon}} d \beta_t^{-2} q_{t*}^{2(1-\alpha)}=w_t\beta_t^{-2}
    \,.
\end{align}

    It remains to show that the summation of the stability term and the bias term can indeed be upper bounded by $\gO\left(z_t \beta_t^{1-\varepsilon}\right)$ for $z_t$ defined in Eq.~\eqref{eq:bobw_htlb:zwt}.

Fix an arbitrary $a\in\gA$.
    By setting $s_{t,a}=(1-\alpha)\beta_t {q}_{t*}^{\alpha-1}/8$,
    we have $|\hatell_{t, a}|=|\tildeell_{t,a} \cdot \inlineindicator{| \tildeell_{t, a}| \le s_{t, a}} |\le s_{t,a}=(1-\alpha)\beta_t \tilde{q}_{t*}^{\alpha-1}/8$. On the other hand, we have
    \begin{align}\label{eq:bta_sta}
        b_{t,a}&\coloneqq 
        \sigma\sum_{b\in\gA} p_{t,b}\left|\bphia^{\top} V_t^{-1} \bphib\right|^{\varepsilon} s_{t,a}^{1-\varepsilon}
        \le 
        \sigma\left(\sum_{b\in\gA} p_{t,b}\left|\bphia^{\top} V_t^{-1} \bphib\right|^2\right)^{\frac{\varepsilon}{2}}s_{t,a}^{1-\varepsilon}
        =
        \sigma\left(\bphia^{\top} V_t^{-1} \bphia\right)^{\frac{\varepsilon}{2}}s_{t,a}^{1-\varepsilon}\notag\\
    &\le 
    \sigma\left(\bphia^{\top} (\gamma_t V(p_0))^{-1} \bphia\right)^{\frac{\varepsilon}{2}}s_{t,a}^{1-\varepsilon}
    \le 
    \sigma\left(4d\cdot \gamma_t^{-1}\right)^{\frac{\varepsilon}{2}}s_{t,a}^{1-\varepsilon}
    \le 
    \left(4d\cdot \frac{s_{t,a}^2}{4d}\right)^{\frac{\varepsilon}{2}}s_{t,a}^{1-\varepsilon}=s_{t,a}\,,
    \end{align}
    where the first inequality is by Jensen's inequality and the fact that $\varepsilon\in(1,2]$, the second inequality is due to that
    \begin{align*}
    V_t
    &
    =
    (1 - \gamma_t) \E_{a \sim q_t}[(\phi_a - \mu(p_t))(\phi_a - \mu(p_t))^\top ]
    +
    \gamma_t \E_{a \sim p_0}[(\phi_a - \mu(p_t))(\phi_a - \mu(p_t))^\top ]\notag
    \\
    &
    \succeq
    \gamma_t \E_{a \sim p_0}[(\phi_a - \mu(p_t))(\phi_a - \mu(p_t))^\top ]\notag
    \\
    &
    =
    \gamma_t \E_{a \sim p_0}[(\phi_a - \mu(p_0) + \mu(p_0) - \mu(p_t))(\phi_a - \mu(p_0) + \mu(p_0) - \mu(p_t))^\top ]
    \\
    &
    =
    \gamma_t \left(
        \E_{a \sim p_0}[(\phi_a - \mu(p_0))(\phi_a - \mu(p_0))^\top ]
        +
        (\mu(p_0) - \mu(p_t)) (\mu(p_0) - \mu(p_t))^\top
    \right)
    \\
    &
    \succeq
    \gamma_t V(p_0)\,,
\end{align*}
the third inequality follows from the definition of $p_0$ in Eq.~\eqref{eq:barp0} and \Cref{lem:barp0_exp}, and the last inequality is by noticing that 
    $\gamma_t\coloneqq 256(1-\alpha)^{-2}\sigma^{\frac{2}{\varepsilon}} d \beta_t^{-2} q_{t*}^{2(1-\alpha)}= 4\sigma^{\frac{2}{\varepsilon}}ds_{t,a}^{-2}$.

    Hence we have 
    \begin{align}\label{eq:loss_magnitude}
        |\hatell_{t, a}-b_{t, a}| \le 2s_{t,a}= \frac{(1-\alpha)\beta_t {q}_{t*}^{\alpha-1}}{4}\,.
    \end{align}
    By \Cref{lem:stab_condition_Shinji24}, one can deduce that 
    \begin{align}\label{eq:stab_fdxxs1}
        \E_{t-1}\left[\langle q_t-q_{t+1}, \hatell_t-b_t\rangle-\beta_t D_{\psi}\left(q_{t+1}, q_t\right)\right]
        &\lesssim \frac{1}{1-\alpha} \beta_t^{-1}\mathbb{E}_{t-1}\left[\sum_{a\in\gA} \tilde{q}_{t, a}^{2-\alpha}\left(\hatell_{t , a}+b_{t , a}\right)^2\right] \notag\\
        &\lesssim \frac{1}{1-\alpha} \beta_t^{-1}\mathbb{E}_{t-1}\left[\sum_{a\in\gA} \tilde{q}_{t, a}^{2-\alpha}\left(\hatell_{t , a}^2+b_{t , a}^2\right)\right]\,.
    \end{align}
    Further, notice that
    \begin{align}\label{eq:ell_square}
&\quad\mathbb{E}_{t-1}\left[\hatell_{t,a}^2\right]=\mathbb{E}_{t-1}\left[\tildeell_{t,a}^2 \inlineindicator{|{\tildeell_{t,a}}| \le s_{t,a}}\right]\notag
=\mathbb{E}_{t-1}\left[|\tildeell_{t,a}|^{\varepsilon} \abs{\tildeell_{t,a}}^{2-\varepsilon} \inlineindicator{|{\tildeell_{t,a}}| \le s_{t,a}}\right] \notag\\
&\le \mathbb{E}_{t-1}\left[|\tildeell_{t,a}|^{\varepsilon} s_{t,a}^{2-\varepsilon}\right]
= \mathbb{E}_{t-1}\left[\left|\bphia^{\top} V_t^{-1} \bphiat\right|^{\varepsilon}\left|\ell_{t,a_t}\right|^{\varepsilon} s_{t,a}^{2-\varepsilon}\right]
=\mathbb{E}_{t-1}\left[\left|\bphia^{\top} V_t^{-1} \bphiat\right|^{\varepsilon}\E_{\ell_t}\left[\left|\ell_{t,a_t}\right|^{\varepsilon}\right] s_{t,a}^{2-\varepsilon}\right] \notag\\
&\le\sigma\mathbb{E}_{t-1}\left[\left|\bphia^{\top} V_t^{-1} \bphiat\right|^{\varepsilon} s_{t,a}^{2-\varepsilon}\right]= \sigma s_{t,a}^{2-\varepsilon}\sum_{b \in \gA} p_{t, b}\left|\bphia^{\top} V_t^{-1} \bphib\right|^{\varepsilon} \,,
    \end{align}
and
\begin{align}\label{eq:bta_square}
b_{t,a}^2 \le s_{t,a} b_{t,a} 
=
s_{t,a}\sigma\left(\sum_{b\in\gA} p_{t, b}\left|\bphia^{\top} V_t^{-1} \bphib\right|^{\varepsilon} s_{t,a}^{1-\varepsilon}\right) 
\le 
\sigma s_{t,a}^{2-\varepsilon} \sum_{b\in\gA} p_{t, b}\left|\bphia^{\top} V_t^{-1} \bphib\right|^{\varepsilon} \,,
\end{align}
where the first inequality is by Eq.~\eqref{eq:bta_sta}.

Substituting Eq.~\eqref{eq:ell_square} and Eq.~\eqref{eq:bta_square} into Eq.~\eqref{eq:stab_fdxxs1} leads to 
\begin{align}
    &\quad\E_{t-1}\left[\langle q_t-q_{t+1}, \hat{\ell}_t-b_t\rangle-\beta_t D_{\psi}\left(q_{t+1}, q_t\right)\right]\notag\\
    &\lesssim \frac{1}{1-\alpha}\sigma \beta_t^{-1}\sum_{a\in\gA} \tilde{q}_{t, a}^{2-\alpha}s_{t,a}^{2-\varepsilon} \sum_{b\in\gA} p_{t, b}\left|\bphia^{\top} V_t^{-1} \bphib\right|^{\varepsilon} \label{eq:internal_dfsdfs1}\\
    &\lesssim (1-\alpha)^{1-\varepsilon}\sigma \beta_t^{1-\varepsilon}\sum_{a\in\gA} \tilde{q}_{t, a}^{2-\alpha}{q}_{t*}^{(2-\varepsilon)(\alpha-1)}\sum_{b\in\gA} p_{t, b}\left|\bphia^{\top} V_t^{-1} \bphib\right|^{\varepsilon} \notag\\
    &\le (1-\alpha)^{1-\varepsilon}\sigma \beta_t^{1-\varepsilon}\sum_{a\in\gA} 
    {q}_{t, a}{q}_{t*}^{1-\alpha} {q}_{t*}^{(2-\varepsilon)(\alpha-1)}\sum_{b\in\gA} p_{t, b}\left|\bphia^{\top} V_t^{-1} \bphib\right|^{\varepsilon} \notag\\
    &= (1-\alpha)^{1-\varepsilon}\sigma \beta_t^{1-\varepsilon} q_{t*}^{(\varepsilon-1)(1-\alpha)}\sum_{a,b\in\gA}q_{t, a} p_{t, b}\left|\bphia^{\top} V_t^{-1} \bphib\right|^{\varepsilon} \notag\\
    &\le 2(1-\alpha)^{1-\varepsilon}\beta_t^{1-\varepsilon}\sigma q_{t*}^{(\varepsilon-1)(1-\alpha)}\sum_{a,b\in\gA}p_{t, a} p_{t, b}\left|\bphia^{\top} V_t^{-1} \bphib\right|^{\varepsilon} \label{eq:internal_dfsdfs2}\\
    &\lesssim (1-\alpha)^{1-\varepsilon}\sigma \beta_t^{1-\varepsilon} q_{t*}^{(\varepsilon-1)(1-\alpha)}d^{\frac{\varepsilon}{2}}(1-\|p_t\|_{\infty})^{2-\varepsilon}\,, \label{eq:internal_dfsdfs3}
\end{align}
where in Eq.~\eqref{eq:internal_dfsdfs1} we use the definition that $s_{t,a}=(1-\alpha)\beta_t {q}_{t*}^{\alpha-1}/8$ and $\bar{\alpha}=(\varepsilon-1)(1-\alpha)$, Eq.~\eqref{eq:internal_dfsdfs2} is because $\gamma_t\le 1/2$ and thus $q_{t,a}\le 2p_{t,a}$ for all $a\in\gA$, and Eq.~\eqref{eq:internal_dfsdfs3} follows from \Cref{lem:loss_var}.

Meanwhile, for the bias term, one can see that
\begin{align}
    &\quad\E_{t-1}\left[\left\langle q_t-p^*, \tildeell_t-\hatell_t+b_t\right\rangle \right]\notag\\
    &=\E_{t-1}\left[\sum_{a\in\gA} \left(q_{t,a}-p^*_a\right) |\tildeell_{t, a}| \indicator{|\tildeell_{t,a}|>s_{t, a}}\right]+\left\langle q_t-p^*,b_t\right\rangle \notag\\
    &\le \E_{t-1}\left[\sum_{a\in\gA} \left|q_{t,a}-p^*_a\right| |\tildeell_{t, a}| \indicator{|\tildeell_{t,a}|>s_{t, a}}\right]
    +\left\langle q_t-p^*,b_t\right\rangle \notag\\
    &\le \E_{t-1}\left[\sum_{a\in\gA} \left|q_{t,a}-p^*_a\right| |\tildeell_{t, a}|^{\varepsilon} s_{t, a}^{1-\varepsilon}\right]
    +\left\langle q_t-p^*,b_t\right\rangle \label{eq:bias_sdff1}\\
    &=\E_{t-1}\left[\sum_{a\in\gA} \left|q_{t,a}-p^*_a\right| 
    |\bphia V_t^{-1}\bphiat|^{\varepsilon}\E_{\ell_t}[|\ell_{t,a_t}|^{\varepsilon}]
    s_{t, a}^{1-\varepsilon}\right]
    +\left\langle q_t-p^*,b_t\right\rangle \notag\\
    &\le\sigma \sum_{a\in\gA}\left|q_{t,a}-p^*_a\right|s_{t, a}^{1-\varepsilon}\sum_{b\in\gA} p_{t,b} 
    |\bphia V_t^{-1}\bphib|^{\varepsilon}
    +\sum_{a \neq a^*} q_{t,a} b_{t,a}+\left(q_{t,a^*}-1\right) b_{t,a^*} \notag\\
    &=2\sigma\sum_{a\neq a^{*}}q_{t,a}s_{t, a}^{1-\varepsilon}\sum_{b\in\gA} p_{t,b} 
    |\bphia V_t^{-1}\bphib|^{\varepsilon} \label{eq:bias_sdff3}\\
    &= 2^{3\varepsilon-2}\sigma(1-\alpha)^{1-\varepsilon}\beta^{1-\varepsilon}_t\sum_{a\neq a^{*}}q_{t,a}{q}_{t*}^{(1-\alpha)(\varepsilon-1)}\sum_{b\in\gA} p_{t,b} 
    |\bphia V_t^{-1}\bphib|^{\varepsilon} \label{eq:bias_sdff4}\\
    &\le 2^{3\varepsilon-1}(1-\alpha)^{1-\varepsilon}\sigma\beta^{1-\varepsilon}_t q_{t*}^{(1-\alpha)(\varepsilon-1)}\sum_{a,b\in\gA} p_{t,a} p_{t,b} 
    |\bphia V_t^{-1}\bphib|^{\varepsilon} \notag\\
    &\le 2^{3\varepsilon-1}(1-\alpha)^{1-\varepsilon}\sigma\beta_t^{1-\varepsilon} q_{t*}^{(\varepsilon-1)(1-\alpha)}d^{\frac{\varepsilon}{2}}(1-\|p_t\|_{\infty})^{2-\varepsilon} \label{eq:bias_sdff5}\,,
\end{align}
where in Eq.~\eqref{eq:bias_sdff1} we use the fact that $\varepsilon\in (1,2]$, Eq.~\eqref{eq:bias_sdff3} and Eq.~\eqref{eq:bias_sdff4} follow from the definitions that $b_{t,a}=\sum_{b\in\gA} p_{t, b}\left|\bphia^{\top} V_t^{-1} \bphib\right|^{\varepsilon} s_{t,a}^{1-\varepsilon}$ and $s_{t,a}=\frac{(1-\alpha)\beta_t {q}_{t*}^{\alpha-1}}{8}$ for all $a\in\gA$, and Eq.~\eqref{eq:bias_sdff5} is by \Cref{lem:loss_var}.

Combining Eq.~\eqref{eq:internal_dfsdfs3} and Eq.~\eqref{eq:bias_sdff5} shows that
\begin{align}\label{eq:stab_bias_zt}
    &\quad\mathbb{E}_{t-1}\left[\left\langle q_t-q_{t+1}, \hatell_t-b_t\right\rangle-\beta_t D_{\psi}\left(q_{t+1}, q_t\right)+\left\langle q_t-p^*, \tildeell_t-\hatell_t+b_t\right\rangle\right] \notag\\
    &\lesssim (1 - \alpha)^{1 - \varepsilon} \sigma q_{t*}^{(\varepsilon - 1)(1 - \alpha)} d^{\frac{\varepsilon}{2}} (1 - \|p_t\|_{\infty})^{2 - \varepsilon}\beta_t^{1-\varepsilon}
    =z_t \beta_t^{1-\varepsilon}\,.
\end{align}

Finally, the proof is concluded by combining Eq.~\eqref{eq:exp_wt} and Eq.~\eqref{eq:stab_bias_zt}.
\end{proof}

\section{Properties of the Exploration Distribution and Variance-reduced Loss Estimator}
We begin with the following lemma, which shows that the ``optimal design distribution'' $p_0$, defined in Eq.~\eqref{eq:barp0}, induces a bounded quadratic form.
\begin{lem}\label{lem:barp0_exp}
For any distribution $p\in\argmax_{p^{\prime} \in \gP(\gA)} \log \det (V(p^{\prime}))$, it holds that
\begin{align}
    \| x - y \|_{V(p)^{-1}}
    \le
    2 \sqrt{d}\,,
\end{align}
for any $x, y \in \mathrm{Conv}(\{\phia\}_{a\in\gA})$.
\end{lem}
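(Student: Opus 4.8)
The plan is to reduce the claim for arbitrary $x,y\in\mathrm{Conv}(\{\phi_a\}_{a\in\gA})$ to the case of vertices and then apply a G-optimal-design-type argument to the centered feature vectors. First I would observe that, since $V(p)^{-1}$ induces a seminorm and $\|\cdot\|_{V(p)^{-1}}$ is convex, for any $x,y$ in the convex hull we have $\|x-y\|_{V(p)^{-1}}\le \|x-\mu(p)\|_{V(p)^{-1}}+\|y-\mu(p)\|_{V(p)^{-1}}$, so it suffices to show $\|x-\mu(p)\|_{V(p)^{-1}}\le\sqrt{d}$ for every $x\in\mathrm{Conv}(\{\phi_a\}_{a\in\gA})$. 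By convexity of $x\mapsto\|x-\mu(p)\|_{V(p)^{-1}}^2$ (a quadratic form composed with an affine map), the maximum over the convex hull is attained at a vertex, i.e.\ at some $\phi_a$. Hence the whole statement follows once I establish $\|\bar\phi_a\|_{V(p)^{-1}}=\|\phi_a-\mu(p)\|_{V(p)^{-1}}\le\sqrt d$ for all $a\in\gA$.

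Next I would prove this pointwise bound via a first-order optimality (Kiefer–Wolfowitz-style) argument at the optimizer $p$ of $\log\det V(p')$. The key is to compute the directional derivative of $f(p')\coloneqq\log\det V(p')$ at $p=p^\ast$ in the direction of moving mass onto a single arm $a$, i.e.\ along $p_\tau=(1-\tau)p+\tau e_a$. Writing $V(p_\tau)=\mathbb{E}_{b\sim p_\tau}[(\phi_b-\mu(p_\tau))(\phi_b-\mu(p_\tau))^\top]$, I would expand to first order in $\tau$. Using $\frac{d}{d\tau}\log\det V(p_\tau)=\mathrm{Tr}\!\left(V(p)^{-1}\frac{d}{d\tau}V(p_\tau)\right)$ at $\tau=0$, and the fact that $\frac{d}{d\tau}\mu(p_\tau)\big|_{\tau=0}=\phi_a-\mu(p)=\bar\phi_a$, the derivative of the covariance is $\frac{d}{d\tau}V(p_\tau)\big|_{\tau=0}=\bar\phi_a\bar\phi_a^\top - V(p)$ (the cross terms involving $\frac{d\mu}{d\tau}$ cancel because $\mathbb{E}_{b\sim p}[\bar\phi_b]=0$). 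Therefore the optimality condition $\frac{d}{d\tau}\log\det V(p_\tau)\big|_{\tau=0}\le 0$ becomes $\mathrm{Tr}(V(p)^{-1}\bar\phi_a\bar\phi_a^\top)-\mathrm{Tr}(I_d)\le 0$, i.e.\ $\|\bar\phi_a\|_{V(p)^{-1}}^2\le d$. (I should also handle rank-deficiency of $V(p)$: restrict everything to the subspace spanned by $\{\bar\phi_a\}$, where $V(p)$ is invertible; the claim is vacuous or trivial on the orthogonal complement since all $\bar\phi_a$ lie in that span.)

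Combining the two steps: for general $x,y$ in the hull, $\|x-y\|_{V(p)^{-1}}\le\|x-\mu(p)\|_{V(p)^{-1}}+\|y-\mu(p)\|_{V(p)^{-1}}\le\sqrt d+\sqrt d=2\sqrt d$, which is exactly the claimed bound.

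\textbf{Main obstacle.} The delicate point is the derivative computation for $V(p_\tau)$: because the centering term $\mu(p_\tau)$ itself depends on $\tau$, one must carefully verify that the contributions from $\frac{d\mu}{d\tau}$ vanish at $\tau=0$ (they do, precisely because $\mathbb{E}_{b\sim p}[\phi_b-\mu(p)]=0$), so that $\frac{d}{d\tau}V(p_\tau)|_{\tau=0}$ reduces cleanly to $\bar\phi_a\bar\phi_a^\top-V(p)$. A secondary technical nuisance is making the first-order argument rigorous when $V(p)$ is singular and when $p$ has zero mass on arm $a$ (so the direction $e_a-p$ is a feasible one-sided perturbation); both are handled by passing to the span of $\{\bar\phi_a\}$ and using one-sided derivatives, but they need to be spelled out.
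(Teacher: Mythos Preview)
Your proposal is correct. Both your argument and the paper's reduce to the pointwise bound $\|\phi_a-\mu(p)\|_{V(p)^{-1}}\le\sqrt d$ for each $a\in\gA$ and then finish with the triangle inequality plus convexity over the hull; the difference is in how that pointwise bound is obtained. The paper lifts to $\mathbb{R}^{d+1}$ by setting $\phi'_a=(1,\phi_a^\top)^\top$ and $H'(p)=\mathbb{E}_{a\sim p}[\phi'_a\phi_a'^{\top}]$, observes the block-determinant identity $\det H'(p)=\det V(p)$, concludes that $p$ is also a G-optimal design for the lifted (uncentered) family, and then invokes the classical Kiefer--Wolfowitz theorem to get $\phi_a'^{\top}H'(p)^{-1}\phi'_a\le d+1$; a block-inverse computation shows this equals $1+\|\phi_a-\mu(p)\|_{V(p)^{-1}}^2$, yielding the bound. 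You instead work directly with $V(\cdot)$ and derive the bound from the first-order optimality condition of $\log\det V(\cdot)$ at $p$, using (correctly) that the $\frac{d\mu}{d\tau}$ contributions cancel so that $\frac{d}{d\tau}V(p_\tau)\big|_{\tau=0}=\bar\phi_a\bar\phi_a^\top-V(p)$. Your route is more self-contained (no black-box Kiefer--Wolfowitz, no lifting) and essentially re-proves the relevant part of Kiefer--Wolfowitz for the centered covariance; the paper's route is shorter once the theorem is taken for granted and makes transparent why the constant is $d$ rather than $d+1$ (the extra dimension is eaten by the constant coordinate).
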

\begin{proof}
Fix arbitraty $p\in\argmax_{p^{\prime} \in \gP(\gA)} \log \det (V(p^{\prime}))$.
For all $a \in \mathcal{A}$,
we define $\phi'_a = \begin{bmatrix} 1 \\ \phi_a \end{bmatrix} \in \mathbb{R}^{d+1}$.
and
$H'(p) = \E_{a \sim p}[\phi'_a \phi_a'^\top]$.
Then,
as $H'(p) = \begin{bmatrix}1 & \mu(p)^\top \\ \mu(p) & H(p) \end{bmatrix}$,
we have
\begin{align}
    \det H'(p) = \det \begin{bmatrix}
        1 & \mu(p)^\top \\ \mu(p) & H(p) 
    \end{bmatrix}
    =
    \det \begin{bmatrix}
        1 & \mu(p)^\top \\ 0  & H(p) - \mu(p)  \mu(p)^\top
    \end{bmatrix}
    =
    \det V(p)\,,\notag
\end{align}
which implies that
$p$ is a G-optimal design distribution over $\{ \phi'_a \}_{a \in \mathcal{A}}$.
Therefore, by Kiefer–Wolfowitz theorem \citep{kiefer1960equivalence}, one can see that
\begin{align}
    \phi_a'^\top H'(p)^{-1} \phi_a' \le d+1
    \label{eq:aHp1}
\end{align}
for all $a \in \mathcal{A}$.
 
Besides, it is clear that
\begin{align}
    H'(p)^{-1}
    =
 \begin{bmatrix}
        1 + \mu(p)^\top V(p)^{-1} \mu(p) & - \mu(p)^\top V(p)^{-1} \\ - V(p)^{-1} \mu(p) & V(p)^{-1}
    \end{bmatrix}\,,\notag
\end{align}
and thus
\begin{align}
    \phi_a'^\top
    H'(p)^{-1}
    \phi_a'
    &
    =
    1
    +
    \mu(p)^\top
    V(p)^{-1}
    \mu(p)
    -
    2
    \mu(p)^\top
    V(p)^{-1}
    \phi_a
    +
    \phi_a^\top
    V(p)^{-1}
    \phi_a \notag
    \\
    &
    =
    1 + (\phi_a - \mu(p))^\top V(p)^{-1} (\phi_a - \mu(p))\,.\notag
\end{align}
Combining the above display with Eq.~\eqref{eq:aHp1} leads to
\begin{align}
    \| \phi_a - \mu(p) \|_{V(p)^{-1}}
    \le
    \sqrt{d}\,,\notag
\end{align}
for all $a\in\gA$.

Therefore, for any $x, y \in \mathrm{Conv}(\{\phia\}_{a\in\gA})$, it holds that
\begin{align}
    \| x - y \|_{V(p)^{-1}}
    \le
    \| x - \mu(p) \|_{V(p)^{-1}}
    +
    \| \mu(p) - y \|_{V(p)^{-1}}
    \le
    2 \sqrt{d}\,,
\end{align}
which concludes the proof.
\end{proof}

Then, we show that though our variance-reduced least-squares loss estimator $\tilde{\ell}_{t,a}$ in Eq.~\eqref{eq:vr_linear_loss_est} is not an unbiased loss estimator of $\ell_{t,a}$, $\E_{t-1}[\tilde{\ell}_{t,a}-\tilde{\ell}_{t,b}]$ still serves as an unbiased loss estimator of $\ell_{t,a}-\ell_{t,b}$ for any $a,b\in\gA$.
\begin{lem}\label{lem:unbiased_diff}
For any $a,b\in\gA$ and $t\in[T]$, the loss estimator $\tilde{\ell}_{t,a}$ defined in Eq.~\eqref{eq:vr_linear_loss_est} satisfying that $\E_{t-1}[\tilde{\ell}_{t,a}-\tilde{\ell}_{t,b}]=\ell_{t,a}-\ell_{t,b}$.
\end{lem}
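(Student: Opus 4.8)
The plan is to compute $\E_{t-1}[\tilde\ell_{t,a}]$ directly and show it equals $\langle\phi_a,\theta_t\rangle$ up to an additive term that does not depend on the arm $a$, so that this term cancels in the difference $\tilde\ell_{t,a}-\tilde\ell_{t,b}$. First I would observe that, conditionally on $\gF_{t-1}$, the quantities $p_t$, $\mu_t$, $V_t$, and each centered feature $\bar{\phi}_c=\phi_c-\mu_t$ are deterministic, and that the only randomness in $\tilde\ell_{t,a}=\bar{\phi}_a^\top V_t^{-1}\bar{\phi}_{a_t}\ell_{t,a_t}$ comes from the draw $a_t\sim p_t$ and the loss realization $\ell_{t,a_t}\sim\nu_{t,a_t}$. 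Taking the inner expectation over $\ell_{t,a_t}$ first and using $\E_{\ell_{t,a_t}\sim\nu_{t,a_t}}[\ell_{t,a_t}]=\langle\phi_{a_t},\theta_t\rangle$ from the model definition gives
\[
\E_{t-1}[\tilde\ell_{t,a}]
=\E_{a_t\sim p_t}\!\left[\bar{\phi}_a^\top V_t^{-1}\bar{\phi}_{a_t}\langle\phi_{a_t},\theta_t\rangle\right]
=\bar{\phi}_a^\top V_t^{-1}\Bigl(\E_{a_t\sim p_t}[\bar{\phi}_{a_t}\phi_{a_t}^\top]\Bigr)\theta_t .
\]

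The key step is the identity $\E_{a\sim p_t}[\bar{\phi}_a\phi_a^\top]=V_t$. This holds because the mean-centering makes $\E_{a\sim p_t}[\bar{\phi}_a]=\E_{a\sim p_t}[\phi_a]-\mu_t=0$, so writing $\phi_a=\bar{\phi}_a+\mu_t$ yields
\[
\E_{a\sim p_t}[\bar{\phi}_a\phi_a^\top]
=\E_{a\sim p_t}[\bar{\phi}_a\bar{\phi}_a^\top]+\E_{a\sim p_t}[\bar{\phi}_a]\,\mu_t^\top
=V_t+0=V_t .
\]
Substituting this, $V_t^{-1}V_t$ collapses, and I obtain $\E_{t-1}[\tilde\ell_{t,a}]=\bar{\phi}_a^\top\theta_t=\langle\phi_a,\theta_t\rangle-\langle\mu_t,\theta_t\rangle=\E_{t-1}[\ell_{t,a}]-\langle\mu_t,\theta_t\rangle$. (If one worries about invertibility of $V_t$, the exploration mixing with $p_0$ from Eq.~\eqref{eq:barp0} keeps every $\bar{\phi}_c$ in the column space of $V_t$, so the same identity holds with $V_t^{-1}$ read as the restriction of the pseudoinverse to that subspace; alternatively one adopts the standing convention that $V_t$ is invertible.)

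To finish, I would subtract the two instances of this formula: the arm-independent term $\langle\mu_t,\theta_t\rangle$ cancels, leaving $\E_{t-1}[\tilde\ell_{t,a}-\tilde\ell_{t,b}]=\langle\phi_a,\theta_t\rangle-\langle\phi_b,\theta_t\rangle$, which is exactly $\ell_{t,a}-\ell_{t,b}$ in the conditional-mean sense used in Eq.~\eqref{eq:unbiased_diff} (equivalently $\E_{t-1}[\ell_{t,a}-\ell_{t,b}]$). Since this is a short linear-algebra identity, I do not anticipate any genuine obstacle; the only points requiring care are keeping the order of expectations straight (loss realization inside, arm draw outside) and noting that the bias introduced by centering the features is common to all arms and hence disappears in pairwise differences — which is precisely why this variance-reduced estimator can replace the usual unbiased one inside the regret decomposition of \Cref{thm:bobw_lb}.
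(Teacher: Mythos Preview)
Your proof is correct and is essentially the same as the paper's: both compute $\E_{t-1}[\tilde\ell_{t,a}]=\bar{\phi}_a^\top\theta_t$ by writing $\phi_{a_t}=\bar{\phi}_{a_t}+\mu_t$ and using $\E_{a_t\sim p_t}[\bar{\phi}_{a_t}]=0$, then subtract to cancel the common shift $\mu_t^\top\theta_t$. Your parenthetical remark that the identity should be read in the conditional-mean sense (i.e., $\E_{t-1}[\ell_{t,a}-\ell_{t,b}]$) is exactly the interpretation the paper uses when it writes $(\phi_a-\phi_b)^\top\theta_t=\ell_{t,a}-\ell_{t,b}$.
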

\begin{proof}
Recall $\mu_t = \mathbb{E}_{a \sim p_t}[\phi_a]$, $\bar{\phi}_a = \phi_a - \mu_t$, and $V_t = \mathbb{E}_{a \sim p_t}[\bar{\phi}_a \bar{\phi}_a^\top]$.
The definition of $\tilde{\ell}_{t,a}$ in Eq.~\eqref{eq:vr_linear_loss_est} directly implies that
    \begin{align}
        \E_{t-1}\left[\tilde{\ell}_{t,a}\right] 
        &= \E_{t-1}\left[\bar{\phi}_a^\top V_t^{-1} \bar{\phi}_{a_t} \ell_{t,a_t}\right]
        = \E_{t-1}\left[\bar{\phi}_a^\top V_t^{-1} \bar{\phi}_{a_t} \phiat^\top\theta_t\right] \notag\\
        &= \E_{t-1}\left[\bar{\phi}_a^\top V_t^{-1} \bar{\phi}_{a_t} (\bphiat+\mu_t)^\top\theta_t\right]
        = \bar{\phi}_a^\top\theta_t\,,\notag
    \end{align}
    where the last equality follows from $\E_{t-1}[\bar{\phi}_{a_t}]=0$. Noticing that $\E_{t-1}[\tilde{\ell}_{t,a}-\tilde{\ell}_{t,b}]=(\bar{\phi}_a-\bar{\phi}_b)^\top\theta_t=(\phi_a-\phi_b)^\top\theta_t=\ell_{t,a}-\ell_{t,b}$ concludes the proof.
\end{proof}

The following lemma establishes a key property of our variance-reduced least-squares loss estimator $\tilde{\ell}_{t,a}$ in Eq.~\eqref{eq:vr_linear_loss_est}: its ``variance'' can be upper bounded by $(1 - \|p\|_{\infty})^{2 - \varepsilon}$. This property is a crucial ingredient for applying the self-bounding inequality to derive the BOBW regret bound.
\begin{lem}\label{lem:loss_var}
Fix arbitrary $p\in \gP(\gA)$. Let $\mu_{\phi}(p)\coloneqq \E_{a\sim p}[\phia]$, $\bar{\phi}_a\coloneqq\phi_a-\mu_{\phi}(p)$, and $V_{\phi}(p)\coloneqq \E_{a\sim p}[\bphia\bphia^{\top}]$.
 Then we have 
    \begin{align*}
        \sum_{a,b\in\gA}p_{ a} p_{ b}\left|\bphia^{\top} V_{\phi}(p)^{-1} \bphib\right|^{\varepsilon}
        \le 4d^{\frac{\varepsilon}{2}}(1-\|p\|_{\infty})^{2-\varepsilon}\,.
    \end{align*}
\end{lem}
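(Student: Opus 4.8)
The plan is to bound the quantity $\sum_{a,b}p_a p_b |\bar\phi_a^\top V_\phi(p)^{-1}\bar\phi_b|^\varepsilon$ by interpolating between the cases $\varepsilon=1$ and $\varepsilon=2$. First I would handle the two endpoints. For $\varepsilon=2$, observe that $\sum_{a,b}p_a p_b (\bar\phi_a^\top V_\phi(p)^{-1}\bar\phi_b)^2 = \sum_a p_a \bar\phi_a^\top V_\phi(p)^{-1}\big(\sum_b p_b \bar\phi_b\bar\phi_b^\top\big)V_\phi(p)^{-1}\bar\phi_a = \sum_a p_a \bar\phi_a^\top V_\phi(p)^{-1}\bar\phi_a = \Tr(V_\phi(p)^{-1}V_\phi(p)) = d$, using $\sum_b p_b\bar\phi_b\bar\phi_b^\top = V_\phi(p)$. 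For the $\varepsilon$ between $1$ and $2$, I would use the elementary bound $|x|^\varepsilon \le |x|^{2-\varepsilon}\cdot\max\{1,|x|\}^{2(\varepsilon-1)}$; more precisely, since $\varepsilon\in(1,2]$, write $|x|^\varepsilon = (x^2)^{\varepsilon/2}$ and apply Jensen / power-mean over the probability measure $p_a p_b$ to get $\sum_{a,b}p_a p_b|x_{ab}|^\varepsilon \le \big(\sum_{a,b}p_a p_b x_{ab}^2\big)^{\varepsilon/2}\cdot(\text{something})^{1-\varepsilon/2}$ — but that alone gives $d^{\varepsilon/2}$ without the $(1-\|p\|_\infty)^{2-\varepsilon}$ gain, so this needs refinement.

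The key refinement is to exploit the centering. Since $\sum_a p_a\bar\phi_a = 0$, one of the mass points contributes a vector $\bar\phi_{a^\star}$ where $a^\star = \argmax_a p_a$, and $p_{a^\star}\bar\phi_{a^\star} = -\sum_{a\ne a^\star}p_a\bar\phi_a$, so $\|p_{a^\star}\bar\phi_{a^\star}\|$ is controlled by the remaining mass $1-\|p\|_\infty$. The plan is to split each sum over $a$ (and $b$) into the dominant index $a^\star$ and the rest. Writing $q = 1-\|p\|_\infty$, the off-diagonal-in-mass contributions $\sum_{a\ne a^\star, b}p_a p_b|\cdots|^\varepsilon$ carry a factor that is at most $q$ per "rest" index, and one gets extra powers of $q$; combined with a Hölder/power-mean step against the $\varepsilon=2$ computation above, this should produce the $q^{2-\varepsilon} = (1-\|p\|_\infty)^{2-\varepsilon}$ factor. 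The diagonal term $a=b=a^\star$ gives $p_{a^\star}^2|\bar\phi_{a^\star}^\top V_\phi(p)^{-1}\bar\phi_{a^\star}|^\varepsilon$; using $p_{a^\star}\bar\phi_{a^\star} = -\sum_{a\ne a^\star}p_a\bar\phi_a$ and $\bar\phi_{a^\star}^\top V_\phi(p)^{-1}\bar\phi_{a^\star}\le$ (a bound from $V_\phi(p)\succeq$ its restriction), one shows this term is also $O(d^{\varepsilon/2}q^{2-\varepsilon})$. The constant $4$ in the statement suggests a fairly crude bookkeeping: probably one bounds the full double sum by $2$ times the "at least one index $\ne a^\star$" part plus the pure-$a^\star$ part, each of which is $\le 2d^{\varepsilon/2}q^{2-\varepsilon}$ or similar.

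Concretely, I would proceed as follows. Step 1: reduce to showing $\sum_{a,b}p_a p_b|\bar\phi_a^\top V_\phi(p)^{-1}\bar\phi_b|^\varepsilon \le 4 d^{\varepsilon/2}q^{2-\varepsilon}$ with $q=1-\|p\|_\infty$. Step 2: let $a^\star=\argmax_a p_a$ and decompose $\bar\phi_a = \bar\phi_a$ for $a\ne a^\star$ and note $\sum_{a}p_a\bar\phi_a=0$. Step 3: For the part of the sum where both $a,b$ range over $\gA\setminus\{a^\star\}$, use power-mean inequality: $\sum_{a\ne a^\star,b\ne a^\star}p_a p_b|x_{ab}|^\varepsilon \le \big(\sum_{a\ne a^\star,b\ne a^\star}p_a p_b\big)^{1-\varepsilon/2}\big(\sum_{a\ne a^\star,b\ne a^\star}p_a p_b x_{ab}^2\big)^{\varepsilon/2} \le q^{2-\varepsilon}\cdot d^{\varepsilon/2}$, where the last step uses $\sum_{a\ne a^\star,b\ne a^\star}p_a p_b x_{ab}^2 \le \sum_{a,b}p_a p_b x_{ab}^2 = d$ from Step endpoint computation, and $\sum_{a\ne a^\star,b\ne a^\star}p_a p_b = q^2$, but $q^2\le q^{2-\varepsilon}$ since $q\le 1$ and $\varepsilon>0$. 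Step 4: For the "mixed" terms where exactly one of $a,b$ equals $a^\star$, bound $p_{a^\star}\bar\phi_{a^\star}$ via the centering identity and reduce to a sum already covered by Step 3 up to constants. Step 5: the diagonal $a=b=a^\star$ term, handled the same way, again reduces to $\le d^{\varepsilon/2}q^{2-\varepsilon}$. Summing the (at most four) pieces with the crude constants gives the factor $4$. The main obstacle I expect is Step 4 — carefully tracking that replacing $p_{a^\star}\bar\phi_{a^\star}$ by $\sum_{a\ne a^\star}p_a\bar\phi_a$ inside the quadratic form, and then expanding, produces terms that are genuinely of the "both indices $\ne a^\star$" type so that the Step 3 bound applies, all while keeping the power of $q$ at $2-\varepsilon$ rather than degrading it; a clean way is probably to write the whole double sum as $\big\|\sum_a p_a\, g(\bar\phi_a)\big\|$-type object where $g$ is linear, but since $|\cdot|^\varepsilon$ is not a norm one must instead work with the bilinear form $M_{ab}=\bar\phi_a^\top V_\phi(p)^{-1}\bar\phi_b$ directly and use that $M$ is PSD (being a Gram matrix in the $V_\phi(p)^{-1/2}$ inner product), which lets one apply $|M_{ab}|\le \sqrt{M_{aa}M_{bb}}$ and the power-mean step on the resulting product structure.
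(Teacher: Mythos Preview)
Your overall skeleton matches the paper exactly: whiten by setting $x_a\coloneqq V_\phi(p)^{-1/2}\bar\phi_a$ so that $\sum_a p_a x_a=0$ and $\sum_a p_a x_a x_a^\top=I_d$, split the double sum at the dominant index $a^\star$, and handle the $\{a\ne a^\star,b\ne a^\star\}$ block by Jensen exactly as in your Step~3. That block is done correctly and gives $(1-p_{a^\star})^{2-\varepsilon}d^{\varepsilon/2}$.

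The gap is in Steps~4 and~5. Your proposed mechanisms --- either ``replace $p_{a^\star}\bar\phi_{a^\star}$ by $-\sum_{a\ne a^\star}p_a\bar\phi_a$ and reduce to Step~3'' or ``$V_\phi(p)\succeq$ its restriction'' --- do not by themselves yield a sharp enough control on $\|x_{a^\star}\|_2^2$. The naive restriction $V_\phi(p)\succeq p_{a^\star}\bar\phi_{a^\star}\bar\phi_{a^\star}^\top$ gives only $\|x_{a^\star}\|_2^2\le 1/p_{a^\star}$, and the centering identity used via triangle/Cauchy--Schwarz gives only $\|x_{a^\star}\|_2^2\le qd/p_{a^\star}^2$ with $q=1-p_{a^\star}$. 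Plugging either into the diagonal term $p_{a^\star}^2\|x_{a^\star}\|_2^{2\varepsilon}$ fails when $q\to 0$: for instance with the first bound you get $p_{a^\star}^{2-\varepsilon}$, which is $\Theta(1)$ while the target $q^{2-\varepsilon}d^{\varepsilon/2}$ vanishes. Your pure ``reduce to Step~3'' route for the mixed terms likewise introduces a factor $(q/p_{a^\star})^{\varepsilon-1}\cdot p_{a^\star}^{-? }$ that is uncontrolled when $p_{a^\star}$ is small, so it would at minimum require a separate case split $p_{a^\star}\gtrless 1/2$ that you do not mention.

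What the paper actually uses is the sharper bound $\|x_{a^\star}\|_2^2\le q/p_{a^\star}$, obtained by combining \emph{both} ingredients: write $I_d=p_{a^\star}x_{a^\star}x_{a^\star}^\top+(1-p_{a^\star})H_x(p')$ for the conditional law $p'$ on $\gA\setminus\{a^\star\}$, note that the conditional mean is $\mu_x(p')=-\tfrac{p_{a^\star}}{1-p_{a^\star}}x_{a^\star}$ by centering, and use $H_x(p')\succeq \mu_x(p')\mu_x(p')^\top$ to get $I_d\succeq \tfrac{p_{a^\star}}{1-p_{a^\star}}x_{a^\star}x_{a^\star}^\top$. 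With this in hand, the paper bounds the mixed terms by Cauchy--Schwarz $|x_a^\top x_{a^\star}|\le \|x_a\|\,\|x_{a^\star}\|$ together with $\sum_{a\ne a^\star}p_a\|x_a\|^\varepsilon\le q^{1-\varepsilon/2}d^{\varepsilon/2}$ (Jensen), and the three pieces sum to $4d^{\varepsilon/2}q^{2-\varepsilon}$. Your final-paragraph suggestion $|M_{ab}|\le\sqrt{M_{aa}M_{bb}}$ is exactly this Cauchy--Schwarz, but it is the missing diagonal bound $M_{a^\star a^\star}\le q/p_{a^\star}$ that makes it work.
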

\begin{proof}
    In the following, we define $\tilde{a}\in \arg\max_{a\in\gA} p_{a}$. For notational convenience, we let $x_a\coloneqq V_{\phi}(p)^{-\frac{1}{2}}\bphia$, $\mu_x\left(p\right):=\mathbb{E}_{a \sim p}[x_a]$, $H_x\left(p\right):=\mathbb{E}_{a \sim p}\left[x_a x_a^{\top}\right]$, and $\mathbb{V}_x\left(p\right):=H_x(p)-\mu_x(p) \mu_x(p)^{\top}$. By definition, it is clear that $\mu_x\left(p\right)=0$ and $H_x\left(p\right)=I_d$.
Then, 
\begin{align}\label{eq:loss_var_sdf1}
    \sum_{a,b\in\gA}p_{ a} p_{ b}\left|\bphia^{\top} V_{\phi}(p)^{-1} \bphib\right|^{\varepsilon}
    =p_{\tilde{a}}^2|x_{\tilde{a}}^{\top}x_{\tilde{a}}|^{\varepsilon}
    +2\sum_{a \neq \tilde{a}} p_{\tilde{a}} p_a\left|x_a^{\top} x_{\tilde{a}}\right|^{\varepsilon}
    +\sum_{a \neq \tilde{a}, b \neq \tilde{a}} p_{a} p_b\left|x_a^{\top} x_b\right|^{\varepsilon}\,.
\end{align}
To bound the first term on the RHS of the above display, note that
\begin{align*}
H_x(p)&=p_{\tilde{a}} x_{\tilde{a}} x_{\tilde{a}}^{\top}+(1-p_{\tilde{a}}) H_x(p^{\prime})
=p_{\tilde{a}} x_{\tilde{a}} x_{\tilde{a}}^{\top}+(1-p_{\tilde{a}})\left(\mathbb{V}_x(p^{\prime})+\mu_x(p^{\prime}) \mu_x(p^{\prime})^{\top}\right) \\
&\succeq p_{\tilde{a}} x_{\tilde{a}} x_{\tilde{a}}^{\top}+(1-p_{\tilde{a}}) \mu_x(p^{\prime}) \mu_x(p^{\prime})^{\top}
= p_{\tilde{a}} x_{\tilde{a}} x_{\tilde{a}}^{\top}+ \frac{p_{\tilde{a}}^2}{(1-p_{\tilde{a}})} x_{\tilde{a}} x_{\tilde{a}}^{\top} 
= \frac{p_{\tilde{a}}}{(1-p_{\tilde{a}})} x_{\tilde{a}} x_{\tilde{a}}^{\top}\,,
\end{align*}
where $p^{\prime}\in \gP(\gA)$ is such that $p^{\prime}_a=\frac{1}{1-p_{\tilde{a}}} p_a$ for any $a \neq \tilde{a}$ and $p^{\prime}_{\tilde{a}}=0$, and we use the property that $\mu_x(p^{\prime})=-\frac{1}{(1-p_{\tilde{a}})} x_{\tilde{a}} p_{\tilde{a}}$.
This, along with $H_x\left(p\right)=I_d$, implies 
\begin{align}
        \| x_{\tilde{a}} \|^2_2 
    = x_{\tilde{a}} ^\top I_d x_{\tilde{a}} 
    \ge x_{\tilde{a}}^\top \left( \frac{p_{\tilde{a}}}{1 - p_{\tilde{a}}} x_{\tilde{a}} x_{\tilde{a}}^\top \right) x_{\tilde{a}} 
    = 
    \frac{p_{\tilde{a}}}{1 - p_{\tilde{a}}}
    \| x_{\tilde{a}} \|^4_2\,,\notag
\end{align}
and thus it holds that
\begin{align}\label{eq:loss_var_sdf1x}
    \|x_{\tilde{a}}\|_2^2\le \frac{(1-p_{\tilde{a}})}{p_{\tilde{a}}}\,.
\end{align}

Besides, one can derive that 
\begin{align}\label{eq:loss_var_sdf3}
    \sum_{a \neq \tilde{a}}  p_a\|x_a\|^{\varepsilon}_2
&=  (1-p_{\tilde{a}})\sum_{a \neq \tilde{a}} \frac{p_a}{(1-p_{\tilde{a}})}\|x_a\|_2^{2 \cdot \frac{\varepsilon}{2}} 
\le (1-p_{\tilde{a}})\left(\sum_{a \neq \tilde{a}} \frac{p_a}{(1-p_{\tilde{a}})}\|x_a\|_2^2\right)^{\frac{\varepsilon}{2}} \notag\\
&= (1-p_{\tilde{a}})^{1-\frac{\varepsilon}{2}}\left(\sum_{a \neq \tilde{a}} p_a\|x_a\|_2^2\right)^{\frac{\varepsilon}{2}} \le  (1-p_{\tilde{a}})^{1-\frac{\varepsilon}{2}} d^{\frac{\varepsilon}{2}}\,,
\end{align}
where the first inequality is by Jensen's inequality, and in the last inequality we use $\sum_{a\in \gA} p_a \|x_a\|_2^2=\operatorname{tr}(H_x(p))=d$.

For the third term on the RHS of Eq.~\eqref{eq:loss_var_sdf1}, notice that
\begin{align}
    \sum_{a \neq \tilde{a}, b\neq \tilde{a}} p_a p_b \left| x_a^\top x_b \right|^\varepsilon
    &
    =
    (1 - p_{\tilde{a}})^2
    \sum_{a \neq \tilde{a}, b\neq \tilde{a}} 
    \frac{p_a}{1 - p_{\tilde{a}}}
    \frac{p_b}{1 - p_{\tilde{a}}}
    \left( (x_a^\top x_b)^2  \right)^{\varepsilon/2}\notag
    \\
    &
    \le
    (1 - p_{\tilde{a}})^2
    \left(
    \sum_{a \neq \tilde{a}, b\neq \tilde{a}} 
    \frac{p_a}{1 - p_{\tilde{a}}}
    \frac{p_b}{1 - p_{\tilde{a}}}
    (x_a^\top x_b)^2  \right)^{\varepsilon/2}\label{eq:loss_variance_term3x1}
    \\
    &
    =
    (1 - p_{\tilde{a}})^{2-\varepsilon}
    \left(
    \sum_{a \neq \tilde{a}, b\neq \tilde{a}} 
    p_a p_b
    (x_a^\top x_b)^2  \right)^{\varepsilon/2}\notag
    \\
    &
    \le
    (1 - p_{\tilde{a}})^{2-\varepsilon}
    \left(
    \sum_{a \in \mathcal{A}, b \in \mathcal{A}} 
    p_a p_b
    (x_a^\top x_b)^2  \right)^{\varepsilon/2}\notag
    \\
    &
    =
    (1 - p_{\tilde{a}})^{2-\varepsilon}
    \left(
        \mathrm{tr}
        \left(
            \sum_{a \in \mathcal{A}} 
            p_a x_a x_a^\top
            \sum_{b \in \mathcal{A}} 
            p_b x_b x_b^\top
        \right)
    \right)^{\varepsilon/2}\notag
    \\
    &
    =
    (1 - p_{\tilde{a}})^{2-\varepsilon}
    d^{\varepsilon/2}\,.\label{eq:loss_variance_term3x2}
\end{align}
where Eq.~\eqref{eq:loss_variance_term3x1} follows from applying Jensen's inequality twice.

Substituting Eqs.~\eqref{eq:loss_var_sdf1x},~\eqref{eq:loss_var_sdf3}, and~\eqref{eq:loss_variance_term3x2} into Eq.~\eqref{eq:loss_var_sdf1} shows that
\begin{align*}
    \sum_{a,b\in\gA}p_{ a} p_{ b}\left|\bphia^{\top} V_{\phi}(p)^{-1} \bphib\right|^{\varepsilon}
    &\le p_{\tilde{a}}^2|x_{\tilde{a}}^{\top}x_{\tilde{a}}|^{\varepsilon}
    +2p_{\tilde{a}}\| x_{\tilde{a}}\|^{\varepsilon}_2 \sum_{a \neq \tilde{a}}  p_a\|x_a\|^{\varepsilon}_2
    + \sum_{a \neq \tilde{a}, b \neq \tilde{a}} p_{a} p_b\|x_a\|_2 ^{\varepsilon}\|y_b\|_2^{\varepsilon}\\
    &\le 
    p_{\tilde{a}}^{2-\varepsilon} (1-p_{\tilde{a}})^{\varepsilon}
    +
    2d^{\frac{\varepsilon}{2}} p_{\tilde{a}}^{1-\varepsilon/2} (1-p_{\tilde{a}})
    +
    (1-p_{\tilde{a}})^{2-\varepsilon} d^{\frac{\varepsilon}{2}}\\
    &\le 
    (1-p_{\tilde{a}})^{2-\varepsilon}
    +
    2d^{\frac{\varepsilon}{2}} (1-p_{\tilde{a}})^{2-\varepsilon}+(1-p_{\tilde{a}})^{2-\varepsilon} d^{\frac{\varepsilon}{2}}
    =4 d^{\frac{\varepsilon}{2}}(1-p_{\tilde{a}})^{2-\varepsilon}\,,
\end{align*}
where the first inequality comes from Hölder's inequality and the third inequality is due to the fact that $\varepsilon\in(1,2]$.
\end{proof}

\section{Auxiliary Lemmas}
The following lemmas are included in this section for completeness.
\begin{lem}[Lemma 8 of \citet{Ito2022nearly}]\label{lem:stab_condition_Shinji22}
    Fix arbitrary $q \in \gP(\gA)$. Let  $\psi(q)= \sum_{a \in \gA}q_a\log q_a$ be the (negative) Shannon entropy. For any $\ell:\gA\to \sR$, we have
    \begin{align*}
        \left\langle\ell, q-p \right\rangle-\beta D_{\psi}(p,q) \leq \beta \suma q_a \xi\left(\frac{\ell_a}{\beta}\right)\,,\quad where\quad \xi(x)=\exp (-x)+x-1\,,
    \end{align*}
    for any $p\in \gP(\gA)$.
\end{lem}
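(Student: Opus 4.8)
The plan is to reduce the stated inequality to the elementary fact $\log x \le x-1$ for all $x>0$, by first rewriting the Bregman divergence and then maximizing the left-hand side over $p$; throughout I take $\beta>0$ as in the FTRL setting. First I would compute $D_\psi(p,q)$ explicitly: since $\nabla\psi(q)_a = \log q_a + 1$, expanding $D_\psi(p,q)=\psi(p)-\psi(q)-\langle\nabla\psi(q),p-q\rangle$ and using $\sum_{a\in\gA}p_a=\sum_{a\in\gA}q_a=1$ collapses the linear terms, leaving
\[
D_\psi(p,q)=\sum_{a\in\gA}p_a\log\frac{p_a}{q_a}=\mathrm{KL}(p\,\|\,q).
\]
Because the right-hand side of the lemma does not involve $p$, it then suffices to bound $\sup_{p\in\gP(\gA)}\big(\langle\ell,q-p\rangle-\beta\,\mathrm{KL}(p\,\|\,q)\big)$ from above by $\beta\sum_{a\in\gA}q_a\xi(\ell_a/\beta)$.

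Next I would evaluate that supremum in closed form. Dropping the $p$-independent term $\langle\ell,q\rangle$, the task is to minimize $\langle\ell,p\rangle+\beta\,\mathrm{KL}(p\,\|\,q)$ over the simplex. Setting $Z:=\sum_{b\in\gA}q_b\exp(-\ell_b/\beta)>0$ and $p^\star_a:=q_a\exp(-\ell_a/\beta)/Z$, a ``completion-of-the-square'' manipulation gives the identity
\[
\langle\ell,p\rangle+\beta\,\mathrm{KL}(p\,\|\,q)=\beta\,\mathrm{KL}(p\,\|\,p^\star)-\beta\log Z,
\]
so by nonnegativity of the KL divergence the minimum equals $-\beta\log Z$, attained at $p=p^\star$. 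Hence $\sup_{p}\big(\langle\ell,q-p\rangle-\beta\,\mathrm{KL}(p\,\|\,q)\big)=\langle\ell,q\rangle+\beta\log Z$.

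Finally I would match this against the target. Expanding $\xi$ term by term, the right-hand side of the lemma equals $\beta\sum_{a\in\gA}q_a\exp(-\ell_a/\beta)+\langle\ell,q\rangle-\beta=\beta Z+\langle\ell,q\rangle-\beta$. Cancelling $\langle\ell,q\rangle$ and dividing by $\beta>0$ reduces the claim to $\log Z\le Z-1$, which holds for all $Z>0$, completing the proof. Since the whole argument is an exact computation rather than a chain of estimates, I do not expect a genuine obstacle; the only step requiring a little care is the closed-form identification of the Gibbs minimizer and its value in the middle paragraph, and the completion-of-the-square identity renders that verification mechanical (a Lagrange-multiplier computation on the simplex is an equivalent route).
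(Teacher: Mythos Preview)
Your proof is correct. The paper itself does not prove this lemma; it is quoted verbatim from \citet{Ito2022nearly} and listed among the auxiliary lemmas ``for completeness,'' so there is no in-paper argument to compare against. Your route---identifying $D_\psi(p,q)$ with $\mathrm{KL}(p\Vert q)$, computing the supremum over $p$ via the Gibbs variational identity to get $\langle\ell,q\rangle+\beta\log Z$, and then reducing to $\log Z\le Z-1$---is the standard and essentially unique way to establish this bound, and every step you wrote checks out.
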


\begin{lem}[Lemma 10 of \citet{ito2024adaptive}]\label{lem:stab_condition_Shinji24}
    Fix arbitrary $q \in \gP(\gA)$. Let $q_{*}\coloneqq \min\{\|q\|_{\infty}, 1-\|q\|_{\infty}\}$, $\tilde{q}_{a}\coloneqq \min\{q_a,q_{*}\}$, and $\psi(q)=-\frac{1}{\alpha} \sum_{a \in \gA}\left(q_a^\alpha-q_a\right)$ be the $\alpha$-Tsallis entropy. If $\ell_a\le \frac{1-\alpha}{4}q_{*}^{\alpha-1}$ holds for all $a\in\gA$, we have
    \begin{align*}
        \langle\ell, q-p\rangle-D_{\psi}(p, q) \leq \frac{4}{1-\alpha}\sum_{a \in\gA} \tilde{q}_a^{2-\alpha} \ell_a^2
    \end{align*}
    for any $p\in \gP(\gA)$.
\end{lem}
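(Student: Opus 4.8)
The plan is to reduce the claimed inequality to a bound on a concave maximization over the simplex, characterize the maximizer by its first‑order conditions, and then complete the square against a quadratic lower bound for the Tsallis Bregman divergence.

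First I would note that $p\mapsto \langle\ell,q-p\rangle-D_\psi(p,q)$ is concave on the compact set $\gP(\gA)$, so the supremum is attained at some $p^\star$, and it suffices to bound $\langle\ell,q-p^\star\rangle-D_\psi(p^\star,q)$. Since $\alpha\in(0,1)$, the gradient of $\psi$ blows up at the relative boundary of $\gP(\gA)$, so $p^\star$ lies in the relative interior, and the KKT conditions for $\min_{p}\{\langle\ell,p\rangle+D_\psi(p,q)\}$ — with a single multiplier $\lambda\in\sR$ for $\sum_a p_a=1$ and using $\nabla\psi(x)_a=\frac1\alpha-x_a^{\alpha-1}$ — give
\begin{align*}
  (p^\star_a)^{\alpha-1}=q_a^{\alpha-1}+\ell_a-\lambda\qquad\text{for all }a\in\gA .
\end{align*}
A monotonicity argument using $\sum_a p^\star_a=\sum_a q_a=1$ pins $\lambda$ between $\min_a\ell_a$ and $\max_a\ell_a$, and the hypothesis on $\ell$ (bounding each $\ell_a$ by $\frac{1-\alpha}{4}q_*^{\alpha-1}$ in magnitude) then keeps $p^\star$ from drifting far from $q$; concretely one shows the mean‑value intermediate point $\xi_a$, lying between $p^\star_a$ and $q_a$, satisfies $\xi_a\le 2\tilde q_a$ for every $a$ (recall $\tilde q_a=q_a$ unless $\|q\|_\infty>\frac12$, in which case only the argmax coordinate is truncated to $q_*=1-\|q\|_\infty$, so the case analysis is short).

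Second, by the one‑dimensional mean‑value form of a Bregman divergence, $D_\psi(p^\star,q)=\sum_a\frac{1-\alpha}{2}\,\xi_a^{\alpha-2}(p^\star_a-q_a)^2$; since $\alpha-2<0$ and $\xi_a\le 2\tilde q_a$, this is at least $c(1-\alpha)\sum_a\tilde q_a^{\alpha-2}(p^\star_a-q_a)^2$ for an explicit absolute constant $c$. Bounding the linear term by AM–GM,
\begin{align*}
  \langle\ell,q-p^\star\rangle=\sum_a\ell_a(q_a-p^\star_a)\le\sum_a\Bigl(\tfrac{4}{1-\alpha}\tilde q_a^{2-\alpha}\ell_a^2+c(1-\alpha)\tilde q_a^{\alpha-2}(q_a-p^\star_a)^2\Bigr),
\end{align*}
the quadratic terms cancel against the lower bound on $D_\psi(p^\star,q)$, leaving $\langle\ell,q-p^\star\rangle-D_\psi(p^\star,q)\le\frac{4}{1-\alpha}\sum_a\tilde q_a^{2-\alpha}\ell_a^2$; tracking constants through the comparison fixes the stated factor $\frac{4}{1-\alpha}$ (and explains why the numerical constant $\frac{1-\alpha}{4}$ in the hypothesis is what it is).

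The main obstacle is the comparison step — establishing $\xi_a\le 2\tilde q_a$, i.e. that $p^\star$ cannot concentrate. The delicate case is a coordinate that gains mass, where the Bregman remainder is evaluated at a point that could be as large as $p^\star_a$, so one must use the bound on $\ell_a$ to cap how much mass a coordinate can absorb (equivalently, to keep $\lambda$ under control), and separately handle the $\|q\|_\infty$ coordinate via $\sum_a p^\star_a=1$. Once this comparison is in hand with explicit constants, the rest is routine. An alternative route avoids the explicit maximizer by writing, on the effective domain, $\sup_p\{\langle-\ell,p\rangle-D_\psi(p,q)\}=\psi^*(\nabla\psi(q)-\ell)-\psi^*(\nabla\psi(q))+\langle\ell,q\rangle$ and bounding the conjugate difference by $\sup_{s\in[0,1]}\ell^\top\nabla^2\psi^*(\nabla\psi(q)-s\ell)\,\ell$; but this still needs the same control of how far $\nabla\psi^*(\nabla\psi(q)-s\ell)$ stays from $q$, so the core difficulty is unchanged.
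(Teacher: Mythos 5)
The paper does not actually prove this lemma: it is imported verbatim as Lemma~10 of \citet{ito2024adaptive} and listed under ``Auxiliary Lemmas'' without proof, so there is no in-paper argument to compare against. Judged on its own, your proposal has a genuine gap at its central step.

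Your whole argument funnels through the claim that the mean-value point $\xi_a$ between $q_a$ and $p^\star_a$ satisfies $\xi_a\le 2\tilde q_a$ for \emph{every} coordinate, so that $D_\psi(p^\star,q)\gtrsim(1-\alpha)\sum_a\tilde q_a^{\alpha-2}(p^\star_a-q_a)^2$ and the square can be completed. This claim is false, for two separate reasons. First, the hypothesis is one-sided: it only bounds $\ell_a$ from \emph{above} (you paraphrase it as a bound ``in magnitude,'' which it is not), so a coordinate with a very negative loss can absorb almost all the mass. For example, with $\alpha=\tfrac12$, $q=(0.01,0.99)$ and $\ell=(-1000,0)$ the hypothesis holds, the maximizer is essentially $p^\star=(1,0)$, and $\xi_1$ can be of order $1$ while $2\tilde q_1=0.02$; one checks directly that $D_\psi(p^\star,q)\approx 9$ whereas your lower bound would demand about $240$. (The lemma still holds there, but only because the right-hand side is quadratic in $\ell_a$ and therefore enormous --- a regime your local quadratic argument does not capture and that requires a separate case.) Second, and independently of the sign issue, the claim fails for the argmax coordinate whenever $\|q\|_\infty>\tfrac12$: there $\tilde q_{\tilde a}=q_*=1-\|q\|_\infty$ can be arbitrarily small while $\xi_{\tilde a}$ lies between $p^\star_{\tilde a}$ and $q_{\tilde a}$, both close to $1$. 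The appearance of $\tilde q$ rather than $q$ on the right-hand side is exactly the nontrivial content of the lemma, and it cannot come from a pointwise bound on $\xi_{\tilde a}$; the known route is to use the simplex constraint $\sum_a(p^\star_a-q_a)=0$ to rewrite the argmax coordinate's linear term as $-\ell_{\tilde a}\sum_{a\neq\tilde a}(p^\star_a-q_a)$ and absorb it into the \emph{other} coordinates' Bregman contributions, which is where $q_*$ enters. You correctly identify the comparison step as the main obstacle, but the fix you gesture at (``use the bound on $\ell_a$ to cap how much mass a coordinate can absorb'') is unavailable, since no lower bound on $\ell_a$ is assumed; as written the proof does not go through.
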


The following lemma is a slightly modified version of Lemma 10 of \citet{ito2024adaptive} in the sense that it leads to the same upper bound of the stability term while requiring a slightly weaker condition on the magnitudes of the losses. Its proof is nearly the same as that of Lemma 10 of \citet{ito2024adaptive} and is thus omitted.
\begin{lem}\label{lem:stab_condition_m1}
    Fix arbitrary $q \in \gP(\gA)$. Let $q_{*}\coloneqq \min\{\|q\|_{\infty}, 1-\|q\|_{\infty}\}$, $\tilde{q}_{a}\coloneqq \min\{q_a,q_{*}\}$, and $\psi(q)=-\frac{1}{\alpha} \sum_{a \in \gA}\left(q_a^\alpha-q_a\right)$ be the $\alpha$-Tsallis entropy.
    If $\ell_a\le \frac{1-\alpha}{4}\tilde{q}_{a}^{\alpha-1}$ holds for all $a\in\gA$, we have
    \begin{align*}
        \langle\ell, q-p\rangle-D_{\psi}(p, q) \leq \frac{4}{1-\alpha}\sum_{a \in\gA} \tilde{q}_a^{2-\alpha} \ell_a^2
    \end{align*}
    for any $p\in \gP(\gA)$.
\end{lem}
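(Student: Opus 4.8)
The plan is to mirror the proof of Lemma~10 of \citet{ito2024adaptive} (reproduced here as \Cref{lem:stab_condition_Shinji24}), keeping track of the fact that the collapse‑control argument for each coordinate only ever needs a bound of the form $\ell_a \lesssim (1-\alpha)\tilde q_a^{\alpha-1}$ rather than the uniform $\ell_a \le \frac{1-\alpha}{4}q_*^{\alpha-1}$. First, since the map $p \mapsto \langle \ell, q-p\rangle - D_\psi(p,q)$ is concave on the compact set $\gP(\gA)$ and the negative $\alpha$‑Tsallis entropy $\psi$ is Legendre, it suffices to prove the bound at the maximizer $p^\sharp = \argmax_{p \in \gP(\gA)}\{\langle \ell, q-p\rangle - D_\psi(p,q)\}$, which lies in the relative interior and satisfies the stationarity condition $(p^\sharp_a)^{\alpha-1} = q_a^{\alpha-1} + \ell_a - \lambda$ for the normalization multiplier $\lambda \in \sR$ (so arms with $\ell_a > \lambda$ are lowered and arms with $\ell_a < \lambda$ are raised).

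Next I would set up the local‑norm lower bound on the Bregman divergence. A second‑order Taylor expansion, together with the fact that $x\mapsto x^{\alpha-2}$ is decreasing, gives $D_\psi(p^\sharp,q) \ge \tfrac{1-\alpha}{2}\sum_{a}\max\{p^\sharp_a,q_a\}^{\alpha-2}(p^\sharp_a - q_a)^2$. Let $\tilde a \in \argmax_a q_a$; a short computation shows $q_a \le q_*$ (hence $\tilde q_a = q_a$) for every $a \ne \tilde a$, while $\tilde q_{\tilde a}=q_*$. For a lowered coordinate $a$, the stationarity condition with $\ell_a \le \frac{1-\alpha}{4}\tilde q_a^{\alpha-1}=\frac{1-\alpha}{4}q_a^{\alpha-1}$ forces $(p^\sharp_a)^{\alpha-1}\le \frac54 q_a^{\alpha-1} - \lambda$, which, after handling the sign of $\lambda$ exactly as in \citet{ito2024adaptive}, yields $p^\sharp_a \ge c\, q_a$ for an absolute constant $c\in(0,1)$; for a raised coordinate with $p^\sharp_a \le 2q_a$ we already have $\max\{p^\sharp_a,q_a\}\le 2q_a$. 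On these ``tame'' coordinates the divergence term is thus $\gtrsim (1-\alpha)q_a^{\alpha-2}(p^\sharp_a-q_a)^2 = (1-\alpha)\tilde q_a^{\alpha-2}(p^\sharp_a-q_a)^2$, and pairing $\ell_a(q_a-p^\sharp_a)\le|\ell_a||p^\sharp_a-q_a|$ against it via completing the square ($xy - \tfrac{c}{2}y^2 \le \tfrac{x^2}{2c}$) contributes at most $O\!\big(\tfrac{1}{1-\alpha}\tilde q_a^{2-\alpha}\ell_a^2\big)$. The two remaining cases are absorbed globally rather than coordinatewise: a raised coordinate with $p^\sharp_a > 2q_a$ (necessarily a ``small'' arm raised a lot) incurs a correspondingly large divergence cost, and the displacement cap $|p^\sharp_a-q_a|\le 1-q_a$ keeps the linear gain controlled; and for $\tilde a$ in the only nontrivial regime $q_{\tilde a} > 1/2$ one writes $q_{\tilde a}-p^\sharp_{\tilde a} = \sum_{a\ne\tilde a}(p^\sharp_a-q_a)$, applies Cauchy--Schwarz against $\sum_{a\ne\tilde a}q_a^{2-\alpha} \le q_*^{1-\alpha}\sum_{a\ne\tilde a}q_a = q_*^{1-\alpha}(1-q_{\tilde a}) = q_*^{2-\alpha} = \tilde q_{\tilde a}^{2-\alpha}$, and absorbs a fixed fraction of the $a\ne\tilde a$ part of $D_\psi(p^\sharp,q)$, leaving a residual $O\!\big(\tfrac{1}{1-\alpha}\tilde q_{\tilde a}^{2-\alpha}\ell_{\tilde a}^2\big)$. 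Summing over $a$ gives $\langle \ell, q-p^\sharp\rangle - D_\psi(p^\sharp,q) \le \frac{4}{1-\alpha}\sum_a \tilde q_a^{2-\alpha}\ell_a^2$.

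I expect the main obstacle to be the maximal coordinate $\tilde a$ when $q$ is concentrated ($q_{\tilde a}$ close to $1$): there the naive local‑norm weight $q_{\tilde a}^{2-\alpha}\approx 1$ is far too large, and recovering the sharper weight $\tilde q_{\tilde a}^{2-\alpha}=(1-q_{\tilde a})^{2-\alpha}$ requires the simplex identity for the displacement of $\tilde a$ plus the estimate $\sum_{a\ne\tilde a}q_a^{2-\alpha}\le q_*^{2-\alpha}$, i.e.\ a non‑coordinatewise argument. A secondary subtlety is that the hypothesis is one‑sided (it only upper‑bounds $\ell_a$), so all estimates must survive arbitrarily negative $\ell_a$; this is harmless because raising a coordinate only inflates $D_\psi(p^\sharp,q)$ and is, moreover, capped by $1-q_a$, but it does mean the comparability step must be arranged as a lower bound $p^\sharp_a\gtrsim q_a$ on lowered arms and the square‑completion done on the raised ones accordingly — precisely the bookkeeping carried out in the proof of Lemma~10 of \citet{ito2024adaptive}.
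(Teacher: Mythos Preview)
Your proposal is correct and matches the paper's approach: the paper explicitly states that the proof ``is nearly the same as that of Lemma 10 of \citet{ito2024adaptive} and is thus omitted,'' and your plan to rerun that argument while observing that the coordinate-wise collapse-control step only needs $\ell_a \le \frac{1-\alpha}{4}\tilde q_a^{\alpha-1}$ (which for $a\neq\tilde a$ reads $\ell_a \le \frac{1-\alpha}{4}q_a^{\alpha-1}$, and for $a=\tilde a$ is unchanged) is exactly the intended modification. Your identification of the $\tilde a$ coordinate in the concentrated regime as the only place needing the non-coordinatewise simplex/Cauchy--Schwarz step is also consistent with how the original proof handles it.
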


\begin{lem}[Lemma 11 of \citet{ito2024adaptive}]\label{lem:Shinji24_lem11}
    Fix arbitrary $x>1$. For any $p,q\in\gP(\gA)$, if $p_a\le xq_a$ for all $a\in\gA$, then it holds that $-\psi(p)\le -1(1+(x-1)\alpha)\psi(q)\le -x\psi(q)$.
\end{lem}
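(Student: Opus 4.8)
The plan is to prove the inequality directly from the definition $\psi(q) = -\frac{1}{\alpha}\sum_{a\in\gA}(q_a^\alpha - q_a)$, using only the concavity of $t \mapsto t^\alpha$ on $[0,1]$ (valid since $\alpha \in (0,1)$), the nonnegativity of $g(t) \coloneqq t^\alpha - t$ there, and the hypothesis $p_a \le x q_a$. Note that $-\psi(q) = \frac{1}{\alpha}\sum_{a\in\gA} g(q_a)$, and likewise for $p$, so the claim is equivalent to $\sum_{a\in\gA} g(p_a) \le (1+(x-1)\alpha)\sum_{a\in\gA} g(q_a)$.

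First I would apply the tangent-line bound $p_a^\alpha \le q_a^\alpha + \alpha q_a^{\alpha-1}(p_a - q_a)$ for each $a \in \gA$; subtracting $p_a - q_a$, summing over $a$, and using $\sum_{a\in\gA}(p_a - q_a) = 0$ (both lie in $\gP(\gA)$), this yields
\begin{align}
    \alpha\left(-\psi(p)\right) = \sum_{a\in\gA} g(p_a) \le \sum_{a\in\gA} g(q_a) + \alpha \sum_{a\in\gA} q_a^{\alpha-1}(p_a - q_a)\,. \notag
\end{align}
The main step is to control the cross term $\sum_{a\in\gA} q_a^{\alpha-1}(p_a - q_a)$. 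The trick I would use is to subtract $\sum_{a\in\gA}(p_a - q_a) = 0$ once more to rewrite it as $\sum_{a\in\gA}(q_a^{\alpha-1} - 1)(p_a - q_a)$; since $q_a \le 1$ and $\alpha - 1 < 0$ we have $q_a^{\alpha-1} - 1 \ge 0$, so each summand with $p_a \le q_a$ is nonpositive and may be discarded. On the indices with $p_a > q_a$ I would use $0 < p_a - q_a \le (x-1)q_a$ to bound the summand by $(x-1)(q_a^\alpha - q_a) = (x-1)g(q_a)$, and then restore the full sum over $\gA$ using $g(q_a) \ge 0$. This gives $\sum_{a\in\gA} q_a^{\alpha-1}(p_a - q_a) \le (x-1)\sum_{a\in\gA} g(q_a) = (x-1)\alpha(-\psi(q))$. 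Substituting back, $\alpha(-\psi(p)) \le \alpha(-\psi(q)) + \alpha(x-1)\cdot\alpha(-\psi(q))$, i.e. $-\psi(p) \le (1+(x-1)\alpha)(-\psi(q))$, and the second inequality $1 + (x-1)\alpha \le 1 + (x-1) = x$ is immediate from $\alpha \in (0,1)$ and $x > 1$.

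I do not expect a genuine obstacle; the only point requiring care is the sign bookkeeping in the cross term. A naive bound $q_a^{\alpha-1}(p_a - q_a) \le (x-1)q_a^\alpha$ summed over all $a$ would produce $(x-1)\sum_{a\in\gA} q_a^\alpha = (x-1)(\alpha(-\psi(q)) + 1)$, leaving a spurious additive constant $\alpha(x-1)$ and destroying the clean multiplicative bound; the remedy is to work with $q_a^{\alpha-1} - 1$ rather than $q_a^{\alpha-1}$ and to invoke $g(q_a) = q_a^\alpha - q_a \ge 0$ at the right moment, so that only the positive-deviation indices $\{a : p_a > q_a\}$ contribute.
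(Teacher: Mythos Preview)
Your argument is correct. The tangent-line bound from concavity of $t\mapsto t^\alpha$, followed by rewriting the cross term as $\sum_a (q_a^{\alpha-1}-1)(p_a-q_a)$ and bounding only the positive-deviation summands by $(x-1)g(q_a)$, yields exactly $-\psi(p)\le (1+(x-1)\alpha)(-\psi(q))$, and the final inequality is immediate. The only edge case you do not mention is $q_a=0$, but then $p_a\le xq_a=0$ forces $p_a=0$ and those coordinates contribute nothing, so no issue arises.

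As for comparison: the paper does not supply its own proof of this lemma. It is stated in the auxiliary-lemmas appendix as Lemma~11 of \citet{ito2024adaptive} and merely cited; no argument is reproduced. So there is nothing in the present paper to compare your proof against. Your proof is a clean, self-contained verification of the cited result.
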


\begin{lem}[Lemma 12 of \citet{ito2024adaptive}]\label{lem:Shinji24_lem12}
    Let $\omega=\sqrt{2}$. Suppose $q, r \in \gP(\gA)$ are given by
    \begin{align}
        & q \in \underset{p \in \gP(\gA)}{\arg \min }\{\langle L, p\rangle+\beta \psi(p)+\bar{\beta} \bar{\psi}(p)\}, \notag\\
        & r \in \underset{p \in \gP(\gA)}{\arg \min }\left\{\langle L+\ell, p\rangle+\beta^{\prime} \psi(p)+\bar{\beta} \bar{\psi}(p)\right\}\notag
    \end{align}
    with 
    \begin{align}
        \psi(p)=-\frac{1}{\alpha} \sum_{a\in\gA}\left(p_a^\alpha-p_a\right), \quad \bar{\psi}(p)=-\frac{1}{\bar{\alpha}} \sum_{a\in\gA}\left(p_a^{\bar{\alpha}}-p_a\right)\notag
    \end{align}
    where $0 \leq \bar{\alpha}<\alpha<1,0<\beta \leq \beta^{\prime}$, and $0 \leq \bar{\beta}$. Denote $q_*=\min \left\{1-\max _{a\in\gA} q_a, \max _{a\in\gA} q_a\right\}$. If 
    \begin{align}
\|\ell\|_{\infty} &\leq \max \left\{\frac{1-\omega^{\alpha-1}}{2} \beta q_*^{\alpha-1}, \frac{1-\omega^{\bar{\alpha}-1}}{2} \bar{\beta} q_*^{\bar{\alpha}-1}\right\} \notag\\
0 \leq \beta^{\prime}-\beta &\leq \max \left\{\left(1-\omega^{\alpha-1}\right) \beta, \frac{1-\omega^{\bar{\alpha}-1}}{\omega} \bar{\beta} q_*^{\bar{\alpha}-\alpha}\right\}\,,\notag
    \end{align}
    then it holds that $r_a \leq 2 q_a$ for all $a\in\gA$.
\end{lem}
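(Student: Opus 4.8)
The statement is a multiplicative stability bound for the FTRL iterate under a simultaneous change of the cumulative loss ($L\mapsto L+\ell$) and of the primary inverse learning rate ($\beta\mapsto\beta'$), with the auxiliary rate $\bar\beta$ held fixed; it coincides with Lemma~12 of \citet{ito2024adaptive}, so I would present at most a sketch, following the standard ``contradiction at the extremal likelihood-ratio arm.'' First I would record the optimality conditions: because the Tsallis potentials $\psi,\bar\psi$ have gradients blowing up on the boundary of $\gP(\gA)$ (and $\beta>0$), both $q$ and $r$ lie in the relative interior, so there are multipliers $\nu,\nu'$ with $L_a+\beta(1-q_a^{\alpha-1})+\bar\beta(1-q_a^{\bar\alpha-1})=\nu$ and $L_a+\ell_a+\beta'(1-r_a^{\alpha-1})+\bar\beta(1-r_a^{\bar\alpha-1})=\nu'$ for all $a\in\gA$. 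Subtracting and writing
\[
A_a:=(\beta'-\beta)+\beta q_a^{\alpha-1}-\beta' r_a^{\alpha-1}+\bar\beta\bigl(q_a^{\bar\alpha-1}-r_a^{\bar\alpha-1}\bigr),
\]
one gets $A_a=(\nu'-\nu)-\ell_a$, hence $A_a-A_c=\ell_c-\ell_a$ and $|A_a-A_c|\le 2\|\ell\|_\infty$ for any two arms.

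Next I would reduce to small-mass arms. If $\|q\|_\infty\ge\tfrac12$, the arm attaining $\|q\|_\infty$ has $2q_a\ge1\ge r_a$ trivially while every other arm has $q_a\le 1-\|q\|_\infty=q_*$; if $\|q\|_\infty<\tfrac12$, every arm has $q_a\le q_*$. So it suffices to derive a contradiction from $r_{a^*}>2q_{a^*}=\omega^2 q_{a^*}$ for some $a^*$ with $q_{a^*}\le q_*$. Since $\sum_a r_a=\sum_a q_a$, pick $c$ with $r_c\le q_c$; writing $A_c=(\beta'-\beta)(1-r_c^{\alpha-1})+\beta(q_c^{\alpha-1}-r_c^{\alpha-1})+\bar\beta(q_c^{\bar\alpha-1}-r_c^{\bar\alpha-1})$ and using $r_c\le q_c\le1$, $\beta'\ge\beta$, and the monotonicity of the decreasing maps $x\mapsto x^{\alpha-1},x^{\bar\alpha-1}$, each summand is $\le0$, so $A_c\le0$ and therefore $A_{a^*}=A_c+(\ell_c-\ell_{a^*})\le 2\|\ell\|_\infty$.

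The crux is the reverse inequality $A_{a^*}>2\|\ell\|_\infty$. From the strict $r_{a^*}>\omega^2 q_{a^*}$, hence $r_{a^*}^{\gamma}<\omega^{2\gamma}q_{a^*}^{\gamma}$ for $\gamma=\alpha-1,\bar\alpha-1<0$, one first obtains $A_{a^*}>q_{a^*}^{\alpha-1}\bigl(\beta-\beta'\omega^{2(\alpha-1)}\bigr)+\bar\beta q_{a^*}^{\bar\alpha-1}\bigl(1-\omega^{2(\bar\alpha-1)}\bigr)$, after which I would invoke the two hypotheses, using the elementary facts (i) $1-\omega^{2\gamma}=(1-\omega^{\gamma})(1+\omega^{\gamma})\ge 1-\omega^{\gamma}$ for $\gamma<0$ and (ii) with $t=\omega^{\alpha-1}\in(0,1)$, $1-(2-t)t^2=(1-t)+t(1-t)^2\ge 1-t$. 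If the binding bound on $\beta'-\beta$ is $(1-\omega^{\alpha-1})\beta$, then (ii) gives $\beta-\beta'\omega^{2(\alpha-1)}\ge(1-\omega^{\alpha-1})\beta>0$, and dropping in turn each of the two nonnegative summands (and using $q_{a^*}^{\alpha-1}\ge q_*^{\alpha-1}$, $q_{a^*}^{\bar\alpha-1}\ge q_*^{\bar\alpha-1}$ with (i)) yields $A_{a^*}>\max\{(1-\omega^{\alpha-1})\beta q_*^{\alpha-1},(1-\omega^{\bar\alpha-1})\bar\beta q_*^{\bar\alpha-1}\}$. If instead it is $\tfrac{1-\omega^{\bar\alpha-1}}{\omega}\bar\beta q_*^{\bar\alpha-\alpha}$, then since $\omega^{2(\alpha-1)}\le1$ and $q_{a^*}^{\alpha-1}q_*^{\bar\alpha-\alpha}\le q_{a^*}^{\bar\alpha-1}$ (valid because $q_{a^*}\le q_*\le1$ and $\bar\alpha<\alpha$), the possibly negative contribution $q_{a^*}^{\alpha-1}(\beta'-\beta)\omega^{2(\alpha-1)}$ is at most $\tfrac{1-\omega^{\bar\alpha-1}}{\omega}\bar\beta q_{a^*}^{\bar\alpha-1}\le\bar\beta q_{a^*}^{\bar\alpha-1}(1-\omega^{2(\bar\alpha-1)})$; combining this with the remaining nonnegative $\beta$-term and using in addition $\omega^{\bar\alpha-1}\ge\omega^{-1}$ (which is where $\bar\alpha\ge0$ enters) gives the same $\max$ lower bound. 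Either way, with $\|\ell\|_\infty\le\max\{\tfrac{1-\omega^{\alpha-1}}{2}\beta q_*^{\alpha-1},\tfrac{1-\omega^{\bar\alpha-1}}{2}\bar\beta q_*^{\bar\alpha-1}\}$ this yields $A_{a^*}>2\|\ell\|_\infty$, contradicting the previous paragraph; hence $r_a\le2q_a$ for all $a$.

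I expect the only genuine difficulty to be the bookkeeping in this last step: one must track which branch of each ``$\max$'' in the hypotheses is active and, above all, absorb the increment term $q_{a^*}^{\alpha-1}(\beta'-\beta)\omega^{2(\alpha-1)}$ produced by the learning-rate change into the auxiliary-regularizer gap $\bar\beta q_{a^*}^{\bar\alpha-1}(1-\omega^{2(\bar\alpha-1)})$. This is precisely why the hypothesis on $\beta'-\beta$ carries the factor $q_*^{\bar\alpha-\alpha}$, why the condition $\bar\alpha\ge0$ is imposed (it yields $\omega^{\bar\alpha-1}\ge\omega^{-1}$), and why the value $\omega=\sqrt2$ together with the prefactors $\tfrac12$ and $\tfrac1\omega$ are calibrated so that the elementary inequalities above hold uniformly over all $0\le\bar\alpha<\alpha<1$.
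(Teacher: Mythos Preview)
The paper does not actually prove this lemma: it is listed among the auxiliary results ``included for completeness'' and attributed verbatim to Lemma~12 of \citet{ito2024adaptive}, with no proof given. So there is nothing in the paper to compare against.

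That said, your sketch is correct and is essentially the argument one finds in the cited source. The KKT setup, the reduction to arms with $q_{a^*}\le q_*$, the choice of a comparison arm $c$ with $r_c\le q_c$ to show $A_c\le 0$, and the case split on which branch of the $\max$ bounds $\beta'-\beta$ are all standard and handled correctly. Your two elementary inequalities are right: $1-(2-t)t^2=(1-t)+t(1-t)^2\ge 1-t$ for $t\in(0,1)$ disposes of the $(1-\omega^{\alpha-1})\beta$ branch, and in the other branch the chain $q_{a^*}^{\alpha-1}q_*^{\bar\alpha-\alpha}\le q_{a^*}^{\bar\alpha-1}$ together with $(1+\omega^{\bar\alpha-1})-\omega^{-1}\ge 1$ (using $\bar\alpha\ge 0$) absorbs the learning-rate increment into the $\bar\psi$-gap as you describe. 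The only place to be slightly more explicit is that $\beta'-\beta\le\max\{A,B\}$ guarantees at least one of $\beta'-\beta\le A$ or $\beta'-\beta\le B$ (whichever attains the max), which justifies the case split; you implicitly use this but it is worth stating.
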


\end{document}